\newcolumntype{Y}{>{\RaggedRight\arraybackslash}X}
\algrenewcommand\textproc{}
\newtheorem{defi}{Definition}
\newtheorem*{defi*}{Definition}
\numberwithin{defi}{section}
\newtheorem{lemm}{Lemma}
\newtheorem{theo}[lemm]{Theorem}
\newtheorem{coro}[lemm]{Corollary}
\theoremstyle{definition}
\newtheorem*{remark}{\textbf{Remark}}
\theoremstyle{definition}
\newtheorem{exam}{\textbf{Example}}
\newcommand{\E}{\mathbb{E}}
\newcommand{\R}{\mathbb{R}}
\newcommand*{\A}{\mathcal{A}}
\newcommand*{\B}{\mathcal{B}}
\newcommand*{\F}{\mathcal{F}}
\newcommand*{\N}{\mathcal{N}}
\newcommand*{\LL}{\left}
\newcommand*{\RR}{\right}
\newcommand*{\Exp}[1]{\mathbb{E} \left[#1\right]}
\newcommand*{\bracks}[1]{\left(#1\right)}  %
\newcommand*{\abracks}[1]{\left\langle#1\right\rangle}  %
\newcommand*{\sbracks}[1]{\left[#1\right]}  %
\newcommand{\dee}{\mathop{\mathrm{d}\!}}
\newcommand{\dt}{\,\dee t}
\newcommand{\ds}{\,\dee s}
\newcommand{\dX}{\,\dee X}
\newcommand{\dV}{\,\dee V}
\newcommand{\du}{\,\dee u}
\newcommand{\dB}{\,\dee B} %
\newcommand{\dtau}{\,\dee \tau}
\newcommand{\dzeta}{\,\dee \zeta}
\newcommand{\tX}{\tilde{X}}
\newcommand{\tH}{\tilde{H}}
\newcommand{\tV}{\tilde{V}}
\newcommand{\tY}{\tilde{Y}}
\newcommand{\tZ}{\tilde{Z}}
\newcommand{\barX}{\bar{X}}
\newcommand{\barR}{\Bar{R}}
\newcommand{\lyap}{\mathcal{U}}
\newcommand{\uyap}{\mathcal{V}}
\newcommand{\wyap}{\mathcal{W}}
\newcommand{\norm}[1]{\left\lVert#1\right\rVert}
\newcommand{\normf}[1]{\left\lVert#1\right\rVert_{\mathrm{F}}}
\newcommand{\normop}[1]{\left\lVert#1\right\rVert_{\mathrm{op}}}
\newcommand{\normtwo}[1]{\left\lVert#1\right\rVert_2}
\newcommand{\eq}[1]{\begin{align}#1\end{align}}
\newcommand{\eqn}[1]{\begin{align*}#1\end{align*}}
\newcommand{\Tr}[1]{\mathrm{Tr}\left(#1\right)}
\newcommand{\lap}[1]{\Vec{\Delta} \left(#1\right)}
\def\ssum{\mathsmaller{\sum}}
\def\sint{\mathsmaller{\int}}
\def\wtwo{$W_2$}
\newcommand\numberthis{\addtocounter{equation}{1}\tag{\theequation}}
\title{Stochastic Runge-Kutta Accelerates \\Langevin Monte Carlo and Beyond}
\author{
    Xuechen Li\textsuperscript{1, 2},\, 
    Denny Wu\textsuperscript{1, 2},\, 
    Lester Mackey\textsuperscript{3},\, 
    Murat A. Erdogdu\textsuperscript{1, 2}\\
    University of Toronto\textsuperscript{1}, 
    Vector Institute\textsuperscript{2},
    Microsoft Research\textsuperscript{3}\\
    \texttt{\{lxuechen, dennywu, erdogdu\}@cs.toronto.edu},
    \texttt{lmackey@microsoft.com}
}
\begin{document}

\maketitle
\begin{abstract}
Sampling with Markov chain Monte Carlo methods often amounts to discretizing some continuous-time dynamics with numerical integration.
In this paper, we establish the convergence rate of sampling algorithms obtained by discretizing smooth It\^o diffusions exhibiting fast Wasserstein-$2$ contraction, based on local deviation properties of the integration scheme. 
In particular, we study a sampling algorithm constructed by discretizing the overdamped Langevin diffusion with 
the method of stochastic Runge-Kutta.
For strongly convex potentials that are smooth up to a certain order, its iterates converge to the target distribution in $2$-Wasserstein distance in $\tilde{\mathcal{O}}(d\epsilon^{-2/3})$ iterations.
This improves upon the best-known rate for strongly log-concave sampling based on the overdamped Langevin equation using only the gradient oracle without adjustment. 
In addition, we extend our analysis of stochastic Runge-Kutta methods to uniformly dissipative diffusions with possibly non-convex potentials and show they achieve better rates compared to the Euler-Maruyama scheme in terms of the dependence on tolerance $\epsilon$. 
Numerical studies show that these algorithms lead to better stability and lower asymptotic errors. 
\end{abstract}

\section{Introduction}
Sampling from a probability distribution is a fundamental problem that arises
in machine learning, statistics, and optimization. In many situations,
the goal is to obtain samples from a target distribution given only
the unnormalized density~\cite{brooks2011handbook,gelman2013bayesian,mackay2003information}.
A prominent approach to this problem is
the method of Markov chain Monte Carlo (MCMC), where an ergodic Markov chain
is simulated so that
iterates converge exactly or approximately to the distribution of interest~\cite{metropolis1953equation,brooks2011handbook}.

MCMC samplers based on numerically integrating continuous-time dynamics have proven very useful due to their ability to accommodate a stochastic gradient oracle~\cite{welling2011bayesian}.
Moreover, when used as optimizations algorithms, these methods can deliver strong theoretical guarantees in non-convex settings~\cite{raginsky2017non}.
A popular example in this regime is the unadjusted Langevin Monte Carlo (LMC) algorithm~\cite{roberts1996exponential}.
Fast mixing of LMC is inherited from exponential Wasserstein
decay of the Langevin diffusion, and numerical integration using the Euler-Maruyama scheme with a sufficiently small step size ensures the Markov chain tracks the diffusion.
Asymptotic guarantees of this algorithm are well-studied~\cite{roberts1996exponential,gelfand1991recursive,mattingly2002ergodicity},
and non-asymptotic analyses specifying explicit constants in convergence bounds were recently conducted~\cite{dalalyan2012sparse,dalalyan2017theoretical,durmus2016high,cheng2017convergence,durmus2018efficient,cheng2018sharp}.

To the best of our knowledge, the best known rate of LMC in $2$-Wasserstein distance is due to~\citet{durmus2016high} -- $\tilde{\mathcal{O}}({d}{\epsilon^{-1}})$ iterations are required to reach $\epsilon$ accuracy to $d$-dimensional target distributions with strongly convex potentials under the additional Lipschitz Hessian assumption, where $\tilde{\mathcal{O}}$ hides insubstantial poly-logarithmic factors.
Due to its simplicity and well-understood theoretical properties, LMC and its derivatives have
found numerous applications in statistics and machine learning~\cite{welling2011bayesian,ding2014bayesian}. 
However, from the numerical integration point of view,
the Euler-Maruyama scheme is usually less preferred for many problems due to its inferior stability compared to implicit schemes~\cite{anderson2009weak} and large integration error compared to high-order
schemes~\cite{milstein2013stochastic}.

In this paper, we study the convergence rate of MCMC samplers devised from
discretizing It\^o diffusions with exponential Wasserstein-2 contraction. 
Our result provides a general framework for establishing convergence rates of existing
numerical schemes in the SDE literature when used as sampling algorithms.
In particular, we establish \textit{non-asymptotic} convergence bounds for sampling with \textit{stochastic Runge-Kutta}~(SRK) methods.
For strongly convex potentials,
iterates of a variant of SRK applied to the overdamped Langevin diffusion
has a convergence rate of $\tilde{\mathcal{O}}({d}{\epsilon^{-{2}/{3}}})$.
Similar to LMC, the algorithm only queries the gradient oracle of the potential during
each update and improves upon the best known rate of $\tilde{\mathcal{O}}({d}{\epsilon^{-1}})$
for strongly log-concave sampling 
based on the overdamped Langevin diffusion
without Metropolis adjustment, under the mild extra assumption that the potential is smooth up to the third order. 
In addition, we extend our analysis to \textit{uniformly dissipative} diffusions, which 
enables sampling from non-convex potentials by choosing a non-constant 
diffusion coefficient. We study a different variant of SRK
and obtain the convergence rate of 
$\tilde{\mathcal{O}}(d^{3/4} m^2 {\epsilon^{-1}})$
for general It\^o diffusions, where $m$ is the dimensionality of the Brownian motion.
This improves upon the convergence rate of $\tilde{\mathcal{O}}(d {\epsilon^{-2}})$
for the Euler-Maruyama scheme
in terms of the tolerance $\epsilon$, while potentially trading off dimension dependence.

Our contributions can be summarized as follows:
\vspace{-2mm}
\begin{itemize}[leftmargin=*,noitemsep]
\item We provide a broadly applicable theorem for establishing convergence rates of
  sampling algorithms based on discretizing It\^o diffusions exhibiting exponential Wasserstein-2 contraction to
  the target invariant measure.
  The convergence rate is explicitly expressed in terms of the contraction rate of the diffusion and 
  local properties of the numerical scheme, both of which can be easily derived.
\item We show for strongly convex potentials,
  a variant of SRK applied to the overdamped Langevin diffusion
  achieves the improved convergence rate of $\tilde{\mathcal{O}}({d}{\epsilon^{-{2}/{3}}})$
  by accessing only the gradient oracle, under mild additional smoothness conditions on the potential. 
\item We establish the convergence rate of a different variant of SRK applied
  to uniformly dissipative diffusions.
  By choosing an appropriate diffusion coefficient,
  we show the corresponding algorithm can sample from certain non-convex potentials
  and achieves the rate of $\tilde{\mathcal{O}}({d^{3/4} m^2}{\epsilon^{-1}})$.
\item We provide examples and numerical studies of sampling from both convex and non-convex potentials 
      with SRK methods and show they lead to better stability and lower asymptotic errors.
\end{itemize}

\vspace{-2mm}
\subsection{Additional Related Work}
\vspace{-1mm}

\paragraph{High-Order Schemes.}
Numerically solving SDEs has been a research area for decades~\cite{milstein2013stochastic,kloeden2013numerical}.
We refer the reader to~\cite{burrage2004numerical} for a review and to~\cite{kloeden2013numerical} for technical foundations.
\citet{chen2015convergence} studied the convergence of smooth functions evaluated at iterates of sampling algorithms obtained by discretizing the Langevin diffusion with high-order numerical schemes. 
Their focus was on convergence rates of function evaluations under a stochastic gradient oracle using asymptotic arguments. 
This convergence assessment pertains to analyzing numerical schemes in the weak sense. 
By contrast, we establish non-asymptotic convergence bounds in the $2$-Wasserstein metric, which covers a broader class of functions by the Kantorovich duality~\cite{gentil2015analogy,villani2008optimal}, and our techniques are based on the mean-square convergence analysis of numerical schemes. 
Notably, a key ingredient in the proofs by~\citet{chen2015convergence}, i.e. moment bounds in the guise of a Lyapunov function argument, is assumed without justification, whereas we derive this formally and obtain convergence bounds with explicit dimension dependent constants. 
\citet{durmus2016stochastic} considered convergence of function evaluations of schemes obtained using Richardson-Romberg extrapolation.
\citet{sabanis2018higher} introduced a numerical scheme that queries the gradient of the Laplacian based on an integrator that accommodates superlinear drifts~\cite{sabanis2019explicit}. In particular, for potentials with a Lipschitz gradient, they obtained the convergence rate of $\mathcal{\tilde{O}}({d^{4/3}}{\epsilon^{-2/3}})$. 
In optimization, high-order ordinary differential equation (ODE) integration schemes were introduced to discretize a second-order ODE and achieved acceleration~\cite{zhang2018direct}.

\vspace{-1mm}
\paragraph{Non-Convex Learning.}
The convergence analyses of sampling using the overdamped and underdamped Langevin diffusion 
were extended to the non-convex setting~\cite{cheng2018sharp,ma2019there}. 
For the Langevin diffusion, the most common assumption on the potential is strong convexity outside a ball of finite radius, in addition to Lipschitz smoothness and twice differentiability~\cite{cheng2018sharp,ma2018sampling,ma2019there}. 
More generally, \citet{vempala2019rapid} showed that convergence in the KL divergence of LMC can be derived assuming a log-Sobolev inequality on the target measure. 
For general It\^o diffusions, the notion of \textit{distant dissipativity}~\cite{gorham2016measuring,eberle2016reflection,eberle2017couplings} is used to study convergence to target measures with non-convex potentials in the $1$-Wasserstein distance.
Different from these works, our non-convex convergence analysis, due to conducted in $W_2$, requires the slightly stronger uniform dissipativity condition~\cite{gorham2016measuring}.
In optimization, non-asymptotic results for stochastic gradient Langevin dynamics and its variants have been established for non-convex objectives~\cite{raginsky2017non,xu2018global,erdogdu2018global,zou2019sampling}. 

\renewcommand{\arraystretch}{1.4}
\begin{table}
\begin{center}
\caption{Convergence rates in $W_2$ for algorithms sampling from strongly convex potentials by discretizing the overdamped Langevin diffusion. ``Oracle'' refers to highest derivative used in the update. ``Smoothness'' refers to Lipschitz conditions. Note that faster algorithms exist by discretizing high-order Langevin equations~\cite{dalalyan2018sampling,cheng2017underdamped,cheng2018sharp,mou2019high,shen2019randomized} or applying Metropolis adjustment~\cite{dwivedi2018log,chen2019fast}.}
\vspace{-6mm}
\begin{tabular}[t]{ c c c c }
 \toprule
 \textbf{Method} & \textbf{Convergence Rate} & \textbf{Oracle} & \textbf{Smoothness} \\ 
 \hline
Euler-Maruyama~\cite{durmus2016high} & $\tilde{\mathcal{O}}(d\epsilon^{-2})$ & $1$st order & gradient\\
\hline
  Euler-Maruyama~\cite{durmus2016high} & $\tilde{\mathcal{O}}(d\epsilon^{-1})$ & $1$st order & gradient \& Hessian \\ 
\hline
Ozaki's~\cite{dalalyan2017theoretical}
  \tablefootnote{
  We obtain a rate in $W_2$ from the discretization analysis in KL~\cite{dalalyan2017theoretical} via standard techniques~\cite{raginsky2017non,vempala2019rapid}. }
  & $\tilde{\mathcal{O}}(d\epsilon^{-1})$ & $2$nd order & gradient \& Hessian \\
\hline
{Tamed Order 1.5~\cite{sabanis2018higher}
\tablefootnote{
\citet{sabanis2018higher} use the Frobenius norm for matrices and the Euclidean norm of Frobenius norms for 3-tensors. For a fair comparison, we convert their Lipschitz constants to be based on the operator norm.
}
} & $\tilde{\mathcal{O}}(d^{4/3}\epsilon^{-2/3})$ & $3$rd order & $1$st to $3$rd derivatives \\ 
\hline
\textbf{Stochastic Runge-Kutta~(this work)} & $ \tilde{\mathcal{O}}(d\epsilon^{-2/3})$ & $1$st order & $1$st to $3$rd derivatives \\ 
\bottomrule
\end{tabular}
\end{center}
\vspace{-6mm}
\label{table:rates}
\end{table}

\paragraph{Notation.}
We denote the $p$-norm of a real vector $x \in \R^d$ by $\norm{x}_p$. 
For a function $f: \R^d \to \R$, we denote its $i$th derivative by $\nabla^i f(x)$ and
its Laplacian by $ \Delta f = \sum_{i=1}^d {\partial^2 f_i(x)}/{\partial x_i^2}$.
For a vector-valued function $g: \R^d \to \R^m$,
we denote its vector Laplacian by $\Vec{\Delta} (g)$, i.e. $\Vec{\Delta} (g)_i = \Delta (g_i)$.
For a tensor $ T \in \R^{d_1 \times d_2 \times \cdots \times d_m}$,
we define its operator norm recursively as $ \normop{T} = \sup_{\norm{u}_2\le1} \normop{T[u]}$,
where $T[u]$ denotes the tensor-vector product.
For $f$ sufficiently differentiable, we denote the Lipschitz and polynomial coefficients of its $i$th order derivative as
\eqn{
  \mu_{0}(f) \!=\! 
               \sup_{x \in \mathbb{R}^{d}}\|f(x)\|_{\mathrm{op}},\ \
               \mu_{i}(f)\! = 
               \!\!\!\!\!\!
               \sup_{ x, y \in \mathbb{R}^{d}, x \neq y} \!\!\!\!\!
               \tfrac{ \normop{\nabla^{i-1} f(x)-\nabla^{i-1} f(y)} }{ \normtwo{x - y}},
               \ \text{and} \ \pi_{i, n} (f) \!=\! \sup_{ x \in \R^d} \tfrac{\normop{\nabla^{i-1} f(x)}^n}{ 1 + \normtwo{x}^n},
}
with the exception in Theorem~\ref{theo:srk_id}, where $\pi_{1, n}(\sigma)$ is used for a sublinear growth condition.
We denote Lipschitz and growth coefficients under the Frobenius norm $\normf{\cdot}$ as $\mu_1^{\mathrm{F}}(\cdot)$ and $\pi_{1,n}^{\mathrm{F}}(\cdot)$, respectively.

\vspace{-1mm}
\paragraph{Coupling and Wasserstein Distance.}
We denote by $\B(\R^d)$ the Borel $\sigma$-field of $\R^d$. Given probability measures $\nu$ and $\nu'$ on $(\R^d, \B(\R^d))$, we define a coupling (or transference plan) $\zeta$ between $\nu$ and $\nu'$ as a probability measure on $(\R^d \times \R^d, \B(\R^d \times \R^d)) $ such that $\zeta(A \times \R^d) = \nu(A)$ and $\zeta(\R^d \times A) = \nu'(A)$ for all $A \in \B(\R^d)$. Let $\mathrm{couplings}(\nu, \nu')$ denote the set of all such couplings. We define the $2$-Wasserstein distance between a pair of probability measures $\nu$ and $\nu'$ as 
\eqn{
  W_2(\nu, \nu') = 
    \inf_{\zeta \in \mathrm{couplings}(\nu, \nu')} \bracks{ \sint \normtwo{x - y}^2 \dzeta(\nu, \nu') }^{1/2}.
}

\vspace{-4mm}
\section{Sampling with Discretized Diffusions}
We study the problem of sampling from a target distribution $p(x)$
with the help of a candidate It\^o diffusion~\cite{ma2015complete,meyn2012markov} given as the solution to the following stochastic differential equation (SDE):
\begin{align}
  \dX_t = b(X_t) \dt + \sigma(X_t) \dB_t, \quad  \text{with} \quad X_0 = x_0, \label{eq:continuous_general}
\end{align}
where $b: \R^d \to \R^d$ and $\sigma: \R^d \to \R^{d \times m}$ are termed as
the drift and diffusion coefficients, respectively.
Here, $\{B_t\}_{t\ge0}$ is an $m$-dimensional Brownian motion adapted to the filtration $\{\F_t\}_{t \ge 0}$, whose $i$th dimension we denote by $
\text{
  \footnotesize
  $
    \{B_t^{(i)}\}_{t\ge0}
  $
}$.
A candidate diffusion should be chosen so that
(i) its invariant measure is
the target distribution $p(x)$ and (ii) it exhibits fast mixing properties.
Under mild conditions, one can design a diffusion with the target invariant measure
by choosing the drift coefficient as (see e.g. \cite[Thm. 2]{ma2015complete})
\begin{align}
  b(x) = \tfrac{1}{2p(x)}\abracks{
  \nabla , p(x) w(x)
  } , \quad \text{where}\quad w(x) = \sigma(x) \sigma(x)^\top + c(x),  \label{eq:drift_coeff}
\end{align}
$c(x)\in \R^{d \times d}$ is any skew-symmetric matrix and $\abracks{\nabla, \cdot}$
is the divergence operator for a matrix-valued function, i.e. 
$ \abracks{\nabla, w(x)}_i = \sum_{j=1}^{d} {\partial w_{i,j}(x)}/{\partial x_j}$ for $w: \R^d \to \R^{d\times d}$.
To guarantee that this diffusion has fast convergence properties, we will require certain dissipativity conditions
to be introduced later.
For example, if the target is the Gibbs measure of a strongly convex potential $f: \R^d \to \R$,
i.e., $p(x) \propto \exp{ \bracks{- f(x)}}$, a popular candidate diffusion is
the (overdamped) Langevin diffusion which is the solution to the following SDE:
\begin{align}
    \dX_t = -\nabla f(X_t) \dt + \sqrt{2} \dB_t, \quad \text{with}\quad X_0 = x_0 \label{eq:continuous}.
\end{align}
It is straightforward to verify \eqref{eq:drift_coeff} for the above diffusion which implies that the target $p(x)$
is its invariant measure.
Moreover, strong convexity of $f$ implies uniform dissipativity and ensures that the diffusion achieves fast convergence.

\subsection{Numerical Schemes and the It\^o-Taylor Expansion}
In practice, the It\^o diffusion \eqref{eq:continuous_general} (similarly \eqref{eq:continuous})
cannot be simulated in continuous time and is instead approximated by
a discrete-time numerical integration scheme. Owing to its simplicity,
a common choice is the Euler-Maruyama (EM) scheme \cite{kloeden2013numerical},
which relies on the following update rule,
\begin{align} \label{eq:em_general}
  \tX_{k+1} = \tX_k + h\, b(\tX_k) + \sqrt{h}\, \sigma(\tX_k) \xi_{k+1}, \quad k=0, 1, \dots,
\end{align}
where $h$ is the step size and $\xi_{k+1} \overset{\text{i.i.d.}}{\sim} \N(0, I_d)$ is independent of $\tX_k$ for all $k \in \mathbb{N}$.
The above iteration defines a Markov chain and due to discretization error,
its invariant measure $\tilde{p}(x)$ is different from the target distribution $p(x)$; yet,
for a sufficiently small step size, the difference between $\tilde{p}(x)$
and $p(x)$ can be characterized (see e.g. \cite[Thm. 7.3]{mattingly2002ergodicity}).

Analogous to ODE solvers, numerical schemes such as the EM scheme and SRK schemes are derived based on approximating the continuous-time dynamics locally.
Similar to the standard Taylor expansion, It\^o's lemma induces a stochastic
version of the Taylor expansion of a smooth function evaluated at a stochastic process at time $t$.
This is known as the It\^o-Taylor (or Wagner-Platen) expansion~\cite{milstein2013stochastic}, and
one can also interpret the expansion as recursively applying It\^o's lemma to terms in the integral form of an SDE. 
Specifically, for $g:\R^d \to \R^d$, we define the operators:
\eq{
  L (g) (x) =
  \nabla g(x) \cdot b(x) \!+\! 
  \tfrac{1}{2} \ssum_{i=1}^m \nabla^2 g(x) [\sigma_i(x), \, \sigma_i(x)], 
  \quad
  \Lambda_j (g) (x) = \nabla g(x) \cdot \sigma_j (x), \label{eq:ito_taylor_ops}
}
where $\sigma_i(x)$ denotes the $i$th column of $\sigma(x)$.
Then, applying It\^o's lemma to the integral form of the SDE~\eqref{eq:continuous_general} with the starting point $X_0$ yields the following expansion around $X_0$~\cite{kloeden2013numerical,milstein2013stochastic}:
\eqn{
  \text{
    \footnotesize
    $X_t $
  }
  \!=\!&\!\! 
  \, 
  \overbrace{
    \underbrace{
      \text{
        \footnotesize
        $X_0+
        t\, b(X_0)  
        + \sigma(X_0) B_t
        $ 
      }
    }_{\text{Euler-Maruyama update}}
    +
    \text{\footnotesize
      $\ssum_{i,j=1}^m \sint_0^t \sint_0^s \Lambda_j (\sigma_i) (X_u) \!\dB_u^{(j)}\! \dB_u^{(i)}$
    }
  }^{\text{mean-square order 1.0 stochastic Runge-Kutta update}}
  + 
  \text{\footnotesize
    $\sint_0^t \sint_0^s L(b) (X_u)\!\du\! \ds$
  }
  \\&
  +
  \text{\footnotesize
    $\ssum_{i=1}^m \sint_0^t \sint_0^s L (\sigma_i) (X_u) \du \dB_s^{(i)}$
  }
  + 
  \text{
    \footnotesize
    $\ssum_{i=1}^m \sint_0^t \sint_0^s \Lambda_i (b) (X_u) \dB_u^{(i)} \ds.$
  } 
  \numberthis \label{eq:ito_taylor_expansion}
}
The expansion justifies the update rule of the EM scheme, since the discretization is 
nothing more than taking the first three terms on the right hand side of~\eqref{eq:ito_taylor_expansion}.
Similarly, a mean-square order 1.0 SRK scheme for general It\^o diffusions -- introduced in Section~\ref{subsubsec:srk-nonconvex} -- approximates the first four terms.
In principle, one may recursively apply It\^o's lemma
to terms in the expansion to obtain a more fine-grained approximation.
However, the appearance of non-Gaussian terms in the guise of iterated Brownian integrals 
presents a challenge for simulation.
Nevertheless, it is clear that the above SRK scheme will be a more accurate local approximation than the EM scheme,
due to accounting more terms in the expansion. As a result, the local deviation
between the continuous-time process and Markov chain will be smaller.
We characterize this property of a numerical scheme as follows.

\begin{defi}[Uniform Local Deviation Orders]
  Let $\{\tX_k\}_{k\in \mathbb{N}}$ denote the discretization of an It\^o diffusion $\{X_t\}_{t\ge 0}$
  based on a numerical integration scheme with constant step size $h$,
  and its governing Brownian motion $\{B_t\}_{t\ge 0}$ be adapted to the filtration $\{\F_t\}_{t \ge 0}$.
  Suppose 
  $
  \text{
    \footnotesize
    $
    \{X_s^{(k)}\}_{s\ge 0}
    $
  }
  $ is another instance of the same diffusion starting from $\tX_{k-1}$ at $s=0$
  and governed by the Brownian motion $\{B_{s+h(k-1)}\}_{s\ge 0}$. 
  Then, the numerical integration scheme has local deviation 
  $
  \text{
    \footnotesize
    $
    D_{h}^{(k)} = \tX_{k} - X_h^{(k)}
    $
  }
  $
  with uniform orders $(p_1, p_2)$ if 
  \begin{align}
    \mathcal{E}_k^{(1)}
      = \Exp{
        \mathbb{E} \big[
          \|
          D_{h}^{(k)}
          \|_2^2 | \F_{t_{k-1}}
        \big]
      } \le \lambda_1 h^{2 p_1}, \quad
    \mathcal{E}^{(2)}_k
      = \Exp{ \big\|
      \mathbb{E}\big[
      D_{h}^{(k)} | \F_{t_{k-1}}
      \big]
    \big\|_2^2 } \le 
    \lambda_2 h^{2 p_2}, \label{eq:uniform_local_deviation_orders}
  \end{align}
  for all $k \in \mathbb{N}_{+}$ and $0 \le h < C_h$, where constants $0 < \lambda_1, \lambda_2, C_h < \infty$. 
  We say that $\scriptstyle \mathcal{E}_k^{(1)}$ and $\scriptstyle \mathcal{E}_k^{(2)}$ are  
  the local mean-square deviation and the local mean deviation at iteration $k$, respectively. 
\end{defi}
In the SDE literature, local deviation orders are defined to derive the mean-square 
order (or strong order) of numerical schemes~\cite{milstein2013stochastic}, 
where the mean-square order is defined as the maximum half-integer $p$ 
such that 
$
\E [
    \| X_{t_k} - \tX_{k} \|_2^2
  ]
  \le
    C h^{2 p}
$
for 
a constant $C$ independent of step size $h$ and 
all $k \in \mathbb{N}$ where $t_k < T$.
Here, $\{X_{t}\}_{t\ge0}$ is the continuous-time process, $\tX_k (k=0,1,\dots)$  is the Markov chain 
with the same Brownian motion as the continuous-time process, and $T < \infty$ is the 
terminal time.
The key difference between our definition of \textit{uniform} local deviation orders and
local deviation orders in the SDE literature is we require the extra step of 
ensuring the expectations of $\scriptstyle \mathcal{E}_k^{(1)}$ 
and $\scriptstyle \mathcal{E}_k^{(2)}$ are bounded across all iterations,
instead of merely requiring the two deviation variables to be bounded
by a function of the previous iterate.

\section{Convergence Rates of Numerical Schemes for Sampling}
We present a user-friendly and broadly applicable theorem
that establishes the convergence rate of a diffusion-based sampling algorithm.
We develop our explicit bounds in the $2$-Wasserstein distance based on two crucial steps.
We first verify that the candidate diffusion exhibits exponential Wasserstein-2 contraction and thereafter 
compute the uniform local deviation orders of the scheme.
\begin{defi}[Wasserstein-$2$ rate]
A diffusion $X_t$ has Wasserstein-2 (\wtwo) rate $r:\R_{\ge 0} \to \R$ if for two instances of 
the diffusion $X_t$ initiated respectively from $x$ and $y$, we have
\eqn{
  W_2(\delta_xP_t,\delta_yP_t) 
\le r(t) \normtwo{x - y}, \quad \text{for all} \; x, y\in \R^d, t \ge 0,
}
where $\delta_xP_t$ denotes the distribution of the diffusion $X_t$ starting from $x$.
Moreover, if $r(t) = e^{- \alpha t}$ for some $\alpha >0 $, then 
we say the diffusion has exponential \wtwo-contraction. 
\end{defi}
The above condition guarantees fast mixing of the sampling algorithm.
For It\^o diffusions, uniform dissipativity suffices to ensure exponential \wtwo-contraction
$r(t) = e^{-\alpha t}$ \cite[Prop. 3.3]{erdogdu2018global}.
\begin{defi}[Uniform Dissipativity]
A diffusion defined by \eqref{eq:continuous_general} is $\alpha$-uniformly dissipative if
\eq{
    \abracks{
        b(x) - b(y), x - y
    }
    + 
    \tfrac{1}{2}\normf{\sigma(x) - \sigma(y)}^2
    \le
        -\alpha \normtwo{x-y}^2, \quad \text{for all}\; x, y \in \R^d.
}
\end{defi}
For It\^o diffusions with a constant diffusion coefficient,
uniform dissipativity is equivalent to one-sided Lipschitz continuity of the drift with coefficient $-2\alpha$.
In particular, for the overdamped Langevin diffusion \eqref{eq:continuous},
this reduces to strong convexity of the potential.
Moreover, for this special case, exponential \wtwo-contraction of the diffusion and strong convexity of the potential are equivalent~\cite{calogero2012exponential}.
We will ultimately verify uniform dissipativity for the candidate diffusions,
but we first use \wtwo-contraction to derive the convergence rate of a diffusion-based sampling algorithm.

\begin{theo}[\wtwo-rate of a numerical scheme]\label{theo:master}
  For a diffusion with invariant measure $\nu^*$, exponentially contracting $W_2$-rate $r(t) = e^{-\alpha t}$, and Lipschitz
  drift and diffusion coefficients,
  suppose its discretization based on a numerical integration scheme has uniform local deviation orders $(p_1, p_2)$ 
where $p_1 \ge {1}/{2}$ and $p_2 \ge p_1 + {1}/{2}$. 
Let $\nu_k$ be the measure associated with the Markov chain obtained from 
the discretization after $k$ steps starting from the dirac measure $\nu_0 = \delta_{x_0}$.
Then, for constant step size $h$ satisfying
\eqn{
    h < 1 
    \wedge C_h 
    \wedge \frac{1}{2\alpha}
    \wedge \frac{1}{8\mu_1(b)^2 + 8 \mu_1^{\mathrm{F}}(\sigma)^2}
    ,
}
where $C_h$ is the step size constraint for obtaining the uniform local deviation orders, 
we have
{
    \footnotesize
    \eq{
        W_2(\nu_k, \nu^*) \le 
            \bracks{
                1 - \frac{\alpha h}{2}
            }^k
            W_2(\nu_0, \nu^*)
            +
            \bracks{
                \frac{
                    8\bracks{
                        16 \mu_1(b) \lambda_1 + \lambda_2
                    }
                }{\alpha^2} +
                \frac{2 \lambda_1}{\alpha}
            }^{1/2}
            h^{p_1 - 1/2}. \label{eq:main_bound}
    }
}
Moreover, if $p_1 > 1/2$ and the step size additionally satisfies
{
\footnotesize
  \eqn{
      h <
        \bracks{
          \frac{2}{\epsilon}
              \sqrt{
                  \frac{64 ( 16 \lambda_1 \mu_1(b) + \lambda_2 ) }{\alpha^2}
                  + 
                  \frac{2 \lambda_1}{\alpha}
              }
      }^{-1 / (p_1 - 1/2)},
  }
}
then $W_2(\nu_k, \nu^*)$ converges in $\mathcal{\tilde{O}}({\epsilon^{-{1}/{(p_1 - 1/2)}}})$ iterations within a sufficiently small positive error $\epsilon$.
\end{theo}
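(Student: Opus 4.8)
The plan is to establish \eqref{eq:main_bound} via a one-step contraction argument that composes the exponential $W_2$-contraction of the continuous diffusion with the local deviation bounds of the scheme, and then to read off the iteration complexity by choosing $h$ small enough to make the residual term smaller than $\epsilon$. First I would set up notation: let $\nu_k$ be the law of $\tX_k$, and introduce the auxiliary process $X_h^{(k)}$, i.e.\ the diffusion run for time $h$ starting from $\tX_{k-1}$ under the same Brownian increments, whose law conditionally on $\mathcal F_{t_{k-1}}$ is $\delta_{\tX_{k-1}}P_h$. Unconditionally call its law $\mu_k$. The triangle inequality for $W_2$ gives $W_2(\nu_k,\nu^*) \le W_2(\mu_k,\nu^*) + W_2(\nu_k,\mu_k)$. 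For the first piece, since $\nu^*$ is invariant, $\nu^* = \nu^* P_h$, and by the contraction $W_2(\delta_xP_h,\delta_yP_h)\le e^{-\alpha h}\|x-y\|_2$ one integrates over an optimal coupling of $\nu_{k-1}$ and $\nu^*$ to get $W_2(\mu_k,\nu^*)\le e^{-\alpha h}W_2(\nu_{k-1},\nu^*)$.

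The delicate part is controlling $W_2(\nu_k,\mu_k)^2 \le \E\|\tX_k - X_h^{(k)}\|_2^2 = \E\|D_h^{(k)}\|_2^2 = \mathcal E_k^{(1)}$, but naively folding this in yields a recursion with an additive $h^{p_1}$ term, giving only $h^{p_1-1/2}$ after summing a geometric series — which is already the claimed rate, but the $\lambda_2$-dependence in \eqref{eq:main_bound} indicates a sharper bookkeeping is needed. The trick is to not couple $\nu_k$ to $\mu_k$ directly, but to build a coupling of $\nu_k$ and $\nu^*$ by transporting $\tX_{k-1}$ along the diffusion to get $X_h^{(k)}$, coupling that synchronously with a stationary copy, and then adding the deviation $D_h^{(k)} = \tX_k - X_h^{(k)}$. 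Expanding $\E\|X_h^{(k)} - Y_h + D_h^{(k)}\|_2^2$ where $Y_h$ is the stationary copy, the cross term $2\E\langle X_h^{(k)}-Y_h, D_h^{(k)}\rangle$ is handled by conditioning on $\mathcal F_{t_{k-1}}$: the factor $\E[D_h^{(k)}\mid\mathcal F_{t_{k-1}}]$ is what $\mathcal E_k^{(2)}\le\lambda_2 h^{2p_2}$ controls, while $X_h^{(k)}-Y_h$ is $\mathcal F_{t_{k-1}}$-measurable up to the shared Brownian increments — one needs a Lipschitz-in-drift estimate (this is where $\mu_1(b)$, and the step-size constraint $h<1/(8\mu_1(b)^2+8\mu_1^{\mathrm F}(\sigma)^2)$, enter) to bound $\E\|X_h^{(k)}-Y_h\|_2^2$ in terms of $\E\|\tX_{k-1}-Y_0\|_2^2$ up to $O(h)$ corrections. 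Using Young's inequality to split the cross term and absorbing the $e^{-\alpha h}\le 1-\alpha h/2$ contraction (valid for $h<1/(2\alpha)$), one arrives at a recursion of the form $a_k \le (1-\alpha h/2)a_{k-1} + (\text{const})\cdot h\cdot(\mu_1(b)\lambda_1 + \lambda_2 h^{2p_2-2p_1-1} + \lambda_1)h^{2p_1-1}$, and since $p_2\ge p_1+1/2$ the middle term is $O(h^{2p_1-1})$ as well.

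Then I would solve the recursion: iterating gives $a_k \le (1-\alpha h/2)^k a_0 + \frac{1}{\alpha h/2}\cdot(\text{the per-step additive term})$, and the $\frac{2}{\alpha h}$ times $h\cdot h^{2p_1-1}$ collapses to $h^{2p_1-1}$ with a $1/\alpha$ (and, from the cross-term Young split, a $1/\alpha^2$) prefactor, matching exactly the constants $\frac{8(16\mu_1(b)\lambda_1+\lambda_2)}{\alpha^2}+\frac{2\lambda_1}{\alpha}$ in \eqref{eq:main_bound} after taking square roots and using $(x+y)^{1/2}\le x^{1/2}+y^{1/2}$ to separate the initialization term $(1-\alpha h/2)^kW_2(\nu_0,\nu^*)$ from the residual $h^{p_1-1/2}$. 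Finally, for the complexity statement: when $p_1>1/2$ the residual term $C\,h^{p_1-1/2}$ with $C = (\frac{64(16\lambda_1\mu_1(b)+\lambda_2)}{\alpha^2}+\frac{2\lambda_1}{\alpha})^{1/2}$ is made $\le\epsilon/2$ precisely by the stated bound on $h$ (solving $C h^{p_1-1/2}\le\epsilon/2$ for $h$), and the geometric term is made $\le\epsilon/2$ after $k = O(\frac{1}{\alpha h}\log\frac{W_2(\nu_0,\nu^*)}{\epsilon})$ steps; substituting the $h$ upper bound, which scales like $\epsilon^{1/(p_1-1/2)}$, gives $k = \tilde{\mathcal O}(\epsilon^{-1/(p_1-1/2)})$ since $\frac{1}{h}$ dominates and the logarithmic factor is hidden by $\tilde{\mathcal O}$. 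The main obstacle I anticipate is the cross-term estimate: making the conditional-expectation structure rigorous (which $\sigma$-algebra $X_h^{(k)}-Y_h$ is measurable with respect to, and how to couple the stationary copy $Y$ so that its increment interacts correctly with $D_h^{(k)}$) and getting the Lipschitz-in-initial-condition bound for the diffusion over a time-$h$ window with the right $O(h)$ dependence, since a crude Grönwall bound would lose the sharp constants.
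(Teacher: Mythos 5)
Your plan follows the same route as the paper's proof of Theorem~\ref{theo:master}: build the coupling step by step (one diffusion step from $\tX_{k-1}$ contracted against a stationary copy, then add the local deviation $D_h^{(k)}$), expand the squared distance, pair the $\F_{t_{k-1}}$-measurable displacement with $\E[D_h^{(k)}\mid\F_{t_{k-1}}]$ to bring in $\lambda_2 h^{2p_2}$, pair the one-step increment of the coupled pair with $D_h^{(k)}$ via Cauchy--Schwarz to bring in $\mu_1(b)\lambda_1$, absorb the linear-in-$A_k$ terms by Young's inequality with parameter $\alpha/2$, unroll, and then choose $h$ and $k$; this is exactly the paper's recursion \eqref{eq:master_recursion} and its aftermath. (One side correction: the ``naive'' triangle-inequality route you dismiss yields a residual of order $h^{p_1-1}$, not $h^{p_1-1/2}$, which is precisely why the mean-deviation condition and the cross-term treatment are needed at all.)

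The genuine gap is the estimate you yourself flag and defer: you need the second moment of the increment $Z=(X_h^{(k)}-Y_h)-(\tX_{k-1}-Y_0)$ to be bounded by $\mathcal{O}(\mu_1(b)\,h)\,\E\normtwo{\tX_{k-1}-Y_0}^2$, and this is not just a matter of sharp Gr\"onwall constants. The theorem assumes only $W_2$-contraction, so the coupling of $X^{(k)}$ with the stationary copy that realizes $e^{-\alpha h}$ need not be synchronous; for an unspecified coupling the martingale parts of $Z$ do not cancel, and a direct bound gives only an additive $\mathcal{O}(dh)$ (via the growth of $\normf{\sigma}^2$) rather than a term proportional to $A_{k-1}^2$, which after unrolling degrades the residual from $h^{p_1-1/2}$ to roughly $h^{(p_1-1/2)/2}$ and ruins the dimension dependence. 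The paper circumvents this by never bounding the martingale part of $Z$ at all: it first bounds only the \emph{conditional mean}, $\normtwo{\E[Z_{t_{k+1}}\mid\F_{t_k}]}\le \sqrt{8}\,\mu_1(b)\,h\,\normtwo{X_{t_k}-\bar X_{t_k}}$, using Fubini (the stochastic integrals vanish under conditional expectation), Jensen, Lipschitzness of $b$, and a Gr\"onwall bound on $\E[\normtwo{X_s-\bar X_s}^2\mid\F_{t_k}]$ --- this is where the constraint $h<1/(8\mu_1(b)^2+8\mu_1^{\mathrm{F}}(\sigma)^2)$ enters --- and then extracts the mean-square bound from the contraction itself: expanding $\E[\normtwo{X_{t_{k+1}}-\bar X_{t_{k+1}}}^2\mid\F_{t_k}]$ exactly as $\normtwo{X_{t_k}-\bar X_{t_k}}^2+\E[\normtwo{Z_{t_{k+1}}}^2\mid\F_{t_k}]+2\langle X_{t_k}-\bar X_{t_k},\E[Z_{t_{k+1}}\mid\F_{t_k}]\rangle$ and using $e^{-2\alpha h}\le 1$ on the left-hand side gives $\E[\normtwo{Z_{t_{k+1}}}^2\mid\F_{t_k}]\le 2\normtwo{X_{t_k}-\bar X_{t_k}}\,\normtwo{\E[Z_{t_{k+1}}\mid\F_{t_k}]}\le 8\mu_1(b)h\normtwo{X_{t_k}-\bar X_{t_k}}^2$, valid for whatever coupling achieves the contraction. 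Supplying this lemma is the one missing idea; with it, your recursion and the step-size/iteration bookkeeping go through as you describe.
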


Theorem~\ref{theo:master} directly translates mean-square order
results in the SDE literature to convergence rates of sampling algorithms in $W_2$. 
The proof deferred to Appendix~\ref{proof:master} follows from
an inductive argument over the local deviation at each step (see e.g. \cite{milstein2013stochastic}),
and the convergence is provided by the exponential \wtwo-contraction of the diffusion.
To invoke the theorem and obtain convergence rates of a sampling algorithm,
it suffices to (i) show that the candidate diffusion is uniformly dissipative
and (ii) derive the local deviation orders for the underlying discretization.
Below, we demonstrate this on both the overdamped Langevin and general It\^o diffusions when the EM scheme is used for discretization, as well as the underdamped Langevin diffusion when a linearization is used for discretization~\cite{cheng2017underdamped}. 
For these schemes, local deviation orders are either well-known or straightforward to derive. 
Thus, convergence rates for corresponding sampling algorithms can be easily obtained using Theorem~\ref{theo:master}.

\begin{exam}
  Consider sampling from a target distribution whose potential is strongly convex using
  the overdamped Langevin diffusion \eqref{eq:continuous} discretized by the EM scheme.
  The scheme has local deviation of orders $(1.5, 2.0)$ for It\^o diffusions with
  constant diffusion coefficients
  and drift coefficients that are sufficiently smooth
  \footnote{In fact, it suffices to ensure the drift is three-times differentiable with Lipschitz gradient and Hessian.} 
  (see e.g.~\cite[Sec. 1.5.4]{milstein2013stochastic}).
  Since the potential is strongly convex,
  the Langevin diffusion is uniformly dissipative and achieves exponential \wtwo-contraction~\cite[Prop. 1]{durmus2016high}.
  Elementary algebra shows that Markov chain moments are bounded~\cite[Lem. A.2]{erdogdu2018global}. 
  Therefore, Theorem~\ref{theo:master} 
  implies that the rate of the sampling is
  $\tilde{\mathcal{O}}(d \epsilon^{-1})$, where the dimension dependence can be extracted
  from the explicit bound. This recovers the result by~\citet[Thm. 8]{durmus2016high}. 
\end{exam}

\begin{exam} \label{exam:em_unif_diss}
  If a general It\^o diffusion \eqref{eq:continuous_general}
  with Lipschitz smooth drift and diffusion coefficients
  is used for the sampling task,
  local deviation orders of the EM scheme reduce to $(1.0, 1.5)$
  due to the approximation of the diffusion term \cite{milstein2013stochastic} --
  this term is exact for Langevin diffusion.
  If we further have uniform dissipativity, 
  it can be shown that Markov chain moments are bounded \cite[Lem. A.2]{erdogdu2018global}. 
  Hence, Theorem~\ref{theo:master} concludes that the convergence rate is $\tilde{\mathcal{O}}(d \epsilon^{-2})$.
  We note that for the diffusion coefficient, we use the Frobenius norm for the 
  Lipschitz and growth constants which potentially hides dimension dependence factors. 
  The dimension dependence worsens if one were to convert all bounds to be based on the operator norm 
  using the pessimistic inequality 
  $
  \text{
    \footnotesize
    $
      \normf{\sigma(x)} \le ( d^{1/2} + m^{1/2} ) \normop{\sigma(x)}
    $
  }$.
  Appendix~\ref{app:em_unif_diss} provides a convergence bound with explicit constants.
\end{exam}

\begin{exam}
  Consider sampling from a target distribution whose potential is strongly convex using the underdamped Langevin diffusion:
  \eq{
    \dX_t &= V_t \dt, \quad
    \dV_t = - \gamma V_t \dt - u \nabla f (X_t) \dt + \sqrt{2 \gamma u} \dB_t.
  }
  \citet{cheng2017underdamped} show that the continuous-time process $\{(X_t, X_t + V_t)\}_{t 
  \ge 0}$ exhibits exponential \wtwo-contraction when the coefficients $\gamma$ and $u$ are appropriately chosen~\cite[Thm. 5]{cheng2017underdamped}. Moreover, the scheme devised by linearizing the degenerate SDE for the augmented state has uniform local deviation orders $(1.5, 2.0)$~\footnote{\citet{cheng2017underdamped} derive the uniform local mean-square deviation order. Jensen's inequality implies that the local mean deviation is of the same uniform order. This entails uniform local deviation orders are $(2.0, 2.0)$ and hence also $(1.5, 2.0)$ when step size constraint $C_h \le 1$; note $p_2 \ge p_1 + 1/2$ is required to invoke Theorem~\ref{theo:master}.}~\cite[Thm. 9]{cheng2017underdamped}. 
  Theorem~\ref{theo:master} implies that the convergence rate is $\mathcal{O}(d^{1/2} \epsilon^{-1})$, where the dimension dependence is extracted from explicit bounds. This recovers the result by~\citet[Thm. 1]{cheng2017underdamped}.
\end{exam}

Since the first version of this paper appeared on arXiv, several new schemes were devised in the literature. 
We include the example of deriving the convergence rate for the recently proposed randomized midpoint method~\cite{shen2019randomized}. This example demonstrates that Theorem~\ref{theo:master} can also be applied to schemes that include additional randomness which is independent of that of the Brownian motion.
\begin{exam}
\citet{shen2019randomized} discretize the underdamped Langevin diffusion with a variant of the midpoint method, where the midpoint is computed with the linearization scheme~\cite{cheng2017underdamped} at a random time $\tau$ uniformly selected in $[0, h]$.
It can be shown that the local mean-square deviation order is the same as that of the one-step linearization scheme~\cite[Lem. 2]{shen2019randomized}. 
However, one sees that the local mean deviation order improves by inspecting the following bound
\eqn{
\text{
\footnotesize
$
  \big\|
    \E \big [ \tV_k - V_h^{(k)} | \F_{t_{k-1}} \big]
  \big\|_2^2
  \le
  \E_{\not \tau} \big [
    \big\| 
      \E_{\tau} [ \tV_k - V_h^{(k)}]
    \big\|_2^2
    | \F_{t_{k-1}}
  \big ]
  \le
  \mathcal{O}\bracks{
    \| \tV_{k-1} \|_2^2 h^{8}
    \! + \!
    \| \nabla f(\tX_{k-1}) \|_2^2  h^{10}
    \! + \!
    h^{9}
  },
$
}
}
where $\E_\tau$ and $\E_{\not \tau}$ denote taking expectation over the random time and Brownian motion, respectively.
The first and second inequalities are respectively by Jensen's and Lemma 2 of \cite{shen2019randomized}. 
A similar improvement holds for the position variable. Combined with moment bounds, this gives uniform deviation orders of $(2, 4)$ for the augmented SDE $\{(X_t, X_t + V_t)\}_{t\ge 0}$, which implies a convergence rate of $\tilde{\mathcal{O}}(\epsilon^{-2/3})$ without high-order smoothness for sufficiently small step size.
\end{exam}

While computing the local deviation orders of a numerical scheme for a single step is often straightforward,
it is not immediately clear how one might verify them uniformly for each iteration. This
requires a uniform bound on moments of the Markov chain defined by the numerical scheme. 
As our second principal contribution, we explicitly bound the Markov chain moments of SRK schemes
which, combined with Theorem~\ref{theo:master}, leads to improved rates by only accessing the first-order oracle.

\section{Sampling with Stochastic Runge-Kutta and Improved Rates} \label{sec:srk}
We show that convergence rates of sampling can be significantly improved if
an It\^o diffusion with exponential \wtwo-contraction is discretized using SRK methods.
Compared to the EM scheme, SRK schemes
we consider query the same order oracle and improve on the deviation orders.

Theorem~\ref{theo:master} hints that one may expect the convergence rate of sampling to improve
as more terms of the It\^o-Taylor expansion are incorporated in the numerical integration scheme. 
However, in practice, 
a challenge for simulation is the appearance of non-Gaussian terms in the form of iterated It\^o integrals.  
Fortunately, since the overdamped Langevin diffusion has a constant diffusion coefficient, efficient 
SRK methods can still be applied to accelerate convergence.

\subsection{Sampling from Strongly Convex Potentials with the Langevin Diffusion}
We provide a non-asymptotic analysis for integrating the overdamped Langevin diffusion based on a
mean-square order 1.5 SRK scheme for SDEs with constant diffusion coefficients~\cite{milstein2013stochastic}. 
We refer to the sampling algorithm as SRK-LD. Specifically, given a sample from the previous iteration $\tX_k$, 
\begin{align}
    \tH_1 &= \tX_k + \sqrt{2h} \sbracks{
        \bracks{
            \frac{1}{2} + \frac{1}{\sqrt{6}}
        } \xi_{k+1} + 
        \frac{1}{\sqrt{12}} \eta_{k+1}
    } , \notag \\
    \tH_2 &= \tX_k - h \nabla f(\tX_k) + \sqrt{2h} \sbracks{
        \bracks{
            \frac{1}{2} - \frac{1}{\sqrt{6}}
        } \xi_{k+1} + 
        \frac{1}{\sqrt{12}} \eta_{k+1}
    }, \notag \\
    \tX_{k+1} &= \tX_k -
    \frac{h}{2} \bracks{
        \nabla f(\tH_1) +
        \nabla f(\tH_2)
    } 
    + \sqrt{2h} \xi_{k+1} 
    \label{eq:srk},
\end{align}
where $h$ is the step size and $\xi_{k+1}, \eta_{k+1} \overset{\text{i.i.d.}}{\sim} \mathcal{N}(0, I_d)$ are independent of $\tX_k$ for all $k\in\mathbb{N}$. We refer the reader to~\cite[Sec. 1.5]{milstein2013stochastic} for a detailed derivation of the scheme and other background information.

\begin{theo}[SRK-LD] \label{theo:srk_ld}
Let $\nu^*$ be the target distribution
with a strongly convex potential that is four-times differentiable with Lipschitz continuous
first three derivatives.
Let $\nu_k$ be the distribution of the $k$th Markov chain iterate
defined by \eqref{eq:srk} starting from the dirac measure $\nu_0 = \delta_{x_0}$.
Then, for a sufficiently small step size,
1.5 SRK scheme has uniform local deviation orders $(2.0, 2.5)$, and 
$W_2(\nu_k, \nu^*)$ converges within $\epsilon$ error in $\tilde{\mathcal{O}}(d \epsilon^{-2/3})$
iterations.
\end{theo}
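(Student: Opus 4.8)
The plan is to invoke Theorem~\ref{theo:master} with $(p_1, p_2) = (2.0, 2.5)$, which immediately yields the rate $\tilde{\mathcal{O}}(\epsilon^{-1/(p_1 - 1/2)}) = \tilde{\mathcal{O}}(\epsilon^{-2/3})$ in $\epsilon$; the work is entirely in verifying the hypotheses of that theorem, namely that (i) the overdamped Langevin diffusion \eqref{eq:continuous} with a strongly convex potential exhibits exponential \wtwo-contraction, and (ii) the SRK scheme \eqref{eq:srk}, viewed as a discretization of \eqref{eq:continuous}, has \emph{uniform} local deviation orders $(2.0, 2.5)$, with the dimension dependence of the constants $\lambda_1, \lambda_2$ tracked so as to extract the factor $d$. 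Step (i) is already recorded in the excerpt: strong convexity of $f$ implies uniform dissipativity of \eqref{eq:continuous}, hence $r(t) = e^{-\alpha t}$ with $\alpha$ the strong-convexity constant, and the drift $\nabla f$ is Lipschitz while $\sigma \equiv \sqrt{2}\, I_d$ is constant, so the Lipschitz requirements on the coefficients hold. So the real content is step (ii).

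For step (ii), I would first recall from the It\^o--Taylor expansion \eqref{eq:ito_taylor_expansion} that, since the diffusion coefficient is constant, the mean-square order $1.5$ SRK scheme \eqref{eq:srk} is designed to match the expansion of $X_h$ around $X_0$ through the terms of order $h^{3/2}$; the classical one-step analysis (e.g.~\cite[Sec. 1.5.4]{milstein2013stochastic}) then gives, conditionally on the current iterate $\tX_{k-1} = x$, a local mean-square deviation $\E[\|D_h^{(k)}\|_2^2 \mid \F_{t_{k-1}}] \le \psi_1(x)\, h^{4}$ and a local mean deviation $\|\E[D_h^{(k)} \mid \F_{t_{k-1}}]\|_2^2 \le \psi_2(x)\, h^{5}$, where $\psi_1, \psi_2$ are polynomial in $\|x\|_2$ with coefficients built from $\mu_1(\nabla f), \mu_2(\nabla f), \mu_3(\nabla f)$ and the Laplacian-type terms appearing after applying the operators $L, \Lambda_j$ in \eqref{eq:ito_taylor_ops} — this is exactly where the four-times-differentiability with Lipschitz first three derivatives is consumed. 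The step from these \emph{pointwise-in-$x$} bounds to \emph{uniform} orders $(2.0, 2.5)$ as in \eqref{eq:uniform_local_deviation_orders} requires taking a further expectation over $\tX_{k-1}$ and bounding $\E[\psi_i(\tX_{k-1})]$ uniformly in $k$; this is the new ingredient relative to the standard SDE literature, and it demands a uniform-in-$k$ moment bound $\sup_k \E[\|\tX_k\|_2^{q}] < \infty$ for the relevant polynomial degree $q$.

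I expect the uniform moment bound on the SRK Markov chain to be the main obstacle — it is the "second principal contribution" flagged in the excerpt, and unlike the EM case (where it is cited as elementary algebra from~\cite[Lem. A.2]{erdogdu2018global}), here the intermediate stages $\tH_1, \tH_2$ involve evaluating $\nabla f$ at perturbed points and re-injecting the result, so a one-step drift estimate of the form $\E[\|\tX_{k+1}\|_2^2 \mid \tX_k] \le (1 - c h)\|\tX_k\|_2^2 + C h$ must be established by carefully expanding \eqref{eq:srk}, using strong convexity (one-sided Lipschitzness of $-\nabla f$) to get the contractive $(1 - ch)$ factor for sufficiently small $h$, and controlling the extra $\nabla f(\tH_1), \nabla f(\tH_2)$ cross-terms via $\mu_1(\nabla f)$ and the Gaussian moments of $\xi_{k+1}, \eta_{k+1}$ (which contribute the $d$-dependence). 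Iterating this recursion gives $\sup_k \E[\|\tX_k\|_2^2] = \mathcal{O}(d)$, and an analogous argument at higher moments gives $\sup_k \E[\|\tX_k\|_2^q] = \mathcal{O}(d^{q/2})$. Feeding these into $\E[\psi_i(\tX_{k-1})]$ yields $\lambda_1 = \mathcal{O}(d)$ and $\lambda_2 = \mathcal{O}(d)$ (up to the smoothness constants), and then plugging $p_1 = 2.0$, $\lambda_1, \lambda_2 = \mathcal{O}(d)$, $\mu_1(b) = \mu_1(\nabla f) = \mathcal{O}(1)$ into the bound \eqref{eq:main_bound} of Theorem~\ref{theo:master} gives $W_2(\nu_k, \nu^*) \lesssim (1 - \alpha h/2)^k W_2(\nu_0, \nu^*) + \mathcal{O}(\sqrt{d})\, h^{3/2}$, from which the stated $\tilde{\mathcal{O}}(d\,\epsilon^{-2/3})$ iteration complexity follows by choosing $h = \tilde{\Theta}((\epsilon/\sqrt d)^{2/3})$ and $k = \tilde{\Theta}(h^{-1}\log(1/\epsilon))$ exactly as in the second half of Theorem~\ref{theo:master}. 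The routine-but-lengthy parts I would defer to an appendix are: the explicit one-step Taylor-remainder computation giving $\psi_1, \psi_2$, and the bookkeeping of which derivative-Lipschitz constants enter each coefficient.
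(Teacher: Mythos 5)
Your overall architecture is exactly the paper's: verify exponential \wtwo-contraction from strong convexity, establish uniform-in-$k$ moment bounds for the SRK chain (the paper proves $\sup_k \E[\norm{\tX_k}_2^{2n}] = \mathcal{O}(d^n)$ by a drift recursion, as you propose), match the one-step update against the It\^o--Taylor expansion to get local deviation orders $(2.0,2.5)$, and feed $\lambda_1,\lambda_2$ into Theorem~\ref{theo:master}. So the route is the same; the problem is in the one piece you wave through, namely the dimension dependence of $\lambda_1,\lambda_2$, which is precisely what determines the factor $d$ in the theorem.

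Your claim that $\lambda_1=\mathcal{O}(d)$ and $\lambda_2=\mathcal{O}(d)$ is asserted without computation, and it is both internally inconsistent and incorrect. Internally: if the prefactor in \eqref{eq:main_bound} were really $\mathcal{O}(\sqrt d)\,h^{3/2}$, then $h=\tilde\Theta((\epsilon/\sqrt d)^{2/3})$ gives $k=\tilde\Theta(h^{-1}\log(1/\epsilon))=\tilde\Theta(d^{1/3}\epsilon^{-2/3})$, not the $\tilde{\mathcal{O}}(d\epsilon^{-2/3})$ you conclude. Substantively: the local mean-square deviation contains the unmatched It\^o--Taylor term $\int_0^t\!\int_0^s \Delta(\nabla f)(X_u)\,\mathrm{d}u\,\mathrm{d}s$ (the vector Laplacian of $\nabla f$), whose norm is only bounded by $d\,\mu_3(f)$ (Lemma~\ref{lemm:laplacian_bounded}), so its squared contribution is $\Theta(d^2)h^4$, giving $\lambda_1=C_2=\mathcal{O}(d^2)$; and the local mean deviation is dominated by the third-order Taylor remainder $\nabla^4 f[\Delta\tH_i,\Delta\tH_i,\Delta\tH_i]$, whose second moment pulls in Gaussian sixth moments of size $(d+4)^3$, giving $\lambda_2=C_3=\mathcal{O}(d^3)$ (this is also where the $2n$th chain moments of order $d^n$, for $n$ up to $3$, are actually consumed). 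With these correct orders the prefactor in \eqref{eq:main_bound} is $\mathcal{O}(d^{3/2})h^{3/2}$, forcing $h\asymp\epsilon^{2/3}/d$ and hence $k=\tilde{\mathcal{O}}(d\epsilon^{-2/3})$, which is how the stated rate arises. So the conclusion of the theorem is right, but your proof as written does not establish its dimension dependence: the "routine bookkeeping" you defer is exactly the non-routine part, and doing it carefully yields $(d^2,d^3)$, not $(d,d)$.
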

The proof of this theorem is given in Appendix~\ref{app:srk_ld} where we provide explicit constants. 
The basic idea of the proof
is to match up the terms in the It\^o-Taylor expansion to terms in the Taylor expansion 
of the discretization scheme. However, extreme care is needed to ensure a tight dimension dependence.

\begin{remark}
For large-scale Bayesian inference, computing the full gradient of the potential can be costly. 
Fortunately, for SRK-LD, the convergence rate is retained when we replace the first-order oracle with an unbiased stochastic one, provided queries of the latter have a variance not overly large. 
We provide an informal discussion in Appendix~\ref{app:srk-ld-stochasticgrad}.
\end{remark}

We emphasize that the 1.5 SRK scheme \eqref{eq:srk} only queries the gradient of the potential and improves 
the best available \wtwo-rate of LMC in the same setting
from $\tilde{\mathcal{O}}(d \epsilon^{-1})$ to $\tilde{\mathcal{O}}(d \epsilon^{-2/3})$,
with merely two extra gradient evaluations per iteration.
Remarkably, the dimension dependence stays the same.

\subsection{Sampling from Non-Convex Potentials with It\^o Diffusions}
\label{subsubsec:srk-nonconvex}
For the Langevin diffusion, the conclusions of Theorem~\ref{theo:master} only apply to distributions with strongly convex potentials, as exponential \wtwo-contraction of the Langevin diffusion is equivalent to strong convexity of the potential.
This shortcoming can be addressed using a non-constant diffusion coefficient which allows us to
sample from non-convex potentials using uniformly dissipative candidate diffusions. 
Below, we use a mean-square order 1.0 SRK scheme for general diffusions~\cite{rossler2010runge} and achieve an improved convergence rate compared to sampling with the EM scheme. 

We refer to the sampling algorithm as SRK-ID, which has the following update rule:
\eqn{
\tH_1^{(i)} &= \tX_k + \ssum_{j=1}^m \sigma_l(\tX_k) \frac{I_{(j,i)}}{\sqrt{h}}, \quad\quad
\tH_2^{(i)} = \tX_k - \ssum_{j=1}^m \sigma_l(\tX_k) \frac{I_{(j,i)}}{\sqrt{h}}, \\
\tX_{k+1} &= 
    \tX_k + h b(\tX_k)  + \ssum_{i=1}^m \sigma_i(\tX_k) I_{(i)} + 
            \frac{\sqrt{h}}{2} \ssum_{i=1}^m
        \big(
            \sigma_i (\tH_1^{(i)}) - \sigma_i (\tH_2^{(i)})
        \big), \numberthis \label{eq:srk_id}
}
where 
$
\text{
  \footnotesize
  $
    I_{(i)} = \sint_{t_k}^{t_{k+1}} \dB_s^{(i)}
  $
}$,
$
\text{
  \footnotesize
  $I_{(j,i)} = \sint_{t_k}^{t_{k+1}} \sint_{t_k}^{s} \dB_u^{(j)} \dB_s^{(i)}$
}
$.
We note that schemes of higher order exist for general diffusions, but they typically require 
advanced approximations of iterated It\^o integrals of the form 
$
\text{
  \footnotesize
  $
    \int_0^{t_0} \cdots \int_0^{t_{n-1}} \dB_{t_n}^{(k_n)} \cdots \dB_{t_1}^{(k_1)}
  $
}$.
\begin{theo}[SRK-ID] \label{theo:srk_id}
  For a uniformly dissipative diffusion with invariant measure $\nu_*$, Lipschitz drift and diffusion coefficients that have Lipschitz gradients, 
  assume that the diffusion coefficient further satisfies the sublinear growth condition
$
  \text{
    \footnotesize
    $
    \normop{\sigma(x)} \le \pi_{1,1}(\sigma) \bigl(1 + \normtwo{x}^{1/2} \bigr)
    $
  }
$
for all $x \in \R^d$.
  Let $\nu_k$ be the distribution of the $k$th Markov chain iterate
  defined by \eqref{eq:srk_id} starting from the dirac measure $\nu_0 = \delta_{x_0}$.
  Then for a sufficiently small step size, iterates of
  the 1.0 SRK scheme have uniform local deviation orders $(1.5, 2.0)$, and
  $W_2(\nu_k, \nu^*)$ converges within $\epsilon$ error in $\tilde{\mathcal{O}}(d^{3/4} m^{2} \epsilon^{-1})$
iterations.
\end{theo}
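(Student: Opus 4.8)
The plan is to invoke Theorem~\ref{theo:master} with local deviation orders $(p_1,p_2)=(1.5,2.0)$: since $p_1>1/2$ and $p_2=p_1+1/2$, the theorem immediately gives convergence in $\tilde{\mathcal{O}}(\epsilon^{-1/(p_1-1/2)})=\tilde{\mathcal{O}}(\epsilon^{-1})$ iterations, and the $d^{3/4}m^2$ factor is read off from the $d$- and $m$-dependence of the constants $\lambda_1,\lambda_2$ in~\eqref{eq:main_bound}. The structural hypotheses of Theorem~\ref{theo:master} are in place for free: uniform dissipativity is assumed, so by \cite[Prop.~3.3]{erdogdu2018global} the diffusion has exponential $W_2$-contraction with rate $r(t)=e^{-\alpha t}$, and Lipschitzness of $b$ and $\sigma$ is assumed. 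So the entire task reduces to verifying the uniform local deviation orders of the scheme~\eqref{eq:srk_id} and tracking dimension dependence of $\lambda_1,\lambda_2$.

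For the one-step analysis I would compute $D_h^{(k)}=\tX_k-X_h^{(k)}$ by matching~\eqref{eq:srk_id} term-by-term against the It\^o--Taylor expansion~\eqref{eq:ito_taylor_expansion} of one step of the diffusion started at $\tX_{k-1}$. A Taylor expansion of the finite-difference stages about $\tX_{k-1}$, with displacement $v_i=\tfrac{1}{\sqrt h}\sum_j\sigma_j(\tX_{k-1})I_{(j,i)}$, gives $\tfrac{\sqrt h}{2}\sum_i(\sigma_i(\tH_1^{(i)})-\sigma_i(\tH_2^{(i)}))=\sum_{i,j}\Lambda_j(\sigma_i)(\tX_{k-1})I_{(j,i)}+\tfrac{\sqrt h}{2}\sum_i R_i$, where $R_i$ is the central-difference remainder. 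Hence the scheme reproduces the first four terms of~\eqref{eq:ito_taylor_expansion} up to $\tfrac{\sqrt h}{2}\sum_i R_i$, so $D_h^{(k)}$ is the sum of this remainder, the It\^o--Taylor tail terms (the $\du\ds$ double integral of $L(b)$, the $\du\,\dB$ integral of $L(\sigma_i)$, and the $\dB\,\ds$ integral of $\Lambda_i(b)$), and the second-order correction of the $\dB\,\dB$ integral coming from replacing $\Lambda_j(\sigma_i)(X_u)$ by $\Lambda_j(\sigma_i)(\tX_{k-1})$. Bounding each tail term by the It\^o isometry and the Burkholder--Davis--Gundy inequality, together with a stochastic Taylor expansion of the integrands, shows every piece is $\mathcal{O}(h^{3})$ in conditional $L^2$; for the central-difference remainder, $\normtwo{R_i}\lesssim\mu_2(\sigma)\normtwo{v_i}^2$ and $\E[\normtwo{v_i}^2\mid\F_{t_{k-1}}]\lesssim h\,\normf{\sigma(\tX_{k-1})}^2$ give $\tfrac{\sqrt h}{2}\sum_i R_i=\mathcal{O}(h^{3/2})$ in conditional $L^2$, so $\mathcal E_k^{(1)}\le C_1(1+\normtwo{\tX_{k-1}}^{n_1})h^{3}$.

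For the mean deviation, every tail term with an odd number of Brownian integrators and an adapted integrand has zero conditional mean, leaving only the $\du\ds$ term at $\mathcal{O}(h^{2})$; for $\tfrac{\sqrt h}{2}\sum_i R_i$ I would exploit that $R_i$ is an \emph{odd} function of $v_i$ and that, conditionally on the path of $B^{(i)}$, the off-diagonal part of $v_i$ is a centered Gaussian, while the diagonal contribution $I_{(i,i)}$ enters only at higher order, pushing $\normtwo{\E[\tfrac{\sqrt h}{2}\sum_i R_i\mid\F_{t_{k-1}}]}$ down to $\mathcal{O}(h^{2})$; hence $\mathcal E_k^{(2)}\le C_2(1+\normtwo{\tX_{k-1}}^{n_2})h^{4}$. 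To make both estimates uniform in $k$ I would establish $\sup_k\E[\normtwo{\tX_k}^{n}]\le M_n<\infty$ by a one-step drift estimate in the spirit of \cite[Lem.~A.2]{erdogdu2018global}: the uniform dissipativity of $b$ supplies the contractive term, the sublinear growth $\normop{\sigma(x)}\le\pi_{1,1}(\sigma)(1+\normtwo{x}^{1/2})$ controls the martingale and stage-evaluation terms (the stages $\tH_1^{(i)},\tH_2^{(i)}$ being only $\mathcal{O}(\sqrt h)$ from $\tX_{k-1}$, rescaled by iterated integrals), and a Grönwall-type recursion closes the Lyapunov bound. Substituting $M_n$ for the $(1+\normtwo{\tX_{k-1}}^{n_i})$ factors yields $\mathcal E_k^{(1)}\le\lambda_1 h^{3}$ and $\mathcal E_k^{(2)}\le\lambda_2 h^{4}$, i.e. uniform orders $(1.5,2.0)$. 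Finally, tracking constants: the double sums over $i,j\in\{1,\dots,m\}$ and the $m^2$ iterated integrals $I_{(j,i)}$ produce the $m$-powers, while the moments $M_n$ and the operator-norm conversions of the Lipschitz/growth constants produce the $d$-powers; plugging the resulting $\lambda_1,\lambda_2$ into Theorem~\ref{theo:master} with $p_1=1.5$ gives the $\tilde{\mathcal{O}}(d^{3/4}m^{2}\epsilon^{-1})$ iteration count.

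The main obstacle I anticipate is the mean-deviation estimate for the central-difference remainder: a direct bound via $\normtwo{R_i}\lesssim\mu_2(\sigma)\normtwo{v_i}^2$ only yields $\mathcal{O}(h^{3/2})$ for $\normtwo{\E[\tfrac{\sqrt h}{2}\sum_i R_i\mid\F_{t_{k-1}}]}$, which would violate the $p_2\ge p_1+1/2$ requirement of Theorem~\ref{theo:master}; getting $\mathcal{O}(h^2)$ requires carefully separating the Gaussian (off-diagonal) and non-Gaussian (diagonal $I_{(i,i)}$) parts of the iterated integrals and using the odd symmetry of the stages. A close second is the uniform moment bound — unlike the constant-coefficient Langevin case, the non-constant, sublinearly growing diffusion coefficient entangled with the intermediate stages makes the one-step drift analysis substantially more delicate — and obtaining the tight $d^{3/4}$ (rather than $d$) dependence hinges on doing all the norm bookkeeping in the operator norm wherever possible.
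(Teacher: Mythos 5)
Your overall architecture coincides with the paper's: invoke Theorem~\ref{theo:master} after (i) uniform moment bounds for the chain (the paper does these by direct multinomial expansion plus dissipativity rather than a generic Lyapunov argument, but the spirit is the same) and (ii) a term-by-term match of \eqref{eq:srk_id} against the It\^o--Taylor expansion; your mean-square estimate giving $p_1=1.5$ is exactly the paper's Lemma~\ref{lemm:id_discretization_meansqr}. The genuine gap is precisely where you flag it: the mean-deviation order of the central-difference remainder, and your proposed fix does not close. With only $\normtwo{R_i(v)}\le\mu_2(\sigma)\normtwo{v}^2$ (Lipschitz gradients) and the symmetrization $\E[R_i(v_i^\perp+c)\mid\F_{t_{k-1}}]=\tfrac12\E[R_i(v_i^\perp+c)-R_i(v_i^\perp-c)\mid\F_{t_{k-1}}]$, where $c=\sigma_i(\tX_{k-1})I_{(i,i)}h^{-1/2}$ is the diagonal part, the best available bound is $\normtwo{\E[R_i\mid\F_{t_{k-1}}]}\lesssim\mu_2(\sigma)\,\E\!\left[(\normtwo{v_i^\perp}+\normtwo{c})\normtwo{c}\,\middle|\,\F_{t_{k-1}}\right]$, and since both $v_i^\perp$ and $c$ are of size $h^{1/2}$ in $L^2$, the cross term is $\mathcal{O}(h)$; linearizing in $c$ does not help because the linearization error is again $\mathcal{O}(\mu_2(\sigma)\normtwo{c}^2)=\mathcal{O}(h)$. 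This yields only $\mathcal{E}_k^{(2)}=\mathcal{O}(m^2h^3)$, i.e.\ $p_2=1.5$, violating $p_2\ge p_1+1/2$; indeed a one-dimensional example with a kinked $\nabla\sigma$ shows the $h^{3/2}$ scaling for $\normtwo{\E[\tfrac{\sqrt h}{2}\sum_iR_i\mid\F_{t_{k-1}}]}$ can be genuine, so no symmetry argument can rescue the order under Lipschitz gradients alone.

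The paper's resolution is simpler and purely deterministic: it uses one more degree of smoothness of $\sigma$ --- the constant $\mu_3(\sigma)$ (Lipschitz $\nabla^2\sigma_i$) appears in $D_4$ of Lemma~\ref{lemm:id_discretization_mean} --- so the central-difference remainder difference is pointwise cubic, $\Exp{\normtwo{\phi_1^{(i)}-\phi_2^{(i)}}^2}\lesssim\mu_3(\sigma)^2\,\Exp{\normtwo{\Delta\tH^{(i)}}^6}=\mathcal{O}(h^3)$, whence $\tfrac{\sqrt h}{2}\sum_i(\phi_1^{(i)}-\phi_2^{(i)})$ is $\mathcal{O}(mh^2)$ in $L^2$ and the conditional mean is handled by Jensen; no separation of the Gaussian off-diagonal and non-Gaussian diagonal parts of the L\'evy area is needed (the $\bar S_1,\bar S_2$ drift terms are $\mathcal{O}(h^2)$ in $L^2$ and all remaining terms are martingale-type with zero conditional mean, as you say). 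So your instinct that something beyond the stated ``Lipschitz gradients'' is required was correct, but the paper's proof supplies it through extra smoothness (cf.\ the non-convex example in Section~\ref{subsec:exp_nonconvex}, which verifies Lipschitz first two derivatives), not through distributional symmetry. Finally, your claim that the $d^{3/4}$ hinges on ``operator-norm bookkeeping'' is backwards: the paper keeps Frobenius-norm Lipschitz/growth constants for $\sigma$, obtains $\lambda_1=D_3=\mathcal{O}(d^{3/2}m^3)$ and $\lambda_2=D_4=\mathcal{O}(d^{3/2}m^2)$ (the $d^{3/2}$ being a consequence of the sublinear growth condition interacting with the $\mathcal{O}(d^{n/2})$ moment bounds), and explicitly notes that converting to operator-norm constants would worsen the dimension dependence.
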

The proof is given in Appendix~\ref{app:srk_id} where 
we present explicit constants. We note that the dimension dependence in this case is only better than that of EM 
due to the extra growth condition on the diffusion. The extra $m$-dependence comes from the $2m$ evaluations of the 
diffusion coefficient at $\scriptstyle \tH_1^{(i)}$ and $\scriptstyle \tH_2^{(i)}$ ($i=1, \dots, m$). 
In the above theorem, we use the Frobenius norm for the 
Lipschitz and growth constants for the diffusion coefficient 
which potentially hides dimension dependence. One may 
convert all bounds to be based on the operator norm 
with our constants given in the Appendix. 

In practice, accurately simulating both the iterated It\^o integrals $I_{(j,i)}$
and the Brownian motion increments $I_{(i)}$ simultaneously is difficult.
We comment on two possible approximations based on truncating an infinite series in Appendix~\ref{app:srk-id}.

\section{Examples and Numerical Studies} \label{sec:exp}
We provide examples of our theory and numerical studies showing SRK methods achieve lower asymptotic errors, are stable under large step sizes, and hence converge faster to a prescribed tolerance.
We sample from strongly convex potentials with SRK-LD and non-convex potentials with SRK-ID. 
Since our theory is in $W_2$, we compare with EM on $W_2$ and mean squared error (MSE) between iterates of the Markov chain and the target. We do not compare to schemes that require computing derivatives of the drift and diffusion coefficients.
Since directly computing $W_2$ is infeasible, we estimate it using samples instead. However, sample-based estimators have a bias of order $\Omega(n^{-1/d})$~\cite{weed2017sharp}, so we perform a heuristic correction whose description is in Appendix~\ref{app:bias_correction}.

\begin{figure}[ht]
\begin{minipage}[t]{0.33\linewidth}
\centering
{\includegraphics[width=1.0\textwidth]{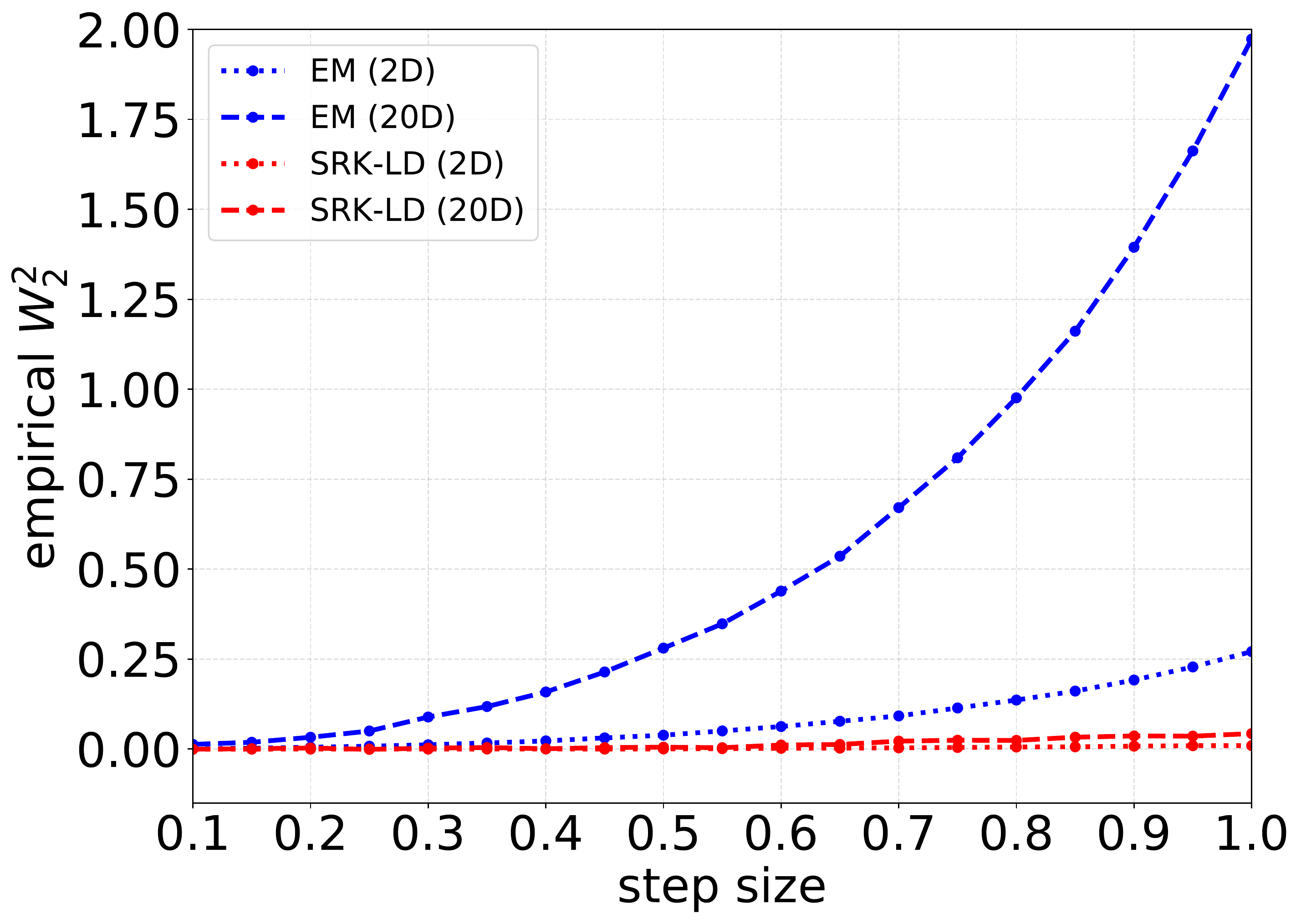}} 
{
    \footnotesize
    (a) Gaussian mixture
}
\end{minipage}
\begin{minipage}[t]{0.33\linewidth}
\centering
{\includegraphics[width=1.0\textwidth]{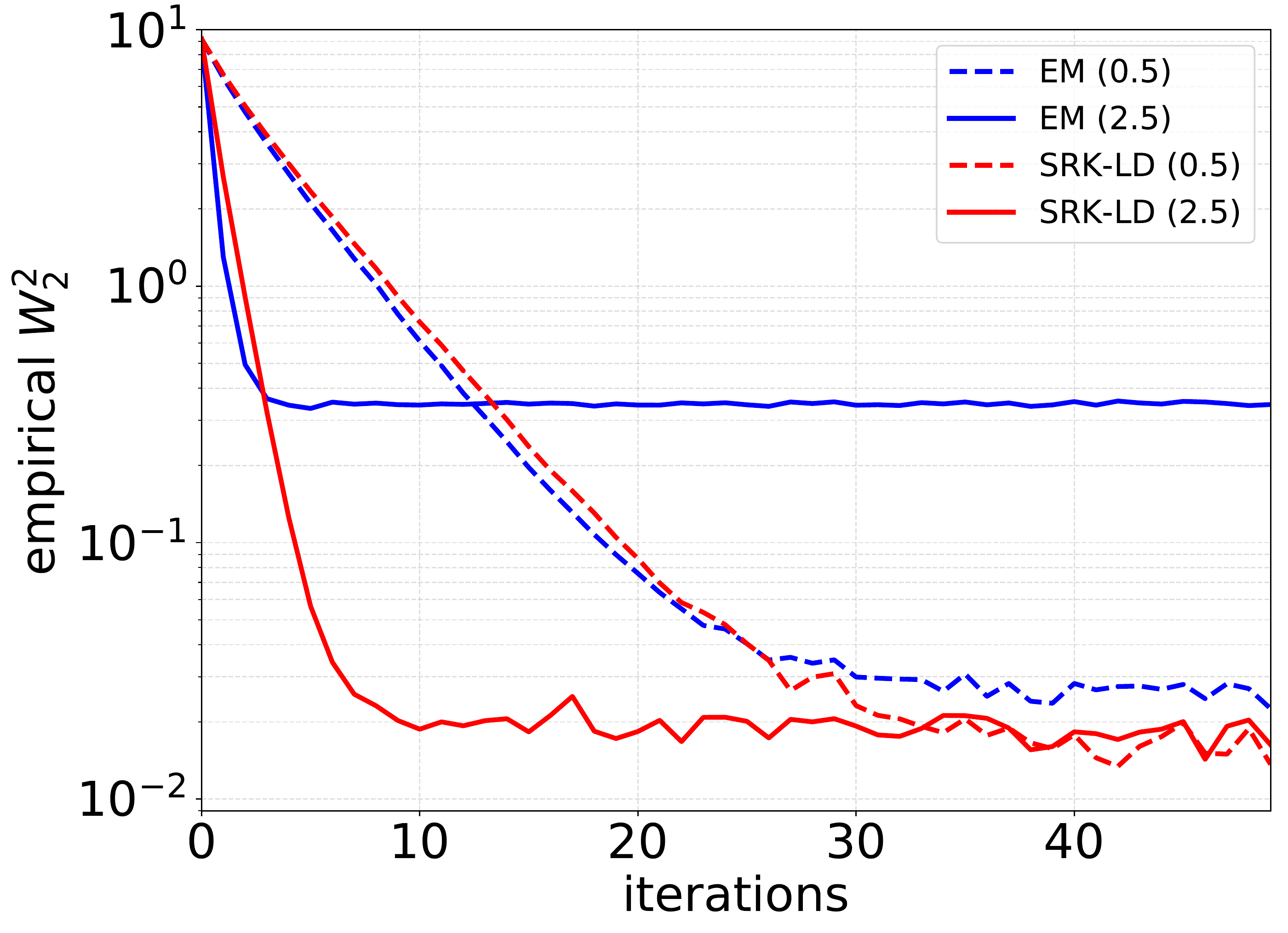}} 
{
    \footnotesize
    (b) Bayesian logistic regression
}
\end{minipage}
\begin{minipage}[t]{0.33\linewidth}
\centering
{\includegraphics[width=1.0\textwidth]{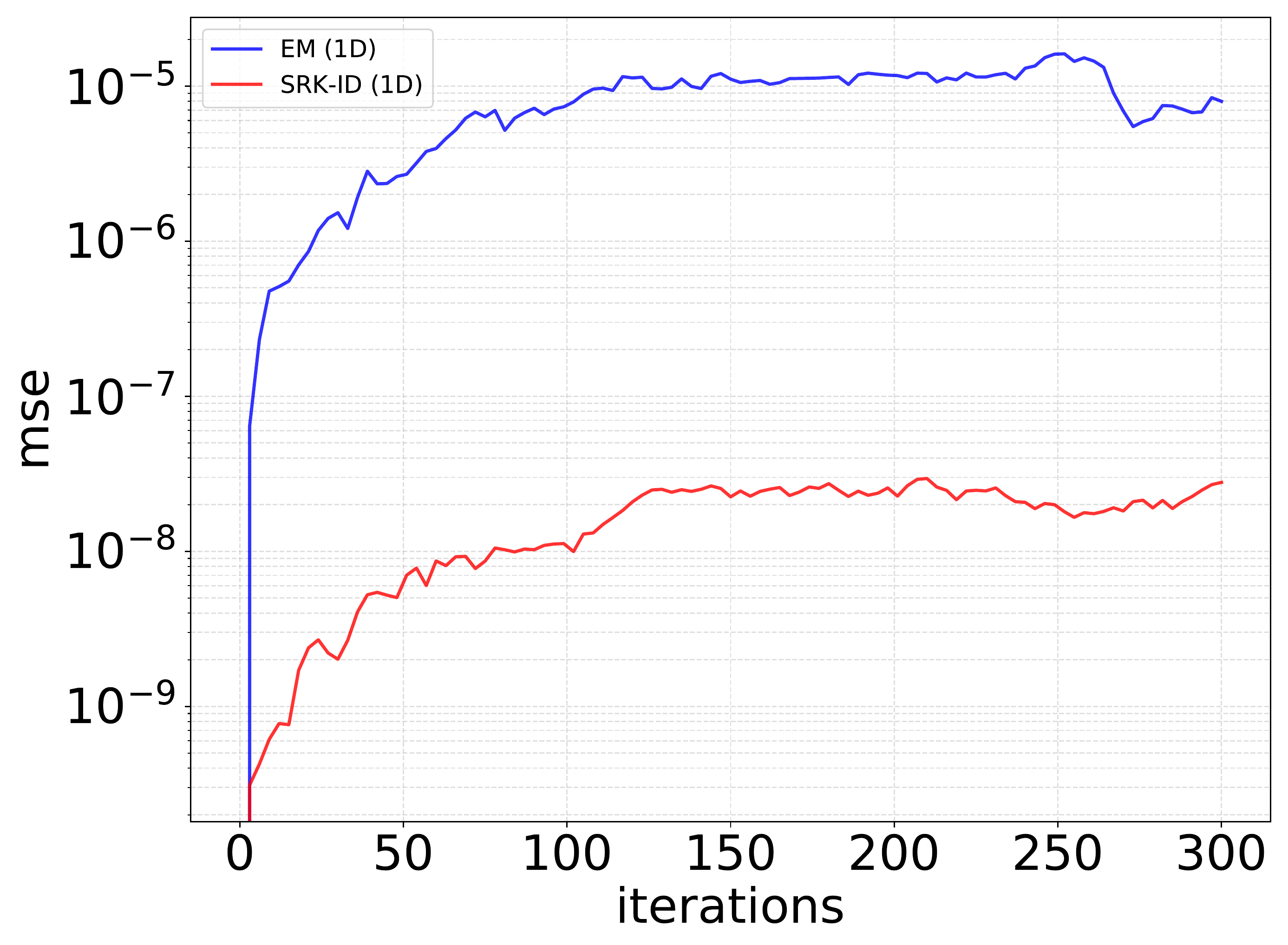}}
{
    \footnotesize
    (c) non-convex potential
}
\end{minipage}
\caption{
(a) Estimated asymptotic error against step size.
(b) Estimated error against number of iterations. 
(c) MSE against number of iterations.
Legends of (a) and (c) denote ``scheme (dimensionality)''.
Legend of (b) denotes ``scheme (step size)''.
}
\label{fig:strongly_convex}
\end{figure}

\subsection{Strongly Convex Potentials} \label{subsec:strongly_convex_exp}

\paragraph{Gaussian Mixture.}
We consider sampling from a multivariate Gaussian mixture with density
\eqn{
    \pi(\theta) \propto 
        \exp \bracks{- \tfrac{1}{2} \|\theta - a\|_{2}^{2} } + 
        \exp \bracks{- \tfrac{1}{2} \|\theta + a\|_{2}^{2} }, \quad \theta \in \mathbb{R}^{d},
}
where $a \in \R^d$ is a parameter that measures the separation of two modes. 
The potential is strongly convex when $\normtwo{a} < 1$ and has Lipschitz gradient and Hessian~\cite{dalalyan2017theoretical}. 
Moreover, one can verify that its third derivative is also Lipschitz.

\paragraph{Bayesian Logistic Regression.}
We consider Bayesian logistic regression (BLR)~\cite{dalalyan2017theoretical}. Given data samples $\mathrm{X} = \{x_i\}_{i=1}^n \in \R^{n \times d}$, $\mathrm{Y} = \{y_i\}_{i=1}^n \in \R^n$,
and parameter $\theta\in\R^d$, logistic regression models the Bernoulli conditional distribution with probability
$\Pr(y_i\!=\!1|x_i) = 1 / (1 + \exp(-\theta^\top x_i))$. 
We place a Gaussian prior on $\theta$ with mean zero and covariance proportional to $\Sigma^{-1}_{\mathrm{X}}$, 
where $\Sigma_{\mathrm{X}} = \mathrm{X}^\top \mathrm{X} /n$ is the sample covariance matrix. 
We sample from the posterior density
\eqn{
    \pi(\theta)\propto \exp (-f(\theta)) =
  \exp \Big(
    \mathrm{Y}^\top \mathrm{X} \theta - 
    \ssum_{i=1}^n \log( 1+\exp(-\theta^\top x_i) ) - 
    \tfrac{\alpha}{2}
        \| \Sigma_{\mathrm{X}}^{1/2}\theta \|_2^2 \Big).
}
The potential is strongly convex and has Lipschitz gradient and Hessian~\cite{dalalyan2017theoretical}. 
One can also verify that it has a Lipschitz third derivative.

To obtain the potential, we generate data from the model with the parameter $\theta_* = \mathbf{1}_d$ following~\cite{dalalyan2017theoretical,dwivedi2018log}. To obtain each $x_i$, we sample a vector whose components are independently drawn from the Rademacher distribution and normalize it by the Frobenius norm of the sample matrix $X$ times $d^{-1/2}$. 
Note that our normalization scheme is different from that adopted in~\cite{dalalyan2017theoretical,dwivedi2018log}, where each $x_i$ is normalized by its Euclidean norm.
We sample the corresponding $y_i$ from the model and fix the regularizer $\alpha = 0.3d / \pi^2$.

To characterize the true posterior, we sample 50k particles driven by EM with a step size of $0.001$ until convergence. We subsample from these particles 5k examples to represent the true posterior each time we intend to estimate squared $W_2$. We monitor the kernel Stein discrepancy~\footnote{Unfortunately, there appear to be two definitions for KSD and the energy distance in the literature, differing in whether a square root is taken or not. We adopt the version with the square root taken.} (KSD)~\cite{gorham2017measuring,chwialkowski2016kernel,liu2016kernelized} using the inverse multiquadratic kernel~\cite{gorham2017measuring} with hyperparameters $\beta=-1/2$ and $c=1$ to measure the distance between the 100k particles and the true posterior. We confirm that these particles faithfully approximate the true posterior with the squared KSD being less than $0.002$ in all settings. 

When sampling from a Gaussian mixture and the posterior of BLR, we observe that SRK-LD leads to a
consistent improvement in the asymptotic error compared to the EM scheme when the same step size is used. 
In particular, Figure~\ref{fig:strongly_convex} (a) plots the estimated asymptotic error in squared $W_2$ of different step sizes for 2D and 20D Gaussian mixture problems and shows that SRK-LD is surprisingly stable for exceptionally large step sizes. 
Figure~\ref{fig:strongly_convex} (b) plots the estimated error in squared $W_2$ as the number of iterations increases for 2D BLR. 
We include additional results on problems in 2D and 20D with error estimates in squared $W_2$ and the energy distance~\cite{szekely2003statistics} along with a wall time analysis in Appendix~\ref{app:additional_exp}.

\subsection{Non-Convex Potentials} \label{subsec:exp_nonconvex}
We consider sampling from the non-convex potential
\eqn{
    f(x) = \bigl(
            \beta + \| x \|_2^2
        \bigr)^{1/2}
        + \gamma  \log \bigl(
            \beta + \| x \|_2^2
        \bigr), \quad x \in \R^d,
}
where $\beta, \gamma > 0$ are scalar parameters of the distribution.
The corresponding density is a simplified abstraction for the posterior distribution of Student's t regression with a pseudo-Huber prior~\cite{gorham2016measuring}.
One can verify that when 
$
\text{
    \scriptsize
    $\beta + \normtwo{x}^2 < 1$
}
$ and
$
\text{
    \scriptsize
    $(4\gamma + 1) \normtwo{x}^2 < (2\gamma + 1) \sqrt{\beta + \normtwo{x}^2}$
}$, the Hessian has a negative eigenvalue. 
The candidate diffusion, where the drift coefficient is given by~\eqref{eq:drift_coeff} and diffusion coefficient
$
\text{ 
\footnotesize 
$\sigma(x) =  g(x)^{1/2} I_d$
}
$ 
with 
$
\text{
    \footnotesize
    $g(x) = \bigl( \beta + \| x \|_2^2 \bigr)^{1/2}$,
}
$ 
is uniformly dissipative if 
$
\tfrac{1}{2} -
| \gamma - \tfrac{1}{2} | \tfrac{2}{\beta^{1/2}} - 
\tfrac{d}{8 \beta^{1/2}} > 0$.
Indeed, one can verify that $\mu_1(g) \le 1$, $\mu_2(g)\le \tfrac{2}{\beta^{1/2}}$, and $\mu_1(\sigma) \le \tfrac{1}{2\beta^{1/4}}$. 
Therefore,
\eqn{
    \langle b(x) - b(y), x - y \rangle + 
    \tfrac{1}{2} \normf{\sigma(x) - \sigma(y)}^2
    \le&
        -\bigl(
            \tfrac{1}{2} -
            |\gamma - \tfrac{1}{2}| \mu_2(g) -
            \tfrac{d}{2} \mu_1(\sigma)^2
        \bigr)
        \normtwo{x - y}^2
        \\
    \le&
        -(
            \tfrac{1}{2} -
            |\gamma - \tfrac{1}{2}| \tfrac{2}{\beta^{1/2}} - 
            \tfrac{d}{8 \beta^{1/2}}
        ) \normtwo{x - y}^2
    .
}
Moreover, $b$ and $\sigma$ have Lipschitz first two derivatives, and the latter 
satisfies the sublinear growth condition in Theorem~\ref{theo:srk_id}. 

To study the behavior of SRK-ID, we simulate using both SRK-ID and EM. 
For both schemes, we simulate with a step size of $10^{-3}$ initiated from the same 50k particles approximating the stationary distribution obtained by simulating EM with a step size of $10^{-6}$ until convergence.
We compute the MSE between the continuous-time process and the Markov chain with the same Brownian motion for $300$ iterations when we observe the MSE curve plateaus. We approximate the continuous-time process by simulating using the EM scheme with a step size of $10^{-6}$ similar to the setting in~\cite{rossler2010runge}. To obtain final results, we average across ten independent runs.
We note that the MSE upper bounds $W_2$ due to the latter being an infimum over all couplings. Hence, the MSE value serves as an indication of the convergence performance in $W_2$.

Figure~\ref{fig:strongly_convex} (c) shows that for $\beta=0.33$, $\gamma=0.5$ and $d=1$, when simulating from a good approximation to the target distribution with the same step size, the MSE of SRK-ID remains small, whereas the MSE of EM converges to a larger value. 
However, this improvement diminishes as the dimensionality of the sampling problem increases. 
We report additional results with other parameter settings in Appendix~\ref{app:srk-id_diminish}.
Notably, we did not observe significant differences in the estimated squared $W_2$ values. 
We suspect this is due to the discrepancy being dominated by the bias of our estimator.

\section{Discussion}
We established convergence rates of samplings algorithm obtained by discretizing It\^o diffusions with exponential \wtwo-contraction based on local properties of numerical schemes. 
The user-friendly conditions promote one to derive rates based on the uniform orders of the local deviation.
In addition, we showed that discretizing diffusions with SRK schemes leads to improved rates in $W_2$ for both strongly convex potentials and a certain class of non-convex potentials.

Despite focusing on SRK methods, Theorem~\ref{theo:master} can be used to obtain convergence rates for other classes of schemes. 
For the underdamped Langevin diffusion, quasi-symplectic schemes that rely on Runge-Kutta-type updates can achieve mean-square order 2.0 and beyond~\cite{milstein2003quasi}. 
For general It\^o diffusions, there exist schemes of mean-square order 1.5 and beyond, using the Fourier-Legendre series to approximate the L\'evy area~\cite{kuznetsov2018explicit}. 

Compared to some existing proofs for convergence rates in $W_2$ (e.g. \cite[Thm. 1]{cheng2017underdamped}), our Theorem~\ref{theo:master} requires two conditions (the uniform local mean and mean-square deviation bounds), neither of which can be eliminated in order to obtain a tight convergence bound. The uniform local mean deviation appears in our proof due to a direct expansion of the squared $2$-norm. This can be thought of a natural consequence of our convergence bounds being based on $W_2$. An avenue of interest is to see whether additional conditions can be identified to obtain refined bounds in $W_p$ for even integer $p > 2$.

Another direction of interest is to relax the $W_2$-contraction condition on the diffusion to $W_1$-contraction or $W_1$-decay. 
This would enable us to leverage results based on distant dissipativity, and consequently allow us to sample from a wider class of non-convex potentials~\cite{eberle2017couplings}.
Orthogonally, for the overdamped Langevin diffusion, the $W_2$-contraction condition may be relaxed to a log-Sobolev inequality condition on the target measure, if the discretization analysis is adapted to be based on the KL divergence~\cite{vempala2019rapid,raginsky2017non}. 
This would also broaden the class of non-convex potentials from which we can sample with theoretical guarantees.

Parallel to studying sampling from a mean-square convergence aspect, works in numerical analysis have established convergence results in the weak sense for SRK schemes applied to ergodic SDEs with techniques as aromatic trees and B-series~\cite{laurent2017exotic,vilmart2015postprocessed}. However, moment bounds in these works are proven by generic arguments (see e.g. \cite[Lem. 2.2.2]{milstein2013stochastic}), and reasoning about the rate's dimension dependence becomes less obvious. Refined non-asymptotic convergence bounds would provide more insight for these algorithms' performance on practical problems.

Lastly, the convergence results in $W_2$ for SRK-LD and SRK-ID can be augmented to yield generalization bounds for optimization when the excess risk is characterized using the Gibbs distribution~\cite{raginsky2017non}.

\section*{Acknowledgments}
We thank Mufan Li for insightful discussions, and
Jimmy Ba, David Duvenaud and Taiji Suzuki for helpful comments on an early draft of this work.
We also thank anonymous reviewers for helpful suggestions.
MAE is partially funded by NSERC and CIFAR AI Chairs program at Vector Institute.

\bibliography{main}
\bibliographystyle{plainnat}

\newpage
\appendix

\newpage
\section{Proof of Theorem~\ref{theo:master}} \label{proof:master}
\begin{proof}
Let $\{X_t\}_{t\ge0}$ denote the continuous-time process defined by the SDE~\eqref{eq:continuous_general} 
initiated from the target stationary distribution, driven by the Brownian motion $\{B_t\}_{t\ge0}$. 
Since the continuous-time transition kernel preserves the stationary distribution,
the marginal distribution of $\{X_t\}_{t\ge0}$ remains to be the stationary distribution for all $t \ge 0$. 

We denote by $t_k \; (k =0,1,\dots)$ the timestamps of the Markov chain obtained by discretizing the 
continuous-time process with a numerical integration scheme and assume the Markov chain has a constant step size $h$ that 
satisfies the conditions in the theorem statement. 
We denote by $\tX_k$ the $k$th iterate of the Markov chain.
In the following, we derive a recursion for the quantity 
\eqn{
    A_k = \Exp{
        \normtwo{X_{t_k} - \tX_k}^2
    }^{1/2}.
}
Fix $k \in \mathbb{N}$. 
We define the process $\{\bar{X}_t\}_{t \ge 0}$ such that it is the Markov chain until $t_k$, starting from which it follows the continuous-time process defined by the SDE~\eqref{eq:continuous_general}. 
We let $\{\bar{X}_t\}_{t \ge 0}$ and the Markov chain $\tX_k\; (k =0,1,\dots)$ share the same Brownian motion $\{\bar{B}_t\}_{t\ge 0}$. 
Suppose $\{\F_t\}_{t \ge 0}$ is a filtration to which both $\{B_t\}_{t\ge0}$ and $\{\bar{B}_t\}_{t\ge 0}$ are adapted.
Conditional on $\F_{t_k}$, let 
$X_{t_{k+1}}$ and $\bar{X}_{t_{k+1}}$ be coupled such that 
\eq{
    \Exp{
        \normtwo{ X_{t_{k+1}} - \barX_{t_{k+1}} }^2
        | \F_{t_k}
    }
    \le&
        e^{-2 \alpha h} \normtwo{ X_{t_k} - \barX_{t_k} }^2
    .\label{eq:exp_w2_decay}
}
This we can achieve due to exponential \wtwo-contraction. 
We define the process $\{Z_s\}_{s\ge t_k}$ as follows
\eqn{
    Z_s = 
        \bracks{
            X_{s} - \barX_s
        } 
        - 
        \bracks{
            X_{t_k} - \barX_{t_k}
        }.
}
Note $\int_{t_k}^{t_k + t} \sigma(X_s) \dB_s - \int_{t_k}^{t_k + t} \sigma(\barX_s) \dee\bar{B}_s$ is a Martingale w.r.t. $\{\F_{t_k + t}\}_{t \ge 0}$, since it is adapted and the two component It\^o integrals are Martingales w.r.t. the considered filtration. 
By Fubini's theorem, we switch the order of integrals and obtain
\eq{
    \Exp{ Z_{t_{k+1}} | \F_{t_k} } = \int_{t_k}^{t_{k+1}} \Exp{ 
        b(X_s) - b(\barX_s) 
        | \F_{t_k}
    } \ds.
}
By Jensen's inequality,
\eq{
    \normtwo{ \Exp{ Z_{t_{k+1}} | \F_{t_k} } }^2
    \le&
        h \int_{t_k}^{t_{k+1}} \Exp{
            \normtwo{ b(X_s) - b(\barX_s) }^2
            | \F_{t_k}
        } \ds \\
    \le&
        \mu_1(b)^2 h \int_{t_k}^{t_{k+1}}
        \Exp{
            \normtwo{X_s - \barX_s}^2
            | \F_{t_k}
        } \ds. \label{eq:zs_ineq}
}
For $s \in [t_k , t_k + h]$, 
by Young's inequality, Jensen's inequality, and It\^o isometry,
\eqn{
    &\Exp{
        \normtwo{ X_s - \barX_s }^2
        | \F_{t_k}
    }\\
    =&
        \Exp{
            \normtwo{ 
                X_{t_k} - \barX_{t_k} + 
                \int_{t_k}^s \bracks{
                    b(X_u) - b(\barX_u)
                } \du + 
                \int_{t_k}^s \bracks{
                    \sigma(X_u) - \sigma(\barX_u)
                } \dB_u
            }^2
            | \F_{t_k}
        } \\
    \le&
        4 \normtwo{ X_{t_k} - \barX_{t_k} }^2
        + 4 (s - t_k) \int_{t_k}^s \Exp{
            \normtwo{
                b(X_u) - b(\barX_u)
            }^2
            | \F_{t_k}
        } \du
        \\&
        + 4 \int_{t_k}^s \Exp{
            \normf{
                \sigma(X_u) - \sigma(\barX_u)
            }^2
            | \F_{t_k}
        } \du \\
    \le&
        4 \normtwo{ X_{t_k} - \barX_{t_k} }^2
        + 
        4 (s - t_k) \mu_1(b)^2 \int_{t_k}^s \Exp{
            \normtwo{ X_u - \barX_u }^2
            | \F_{t_k}
        } \du
        \\&
        +
        4 \mu_1^{\mathrm{F}}(\sigma)^2 \int_{t_k}^s \Exp{
            \normtwo{ X_u - \barX_u }^2
            | \F_{t_k}
        } u \\
    \le&
        4 \normtwo{ X_{t_k} - \barX_{t_k} }^2
        + 
        4 \bracks{ \mu_1(b)^2 + \mu_1^{\mathrm{F}}(\sigma)^2 }
        \int_{t_k}^s \Exp{
            \normtwo{
                X_u - \barX_u
            }^2
            | \F_{t_k}
        } \du. 
}
By the integral form of Gr\"onwall's inequality for continuous functions,
\eqn{
    \Exp{
        \normtwo{ X_s - \barX_s }^2
        | \F_{t_k}
    }
    \le&
        4 \exp \bracks{
            4 \bracks{ \mu_1(b)^2 + \mu_1^{\mathrm{F}}(\sigma)^2 } (s - t_k) 
        }
        \normtwo{ X_{t_k} - \barX_{t_k} }^2.
}
Plugging this result into~\eqref{eq:zs_ineq}, by 
$h < 1 / \bracks{ 8 \mu_1(b)^2 + 8 \mu_1^{\mathrm{F}}(\sigma)^2 }$, 
\eqn{
    \normtwo{
        \Exp{ 
            Z_{t_{k+1}} | \F_{t_k}
        }
    }^2
    \le&
        \frac{\mu_1(b)^2 h}{
            \mu_1(b)^2 +
            \mu_1^{\mathrm{F}}(\sigma)^2
        }
        \sbracks{
            \exp\bracks{
                4 \bracks{ \mu_1(b)^2 + \mu_1^{\mathrm{F}}(\sigma)^2 } h
            } - 1
        } 
        \normtwo{ X_{t_k} - \barX_{t_k} }^2 \\
    \le&
        \frac{8 \mu_1(b)^2 h^2}{
            \mu_1(b)^2 +
            \mu_1^{\mathrm{F}}(\sigma)^2
        }
        \bracks{
            \mu_1(b)^2 +
            \mu_1^{\mathrm{F}}(\sigma)^2
        }
        \normtwo{ X_{t_k} - \barX_{t_k} }^2 \\
    \le&
        8 \mu_1(b)^2 h^2
        \normtwo{ X_{t_k} - \barX_{t_k} }^2
    . \numberthis \label{eq:norm_of_exp}
}
By direct expansion, 
{
\footnotesize
\eq{
    \Exp{
        \normtwo{
            X_{t_{k+1}} - \barX_{t_{k+1}}
        }^2
        | \F_{t_k}
    }
    =&
        \normtwo{ X_{t_k} - \barX_{t_k} }^2
        + \Exp{
            \normtwo{ Z_{t_{k+1}} }^2
            | \F_{t_k}
        }
        + 2 \abracks{
            X_{t_k} - \barX_{t_k}, 
            \Exp{
                Z_{t_{k+1}}
                | \F_{t_k}
            }
        }. \label{eq:trivial_expansion}
}
}
Combining~\eqref{eq:exp_w2_decay} \eqref{eq:norm_of_exp} and \eqref{eq:trivial_expansion}, by the Cauchy-Schwarz inequality, 
\eqn{
    \Exp{
        \normtwo{ Z_{t_{k+1}} }^2
        | \F_{t_k}
    }
    \le&
        \bracks{
            e^{-2 \alpha h} - 1
        } 
        \normtwo{
            X_{t_k} - \barX_{t_k}
        }^2 
        - 
        2 \abracks{
            X_{t_k} - \barX_{t_k},
            \Exp{ Z_{t_{k+1}} | \F_{t_k} }
        }
        \\
    \le&
        2 \normtwo{ X_{t_k} - \barX_{t_k} }
        \normtwo{ \Exp{ Z_{t_{k+1}} | \F_{t_k} } } \\
    \le&
        8 \mu_1(b) h \normtwo{ X_{t_k} - \barX_{t_k} }^2 \\
    =&
        8 \mu_1(b) h \normtwo{ X_{t_k} - \tX_k }^2.
}
Hence,
\eq{
    \Exp{
        \normtwo{Z_{t_{k+1}}}^2
    }
    =
        \Exp{
            \Exp{
                \normtwo{ Z_{t_{k+1}} }^2
                | \F_{t_k}
            }
        }
    \le
        8 \mu_1(b) h \Exp{
            \normtwo{ X_{t_k} - \tX_k }^2
        }
    =
        8 \mu_1(b) h A_k^2.
}
Let $\lambda_3 = 8 \lambda_1^{1/2} \mu_1(b)^{1/2} + 2 \lambda_2^{1/2} $. 
Then, by the Cauchy–Schwarz inequality, we obtain a recursion 
\eq{
    A_{k+1}^2
    =&
        \Exp{ \normtwo{X_{t_{k+1}} - \tX_{k+1} }^2 } \notag \\
    =&
        \Exp{
            \normtwo{X_{t_{k+1}} - \barX_{t_{k+1}} + \barX_{t_{k+1}} - \tX_{k+1} }^2
        } \\
    =&
        \Exp{
            \normtwo{
                X_{t_{k+1}} - \barX_{t_{k+1}}
            }^2 + 
            \normtwo{
                \barX_{t_{k+1}} - \tX_{k+1}
            }^2 + 
            2 \abracks{
                X_{t_{k+1}} - \barX_{t_{k+1}}, \barX_{t_{k+1}} - \tX_{k+1}
            }
        } \\
    =&
        \Exp{
            \Exp{
                \normtwo{
                    X_{t_{k+1}} - \barX_{t_{k+1}}
                }^2 | \F_{t_k}
            }
        } 
        + \Exp{
            \Exp{
                \normtwo{
                    \barX_{t_{k+1}} - \tX_{k+1}
                }^2 | \F_{t_k}
            }
        }  \\
        &+
        2 \Exp{
            \Exp{
                \abracks{
                    X_{t_{k+1}} - \barX_{t_{k+1}}, \barX_{t_{k+1}} - \tX_{k+1}
                } | \F_{t_k}
            }
        } \\
    =&
        \Exp{
            \Exp{
                \normtwo{
                    X_{t_{k+1}} - \barX_{t_{k+1}}
                }^2 | \F_{t_k}
            }
        } +
        \Exp{
            \Exp{
                \normtwo{
                    \barX_{t_{k+1}} - \tX_{k+1}
                }^2 | \F_{t_k}
            }
        } \\
        &+
        2 \Exp{
            \abracks{
                X_{t_{k}} - \barX_{t_{k}},
                \Exp{
                    \barX_{t_{k+1}} - \tX_{k+1} | \F_{t_k} 
                }
            }
        }
        \\ &
        +
        2 \Exp{
            \abracks{
                Z_{t_{k+1}}, \barX_{t_{k+1}} - \tX_{k+1}
            }
        } \\
    \le&
        \Exp{
            \Exp{
                \normtwo{
                    X_{t_{k+1}} - \barX_{t_{k+1}}
                }^2 | \F_{t_k}
            }
        } +
        \Exp{
            \Exp{
                \normtwo{
                    \barX_{t_{k+1}} - \tX_{k+1}
                }^2 | \F_{t_k}
            }
        } \\
        &+
        2 \Exp{
            \normtwo{X_{t_k} - \barX_{t_k}}^2
        }^{1/2}
        \Exp{
            \normtwo{ 
                \Exp{
                    \barX_{t_{k+1}} - \tX_{k+1} | \F_{t_k} 
                }
            }^2
        }^{1/2}  \\
        &+
        2 \Exp{
            \normtwo{Z_{t_{k+1}}}^2
        }^{1/2}
        \Exp{
            \normtwo{\barX_{t_{k+1}} - \tX_{k+1}}^2
        }^{1/2} \\
    \le&
        e^{-2 \alpha h} A_k^2 + 
        \lambda_1 h^{2 p_1} + 
        2 \lambda_2^{1/2} h^{p_2} A_k + 
        8 \lambda_1^{1/2} \mu_1(b)^{1/2} h^{p_1 + 1/2} A_k
        \\
    \le&
        \bracks{1 - \alpha h} A_k^2 + 
        \lambda_3 h^{p_1 + 1/2} A_k + 
        \lambda_1 h^{2 p_1} \\
    \le&
        \bracks{1 - \alpha h } A_k^2 + 
        \frac{\alpha h}{2} A_k^2 + 
        \frac{8}{\alpha} \lambda_3^2 h^{2p_1} + \lambda_1 h^{2p_1} \\
    \le&
        \bracks{1 - \alpha h/2} A_k^2 + 
        \bracks{8\lambda_3^2 / \alpha + \lambda_1} h^{2 p_1}
    ,
    \label{eq:master_recursion}
}
where the third to last inequality follows from $e^{-2 \alpha h} < 1 - \alpha h$ when $\alpha h < 1/2$, and
the second to last inequality follows from the elementary relation below with the choice of $\kappa = \alpha/2$
\eqn{
    A_k h^{1/2} \cdot \lambda_3 h^{p_1}
    \le 
        \kappa A_k^2 h +
        \frac{4}{\kappa} \lambda_3^2 h^{2p_1}.
}
Let $\eta = 1 - \alpha h /2 \le e^{-\alpha h/2} \le 1$. 
By unrolling the recursion,
\eqn{
A_k^2
\le&
    \bracks{1 - \alpha h/2} A_{k-1}^2 + 
    \bracks{8\lambda_3^2 / \alpha + \lambda_1} h^{2 p_1} \notag \\
\le&
    \eta^k A_0^2 + 
    \bracks{1 + \eta + \dots + \eta^{k-1}} 
        \bracks{8\lambda_3^2 / \alpha + \lambda_1} h^{2 p_1} \notag \\
\le&
    \eta^k A_0^2 + 
    \bracks{8\lambda_3^2 / \alpha + \lambda_1} h^{2 p_1} / (1 - \eta) \notag \\
=&
    \eta^k A_0^2 + (16 \lambda_3^2/ \alpha^2  + 2 \lambda_1/\alpha)  h^{2p_1 - 1}.
}
Let $\nu_k$ and $\nu^*$ be the measures associated with the $k$th
iterate of the Markov chain
and the target distribution, respectively.
Since $W_2$ is defined as an infimum over all couplings,
\eqn{
W_2(\nu_k, \nu^*)
\le&
    A_k
\le
    e^{-\alpha h k / 4} A_0 + (16 \lambda_3^2/\alpha^2  + 2 \lambda_1 / \alpha)^{1/2} h^{p_1 - 1/2}.
}
To ensure $W_2$ is less than some small positive tolerance $\epsilon$,
we need only ensure the two terms in the above inequality are each less than $\epsilon /2$. Some simple calculations 
show that it suffices that
\eqn{
h <& 
\text{
    \footnotesize
    $
    \bracks{
        \frac{2}{\epsilon}
            \sqrt{
                \frac{64 ( 16 \lambda_1 \mu_1(b) + \lambda_2 ) }{\alpha^2}
                + 
                \frac{2 \lambda_1}{\alpha}
            }
    }^{-1 / (p_1 - 1/2)}
    \wedge
        \frac{1}{2\alpha}
    \wedge
        \frac{1}{8 \mu_1(b)^2 + 8 \mu_1^{\mathrm{F}}(\sigma)^2 }
    ,
    $
}  \numberthis \label{eq:master_proof_stepsize}\\
k >& 
\text{
    \footnotesize
    $
    \sbracks{
        \bracks{
            \frac{2}{\epsilon}
                \sqrt{ 
                    \frac{64 ( 16 \lambda_1 \mu_1(b) + \lambda_2 ) }{\alpha^2}
                    + 
                    \frac{2 \lambda_1}{\alpha}
                }
        }^{1 / (p_1 - 1/2) }
    \vee
        2\alpha
    \vee
        \bracks{
            8 \mu_1(b)^2 + 8\mu_1^{\mathrm{F}} (\sigma)^2
        }
    }
    \frac{4}{\alpha} \log \bracks{\frac{2A_0}{\epsilon} }
    $
}
.
}
Note that for small enough positive tolerance $\epsilon$, when the step size satisfies~\eqref{eq:master_proof_stepsize}, it suffices that 
\eqn{
    k = 
        \left\lceil
            \bracks{
                \frac{2}{\epsilon}
                    \sqrt{
                        \frac{64 ( 16 \lambda_1 \mu_1(b) + \lambda_2 ) }{\alpha^2 }
                        + 
                        \frac{2 \lambda_1}{\alpha}
                    }
            }^{1 / (p_1 - 1/2)}
            \frac{4}{\alpha} \log \bracks{
                \frac{2A_0}{\epsilon}
            }
        \right\rceil
        =
        \tilde{\mathcal{O}} (
            \epsilon^{ -1 / (p_1 - 1/2) }
        ).
}
\end{proof}
\section{Proof of Theorem~\ref{theo:srk_ld}} \label{app:srk_ld}
\subsection{Moment Bounds}
Verifying the order conditions in Theorem~\ref{theo:master} for SRK-LD requires bounding the second, fourth, and sixth moments of the Markov chain. 
In principle, one may employ an exponential moment bound argument using a Lyapunov function. 
However, in this case, the tightness of the final convergence bound may depend on the selection of the Lyapunov function, and reasoning about the dimension dependence can become less obvious. 
Here, we directly bound all the even moments by expanding the expression. 
Intuitively, one expects the $2n$th moments of the Markov chain iterates to be $\mathcal{O}(d^n)$.
The following proofs assume Lipschitz smoothness of the potential to a certain order and dissipativity.
\begin{defi}[Dissipativity] For constants 
$\alpha, \beta > 0$,
the diffusion satisfies the following
\eqn{
    \abracks{ \nabla f(x), x} \ge \frac{\alpha}{2} \norm{x}_2^2 - \beta, \quad \forall x \in \R^d.
}
\end{defi}
For the Langevin diffusion, dissipativity directly follows from strong convexity of the potential~\cite{erdogdu2018global}.
Here, $\alpha$ can be chosen as the strong convexity parameter, provided $\beta$ is an appropriate constant of order $\mathcal{O}(d)$.

Additionally, we assume the discretization has a constant step size $h$ and the timestamp
of the $k$th iterate is $t_k$ as per the proof of Theorem~\ref{theo:master}. 
To simplify notation, we define the following
\eqn{
\tilde{\nabla} f =& \frac{1}{2} \bracks{\nabla f(\tH_1) + \nabla f(\tH_2)}, \\
v_1 =& \sqrt{2} \bracks{ \frac{1}{2} + \frac{1}{\sqrt{6}} } \xi_{k+1} \sqrt{h}, \\ 
v_1' =& \sqrt{2} \bracks{ \frac{1}{2} - \frac{1}{\sqrt{6}} } \xi_{k+1} \sqrt{h}, \\
v_2 =& \frac{1}{\sqrt{6}} \eta_{k+1} \sqrt{h},
}
where $\xi_{k+1}, \eta_{k+1} \overset{\text{i.i.d.}}{\sim} \mathcal{N}(0, I_d)$ 
independent of $\tX_k$ for all $k \in \mathbb{N}$. 
We rewrite $\tH_1$ and $\tH_2$ as
\eqn{
\tH_1 & = \tX_k + \Delta \tH_1 = \tX_k + v_1 + v_2, \\
\tH_2 &= \tX_k + \Delta \tH_2 = \tX_k + v_1' + v_2 - \nabla f(\tX_k) h.
}

\subsubsection{Second Moment Bound}
\begin{lemm} \label{lemm:second_moment}
If the second moment of the initial iterate is finite, then
the second moments of Markov chain iterates defined in~\eqref{eq:srk} are 
uniformly bounded by a constant of order $\mathcal{O}(d)$, i.e.
\eqn{
    \Exp{\norm{\tX_k}_2^{2}} \le \lyap_{2}, \quad \text{for all} \; {k \in \mathbb{N}},
}
where $\lyap_2 = \Exp{ \norm{\tX_0}_2^2 } + N_6$, 
and constants $N_1$ to $N_6$ are given in the proof, if the step size
\eqn{
h < 1 
\wedge \frac{2d}{\pi_{2,2}(f)} 
\wedge \frac{2\pi_{2,1}(f)}{\pi_{2,2}(f)} 
\wedge \frac{\alpha}{4\mu_2(f) \pi_{2,2}(f)}
\wedge \frac{3\alpha}{2N_1 + 4}.
}
\end{lemm}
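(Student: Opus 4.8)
The plan is to establish a one-step drift inequality of the form
\begin{align*}
\Exp{\norm{\tX_{k+1}}_2^2 \mid \tX_k} \le \Bigl(1 - \tfrac{\alpha h}{2}\Bigr)\norm{\tX_k}_2^2 + \tfrac{\alpha h}{2}\, N_6,
\end{align*}
with $N_6 = \mathcal{O}(d)$, and then to conclude by a trivial induction that $\Exp{\norm{\tX_k}_2^2} \le \Exp{\norm{\tX_0}_2^2} + N_6 =: \lyap_2$ for every $k$, since the affine map $x \mapsto (1-\tfrac{\alpha h}{2})x + \tfrac{\alpha h}{2}N_6$ is monotone and maps $[0, \Exp{\norm{\tX_0}_2^2} + N_6]$ into itself. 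To get the drift inequality I would condition on $\tX_k$ and expand the square using \eqref{eq:srk}: writing $\tX_{k+1} = \tX_k - h\,\tilde\nabla f + \sqrt{2h}\,\xi_{k+1}$ gives
\begin{align*}
\norm{\tX_{k+1}}_2^2 = \norm{\tX_k}_2^2 - 2h \abracks{\tX_k, \tilde\nabla f} + 2\sqrt{2h}\abracks{\tX_k, \xi_{k+1}} + h^2 \norm{\tilde\nabla f}_2^2 - 2h\sqrt{2h}\abracks{\tilde\nabla f, \xi_{k+1}} + 2h \norm{\xi_{k+1}}_2^2 .
\end{align*}
Only $\abracks{\tX_k,\xi_{k+1}}$ has zero conditional mean; the other noise-coupled terms involve $\tilde\nabla f$, which depends on $\xi_{k+1}$ and $\eta_{k+1}$ through $\tH_1,\tH_2$, so they must be bounded rather than dropped.

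The key device is the decomposition $\tilde\nabla f = \nabla f(\tX_k) + r$ with $r = \tfrac12\bigl(\nabla f(\tH_1) - \nabla f(\tX_k)\bigr) + \tfrac12\bigl(\nabla f(\tH_2) - \nabla f(\tX_k)\bigr)$, so $\norm{r}_2 \le \tfrac{\mu_2(f)}{2}\bigl(\norm{\Delta\tH_1}_2 + \norm{\Delta\tH_2}_2\bigr)$ by Lipschitzness of $\nabla f$. The leading term $-2h\abracks{\tX_k, \nabla f(\tX_k)}$ is handled by dissipativity, yielding $\le -\alpha h \norm{\tX_k}_2^2 + 2\beta h$ with $\beta = \mathcal{O}(d)$. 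Every remaining contribution I would split with Young's and Cauchy--Schwarz inequalities into (i) a small multiple of $h\norm{\tX_k}_2^2$ absorbed by taking $h$ small, and (ii) a constant of order $\mathcal{O}(hd)$. The dimension is carried by $2h\Exp{\norm{\xi_{k+1}}_2^2} = 2hd$, by the increment second moments $\Exp{\norm{\Delta\tH_1}_2^2 \mid \tX_k}$ and $\Exp{\norm{\Delta\tH_2}_2^2 \mid \tX_k}$ --- each of order $\mathcal{O}(hd)$ up to an $h^2\norm{\nabla f(\tX_k)}_2^2$ piece coming from the $-h\nabla f(\tX_k)$ term that distinguishes $\tH_2$ from $\tH_1$ --- and by $\beta$. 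To control $\norm{\nabla f(\tX_k)}_2$ I would use the polynomial-growth bound $\norm{\nabla f(\tX_k)}_2^2 \le \pi_{2,2}(f)(1 + \norm{\tX_k}_2^2)$; the resulting terms $h^2 \pi_{2,2}(f)\norm{\tX_k}_2^2$ (from $h^2\norm{\tilde\nabla f}_2^2$ and from the part of $-2h\abracks{\tX_k, r}$ involving the drift piece of $\Delta\tH_2$) become a small multiple of $\alpha h \norm{\tX_k}_2^2$ by the constraint $h < \alpha/(4\mu_2(f)\pi_{2,2}(f))$, while their constant counterparts $h^2\pi_{2,2}(f)$ go into the $\mathcal{O}(d)$ bucket via $h < 2d/\pi_{2,2}(f)$ and $h < 2\pi_{2,1}(f)/\pi_{2,2}(f)$.

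Collecting, the coefficient of $\norm{\tX_k}_2^2$ takes the form $1 - \alpha h + N_1 h^2 + \cdots$, where $N_1$ bundles the $\mu_2(f)$- and $\pi$-dependent corrections; the final constraint $h < 3\alpha/(2N_1 + 4)$ brings this coefficient down to $1 - \tfrac{\alpha h}{2}$, and the additive remainder is $\tfrac{\alpha h}{2} N_6$ with $N_6 = \mathcal{O}(d)$. The intermediate constants $N_2,\dots,N_5$ are the Young's-inequality weights together with the explicit second-moment constants of the Gaussian increments $v_1, v_1', v_2$ (each of order $\mathcal{O}(hd)$). Feeding this recursion into the induction above completes the proof. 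I expect the genuine difficulty to be the dimension bookkeeping: a careless Cauchy--Schwarz on a product of two $\mathcal{O}(\sqrt{d})$-sized random quantities (such as $\abracks{\Delta\tH_1, \Delta\tH_2}$ or $\abracks{\xi_{k+1}, \Delta\tH_i}$) would inflate the bound to $\mathcal{O}(d^2)$, so one has to take conditional expectations and exploit the mean-zero/independence structure of $\xi_{k+1},\eta_{k+1}$ before applying norm inequalities, and choose the Young weights so the residual $\norm{\tX_k}_2^2$ coefficient stays $1 - \Theta(h)$. Keeping powers of $h$ and powers of $d$ separated throughout is exactly what the cascade of step-size constraints in the statement is engineered to achieve.
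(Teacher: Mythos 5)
Your proposal is correct in substance, and it reaches the drift inequality by a route that differs from the paper's in the key technical step. You center everything at $\tX_k$: you write $\tilde\nabla f = \nabla f(\tX_k) + r$, apply dissipativity at $\tX_k$ itself to get the $-\alpha h\|\tX_k\|_2^2 + 2\beta h$ contraction, and then control $r$ purely through the Lipschitz constant $\mu_2(f)$ of the gradient together with the conditional second moments of $\Delta\tH_1,\Delta\tH_2$, absorbing everything with Young/Cauchy--Schwarz. The paper never performs this centering; it keeps $\nabla f(\tH_1)$ and $\nabla f(\tH_2)$ intact, applies dissipativity at the \emph{interpolated} points via $-\abracks{\nabla f(\tH_i),\tX_k} = -\abracks{\nabla f(\tH_i),\tH_i} + \abracks{\nabla f(\tH_i),\,\text{increment}}$, and then controls the correction terms with Stein's lemma (converting the Gaussian cross terms $\abracks{\nabla f(\tH_i), v}$ into Laplacian bounds of order $dh$) and, for $\tH_2$, a first-order Taylor expansion with integral remainder to handle $\abracks{\nabla f(\tH_2),\nabla f(\tX_k)}h$. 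What your route buys is elementarity: no Stein identity or Laplacian/Hessian-Lipschitz input is needed in this lemma, and the $\mathcal{O}(d)$ bookkeeping still goes through because every crude Cauchy--Schwarz loss you incur sits in a term carrying at least one extra factor of $h$, so after Young it contributes $\mathcal{O}(dh)$ to the additive constant. What the paper's route buys is slightly sharper accounting through exact Gaussian identities and no need to expand $\tilde\nabla f$ at all. The two arguments produce different intermediate constants (and a different contraction factor, $1-\tfrac{3}{8}\alpha h$ in the paper versus your targeted $1-\tfrac{1}{2}\alpha h$), so the specific numerical step-size thresholds and $N_1,\dots,N_6$ would differ from those quoted in the statement, but since the lemma defines these constants "in the proof," this is a cosmetic rather than substantive discrepancy; both recursions unroll to the same uniform $\mathcal{O}(d)$ bound.
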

\begin{proof}
By direct computation,
\begin{align*}
    \norm{\tX_{k+1}}_2^2 
    =& 
        \norm{
            \tX_k - 
            \bracks{
                \nabla f(\tH_1) + \nabla f(\tH_2)
            } \frac{h}{2} + 
            2^{1/2} \xi_{k+1} h^{1/2}
        }_2^2 \notag \\
    =&
        \norm{\tX_k}_2^2 + 
        \norm{
            \nabla f(\tH_1) + \nabla f(\tH_2)
        }_2^2 \frac{h^2}{4} + 
        2 \norm{\xi_{k+1}}_2^2 h
        \\&
        - \abracks{
            \tX_k, 
            \nabla f(\tH_1 ) +\nabla f(\tH_2 )
        } h 
        \\&
        + 
        2^{3/2} \abracks{
            \tX_k,
            \xi_{k+1}
        } h^{1/2} \\
        &-
        2^{1/2}
        \abracks{
            \nabla f(\tH_1) + \nabla f(\tH_2),
            \xi_{k+1}
        } h^{3/2}.
\end{align*}
In the following, we bound each term in the expansion separately and obtain a recursion. 
To achieve this, we first upper bound the second moments of $\tH_1$ and $\tH_2$ for $ h < 2d \wedge 2\pi_{2,1}(f) / \pi_{2,2}(f) $,
\begin{align*}
    \Exp{
        \normtwo{\tH_1}^2 | \F_{t_k}
    }
    =&
        \normtwo{\tX_k}^2 + 
        \Exp{ \normtwo{v_1}^2 | \F_{t_k} } + 
        \Exp{ \normtwo{v_2}^2 | \F_{t_k} } 
    \le
        \normtwo{\tX_k}^2 + 
        3dh, \\
    \Exp{
        \normtwo{\tH_2}^2 | \F_{t_k}
    }
    =&
        \normtwo{\tX_k}^2 + \normtwo{\nabla f(\tX_k)}^2 h^2 + 
        \Exp{
            \normtwo{v_1'}^2 | \F_{t_k}
        } + 
        \Exp{
            \normtwo{v_2}^2 | \F_{t_k}
        }
        \\&+
        2\abracks{
            \tX_k,
            \nabla f(\tX_k)
        } h \\
    \le&
        \normtwo{\tX_k}^2 +
        \pi_{2,2}(f) \bracks{1 + \normtwo{\tX_k}^2 } h^2 + 
        dh + 
        2 \pi_{2,1}(f) \normtwo{\tX_k}^2 h \\
    \le&
        \normtwo{\tX_k}^2 +
        4 \pi_{2,1}(f) h \normtwo{\tX_k}^2 + 
        3d h. 
\end{align*}
Thus, 
\begin{align*}
    \Exp{
        \normtwo{ \nabla f(\tH_1) +  \nabla f(\tH_2) }^2
        | \F_{t_k}
    }
    \le&
    2 \Exp{
        \normtwo{ \nabla f(\tH_1) }^2 +
        \normtwo{ \nabla f(\tH_2) }^2 
        | \F_{t_k}
    } \\
    \le&
        2 \pi_{2,2} (f)\Exp{
            2 + \normtwo{ \tH_1 }^2  + \normtwo{ \tH_2 }^2
            | \F_{t_k}
        }\\
    =&
        N_1\normtwo{\tX_k}^2 + N_2, 
\end{align*}
where $N_1 = 2\pi_{2,2}(f) \bracks{2 + 4 \pi_{2,1}(f)}$ and $N_2 = 2\pi_{2,2} (f) \bracks{6d + 2}$. 

Additionally, by the Cauchy-Schwarz inequality,
\begin{align*}
    -\Exp{
        \abracks{
            \nabla f(\tH_1),
            \xi_{k+1}
        }
        | \F_{t_k}
    }
    \le&
        \Exp{
            \norm{ \nabla f(\tH_1 ) }_2
            \norm{\xi_{k+1}}_2
            | \F_{t_k}
        } \\
    \le&
        \Exp{
            \norm{ \nabla f(\tH_1)}_2^2
            | \F_{t_k}
        }^{{1}/{2}}
        \Exp{
            \norm{\xi_{k+1}}_2^2
        }^{{1}/{2}} \\
    \le&
        \sqrt{d \pi_{2,2} (f) } \bracks{
            1 + \Exp{ 
                \norm{\tH_1}_2^2 | \F_{t_k}
            }^{{1}/{2}}
        } \\
    \le&
        \sqrt{d \pi_{2,2} (f) }  \bracks{
            1 + \norm{\tX_k}_2 + \sqrt{3dh}
        }. \numberthis \label{eq:tilde_h_1}
\end{align*}
Similarly,
\begin{align*}
   -\Exp{
        \abracks{
            \nabla f(\tH_2),
            \xi_{k+1}
        }
        | \F_{t_k}
    }
    \le&
        \Exp{
            \normtwo{ \nabla f(\tH_2 ) }
            \normtwo{\xi_{k+1}}
            | \F_{t_k}
        } \\
    \le&
        \Exp{ \normtwo{ \nabla f(\tH_2)}^2 | \F_{t_k} }^{{1}/{2}}
        \Exp{ \normtwo{\xi_{k+1}}^2 | \F_{t_k} }^{{1}/{2}} \\
    \le&
        \sqrt{d \pi_{2,2} (f)}  \bracks{
            1 + \Exp{ 
                \norm{\tH_2}_2^2 | \F_{t_k}
            }^{{1}/{2}}
        } \\
    \le&
        \sqrt{d \pi_{2,2}  (f) } \Bigl(
            1 + \norm{\tX_k}_2 + 2 \sqrt{
                \pi_{2,1} (f) h
            } \norm{\tX_k}_2 + \sqrt{3dh}
        \Bigr). \numberthis \label{eq:tilde_h_2}
\end{align*}
Combining~\eqref{eq:tilde_h_1} and \eqref{eq:tilde_h_2}, we obtain the following using AM–GM,
\begin{align*}
    -2^{1/2} \Exp{
        \abracks{
            \nabla f(\tH_1) + \nabla f(\tH_2),
            \xi_{k+1}
        } 
        | \F_{t_k}
    } h^{3/2}
    \le&
        N_3 \norm{\tX_k}_2 h^{3/2} + N_4 \\
    \le&
        \frac{1}{2} \norm{\tX_k}_2^2 h^2 + \frac{N_3^2}{2} h + N_4 h^{3/2}.
\end{align*}
where $N_3 = 2 \sqrt{2 d\pi_{2,2} (f) } \bracks{1 + \sqrt{\pi_{2,1}(f) }} $ and 
$N_4 = 2\sqrt{2 d \pi_{2,2}(f) } \bracks{1 + \sqrt{3d}}$. 

Now, we lower bound the second moments of $\tH_1$ and $\tH_2$ by dissipativity,
\begin{align*}
    \Exp{
        \norm{\tH_1}_2^2 | \F_{t_k}
    }
    =& 
        \Exp{
            \normtwo{ \tX_k + v_1 + v_2 }^2 | \F_{t_k}
        } \\
    =&
        \normtwo{\tX_k}^2 + 
        \Exp{ \normtwo{v_1}^2 | \F_{t_k} } + 
        \Exp{ \normtwo{v_2}^2 | \F_{t_k} } 
    \ge
        \normtwo{\tX_k}^2, \numberthis \label{eq:lower_bound_tH_1}\\
    \Exp{
        \normtwo{\tH_2}^2 | \F_{t_k}
    }
    =& 
        \Exp{
            \normtwo{ \tX_k - \nabla f(\tX_k) h + v_1' + v_2 }^2
            | \F_{t_k}
        }  \\
    =&
        \normtwo{\tX_k}^2 + \normtwo{\nabla f(\tX_k)}^2 h^2 + 
        \Exp{
            \normtwo{v_1'}^2 | \F_{t_k}
        } + 
        \Exp{
            \normtwo{v_2}^2 | \F_{t_k}
        }
        \\ &+
        2\abracks{
            \tX_k,
            \nabla f(\tX_k)
        } h \\
    \ge&
        \normtwo{\tX_k}^2 + 2 \bracks{
            \frac{\alpha}{2} \normtwo{\tX_k}^2 - \beta
        } h \\
    \ge&
        \normtwo{\tX_k}^2 - 2\beta h.
\end{align*}
Additionally, by Stein's lemma for multivariate Gaussians, 
\eqn{
    \Exp{
        \abracks{
            \nabla f(\tH_1),
            v_1
        }
        | \F_{t_k}
    }
    =&
        2h \bracks{\frac{1}{2} + \frac{1}{\sqrt{6}}}^2 \Exp{
            \Delta (f) (\tH_1) | \F_{t_k}
        }
    \le
        2 d \mu_3(f) h, \\
    \Exp{
        \abracks{
            \nabla f(\tH_1),
            v_2
        }
        | \F_{t_k}
    }
    =&
        \frac{1}{6} h \Exp{
            \Delta(f) (\tH_1)
            | \F_{t_k}
        }
    \le
        \frac{1}{6} d \mu_3(f) h, \\
    \Exp{
        \abracks{
            \nabla f(\tH_2),
            v_1'
        }
        | \F_{t_k}
    }
    =&
        2 h \bracks{\frac{1}{2} - \frac{1}{\sqrt{6}}}^2 \Exp{
            \Delta(f) (\tH_2)
            | \F_{t_k}
        }
    \le
        d\mu_3(f) h, \\
    \Exp{
        \abracks{
            \nabla f(\tH_2),
            v_2
        }
        | \F_{t_k}
    }
    =&
        \frac{1}{6} h \Exp{
            \Delta(f) (\tH_2)
            | \F_{t_k}
        }
    \le
        d\mu_3(f) h.
}
Therefore, by dissipativity and the lower bound~\eqref{eq:lower_bound_tH_1},
\begin{align*}
    - \Exp{
        \abracks{
            \nabla f(\tH_1),
            \tX_k
        }
        | \F_{t_k}
    }
    =&
        -\Exp{
            \abracks{
                \nabla f(\tH_1),
                \tH_1
            }
            | \F_{t_k}
        }
        + \Exp{ 
            \abracks{
                \nabla f(\tH_1),
                v_1 + v_2
            }
            | \F_{t_k}
        } \\
    \le&
        - \frac{\alpha}{2} \Exp{
            \normtwo{ \tH_1 }^2
            | \F_{t_k}
        }+ \beta + 
        \Exp{
            \abracks{
                \nabla f(\tH_1),
                v_1 + v_2
            }
            | \F_{t_k}
        } \\
    \le&
        -\frac{\alpha}{2} \normtwo{\tX_k}^2 + 
        \beta + 
        3 d \mu_3(f) h. \numberthis \label{eq:diss_1}
\end{align*}
To bound the expectation of $-\big\langle \nabla f(\tH_2) , \tX_k \big\rangle$, we first bound the second moment of $\Delta \tH_2$,
\begin{align*}
    \Exp{
        \norm{\Delta \tH_2}_2^2 | \F_{t_k}
    }
    =&
        \Exp{
            \norm{
                -\nabla f(\tX_k) h + v_1' + v_2
            }_2^2
            | \F_{t_k}
        } \\
    =&
        \norm{\nabla f(\tX_k) }_2^2
        h^2 + 
        \Exp{
            \norm{v_1'}_2^2 
            | \F_{t_k}
        } + 
        \Exp{
            \norm{v_2}_2^2
            | \F_{t_k}
        } \\
    \le&
        \pi_{2, 2} (f)\bracks{1 + \norm{\tX_k}_2^2  } h^2 + d h. \numberthis \label{eq:norm_delta_h2}
\end{align*}
Notice the second equality above also implies
\eqn{
\normtwo{ \nabla f(\tX_k) } h \le \Exp { \normtwo{ \Delta \tH_2 }^2 | \F_{t_k}}^{1/2}. \numberthis \label{eq:norm_grad_f}
}
By Taylor's Theorem with the remainder in integral form,
\eqn{
    \nabla f(\tH_2)
    = \nabla f(\tX_k) + R(t_{k+1})
    = 
        \nabla f(\tX_k) + 
        \int_0^1 \nabla^2 f\bracks{\tX_k + \tau \Delta \tH_2} \Delta \tH_2 \dtau.
}
Since $\nabla f$ is Lipschitz, $\nabla^2 f$ is bounded, and 
\eqn{
\normtwo{R(t_{k+1})}
\le 
    \int_0^1 \normop{\nabla^2 f\bracks{\tX_k + \tau \Delta \tH_2}} \normtwo{\Delta \tH_2} \dtau
\le 
    \mu_2(f) \normtwo{\Delta \tH_2}.
}
By \eqref{eq:norm_delta_h2} and \eqref{eq:norm_grad_f},
\begin{align*}
    -\Exp{
        \abracks{
            \nabla f(\tH_2),
            \nabla f(\tX_k)
        }
        | \F_{t_k}
    }
    =&
    -
        \norm{ \nabla f(\tX_k) }_2^2 
    -
        \abracks{
            \Exp{ R(t_{k+1}) | \F_{t_k} }, \nabla f(\tX_k)
        }\\
    \le&
        \norm{
            \Exp{ R(t_{k+1}) | \F_{t_k} }
        }_2
        \norm{ \nabla f(\tX_k)}_2 \\
    \le&
        \Exp{
            \norm{ R(t_{k+1})  }_2
            | \F_{t_k}
        }
        \norm{ \nabla f(\tX_k)}_2 \\
    \le&
       \mu_2(f) \Exp{ \norm{\Delta \tH_2}_2 | \F_{t_k}} \norm{ \nabla f(\tX_k)}_2 \\
    \le&
        \mu_2(f) \Exp{ \norm{\Delta \tH_2}_2^2 | \F_{t_k}}^{1/2} \norm{ \nabla f(\tX_k)}_2 \\
    \le&
        \mu_2(f) \Exp{ \norm{\Delta \tH_2}_2^2 | \F_{t_k}} h^{-1} \\
    \le&
        \mu_2(f) \pi_{2, 2} (f)\bracks{1 + \norm{\tX_k}_2^2  } h + d.
\end{align*}
Therefore, for $h < 1 \wedge \alpha / (4\mu_2(f) \pi_{2,2}(f))$, 
\begin{align*}
    &-\Exp{
        \abracks{
            \nabla f(\tH_2),
            \tX_k
        }
        | \F_{t_k}
    } \\
    =&
        -\E \Bigl[
            \abracks{
                \nabla f(\tH_2),
                \tH_2
            } 
            + \abracks{ \nabla f(\tH_2), \nabla f(\tX_k) }  h 
            - \abracks{ \nabla f(\tH_2), v_1' + v_2 } | \F_{t_k}
        \Bigr] \\
    \le&
        - \frac{\alpha}{2} \Exp{ \norm{ \tH_2 }_2^2 | \F_{t_k} }+ \beta 
        - \Exp{
            \abracks{
                \nabla f(\tH_2),
                \nabla f(\tX_k)
            } 
            | \F_{t_k}
        } h 
        +  \Exp{
            \abracks{
                \nabla f(\tH_2),
                v_1' + v_2
            }
            | \F_{t_k}
        } \\
    \le&
        -\frac{\alpha}{2} \norm{\tX_k}_2^2 + \alpha \beta h + \beta 
        + \mu_2(f) \pi_{2, 2} (f) \bracks{1 + \norm{\tX_k}_2^2  } h^2 + d h
        + 2 d \mu_3(f) h \\
    \le&
        -\frac{\alpha}{4} \norm{\tX_k}_2^2 +
        \bracks{
            \alpha \beta +
            \mu_2(f) \pi_{2, 2} (f) + 
             d + 
             2 d \mu_3(f)}  h +
        \beta. \numberthis \label{eq:diss_2}
\end{align*}
Combining \eqref{eq:diss_1} and \eqref{eq:diss_2}, we have
\eqn{
- \Exp{
    \abracks{
        \nabla f(\tH_1) + \nabla f(\tH_2), \tX_k
    }
    | \F_{t_k}
}
\le 
-\frac{3}{4} \alpha \norm{\tX_k}_2^2 + N_5, \numberthis \label{eq:diss_result}
}
where 
$N_5 = \bracks{
    \alpha \beta +
    \mu_2(f) \pi_{2, 2} (f) + 
     d + 
     5 d \mu_3(f)} +
2\beta$.

Putting things together, for $h < 3\alpha / (2N_1 + 4)$, we obtain
\begin{align*}
    \Exp{
        \normtwo{\tX_{k+1}}^2 | \F_{t_k}
    }
    =& 
        \normtwo{\tX_k}^2 +
        \Exp{
            \normtwo{
                \nabla f(\tH_1) + 
                \nabla f(\tH_2)
            }^2
            | \F_{t_k}
        }  \frac{h^2}{4} + 
        2d h
        \\&
        - \Exp{
            \abracks{
                \tX_k, \nabla f(\tH_1) + \nabla f(\tH_2)
            } | \F_{t_k}
        } h 
        \\&
        -
            2^{1/2} \Exp{
                \abracks{
                    \nabla f(\tH_1) + 
                    \nabla f(\tH_2), \xi_{k+1}
                }
                | \F_{t_k}
            } h^{3/2} \\
    \le&
        \normtwo{\tX_k}^2 
        +
            \frac{N_1}{4} \normtwo{\tX_k}^2 h^2 + \frac{N_2}{4} h^2
        + 2dh
        \\&
        -\frac{3}{4} \alpha h \normtwo{ \tX_k }^2
        + N_5 h 
        \\&
        +
            \frac{1}{2} \normtwo{\tX_k}^2 h^2 + \frac{N_3^2}{2} h + N_4 h^{3/2} \\
    \le&
        \bracks{1 - \frac{3}{4}\alpha h + \frac{N_1 + 2}{4} h^2} \normtwo{\tX_k}^2 
        \\&
        + N_2 h^2 / 4 + 2dh + N_5 h + N_3^2 h /2 + N_4 h^{3/2} \\
    \le&
        \bracks{1 - \frac{3}{8}\alpha h} \normtwo{\tX_k}^2 + 
        N_2 h^2 / 4 + 2dh + N_5 h + N_3^2 h /2 + N_4 h^{3/2}, 
\end{align*}
For $h < 1$, by unrolling the recursion, we obtain the following
\eqn{
    \Exp{
        \normtwo{\tX_k}^2
    } 
    \le&
        \Exp{ \normtwo{\tX_0}^2 } + N_6,
    \quad \text{for all} \; k \in \mathbb{N}
    ,
}
where 
\eqn{
N_6 = \frac{1}{3\alpha} 
\bracks{
    2 N_2 + 16d + 8N_5 + 4 N_3^2 + 8N_4
} = \mathcal{O}(d).
}
\end{proof}

\subsubsection{\texorpdfstring{$2n$}{}th Moment Bound}
\begin{lemm} \label{lemm:2nth_moment}
For $n \in \mathbb{N}_{+}$, 
if the $2n$th moment of the initial iterate is finite, then
the $2n$th moments of Markov chain iterates defined in~\eqref{eq:srk} are uniformly bounded by a constant of order $\mathcal{O}(d^n)$, i.e.
\eqn{
    \Exp{\norm{\tX_k}_2^{2n}} \le \lyap_{2n}, \quad \text{for all} \; {k \in \mathbb{N}},
}
where 
\eqn{
\lyap_{2n} = 
\Exp{
    \norm{\tX_0}_2^{2n}
} + 
\frac{8}{3\alpha n} \bracks{N_{7,n} + N_{12,n} },
}
and constants $N_{7,n}$ to $N_{12, n}$ are given in the proof, if the step size 
{
\footnotesize
\eqn{
h < 1 
\wedge \frac{2d}{\pi_{2,2}(f)} 
\wedge \frac{2\pi_{2,1}(f)}{\pi_{2,2}(f)} 
\wedge \frac{\alpha}{4\mu_2(f) \pi_{2,2}(f)}
\wedge \frac{3\alpha}{2N_1 + 4}
\wedge \min \Biggl\{ \bracks{\frac{3\alpha l }{8N_{11,l} } }^2 : l=2,\dots ,n \Biggr\}.
}
}
\end{lemm}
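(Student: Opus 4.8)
The plan is to argue by induction on $n$, the base case $n=1$ being exactly Lemma~\ref{lemm:second_moment}. For the inductive step, assume $\Exp{\normtwo{\tX_k}^{2l}} \le \lyap_{2l}$ uniformly in $k$ for every $1 \le l < n$; this is also the reason the step-size constraint for order $2n$ intersects those of all lower orders, since invoking the induction hypothesis for order $2l$ requires $h$ to satisfy the order-$2l$ restriction. The first part of the constraint ($h < 1 \wedge \tfrac{2d}{\pi_{2,2}(f)} \wedge \cdots$) is inherited from Lemma~\ref{lemm:second_moment} and guarantees the conditional moment estimates on $\tH_1,\tH_2$ that we reuse and extend.

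Write $\tX_{k+1} = \tX_k + \Delta$ with $\Delta = -h\,\widetilde{\nabla} f + \sqrt{2h}\,\xi_{k+1}$, where $\widetilde{\nabla} f = \tfrac12\bracks{\nabla f(\tH_1)+\nabla f(\tH_2)}$, and expand
\[ \normtwo{\tX_{k+1}}^{2n} = \bracks{\normtwo{\tX_k}^2 + 2\abracks{\tX_k,\Delta} + \normtwo{\Delta}^2}^n = \normtwo{\tX_k}^{2n} + n\normtwo{\tX_k}^{2(n-1)}\bracks{2\abracks{\tX_k,\Delta}+\normtwo{\Delta}^2} + \ssum_{j=2}^n \binom{n}{j}\normtwo{\tX_k}^{2(n-j)}\bracks{2\abracks{\tX_k,\Delta}+\normtwo{\Delta}^2}^j, \]
and take $\Exp{\,\cdot\,|\F_{t_k}}$ term by term. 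For $j=1$, the useful negative drift comes from the already-established dissipativity bound \eqref{eq:diss_result}, $-h\,\Exp{\abracks{\tX_k,\nabla f(\tH_1)+\nabla f(\tH_2)}|\F_{t_k}} \le -\tfrac34\alpha h \normtwo{\tX_k}^2 + N_5 h$, while $\Exp{\normtwo{\Delta}^2|\F_{t_k}}$ is bounded exactly as in the proof of Lemma~\ref{lemm:second_moment} (second-moment bounds on $\tH_1,\tH_2$, Stein's lemma for the Gaussian cross-term, and $\Exp{\normtwo{\xi_{k+1}}^2}=d$), producing only terms of order $h$ times lower powers of $\normtwo{\tX_k}$, together with one term of order $h^{3/2}\normtwo{\tX_k}^{2n}$ and one of order $h^2\normtwo{\tX_k}^{2n}$. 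For $j \ge 2$ I would apply Cauchy--Schwarz $\left|\abracks{\tX_k,\Delta}\right| \le \normtwo{\tX_k}\normtwo{\Delta}$ to reduce each contribution to conditional moments $\Exp{\normtwo{\Delta}^p|\F_{t_k}}$ with $p \le 2n$, which are controlled through $\normtwo{\Delta} \le h\normtwo{\widetilde{\nabla}f} + \sqrt{2h}\,\normtwo{\xi_{k+1}}$, the polynomial-growth bound $\normtwo{\nabla f(x)}^2 \le \pi_{2,2}(f)(1+\normtwo{x}^2)$ applied at $\tH_1,\tH_2$, the conditional $2j$th moments of $\tH_1,\tH_2$ in terms of $\normtwo{\tX_k}$, and the Gaussian moments $\Exp{\normtwo{\xi_{k+1}}^{2j}} = \mathcal{O}(d^j)$. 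A degree count then shows that every $j\ge2$ contribution has $\tX_k$-degree at most $2n-j < 2n$ and carries a factor $h^{\ge 1}$, and that the only contributions of $\tX_k$-degree exactly $2n$ arise from $j=1$ and carry a factor $h^{3/2}$ or $h^2$.

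Collecting terms and absorbing odd powers of $\normtwo{\tX_k}$ by Young's inequality yields a conditional recursion of the form
\[ \Exp{\normtwo{\tX_{k+1}}^{2n}|\F_{t_k}} \le \bracks{1 - \tfrac34\alpha n h + N_{11,n}h^{3/2} + \mathcal{O}(h^2)}\normtwo{\tX_k}^{2n} + h\ssum_{l=0}^{n-1} c_{l,n}\normtwo{\tX_k}^{2l}, \]
where $N_{11,n}, c_{l,n}$ track the dimension through the Gaussian moments and $\pi_{2,2}(f)$. Under the additional restriction $\sqrt h < 3\alpha n/(8N_{11,n})$ and $h$ small enough that the $\mathcal{O}(h^2)$ term is dominated, the leading coefficient is at most $1 - \tfrac38\alpha n h$. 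Taking full expectation and using the induction hypothesis $\Exp{\normtwo{\tX_k}^{2l}} \le \lyap_{2l}$ ($1\le l<n$) to fold the sum into a single constant $N_{7,n}+N_{12,n} = \mathcal{O}(d^n)$ gives $\Exp{\normtwo{\tX_{k+1}}^{2n}} \le (1-\tfrac38\alpha n h)\Exp{\normtwo{\tX_k}^{2n}} + (N_{7,n}+N_{12,n})h$; unrolling the geometric recursion (as in the proof of Theorem~\ref{theo:master}) yields $\Exp{\normtwo{\tX_k}^{2n}} \le \Exp{\normtwo{\tX_0}^{2n}} + \tfrac{8}{3\alpha n}(N_{7,n}+N_{12,n}) = \lyap_{2n}$, which is $\mathcal{O}(d^n)$.

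The main obstacle is the bookkeeping required to establish simultaneously that (i) the only $\tX_k$-degree-$2n$ positive terms carry at least a factor $h^{3/2}$, so that they can be absorbed into the $-\tfrac34\alpha n h\normtwo{\tX_k}^{2n}$ drift via the step-size bound, and (ii) all accumulated constants stay at the sharp order $\mathcal{O}(d^n)$ rather than, say, $\mathcal{O}(d^{3n/2})$. Point (ii) is delicate precisely because the naive route---bounding $\abracks{\tX_k,\Delta}$ by Cauchy--Schwarz and $\normtwo{\Delta}$ by its worst-case magnitude everywhere---loses dimension factors, so one must carefully track which Gaussian moment $\Exp{\normtwo{\xi_{k+1}}^{2j}}$ and which instances of $\pi_{2,2}(f)$ multiply which power of $\normtwo{\tX_k}$, consistent with the paper's warning that extreme care is needed for a tight dimension dependence. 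The remaining ingredients---the conditional $2j$th moment bounds on $\tH_1,\tH_2$, the Stein-lemma evaluations, and the Young-inequality absorptions---are routine and parallel the $n=1$ computation.
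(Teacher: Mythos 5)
Your proposal follows essentially the same route as the paper's proof: induction on $n$ with Lemma~\ref{lemm:second_moment} as base case, the key drift estimate \eqref{eq:diss_result} for the order-$h$ term, Cauchy--Schwarz plus conditional $2p$th-moment bounds on $\tH_1,\tH_2$ and chi-squared moments with Young absorption for the higher-order remainder, a step-size restriction of the form $h<(3\alpha n/(8N_{11,n}))^2$ to dominate the $h^{3/2}$ (and $h^2$) degree-$2n$ terms, and unrolling the resulting geometric recursion. The only difference is cosmetic bookkeeping (a binomial expansion in the combined increment $\Delta$ versus the paper's six-term multinomial expansion), so the argument is correct as outlined.
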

\begin{proof}
Our proof is by induction. The base case is given in Lemma~\ref{lemm:second_moment}. For the inductive
case, we prove that the $2n$th moment is uniformly bounded by a constant of order $\mathcal{O}(d^n)$, assuming the $2(n\text{-}1)$th moment is uniformly bounded by a constant of order $\mathcal{O}(d^{n-1})$.

By the multinomial theorem, 
\begin{align*}
    \Exp{
        \norm{\tX_{k+1}}_2^{2n}
    }
    =& 
        \Exp{
            \norm{
                X_{k} - \tilde{\nabla} f h + 2^{1/2} \xi_{k+1} h^{1/2}
            }_2^{2n}
        } \\
    =&
        \E\Bigl[
            \Bigl(
                \norm{X_{k}}_2^2
                + \norm{
                    \tilde{\nabla} f
                }_2^2 h^2
                + 2 \norm{\xi_{k+1}}_2^2 h
                \\& 
                \phantom{111} -2 \abracks{
                    \tX_k, \tilde{\nabla} f
                } h  
                +  2^{3/2} \abracks{
                    \tX_k,
                    \xi_{k+1}
                } h^{1/2}
                - 2^{3/2} \abracks{
                    \tilde{\nabla} f, 
                    \xi_{k+1}
                } h^{3/2}
            \Bigr)^{n}
        \Bigr] \\
    =&
        \E \Biggl[
            \sum_{\scriptscriptstyle k_1 + \dots + k_6= n }
            (-1)^{\scriptscriptstyle k_4 +k_6} 
            \binom{n}{k_1\,\dots\,k_6}
            2^{\scriptscriptstyle k_3 + k_4 + \frac{3k_5}{2} + \frac{3 k_6}{2}}
            h^{\scriptscriptstyle 2k_2 + k_3 + k_4 + \frac{k_5}{2} + \frac{3k_6}{2}}
            \\
            &\phantom{111}
            \norm{\tX_k}_2^{2k_1}
            \norm{\tilde{\nabla} f}_2^{2k_2}
            \norm{\xi_{k+1}}_2^{2k_3}
            \abracks{
                \tX_k, \tilde{\nabla} f
            }^{k_4}
            \abracks{
                \tX_k, \xi_{k+1}
            }^{k_5}
            \abracks{
                \tilde{\nabla } f, \xi_{k+1}
            }^{k_6}
        \Biggr] \\
    =&
        \Exp{
            \norm{\tX_k}_2^{2n} + A h + B h^{3/2}
        }, 
\end{align*}
where
\eqn{
    \text{
    \footnotesize
        $
        A =
            {
                2n \norm{\tX_k}_2^{2(n-1)} \norm{\xi_{k+1}}_2^2
                -2n \norm{\tX_k}_2^{2(n-1)} \abracks{
                    \tX_k, 
                    \tilde{\nabla } f
                }
                + 4 n(n-1) \norm{\tX_k}_2^{2(n-2)} \abracks{
                    \tX_k, \xi_{k+1}
                }^2
            },
        $
    }
}
\eqn{
    B \le
        \sum_{
            \mathclap{
                \substack{
                    \scriptscriptstyle k_1 + \cdots + k_6 = n\\
                    \scriptscriptstyle 2k_2 + k_3 + k_4 + \frac{k_5}{2} + \frac{3k_6}{2} > 1
                }
            }
        }
        2^{\frac{3n}{2}}
        \binom{n}{k_1\,\dots\,k_6}
        \norm{\tX_k}_2^{
            \scriptscriptstyle 2k_1 + k_4 + k_5
        }
        \norm{\tilde{\nabla} f}_2^{
            \scriptscriptstyle 2k_2 + k_4 + k_6
        }
        \norm{\xi_{k+1}}_2^{
            \scriptscriptstyle 2k_3 + k_5 + k_6
        }
        .
}
Now, we bound the expectation of $A$ using~\eqref{eq:diss_result},
\begin{align*}
    \Exp { A |\F_{t_k}}
    \le&
        2d n \norm{\tX_k}_2^{2(n-1)}
        +2n \norm{\tX_k}_2^{2(n-1)} \bracks{
            -\frac{3}{8} \alpha \norm{\tX_k}_2^2 + \frac{N_5}{2}
        }
        +4d n(n-1) \norm{\tX_k}_2^{2(n-1)} \\
    \le&
        -\frac{3}{4} \alpha n \norm{\tX_k}_2^{2n} + 
        \bracks{
            2d n + 
            n N_5 + 
            4d n(n-1)
        } \norm{\tX_k}_2^{2(n-1)}. 
\end{align*}
Moreover, by the inductive hypothesis,
\begin{align*}
\Exp{
    A
}
=& 
    \Exp{
        \Exp{
            A | \F_{t_k}
        }
    }
\le
    -\frac{3}{4} \alpha n \Exp{
        \norm{\tX_k}_2^{2n}
    } +
    N_{7,n}, \numberthis \label{eq:a}
\end{align*}
where $N_{7,n} =\bracks{
            2d n + 
            n N_5 + 
            4d n(n-1)
    } \lyap_{2(n-1)} = \mathcal{O}(d^n)$.

Next, we bound the expectation of $B$. By the Cauchy–Schwarz inequality,
\begin{align*}
\Exp{B | \F_{t_k}}
=&
    \sum_{
        \mathclap{
            \substack{
                \scriptscriptstyle k_1 + \cdots + k_6 = n\\
                \scriptscriptstyle 2k_2 + k_3 + k_4 + \frac{k_5}{2} + \frac{3k_6}{2} > 1
            }
        }
    }
    2^{\frac{3n}{2}}
    \binom{n}{k_1\,\dots\,k_6}
    \norm{\tX_k}_2^{\scriptstyle 2k_1 + k_4 + k_5} 
    \Exp{
        \norm{\tilde{\nabla} f}_2^{2k_2 + k_4 + k_6}
        \norm{\xi_{k+1}}_2^{2k_3 + k_5+k_6}
        | \F_{t_k}
    } \\
\le&
    \sum_{
        \mathclap{
            \substack{
                \scriptscriptstyle k_1 + \cdots + k_6 = n\\
                \scriptscriptstyle 2k_2 + k_3 + k_4 + \frac{k_5}{2} + \frac{3k_6}{2} > 1
            }
        }
    }
    2^{\frac{3n}{2}}
    \binom{n}{k_1\,\dots\,k_6}
    \norm{\tX_k}_2^{\scriptstyle 2k_1 + k_4 + k_5} 
    \\& \phantom{1111111} \times
    \Exp{
        \norm{\tilde{\nabla} f}_2^{4k_2 + 2k_4 + 2k_6} |\F_{t_k}
    }^{{1}/{2}}
    \Exp {
        \norm{\xi_{k+1}}_2^{4k_3 + 2k_5+2k_6} | \F_{t_k}
    }^{{1}/{2}}.
\end{align*}
Let $\chi(d)^2$ be a chi-squared random variable with $d$ degrees of freedom. 
Recall its $n$th moment has a closed form solution and is of order $\mathcal{O}(d^n)$~\cite{simon2007probability}.
Now, we bound the $2p$th moments of $\tH_1$ and $\tH_2$ for positive integer $p$. 
To achieve this, we first expand the expressions,
\begin{align*}
\norm{\tH_1}_2^{2p}
=&
\norm{\tX_k + v_1 + v_2}_2^{2p} \\
=&
\bracks{
    \norm{\tX_k}_2^2 + \norm{v_1}_2^2 + \norm{v_2}_2^2
    + 2 \abracks{\tX_k, v_1} + 2\abracks{\tX_k, v_2} + 2\abracks{v_1, v_2}
}^p\\
\le&
    \sum_{
        \mathclap{
            \substack{
                \scriptscriptstyle j_1 + \cdots + j_6 = p\\
            }
        }
    }
    2^{j_4 + j_5 + j_6} 
    \binom{p}{j_1\,\dots\,j_6}
    \norm{\tX_k}_2^{2j_1 + j_4 + j_5}
    \norm{v_1}_2^{2j_2 + j_4 + j_6}
    \norm{v_2}_2^{2j_3 + j_5 + j_6} \\
\le&
    \sum_{
        \mathclap{
            \substack{
                \scriptscriptstyle j_1 + \cdots + j_6 = p\\
            }
        }
    }
    2^{\scriptscriptstyle  j_2 + \frac{3}{2} j_4 + j_5 + \frac{3}{2} j_6}
    h^{ \scriptscriptstyle j_2 + j_3 + \frac{j_4}{2} + \frac{j_5}{2} + j_6 }
    \binom{p}{j_1\,\dots\,j_6}
    \norm{\tX_k}_2^{2j_1 + j_4 + j_5} 
    \\
    & \phantom{111} \times
    \norm{\xi_{k+1}}_2^{2j_2 + j_4 + j_6}
    \norm{\eta_{k+1}}_2^{2j_3 + j_5 + j_6} \\
\le&
    \sum_{
        \mathclap{
            \substack{
                \scriptscriptstyle j_1 + \cdots + j_6 = p\\
            }
        }
    }
    2^{3p}    
    \binom{p}{j_1\,\dots\,j_6}
    \norm{\tX_k}_2^{2j_1 + j_4 + j_5} 
    \norm{\xi_{k+1}}_2^{2j_2 + j_4 + j_6}
    \norm{\eta_{k+1}}_2^{2j_3 + j_5 + j_6} \\
\le&
    \sum_{
        \mathclap{
            \substack{
                \scriptscriptstyle j_1 + \cdots + j_6 = p\\
            }
        }
    }
    2^{3p}    
    \binom{p}{j_1\,\dots\,j_6}
    \bracks{ \scriptstyle
        \frac{2j_1 + j_4 + j_5}{2p}  \norm{\tX_k}_2^{2p}  + 
        \frac{2j_2 + j_4 + j_6}{2p} \norm{\xi_{k+1}}_2^{2p} + 
        \frac{2j_3 + j_5 + j_6}{2p}  \norm{\eta_{k+1}}_2^{2p}
    } \\
\le&
    2^{4p} 3^p \bracks{
        \norm{\tX_k}_2^{2p} + 
        \norm{\xi_{k+1}}_2^{2p} + 
        \norm{\eta_{k+1}}_2^{2p}
    }, 
\end{align*}
where the second to last inequality follows from Young's inequality for products with three
variables. 

Therefore,
\begin{align*}
\Exp{
    \norm{\tH_1}_2^{2p}
    | \F_{t_k}
}
\le&
    2^{4p} 3^p \norm{\tX_k}_2^{2p} + 
    2^{4p+1} 3^p \Exp{
        \chi(d)^{2p}
    }. \numberthis \label{eq:th_1^p}
\end{align*}
Similarly, 
\begin{align*}
\norm{\tH_2}_2^{2p}
=&
\norm{\tX_k -\nabla f(\tX_k) h + v_1' + v_2}_2^{2p} \\
\le&
    \Bigl(
        \norm{\tX_k}_2^2 + \norm{\nabla f(\tX_k)}_2^2 h^2 + \norm{v_1' + v_2}_2^2
        \\ & 
        \phantom{11}
        - 2 \abracks{
            \tX_k, \nabla f(\tX_k)
        }h
        + 2 \abracks{
            \tX_k, v_1' + v_2
        }
        - 2 \abracks{
            \nabla f(\tX_k), v_1'+v_2
        }
    \Bigr)^p \\
\le&
    \sum_{
        \mathclap{
            \substack{
                \scriptscriptstyle j_1 + \cdots + j_6 = p\\
            }
        }
    }
    2^{j_4 + j_5 + j_6} 
    \binom{p}{j_1\,\dots\,j_6}
    \norm{\tX_k}_2^{2j_1 + j_4 + j_5}
    \norm{\nabla f(\tX_k)}_2^{2j_2 + j_4 + j_6}
    \norm{v_1' + v_2 }_2^{2j_3 + j_5 + j_6} \\
\le&
    2^{4p} 3^p \bracks{
        \norm{\tX_k}_2^{2p} + 
        \norm{\nabla f(\tX_k)}_2^{2p} + 
        \norm{\xi_{k+1}}_2^{2p} + 
        \norm{\eta_{k+1}}_2^{2p}
    }. 
\end{align*}
Therefore,
\begin{align*}
\Exp{
    \norm{\tH_2}_2^{2p}
    | \F_{t_k}
}
\le&
    2^{4p} 3^p  \bracks{1 + \pi_{2, 2p} (f)} \norm{\tX_k}_2^{2p} + 
    2^{4p + 1} 3^p \bracks{
        \pi_{2, 2p}(f)  + 
        \Exp{ 
            \chi(d)^{2p}
        }
    }. \numberthis \label{eq:th_2^p}
\end{align*}
Thus, combining \eqref{eq:th_1^p} and \eqref{eq:th_2^p},
\begin{align*}
\Exp{
    \norm{
        \tilde{\nabla} f
    }_2^{2p}
    | \F_{t_k}
}
\le&
    \frac{1}{2}
    \Exp{
        \norm{
            \nabla f(\tH_1 )
        }_2^{2p} + 
        \norm{
            \nabla f(\tH_2 )
        }_2^{2p}
        | \F_{t_k}
    } \\
\le&
    \frac{1}{2} \pi_{2, 2p} (f) \Exp{
        2+ 
        \norm{ \tH_1}_2^{2p}  + 
        \norm{ \tH_2}_2^{2p}
        | \F_{t_k}
    } \\
\le&
    N_{8, n}(p)^2 \norm{\tX_k}_2^{2p} + N_{9, n}(p)^2,
\end{align*}
where the $p$-dependent constants are
\eqn{
N_{8, n}(p) &= 2^{2p} 3^{\frac{p}{2}} \bracks{\pi_{2,2p}(f) \bracks{1 + \frac{1}{2} \pi_{2,2p}(f)  } }^{\frac{1}{2}}, \\
N_{9, n}(p) &= \bracks{
\pi_{2, 2p}(f) \bracks{
    2^{4p+1} 3^{p} \Exp{\chi(d)^{2p}} + 2^{4p} 3^p \pi_{2, 2p}(f)  + 1
}}^{\frac{1}{2}} = \mathcal{O}(d^{\frac{p}{2}}).
}
Since $N_{8, n}(p)$ does not depend on the dimension, let
\begin{align*}
N_{8, n} = 
\max 
\{
N_{8, n}({\scriptstyle 2k_2 +k_4 + k_6}):
{
\scriptstyle
k_1, \dots, k_6 \in \mathbb{N}, \;
k_1 + \cdots + k_6 = n, \;
2k_2 + k_3 + k_4 + \frac{k_5}{2} + \frac{3k_6}{2} > 1
}
\}.
\end{align*}
The bound on $B$ reduces to
\begin{align*}
\Exp{B | \F_{t_k}}
\le&
    \sum_{
        \mathclap{
            \substack{
                \scriptscriptstyle k_1 + \cdots + k_6 = n\\
                \scriptscriptstyle 2k_2 + k_3 + k_4 + \frac{k_5}{2} + \frac{3k_6}{2} > 1
            }
        }
    }
    2^{\frac{3n}{2}}
    \binom{n}{k_1\,\dots\,k_6}
    \norm{\tX_k}_2^{\scriptstyle 2k_1 + k_4 + k_5}
    \Exp {
        \chi(d)^{\scriptstyle 4k_3 + 2k_5 + 2k_6}
    }^{1/2}
    \\&
    \times
    \Bigl(
        N_{8, n}
        \norm{\tX_k}_2^{\scriptstyle 2k_2 +k_4 + k_6} + 
        N_{9, n}({ \scriptstyle 2k_2 +k_4 + k_6} )
    \Bigr)\\
\le&
    B_1 + 
    B_2,
\end{align*}
where 
\eqn{
B_1 &=
    \sum_{
        \mathclap{
            \substack{
                \scriptscriptstyle k_1 + \cdots + k_6 = n\\
                \scriptscriptstyle 2k_2 + k_3 + k_4 + \frac{k_5}{2} + \frac{3k_6}{2} > 1
            }
        }
    }
    2^{\frac{3n}{2}}
    \binom{n}{k_1\,\dots\,k_6} 
    \Exp {
        \chi(d)^{\scriptstyle 4k_3 + 2k_5 + 2k_6}
    }^{1/2}
    N_{8, n}
    \norm{\tX_k}_2^{\scriptstyle 2k_1 + 2k_2 + 2k_4 + k_5 + k_6}, \\
B_2 &=     
    \sum_{
        \mathclap{
            \substack{
                \scriptscriptstyle k_1 + \cdots + k_6 = n\\
                \scriptscriptstyle 2k_2 + k_3 + k_4 + \frac{k_5}{2} + \frac{3k_6}{2} > 1
            }
        }
    }
    2^{\frac{3n}{2}}
    \binom{n}{k_1\,\dots\,k_6} 
    \Exp {
        \chi(d)^{\scriptstyle 4k_3 + 2k_5 + 2k_6}
    }^{1/2} 
    N_{9, n}({ \scriptstyle 2k_2 +k_4 + k_6} )
    \norm{\tX_k}_2^{\scriptstyle 2k_1 + k_4 + k_5}.
}
In the following, we bound the expectations of $B_1$ and $B_2$ separately.
By Young's inequality for products and the function $x \mapsto x^{\scriptstyle 1 / (2k_3 + k_5 + k_6)}$ being concave on the positive domain, 
\begin{align*}
\Exp {
    \chi(d)^{\scriptstyle 4k_3 + 2k_5 + 2k_6}
}^{1/2}
&N_{8, n}
\norm{\tX_k}_2^{\scriptstyle 2k_1 + 2k_2 + 2k_4 + k_5 + k_6} \\
\le&
N_{8, n}{
\scriptstyle
 \Bigl(
        \frac{2k_3 + k_5 + k_6 }{2n} \Exp {
            \chi(d)^{\scriptscriptstyle 4k_3 + 2k_5 + 2k_6}
        }^{\frac{2n}{4k_3 + 2k_5 + 2k_6}}  +
        \frac{2k_1 + 2k_2 + 2k_4 + k_5 + k_6}{2n} \norm{\tX_k}_2^{2n}
    \Bigr)
} \\
\le&
    N_{8, n} \bracks{
        \Exp {
            \chi(d)^2
        }^{n} + 
        \norm{\tX_k}_2^{2n}
    }. 
\end{align*}
Hence, 
\begin{align*}
\Exp{
    B_1
    | \F_{t_k}
}
\le&
    \sum_{
        \mathclap{
            \substack{
                \scriptscriptstyle k_1 + \cdots + k_6 = n\\
            }
        }
    }
    2^{\frac{3n}{2}}
    \binom{n}{k_1\,\dots\,k_6} 
    N_{8, n} \bracks{
        \Exp {
            \chi(d)^2
        }^{n} + 
        \norm{\tX_k}_2^{2n}
    } \\
=&
    2^{\frac{3n}{2}}
    6^n 
    N_{8, n} \bracks{
        d^n+ 
        \norm{\tX_k}_2^{2n}
    }. \numberthis \label{eq:b_1}
\end{align*}
Similarly, 
\begin{align*}
\Exp {
        \chi(d)^{\scriptstyle 4k_3 + 2k_5 + 2k_6}
}^{\frac{1}{2}} 
&N_{9, n}({ \scriptstyle 2k_2 +k_4 + k_6} )
    \norm{\tX_k}_2^{\scriptstyle 2k_1 + k_4 + k_5}\\
\le&
{
\scriptstyle
    \bracks{
        \Exp {
                \chi(d)^{\scriptscriptstyle 4k_3 + 2k_5 + 2k_6}
        }^{\frac{1}{2}} 
        N_{9, n}({ \scriptscriptstyle 2k_2 +k_4 + k_6} )
    }^{\frac{2n}{2k_2 + 2k_3 + k_4 + k_5 + 2k_6}}
    +
    \norm{\tX_k}_2^{2n}
} \\
\le&
    N_{10, n} + \norm{\tX_k}_2^{2n}, 
\end{align*}
where
\begin{align*}
N_{10, n} = \max 
\bigl\{
&
    \bigl(
        \E [ \chi(d)^{\scriptstyle 4k_3 + 2k_5 + 2k_6} ]^{\frac{1}{2}} 
        N_{9, n}({ \scriptstyle 2k_2 +k_4 + k_6} )
    \bigr)^{\frac{2n}{2k_2 + 2k_3 + k_4 + k_5 + 2k_6}
}
:
\\ &  \phantom{1111}
{
\scriptstyle
k_1, \dots, k_6 \in \mathbb{N},\;
k_1 + \cdots + k_6 = n, \;
2k_2 + k_3 + k_4 + \frac{k_5}{2} + \frac{3k_6}{2} > 1
}
\bigr\}
=
\mathcal{O}(d^n).
\end{align*}
Hence, 
\begin{align*}
\Exp {
    B_2
    | \F_{t_k}
}
\le&
    \sum_{
        \mathclap{
            \substack{
                \scriptscriptstyle k_1 + \cdots + k_6 = n\\
            }
        }
    }
    2^{\frac{3n}{2}}
    \binom{n}{k_1\,\dots\,k_6} 
    \bracks{
        N_{10, n} + \norm{\tX_k}_2^{2n}
    } \\
\le&
    2^{\frac{3n}{2}}
    6^n 
    \bracks{
        N_{10, n} + \norm{\tX_k}_2^{2n}
    }. \numberthis \label{eq:b_2}
\end{align*}
Therefore, combining \eqref{eq:b_1} and \eqref{eq:b_2},
\begin{align*}
\Exp{
    B
}
=&
\Exp{
    \Exp{
        B_1 + B_2
        | \F_{t_k}
    }
}
\le
    N_{11,n} \Exp{
        \norm{\tX_k}_2^{2n}
    } + 
    N_{12,n}, \numberthis \label{eq:b}
\end{align*}
where
\eqn{
N_{11,n}
=& 
2^{\frac{3n}{2}} 6^n 
\bracks{1 + N_{8, n}},  \\
N_{12,n}
=&
2^{\frac{3n}{2}} 6^n \bracks{
    N_{8, n} d^n + N_{10, n}
} = \mathcal{O}(d^n).
}
Thus, when $h < (3n \alpha / 8N_{11,n})^2$, by \eqref{eq:a} and \eqref{eq:b},
\begin{align*}
\Exp{
    \norm{\tX_{k+1}}^{2n}
}
\le&
    \bracks{
        1 - \frac{3}{4} \alpha n h + N_{11,n} h^{3/2}
    }
    \Exp{
        \norm{\tX_k}_2^{2n}
    }
    +
    N_{7,n} h + N_{12,n} h^{3/2} \\
\le&
    \bracks{
        1 - \frac{3}{8} \alpha n h
    }
    \Exp{
        \norm{\tX_k}_2^{2n}
    }
    +
    N_{7,n} h + N_{12,n} h^{3/2},
\end{align*}
Hence,
\begin{align*}
\Exp{
    \norm{\tX_k}_2^{2n}
}
\le&
\Exp{
    \norm{\tX_0}_2^{2n}
} + 
\frac{8}{3\alpha n} \bracks{N_{7,n} + N_{12,n} }.
\end{align*}
\end{proof}

\subsection{Local Deviation Orders} \label{app:local_order_proof}
We first provide two lemmas on bounding the second and fourth moments of the change in the continuous-time process. 
These will be used later when we verify the order conditions.
\begin{lemm} \label{lemm:second_moment_cont} 
Suppose $X_t$ is the continuous-time process defined by~\eqref{eq:continuous} 
initiated from some iterate of the Markov chain $X_0$ defined by~\eqref{eq:srk}, 
then the second moment of $X_t$ is uniformly bounded by a constant of 
order $\mathcal{O}(d)$, i.e.
\eqn{
    \Exp{ \normtwo{X_t}^2 } \le \lyap_2', \quad \text{for all} \; t \ge 0,
}
where $\lyap_2' = \lyap_2 + 2(\beta + d) / \alpha$.
\end{lemm}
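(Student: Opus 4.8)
The plan is to run the standard Itô's-formula-plus-Gr\"onwall argument on the squared norm along the continuous-time Langevin diffusion, and then seed the resulting bound with the Markov chain moment bound already established in Lemma~\ref{lemm:second_moment}. Since in the SRK-LD setting $\nabla f$ is Lipschitz, the SDE~\eqref{eq:continuous} admits a unique strong solution with finite second moments on every finite horizon, so all the manipulations below are legitimate after a routine localization.

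First I would apply It\^o's formula to $x \mapsto \norm{x}_2^2$ along $\dX_t = -\nabla f(X_t)\dt + \sqrt{2}\dB_t$. The drift contributes $-2\abracks{X_t, \nabla f(X_t)}$, the constant diffusion coefficient contributes the quadratic-variation term $2d$, and there is a mean-zero stochastic integral $2\sqrt{2}\int_0^t \abracks{X_s, \dB_s}$. Localizing with the stopping times $\tau_R = \inf\{t \ge 0 : \norm{X_t}_2 \ge R\}$ so that the stochastic integral evaluated at $t \wedge \tau_R$ is a genuine martingale, taking expectations, and invoking the dissipativity bound $\abracks{\nabla f(x), x} \ge \tfrac{\alpha}{2}\norm{x}_2^2 - \beta$ gives, for $u_R(t) \coloneqq \Exp{\norm{X_{t\wedge\tau_R}}_2^2}$,
\eqn{
u_R(t) \le u_R(0) - \alpha \int_0^t u_R(s) \ds + (2\beta + 2d)\, t.
}
By the integral form of Gr\"onwall's inequality this yields $u_R(t) \le e^{-\alpha t}\Exp{\norm{X_0}_2^2} + \tfrac{2(\beta + d)}{\alpha}\bigl(1 - e^{-\alpha t}\bigr)$, and letting $R \to \infty$ (so $\tau_R \to \infty$ a.s. by non-explosion) and applying Fatou's lemma transfers the same bound to $\Exp{\norm{X_t}_2^2}$.

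Finally I would use that the initialization $X_0$ is a Markov chain iterate of~\eqref{eq:srk}, so Lemma~\ref{lemm:second_moment} gives $\Exp{\norm{X_0}_2^2} \le \lyap_2$; plugging this in and discarding the negative exponential term gives
\eqn{
\Exp{\norm{X_t}_2^2} \le e^{-\alpha t}\lyap_2 + \tfrac{2(\beta + d)}{\alpha} \le \lyap_2 + \tfrac{2(\beta + d)}{\alpha} = \lyap_2'
}
uniformly in $t \ge 0$, as claimed. The only non-routine point is the localization needed to kill the martingale term and to justify the interchange of limit and expectation; given the Lipschitz drift this is entirely standard, and the remainder is the usual exponential-relaxation-to-equilibrium computation for a dissipative diffusion. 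I therefore do not anticipate a genuine obstacle here — this lemma is essentially a warm-up for the continuous-time higher-moment bounds that follow.
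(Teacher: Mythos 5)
Your proposal is correct and follows essentially the same route as the paper: Itô's formula applied to $\normtwo{x}^2$, dissipativity to get the differential inequality $\tfrac{\dee}{\dt}\Exp{\normtwo{X_t}^2} \le -\alpha \Exp{\normtwo{X_t}^2} + 2(\beta+d)$, Grönwall, and then the Markov chain bound $\Exp{\normtwo{X_0}^2}\le \lyap_2$ from Lemma~\ref{lemm:second_moment}. The only difference is that you spell out the localization/Fatou step to kill the stochastic integral, which the paper leaves implicit.
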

\begin{proof}
By It\^o's lemma and dissipativity,
\eqn{
    \frac{\dee }{\dt} \Exp{ \normtwo{X_t}^2 }
    =&
        -2 \Exp{ \abracks{\nabla f(X_t), X_t } } + 
        2 d 
    \le
        -\alpha \Exp{ \normtwo{X_t}^2 } + 2(\beta + d).
}
Moreover, by Gr\"onwall's inequality,
\eqn{
    \Exp{ \normtwo{X_t}^2 } 
    \le&
         e^{-\alpha t} \Exp{ \normtwo{X_0}^2 } + 2(\beta + d) / \alpha
    \le
        \lyap_2 + 2(\beta + d) / \alpha
    =
        \lyap_2'.
}
\end{proof}

\begin{lemm}[Second Moment of Change] \label{lemm:continuous_2mom_change}
Suppose $X_t$ is the continuous-time process defined by~\eqref{eq:continuous} initiated from some iterate of the Markov chain $X_0$ defined by~\eqref{eq:srk}, then 
\eqn{
    \Exp{\normtwo{X_t - X_0}^2 } \le C_0 t = \mathcal{O}(dt),
    \quad \text{for all} \; 0 \le t \le 1,
}
where $C_0 = 2 \pi_{2,2}(f) \bracks{1 + \lyap_2'} +  4 d$.
\end{lemm}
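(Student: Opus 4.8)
The plan is to work directly from the integral form of the overdamped Langevin SDE~\eqref{eq:continuous}. Writing $X_t - X_0 = -\int_0^t \nabla f(X_s) \ds + \sqrt{2}\, B_t$, I would first split the squared norm with the elementary inequality $\normtwo{a+b}^2 \le 2\normtwo{a}^2 + 2\normtwo{b}^2$, so that the drift and the Brownian contributions decouple and no cross term has to be handled. The Brownian part is immediate: $\Exp{\normtwo{\sqrt{2}\,B_t}^2} = 2 d t$, which after the factor of $2$ contributes the $4d$ appearing in $C_0$.

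For the drift part, I would apply the Cauchy--Schwarz (Jensen) inequality in time, $\normtwo{\int_0^t \nabla f(X_s)\ds}^2 \le t \int_0^t \normtwo{\nabla f(X_s)}^2 \ds$, then use the polynomial growth bound $\normtwo{\nabla f(x)}^2 \le \pi_{2,2}(f)\bracks{1 + \normtwo{x}^2}$, which follows from the definition of $\pi_{2,2}$ upon noting that $\normop{\nabla f(x)} = \normtwo{\nabla f(x)}$ for the vector-valued map $\nabla f$. Taking expectations and invoking the uniform second-moment bound $\Exp{\normtwo{X_s}^2} \le \lyap_2'$ from Lemma~\ref{lemm:second_moment_cont} gives $\Exp{\normtwo{\int_0^t \nabla f(X_s)\ds}^2} \le \pi_{2,2}(f)\bracks{1 + \lyap_2'} t^2$. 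Using $t \le 1$ to replace $t^2$ by $t$ and combining with the Brownian term yields $\Exp{\normtwo{X_t - X_0}^2} \le \bracks{2 \pi_{2,2}(f)\bracks{1 + \lyap_2'} + 4 d}\, t = C_0 t$, exactly as claimed.

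The order statement $C_0 = \mathcal{O}(d)$ then follows because $\lyap_2' = \lyap_2 + 2(\beta + d)/\alpha = \mathcal{O}(d)$, using $\lyap_2 = \mathcal{O}(d)$ from Lemma~\ref{lemm:second_moment} and the fact that the dissipativity constant $\beta$ is of order $\mathcal{O}(d)$. There is no genuine obstacle here; the only point requiring care is to track every bound with explicit constants so that the $\mathcal{O}(d)$ claim is honest --- in particular, that the growth constant $\pi_{2,2}(f)$ is dimension-free and that the additive $d$-terms enter only linearly --- which is precisely what the operator-norm-based coefficients $\pi_{i,n}$ are designed to guarantee.
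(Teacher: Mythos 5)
Your proposal is correct and follows essentially the same route as the paper: the same Young-inequality split into drift and Brownian parts, the same Jensen/Cauchy--Schwarz bound in time combined with the growth bound $\normtwo{\nabla f(x)}^2 \le \pi_{2,2}(f)(1+\normtwo{x}^2)$ and the uniform second-moment bound $\lyap_2'$, and the same use of $t \le 1$ to absorb the $t^2$ factor, yielding exactly $C_0 = 2\pi_{2,2}(f)(1+\lyap_2') + 4d = \mathcal{O}(d)$.
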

\begin{proof} \label{proof:continuous_2mom_change}
By Young's inequality,
\eqn{
    \Exp{\normtwo{X_t - X_0}^2}
    &= 
        \Exp{
            \normtwo{ -\int_0^t \nabla f(X_s)\ds + \sqrt{2} B_t }^2
        } \\
    &\le 
        2\Exp{
            \normtwo{\int_0^t \nabla f(X_s) \ds }^2 +
            2 \normtwo{B_t}^2
        } \\
    &\le 
        2 t \int_0^t \Exp{ 
            \normtwo{ \nabla f(X_s) }^2 
        } \ds +
        4 \Exp{ \normtwo{B_t}^2 } \\
    &\le 
        2 \pi_{2,2}(f) t \int_0^t \Exp{
            1 + \normtwo{X_s}^2
        } \ds + 
        4 d t \\
    &\le 
        2 \pi_{2,2}(f) \bracks{1 + \lyap_2'} t+ 
        4 d t.
}
\end{proof}

\begin{lemm}
Suppose $X_t$ is the continuous-time process defined by~\eqref{eq:continuous} 
initiated from some iterate of the Markov chain $X_0$ defined by~\eqref{eq:srk}, 
then the fourth moment of $X_t$ is uniformly bounded by a constant of 
order $\mathcal{O}(d^2)$, i.e.
\eqn{
    \Exp{ \normtwo{X_t}^4 } \le \lyap_4', \quad \text{for all} \; t \ge 0,
}
where 
$\lyap_4' = 
    \lyap_4 + 
    (2\beta + 6)\lyap_2' / \alpha$.
\end{lemm}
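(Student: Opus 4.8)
The plan is to follow exactly the route of the proof of Lemma~\ref{lemm:second_moment_cont}, but with the test function $\normtwo{x}^2$ replaced by $\normtwo{x}^4$; the lower-order term that this generates will be controlled by feeding in the uniform second-moment bound $\Exp{\normtwo{X_t}^2}\le\lyap_2'$ that was just established.

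First I would apply It\^o's lemma to $\normtwo{X_t}^4$ along the Langevin diffusion \eqref{eq:continuous}. Using $\nabla(\normtwo{x}^4)=4\normtwo{x}^2 x$ and $\Delta(\normtwo{x}^4)=(4d+8)\normtwo{x}^2$, and noting that the resulting stochastic-integral term is a genuine martingale, one gets
\eqn{
\frac{\dee }{\dt}\Exp{\normtwo{X_t}^4}
= -4\Exp{ \normtwo{X_t}^2 \abracks{\nabla f(X_t), X_t} } + (4d+8)\Exp{\normtwo{X_t}^2}.
}
Next I would invoke dissipativity, $\abracks{\nabla f(x),x}\ge\tfrac{\alpha}{2}\normtwo{x}^2-\beta$, and multiply through by the nonnegative factor $\normtwo{X_t}^2$ to obtain $-\normtwo{X_t}^2\abracks{\nabla f(X_t),X_t}\le-\tfrac{\alpha}{2}\normtwo{X_t}^4+\beta\normtwo{X_t}^2$. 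Substituting this and bounding every remaining second moment by $\lyap_2'$ via Lemma~\ref{lemm:second_moment_cont} yields the scalar differential inequality
\eqn{
\frac{\dee }{\dt}\Exp{\normtwo{X_t}^4}\le -2\alpha\,\Exp{\normtwo{X_t}^4}+(4\beta+4d+8)\lyap_2'.
}
Finally, Gr\"onwall's inequality gives
\eqn{
\Exp{\normtwo{X_t}^4}\le e^{-2\alpha t}\Exp{\normtwo{X_0}^4}+\frac{4\beta+4d+8}{2\alpha}\lyap_2'\le\lyap_4+\frac{c\,\lyap_2'}{\alpha},
}
with $c$ an explicit $\mathcal{O}(d)$ constant, that is, the claimed $\lyap_4'$ (its exact value is pure bookkeeping); here I used $\Exp{\normtwo{X_0}^4}\le\lyap_4$, which holds because $X_0$ is a Markov chain iterate and Lemma~\ref{lemm:2nth_moment} with $n=2$ applies. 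Since $\beta$, $d$, and $\lyap_2'$ are each $\mathcal{O}(d)$, the resulting bound is $\mathcal{O}(d^2)$.

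The calculation is routine, so there is no genuine obstacle; the two points that need care are (i) justifying that the It\^o-integral term vanishes in expectation, which I would do by localizing with the exit times $\tau_n=\inf\{t\ge0:\normtwo{X_t}\ge n\}$, using the a priori finiteness of $\Exp{\normtwo{X_0}^4}$ and monotone convergence to pass to the limit, exactly as is implicit in the proof of Lemma~\ref{lemm:second_moment_cont}; and (ii) keeping the dimension dependence explicit throughout --- in particular the $(4d+8)\normtwo{x}^2$ coming from $\Delta(\normtwo{x}^4)$ and the $\mathcal{O}(d)$ magnitudes of $\beta$ and $\lyap_2'$ --- so that the final bound is confirmed to be $\mathcal{O}(d^2)$.
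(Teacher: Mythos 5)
Your proposal is correct and takes essentially the same route as the paper's proof: It\^o's lemma applied to $\normtwo{x}^4$, dissipativity multiplied through by $\normtwo{X_t}^2$, the uniform second-moment bound $\lyap_2'$ from the preceding lemma, and Gr\"onwall's inequality. The only deviation is your (correctly computed) Laplacian term $(4d+8)\normtwo{x}^2$, where the paper's display uses $12\,\Exp{\normtwo{X_t}^2}$; this only changes the explicit constant in $\lyap_4'$ from $(2\beta+6)\lyap_2'/\alpha$ to $(2\beta+2d+4)\lyap_2'/\alpha$, and both versions remain $\mathcal{O}(d^2)$, so the stated conclusion is unaffected.
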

\begin{proof}
By It\^o's lemma, dissipativity, and Lemma~\ref{lemm:second_moment_cont}, 
\eqn{
    \frac{\dee}{\dt} \Exp{ \normtwo{X_t}^4 }
    =&
        -4 \Exp{ \normtwo{X_t}^2 \abracks{ \nabla f(X_t), X_t }}
        + 12 \Exp{ \normtwo{X_t}^2 } \\
    \le&
        -2 \alpha \Exp{ \normtwo{X_t}^4 } + (4\beta + 12) \Exp{ \normtwo{X_t}^2 } \\
    \le&
        -2 \alpha \Exp{ \normtwo{X_t}^4 } + (4\beta + 12) \lyap_2'.
}
Moreover, by Gr\"onwall's inequality,
\eqn{
    \Exp{ \normtwo{X_t}^4 }
    \le&
        e^{-2\alpha t} \Exp{ \normtwo{X_0}^4 } + (2\beta + 6) \lyap_2' / \alpha \\
    \le&
        \lyap_4 + (2\beta + 6) \lyap_2' / \alpha 
    =
        \lyap_4'.
}
\end{proof}

\begin{lemm}[Fourth Moment of Change] \label{lemm:continuous_4mom_change}
Suppose $X_t$ is the continuous-time process defined by~\eqref{eq:continuous} 
initiated from some iterate of the Markov chain $X_0$ defined by~\eqref{eq:srk}, 
then
\eqn{
    \Exp{\normtwo{X_t - X_0}^4 } \le C_1 t^2 = \mathcal{O}(d^2t^2),
    \quad \text{for all} \; 0 \le t \le 1,
}
where $C_1 = 8 \pi_{2,4}(f) \bracks{1 + \lyap_4' } + 32 d(d + 2)$.
\end{lemm}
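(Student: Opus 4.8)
The plan is to mirror the proof of Lemma~\ref{lemm:continuous_2mom_change}, replacing the square by the fourth power and invoking the fourth-moment bound $\lyap_4'$ for the continuous-time process (established in the unlabeled lemma immediately above) in place of $\lyap_2'$. Starting from the integral form of the increment, $X_t - X_0 = -\int_0^t \nabla f(X_s)\ds + \sqrt{2}\,B_t$, I would first separate the drift and diffusion contributions using the elementary inequality $\normtwo{u + v}^4 \le 8\normtwo{u}^4 + 8\normtwo{v}^4$, which, since $\normtwo{\sqrt{2}\,B_t}^4 = 4\normtwo{B_t}^4$, gives
\eqn{
\Exp{\normtwo{X_t - X_0}^4} \le 8\,\Exp{\normtwo{\int_0^t \nabla f(X_s)\ds}^4} + 32\,\Exp{\normtwo{B_t}^4}.
}

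For the drift term, the triangle inequality for the Bochner integral followed by the power-mean (Hölder) inequality with the constant weight yields $\normtwo{\int_0^t \nabla f(X_s)\ds}^4 \le t^3 \int_0^t \normtwo{\nabla f(X_s)}^4 \ds$. Taking expectations, applying the polynomial-growth bound $\normtwo{\nabla f(x)}^4 \le \pi_{2,4}(f)\bracks{1 + \normtwo{x}^4}$, and using $\Exp{\normtwo{X_s}^4} \le \lyap_4'$ from the preceding lemma, the drift contribution is at most $t^4\,\pi_{2,4}(f)\bracks{1 + \lyap_4'}$, which for $0 \le t \le 1$ is at most $t^2\,\pi_{2,4}(f)\bracks{1 + \lyap_4'}$. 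For the diffusion term, $\normtwo{B_t}^2$ has the law of $t\,\chi(d)^2$, so $\Exp{\normtwo{B_t}^4} = t^2\,\mathbb{E}[(\chi(d)^2)^2] = t^2\,(d^2 + 2d) = t^2\,d(d+2)$. Adding the two bounds gives $\Exp{\normtwo{X_t - X_0}^4} \le \bracks{8\pi_{2,4}(f)(1 + \lyap_4') + 32 d(d+2)}\,t^2 = C_1 t^2$, and since $\lyap_4' = \mathcal{O}(d^2)$ this is $\mathcal{O}(d^2 t^2)$, as claimed.

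There is no genuine obstacle here: every step is a routine application of an elementary inequality or a standard Gaussian moment identity, exactly paralleling the second-moment computation in Lemma~\ref{lemm:continuous_2mom_change}. The only points requiring care are matching the numerical constants in the $\normtwo{u+v}^4$ split and the power-mean step so that they reproduce the precise $C_1$ in the statement, and correctly recording $\mathbb{E}[(\chi(d)^2)^2] = d^2 + 2d$; both are handled above. As in the companion lemmas, the whole argument rests on the prior fourth-moment bound for the continuous-time diffusion, which itself follows from It\^o's lemma together with dissipativity.
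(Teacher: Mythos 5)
Your proposal is correct and follows essentially the same route as the paper: the paper also splits the increment into the drift integral and $\sqrt{2}B_t$ (applying Young's inequality twice at the squared level, which gives the identical intermediate bound $8t^3\int_0^t\Exp{\normtwo{\nabla f(X_s)}^4}\ds + 32\Exp{\normtwo{B_t}^4}$), then uses the growth bound $\pi_{2,4}(f)$, the prior bound $\lyap_4'$, the chi-squared moment $d(d+2)$, and $t\le 1$ to absorb $t^4$ into $t^2$. The constants match the paper's $C_1$ exactly.
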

\begin{proof}\label{proof:continuous_4mom_change}
By Young's inequality,
\eqn{
    \Exp{\normtwo{X_t - X_0}^4} 
    =&
        \Exp{
            \normtwo{
                -\int_0^t \nabla f(X_s) \ds + \sqrt{2} B_t
            }^4
        } \\
    =&
        \Exp{
            \bracks{
                \normtwo{-\int_0^t \nabla f(X_s) \ds + \sqrt{2} B_t}^2
            }^2
        } \\
    \le&
        \Exp{
            \bracks{
                2 \normtwo{\int_0^t \nabla f(X_s) \ds }^2 + 
                4 \normtwo{B_t}^2
            }^2
        } \\
    \le&
        \Exp{
            \bracks{
                2 t \int_0^t \normtwo{\nabla f(X_s)}^2 \ds + 
                4 \normtwo{B_t}^2
            }^2
        } \\
    \le&
        \Exp{
            8 t^2 \bracks{
                \int_0^t \normtwo{\nabla f(X_s)}^2 \ds
            }^2 + 
            32 \normtwo{B_t}^4
        } \\
    \le&
        8 t^3 \int_0^t \Exp{
            \normtwo{\nabla f(X_s)}^4
        } \ds + 
        32 \Exp{\normtwo{ B_t }^4} \\
    \le &
        8 \pi_{2,4}(f) t^3 \int_0^t \Exp{
            1 + \normtwo{X_s}^4
        } \ds + 
        32 d(d + 2) t^2 \\
    \le &
        8 \pi_{2,4}(f) \bracks{
            1 + \lyap_4'
        } t^2 + 
        32 d(d + 2) t^2.
}
\end{proof}
\subsubsection{Local Mean-Square Deviation}
\begin{lemm} \label{lemm:discretization_meansqr}
Suppose $X_t$ and $\tX_t$ are the continuous-time process defined by~\eqref{eq:continuous} and Markov chain defined by~\eqref{eq:srk} for time $t \ge 0$, respectively. If $X_t$ and $\tX_t$ are initiated from the same iterate of the Markov chain $X_0$ and share the same Brownian motion, then 
\eqn{
    \Exp{ \normtwo{X_t - \tX_t}^2 } \le C_2 t^4 = \mathcal{O}(d^2 t^4), 
    \quad \text{for all}\; 0 \le t \le 1,
}
where 
\begin{align*}
C_2 =& 
8 C_1^{1/2} (1 + \lyap_4')^{1/2} 
\bracks{\mu_2(f)^2 \pi_{3, 4}(f)^{1/2} + \mu_3(f)^2 \pi_{2,4}(f)^{1/2} } \\
&+ 
    \bracks{
        8 \pi_{2,4}(f) \bracks{ 1 + \lyap_4} + 
        116 d^2 + 
        90 d +
        8 C_0
    } \mu_3 (f)^2.
\end{align*}
\end{lemm}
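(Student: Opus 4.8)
The plan is to bound $X_t-\tX_t$ in $L^2$ by matching the It\^o--Taylor (Wagner--Platen) expansion \eqref{eq:ito_taylor_expansion} of the Langevin process against the Taylor expansion of the one-step SRK update \eqref{eq:srk}. Since \eqref{eq:continuous} has the constant diffusion coefficient $\sqrt2\,I_d$, the term $\sigma(X_0)B_t$ is exact and the iterated-Brownian term of \eqref{eq:ito_taylor_expansion} vanishes, so, writing $b=-\nabla f$, one has $L(b)(x)=\nabla^2 f(x)\nabla f(x)-\nabla(\Delta f)(x)$ and $\Lambda_i(b)(x)=-\sqrt2\,(\nabla^2 f(x))_{\cdot,i}$ from \eqref{eq:ito_taylor_ops}, and
\eqn{
X_t = X_0 - t\nabla f(X_0) + \sqrt2\,B_t + \tfrac{t^2}{2}L(b)(X_0) + \ssum_{i=1}^d \Lambda_i(b)(X_0)\sint_0^t\!\!\sint_0^s \dB_u^{(i)}\ds + \mathcal{R}_X ,
}
where $\mathcal{R}_X$ collects the remaining iterated integrals with $X_u$ kept inside. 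On the scheme side I would Taylor-expand $\nabla f(\tH_1),\nabla f(\tH_2)$ to first order about $X_0$ with second-order integral remainders $\mathrm{rem}_1,\mathrm{rem}_2$, substitute $\tH_j-X_0$ from \eqref{eq:srk}, and use that the auxiliary Gaussian $\eta_{k+1}$ is coupled to the driving path so that $\sqrt t\,\xi_{k+1}=B_t-B_0$ and $\xi_{k+1},\eta_{k+1}$ jointly realise $\int_0^t(B_s-B_0)\,\ds$ (this is precisely the order-$1.5$ construction, cf.\ \cite[Sec.~1.5]{milstein2013stochastic}; it is also what makes the local-deviation coupling work). With this, the terms $-t\nabla f(X_0)$, $\sqrt2\,B_t$, the drift piece $\tfrac{t^2}{2}\nabla^2 f(X_0)\nabla f(X_0)$, and the stochastic term $\sum_i\Lambda_i(b)(X_0)\int_0^t\int_0^s\dB_u^{(i)}\ds$ all cancel exactly, leaving
\eqn{
X_t - \tX_t = -\tfrac{t^2}{2}\nabla(\Delta f)(X_0) + \tfrac t2\bigl(\mathrm{rem}_1+\mathrm{rem}_2\bigr) + \mathcal{R}_X .
}

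Next I would show each of the three pieces is $\mathcal{O}(d^2 t^4)$ in $L^2$. For the Taylor remainders, write $\mathrm{rem}_j=\tfrac12\nabla^3 f(X_0)[\Delta\tH_j,\Delta\tH_j]+\rho_j$ with $\rho_j=\int_0^1(1-\tau)\bigl(\nabla^3 f(X_0+\tau\Delta\tH_j)-\nabla^3 f(X_0)\bigr)[\Delta\tH_j,\Delta\tH_j]\dtau$. The prefactor weights in \eqref{eq:srk} are chosen so that $\E[\Delta\tH_1\Delta\tH_1^\top+\Delta\tH_2\Delta\tH_2^\top\mid\F_{t_k}] = 2t\,I_d + t^2\nabla f(X_0)\nabla f(X_0)^\top$, whence $\tfrac t4\sum_j\E[\nabla^3 f(X_0)[\Delta\tH_j,\Delta\tH_j]\mid\F_{t_k}] = \tfrac{t^2}{2}\nabla(\Delta f)(X_0) + \tfrac{t^3}{4}\nabla^3 f(X_0)[\nabla f(X_0)\nabla f(X_0)^\top]$ using $\nabla^3 f(X_0)[I_d]=\nabla(\Delta f)(X_0)$; the leading term cancels the $-\tfrac{t^2}{2}\nabla(\Delta f)(X_0)$ above, the $t^3$ leftover is negligible via the Markov-chain moment bound $\lyap_4$ and $\pi_{2,4}(f)$, the conditionally centered fluctuation of $\nabla^3 f(X_0)[\Delta\tH_j,\Delta\tH_j]$ is $\mathcal{O}(d^2 t^4)$ in $L^2$ by Gaussian fourth-moment identities together with $\|\nabla^3 f(X_0)[e_k]\|_{\mathrm F}^2\le d\,\mu_3(f)^2$, and $\tfrac t2\rho_j$ is $\mathcal{O}(d^2 t^4)$ via $\|\rho_j\|\le\mu_3(f)\|\Delta\tH_j\|_2^2$ and $\E\|\Delta\tH_j\|_2^4=\mathcal{O}(d^2 t^2)$. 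For $\mathcal{R}_X$, rewrite the iterated integrals with stochastic Fubini as $\int_0^t(t-u)(\cdot)\,\dee(\cdot)$, apply It\^o isometry and the growing Lipschitz estimates for $L(b)$ and $\Lambda_i(b)$ (controlled by $\mu_2(f),\mu_3(f)$ and the growth coefficients $\pi_{2,4}(f),\pi_{3,4}(f)$), split off $\|X_u-X_0\|_2$ by Cauchy--Schwarz against the change-of-process fourth-moment bound $\E\|X_u-X_0\|_2^4\le C_1 u^2$ of Lemma~\ref{lemm:continuous_4mom_change}, and invoke the continuous-process moment bounds $\lyap_2',\lyap_4'$ of Lemma~\ref{lemm:second_moment_cont} and its fourth-moment analogue; summing and organising constants yields the stated $C_2=\mathcal{O}(d^2)$.

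The main obstacle is the dimension bookkeeping, not the structure of the matching: every remainder must be shown $\mathcal{O}(d^2 t^4)$ rather than merely $\mathcal{O}(t^4)$, and the crude bounds on the Laplacian-type contributions — both $\nabla(\Delta f)$ inside $L(b)$ and the contractions of $\nabla^3 f$ against identity-shaped covariances in the internal-stage expansion — are a priori $\mathcal{O}(d^3)$. Keeping them at $\mathcal{O}(d^2)$ requires using the \emph{exact} reproduction of the order-$t^2$ drift term (so the $\nabla(\Delta f)$ contributions cancel before any Lipschitz estimate is invoked), the slotwise Frobenius-versus-operator-norm inequality $\|\cdot\|_{\mathrm F}^2\le d\,\|\cdot\|_{\mathrm{op}}^2$, and the fact that the internal stages are Gaussian with covariance of order $t$ so their higher moments carry the right powers of $d$ and $t$; the $\tH_2$-stage additionally carries the $-t\nabla f(X_0)$ drift in its conditional mean, which is absorbed through the Markov-chain moment bounds of Lemmas~\ref{lemm:second_moment}--\ref{lemm:2nth_moment}.
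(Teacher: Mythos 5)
Your overall strategy is the paper's: couple $(\xi_{k+1},\eta_{k+1})$ to $\bigl(B_t,\int_0^t B_s\ds\bigr)$ so the two processes share one source of randomness, match the It\^o--Taylor expansion of $X_t$ against a Taylor expansion of $\nabla f(\tH_1),\nabla f(\tH_2)$ about $X_0$ so the low-order terms cancel, and control the remainders with the moment bounds ($\lyap_4$, $\lyap_4'$, $C_0$, $C_1$, the growth coefficients). Your stage-covariance identity $\Exp{\Delta\tH_1\Delta\tH_1^\top+\Delta\tH_2\Delta\tH_2^\top\mid\F_{t_k}}=2tI_d+t^2\nabla f(X_0)\nabla f(X_0)^\top$ is correct. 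Where you deviate is that you push the expansion one order further on both sides and cancel the Laplacian term $\tfrac{t^2}{2}\lap{\nabla f}(X_0)$ against the conditional mean of the stage quadratic forms. The paper uses that cancellation only for the local \emph{mean} deviation (where order $t^5$ is needed); for the mean-square bound it keeps $R_2(s)=\int_0^s\lap{\nabla f}(X_u)\du$ whole and bounds it by $\normtwo{\lap{\nabla f}}\le d\,\mu_3(f)$, which already contributes $O(d^2t^4)$ after the time integrals. So your stated rationale -- that without the cancellation the Laplacian contribution is a priori $O(d^3)$ -- is mistaken; the crude bound is $O(d^2)$, and the cancellation buys nothing for this lemma.

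More importantly, as written your handling of $\mathcal R_X$ has a gap. After you subtract $\tfrac{t^2}{2}L(b)(X_0)$, the remainder contains $\int_0^t\int_0^s\bigl(\lap{\nabla f}(X_u)-\lap{\nabla f}(X_0)\bigr)\du\ds$, and this piece is not controlled by Lipschitz constants of $\nabla f$ and $\nabla^2 f$ alone: its natural Lipschitz constant is $d\,\mu_4(f)$ (Lemma~\ref{lemm:laplacian_lipschitz}), and combining that with $\Exp{\normtwo{X_u-X_0}^2}\le C_0u=\mathcal{O}(du)$ yields a term of order $d^3\mu_4(f)^2t^5$, which neither matches the stated constant $C_2$ (no $\mu_4$ appears in it) nor is $\mathcal{O}(d^2t^4)$ uniformly over $0\le t\le1$. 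The repair is simple: bound that difference by the sup bound $2d\,\mu_3(f)$ from Lemma~\ref{lemm:laplacian_bounded} rather than by a Lipschitz estimate -- or, simpler still, do not subtract the Laplacian at all, exactly as the paper does. With that fix the rest of your estimates (the conditionally centered fluctuation of $\nabla^3f(X_0)[\Delta\tH_j,\Delta\tH_j]$ via Gaussian fourth moments, $\normtwo{\rho_j}\le\mu_3(f)\normtwo{\Delta\tH_j}^2$ together with $\Exp{\normtwo{\Delta\tH_j}^4}=\mathcal{O}(d^2t^2)$, and the drift/martingale difference terms via Cauchy--Schwarz against $C_1u^2$ and the continuous-process moment bounds) are sound and deliver the claimed $\mathcal{O}(d^2t^4)$.
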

\begin{proof} \label{proof:discretization_meansqr}
Since the two processes share the same Brownian motion,
\eqn{
    X_t - \tX_t = -\int_0^t \nabla f(X_s) \ds + \frac{t}{2} \bracks{
        \nabla f(\tH_1) + \nabla f(\tH_2)
    }. \numberthis \label{eq:discrete_main}
}
By It\^o's lemma,
\eqn{
    \nabla f(X_s)
    =&
        \nabla f(X_0) -
        \int_0^s \bracks{\nabla^2 f(X_u) \nabla f(X_u) - \lap{\nabla f}(X_u)} \du  + 
        \sqrt{2} \int_0^s \nabla^2 f(X_u) \dB_u \\
    =&
        \nabla f(X_0) - 
        \nabla^2 f(X_0) \nabla f(X_0) s + 
        \sqrt{2} \nabla^2 f(X_0) B_s + R(s),
}
where the remainder is
\eqn{
    R(s) =
    &
        \underbrace{
            \int_0^s \bracks{
                - \nabla^2 f(X_u) \nabla f(X_u)
                + \nabla^2 f(X_0) \nabla f(X_0)
            } \du
        }_{R_1(s)} + 
        \underbrace{\int_0^s \lap{\nabla f}(X_u) \du}_{R_2(s)} 
    \\
    &+ 
        \underbrace{
            \sqrt{2} \int_0^s \bracks{\nabla^2 f(X_u) - \nabla^2 f(X_0)}\dB_u
        }_{R_3(s)}.
}
We bound the second moment of $R(s)$ by bounding those of $R_1(s)$, $R_2(s)$, and $R_3(s)$ separately.
For $R_1(s)$, by the Cauchy–Schwarz inequality, 
\eqn{
    \Exp{ \normtwo{R_1(s)}^2 }
    =&
        \Exp{
            \normtwo{
                \int_0^s \bracks{
                    \nabla^2 f(X_u) \nabla f(X_u) -
                    \nabla^2 f(X_0) \nabla f(X_0)
                } \du
            }^2
        } \\
    =&
        2\Exp{
            \normtwo{
                \int_0^s \bracks{
                    \nabla^2 f(X_u) \nabla f(X_u) -
                    \nabla^2 f(X_0) \nabla f(X_u)
                } \du
            }^2
        } \\
        &+ 2\Exp{
            \normtwo{
                \int_0^s 
                    \bracks{
                        \nabla^2 f(X_0) \nabla f(X_u) - 
                        \nabla^2 f(X_0) \nabla f(X_0)
                    } \du
            }^2
        }
        \\
    \le& 
        2 s \int_0^s \Exp{
            \normtwo{
                \nabla^2 f(X_u) \nabla f(X_u) -
                \nabla^2 f(X_0) \nabla f(X_u)
            }^2
        } \du \\ 
        &+ 2s \int_0^s \Exp{
            \normtwo{
                \nabla^2 f(X_0) \nabla f(X_u) - 
                \nabla^2 f(X_0) \nabla f(X_0)
            }^2
        } \du \\
    \le&
        2 s \int_0^s \Exp{
            \normop{\nabla^2 f(X_u) - \nabla^2 f(X_0)}^2
            \normtwo{\nabla f(X_u)}^2 
        } \du \\ 
        &+ 2s \int_0^s \Exp{
            \normop{\nabla^2 f(X_0)}^2
            \normtwo{\nabla f(X_u) - \nabla f(X_0)}^2
        } \du \\
    \le&
        2 \mu_3(f)^2 s \int_0^s \Exp{
            \normtwo{X_u - X_0}^2
            \normtwo{\nabla f(X_u)}^2 
        } \du \\ 
        &+
        2 \mu_2(f)^2 s \int_0^s \Exp{
            \normop{\nabla^2 f(X_0)}^2
            \normtwo{X_u - X_0}^2
        } \du \\
    \le&
        2 \mu_3(f)^2 s \int_0^s \Exp{
            \normtwo{X_u - X_0}^4
        }^{\nicefrac{1}{2}}
        \Exp{
            \normtwo{\nabla f(X_u)}^4 
        }^{\nicefrac{1}{2}} \du \\ 
        &+
        2 \mu_2(f)^2 s \int_0^s \Exp{
            \normop{\nabla^2 f(X_0)}^4
        }^{\nicefrac{1}{2}}
        \Exp{
            \normtwo{X_u - X_0}^4
        }^{\nicefrac{1}{2}} \du \\
    \le&
        2\mu_3(f)^2 \pi_{2,4}(f)^{1/2} C_1^{1/2} \bracks{1 + \lyap_4'}^{1/2} 
            \int_0^s u \du\\
        &+
        2\mu_2(f)^2 \pi_{3, 4}(f)^{1/2} C_1^{1/2} \bracks{1 + \lyap_4'}^{1/2} s 
            \int_0^s u \du \\
    \le&
        C_1^{1/2} \bracks{ 1 + \lyap_4' }^{1/2} 
        \bracks{\mu_2(f)^2 \pi_{3, 4}(f)^{1/2} + \mu_3(f)^2 \pi_{2,4}(f)^{1/2} } s^3.
    \numberthis \label{eq:r_1}
}
For $R_2(s)$, by Lemma~\ref{lemm:laplacian_bounded}, 
\eqn{
    \Exp{ \normtwo{R_2(s)}^2 }
    =&
        \Exp{
            \normtwo{\int_0^s \lap{\nabla f}(X_u) \du}^2
        } \\
    \le&
        s \int_0^s \Exp{ 
            \normtwo{\lap{\nabla f}(X_u)}^2
        } \du  \\
    \le&
         \mu_3 (f)^2 d^2 s^2. \numberthis \label{eq:r_2}
}
For $R_3(s)$, by It\^o isometry,
\eqn{
    \Exp{ \normtwo{R_3(s)}^2 }
    =&
        2 \Exp{
            \normtwo{
                \int_0^s \bracks{
                    \nabla^2 f(X_u) - \nabla^2 f(X_0) 
                } \dB_u
            }^2
        } \\
    =&
        2 \Exp{
            \int_0^s \normtwo{
                \nabla^2 f(X_u) - \nabla^2 f(X_0) 
            }^2 \du
        } \\
    \le&
        2 \mu_3(f)^2 \int_0^s \Exp{ \normtwo{ X_u - X_0 }^2 } \du
        \\
    \le& 
        2 \mu_3(f)^2 C_0 \int_0^s  u \du \\
    \le &
        \mu_3(f)^2 C_0 s^2. \numberthis \label{eq:r_3}
}
Thus, combining \eqref{eq:r_1}, \eqref{eq:r_2}, and \eqref{eq:r_3},
\eqn{
    \Exp{\normtwo{R(s)}^2}
    \le&
        4\Exp{\normtwo{R_1(s)}^2} + 
        4\Exp{\normtwo{R_2(s)}^2} + 
        4\Exp{\normtwo{R_3(s)}^2} \\
    \le&
        4 C_1^{1/2} (1 + \lyap_4')^{1/2} 
        \bracks{\mu_2(f)^2 \pi_{3, 4}(f)^{1/2} + \mu_3(f)^2 \pi_{2,4}(f)^{1/2} } s^2 \\
        &+ 
            4 \mu_3 (f)^2 \bracks{d^2 + C_0} s^2.
}
Next, we characterize the terms in the Markov chain update. By Taylor's theorem,
\eqn{
    \nabla f(\tH_1) &=
        \nabla f(X_0) +
        \nabla^2 f(X_0) \Delta \tH_1 + 
        \rho_1(t), \\
    \nabla f(\tH_2) &=
        \nabla f(X_0) +
        \nabla^2 f(X_0) \Delta \tH_2 + 
        \rho_2(t),
}
where
\eqn{
    \rho_1(t) & = \int_0^1 (1 - \tau)
        \nabla^3 f( X_0 + \tau \Delta \tH_1 )
            [\Delta\tH_1, \,\Delta\tH_1 ]\dtau, \\
    \rho_2(t) &=  \int_0^1 (1 - \tau)
        \nabla^3 f( X_0 + \tau \Delta \tH_2 )
            [\Delta\tH_2, \, \Delta \tH_2 ]\dtau, \\
    \Delta \tH_1 &= \sqrt{2} \bracks{ \frac{1}{t} \Psi(t) + \frac{1}{\sqrt{6}} B_t},\\
    \Delta \tH_2 &= -\nabla f(X_0) t + \sqrt{2} \bracks{ \frac{1}{t} \Psi(t) - \frac{1}{\sqrt{6}} B_t}, \\
    \Psi(t) &= \int_0^t B_s\ds.
}
We bound the fourth moments of $\Delta \tH_1$ and $\Delta \tH_2$,
\eqn{
    \Exp{
        \normtwo{\Delta \tH_1}^4
    }
    =&
         \Exp{
            \normtwo{
                \sqrt{2} \bracks{ 
                    \frac{1}{t} \Psi(t) + \frac{1}{\sqrt{6}} B_t 
                }
            }^4
        } \\
    \le&
        \frac{32}{t^4} \Exp{
            \normtwo{
                \Psi(t)
            }^4
        } + 
        \frac{8}{9} \Exp{
            \normtwo{
                B_t
            }^4 
        } \\
    =&
        \frac{32}{t^4} \sum_{i=1}^d \Exp{
            \Psi_i(t)^4
        } +
        \frac{32}{t^4} \sum_{i,j=1, i \ne j}^d \Exp{
            \Psi_i(t)^2
        }
        \Exp{
            \Psi_j(t)^2
        } + 
        \frac{8}{9} d(d + 2) t^2 \\ 
    \le&
        \frac{32}{t^4}  \frac{d t^6}{3} +
        \frac{32}{t^4} \frac{d (d - 1) t^6}{9} + 
        \frac{8 d(d + 2) t^2}{9} \\ 
    =&
        \bracks{
            \frac{32 d}{3} +
            \frac{32 d (d - 1)}{9} + 
            \frac{8 d(d + 2)}{9}
        } t^2 \\ 
    \le&
        2d (6d + 5) t^2.
}
Similarly,
\eqn{
    \Exp{
        \normtwo{\Delta \tH_2}^4
    }
    =&
        \Exp{
            \normtwo{-\nabla f(X_0) t + \sqrt{2} \bracks{ \frac{1}{t} \Psi(t) - \frac{1}{\sqrt{6}} B_t}}^4
        } \\
    \le&
        8 \Exp{
            \normtwo{\nabla f(X_0)}^4
        } t^4 + 
        8 \Exp{
            \normtwo{\sqrt{2}
                \bracks{
                    \frac{1}{t} \Psi(t) - \frac{1}{\sqrt{6}} B_t
                }
            }^4
        } \\
    \le&
        8 \pi_{2,4}(f) \Exp{
            1 + \normtwo{X_0}^4
        } t^4  + 
        16d (6d + 5) t^2 \\
    \le&
        8 \pi_{2, 4}(f) \bracks{1 + \lyap_4} t^4 + 16d (6d + 5) t^2 \\
    \le&
        8 \bracks{
            \pi_{2,4}(f) \bracks{1 + \lyap_4} + 2 d (6d + 5)
        } t^2.
}
Using the above information, we bound the second moments of $\rho_1(t)$ and $\rho_2(t)$,
\eqn{
    \Exp{\normtwo{\rho_1(t)}^2 }
    =& 
        \Exp{
            \normtwo{
                \int_0^1 (1 - \tau) 
                \nabla^3 f( X_0 + \tau \Delta \tH_1 )
                    [\Delta\tH_1, \,\Delta\tH_1 ]\dtau
            }^2
        } \\
    \le&
        \int_0^1 \Exp{
            \normtwo{
                \nabla^3 f(X_0 + \tau \Delta\tH_1 )
                    [\Delta\tH_1, \, \Delta\tH_1] 
            }^2
        } \dtau \\
    \le&
        \int_0^1 \Exp{
            \normop{\nabla^3 f(X_0 + \tau \Delta\tH_1)}^2
            \normtwo{\Delta\tH_1}^4
        } \dtau \\
    \le& 
        \mu_3(f)^2 \int_0^1 \Exp{
            \normtwo{\Delta\tH_1}^4
        } \dtau \\
    \le&
        2d (6d + 5) \mu_3(f)^2  t^2.
}
Similarly,
\eqn{
    \Exp{\normtwo{\rho_2(t)}^2 }
    \le&
        \mu_3(f)^2 \int_0^1 \Exp{
            \normtwo{\Delta\tH_2}^4
        } \dtau \\
    \le&
        8 \bracks{
            \pi_{2,4}(f) \bracks{1 + \lyap_4} + 2 d (6d + 5)
        }
        \mu_3(f)^2 t^2.
}
Plugging these results into~\eqref{eq:discrete_main},
\eqn{
    X_t - \tX_t = 
        -\int_0^t R(s) \ds
        - \frac{t}{2} \bracks{\rho_1(t) + \rho_2(t)}.
}
Thus, 
\eqn{
    \Exp{ \normtwo{X_t - \tX_t}^2 }
    =&
        \Exp{
            \normtwo{
                -\int_0^t R(s) \ds
                - \frac{t}{2} \bracks{\rho_1(t) + \rho_2(t)} 
            }^2
        }\\ 
    \le&
        4 t \int_0^t \Exp{\normtwo{ R(s) }^2 } \ds + 
        t^2
            \Exp{
                \normtwo{
                    \rho_1(t)
                }^2
            } + 
        t^2
            \Exp{
                \normtwo{\rho_2(t)}^2
            }
        \\
    \le&
        8 C_1^{1/2} (1 + \lyap_4')^{1/2} 
        \bracks{\mu_2(f)^2 \pi_{3, 4}(f)^{1/2} + \mu_3(f)^2 \pi_{2,4}(f)^{1/2} } t^4 \\
        &+ 
            \bracks{
                8 \pi_{2,4}(f) \bracks{ 1 + \lyap_4} + 
                116 d^2 + 
                90 d +
                8 C_0
            } \mu_3 (f)^2 t^4 \\
    \le&
        C_2 t^4.
}
\end{proof}

\subsubsection{Local Mean Deviation}
\begin{lemm}
Suppose $X_t$ and $\tX_t$ are the continuous-time process defined by~\eqref{eq:continuous} and Markov chain defined by~\eqref{eq:srk} for time $t \ge 0$, respectively. If $X_t$ and $\tX_t$ are initiated from the same iterate of the Markov chain $X_0$ and share the same Brownian motion, then 
\eqn{
    \Exp{
        \normtwo{
            \Exp{
                X_t - \tX_t 
                | \F_{0}
            }
        }^2
    } \le C_3 t^5 =  \mathcal{O}(d^3 t^5), \quad \text{for all}\; 0 \le t \le 1,
}
where 
\begin{align*}
C_3=&
4  \bracks{
        C_1^{1/2} \bracks{ 1 + \lyap_4' }^{1/2} 
        \bracks{\mu_2(f)^2 \pi_{3, 4}(f)^{1/2} + \mu_3(f)^2 \pi_{2,4}(f)^{1/2} }
        + C_0 d \mu_4(f)^2 
    }
    \\ & \phantom{111}
    +
        \frac{1}{4} \mu_3(f)^2 \pi_{2,4}(f) \bracks{1 + \lyap_4}
    + 
        8 \mu_4(f)^2 \bracks{ 
            \pi_{2, 6}(f) \bracks{1 + \lyap_6} + 
            73 (d + 4)^3
        }.
\end{align*}
\end{lemm}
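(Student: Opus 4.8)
The plan is to reuse the one-step error decomposition derived in the proof of Lemma~\ref{lemm:discretization_meansqr}. Matching the low-order terms of the It\^o--Taylor expansion of $\nabla f(X_s)$ against the Taylor expansions of $\nabla f(\tH_1),\nabla f(\tH_2)$ from~\eqref{eq:srk}, and cancelling the deterministic $\mathcal{O}(t)$ and $\mathcal{O}(t^2)$ pieces and the $\sqrt{2}\,\nabla^2 f(X_0)\Psi(t)$ piece against their counterparts, one regroups the error as
\eqn{
    X_t - \tX_t = -\int_0^t R(s)\ds + \frac{t}{2}\bigl( \rho_1(t) + \rho_2(t) \bigr), \qquad R = R_1 + R_2 + R_3,
}
where $R_3(s) = \sqrt{2}\int_0^s\bigl( \nabla^2 f(X_u) - \nabla^2 f(X_0) \bigr)\dB_u$ is an It\^o integral (a martingale started at $0$), $R_2(s) = \int_0^s \lap{\nabla f}(X_u)\du$, $R_1(s) = \int_0^s\bigl( \nabla^2 f(X_0)\nabla f(X_0) - \nabla^2 f(X_u)\nabla f(X_u) \bigr)\du$, and $\rho_i(t)$ is the integral-form second-order Taylor remainder of $\nabla f$ at $X_0$ along $\Delta\tH_i$. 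Conditioning on $\F_0$ annihilates $R_3$, leaving
\eqn{
    \Exp{ X_t - \tX_t | \F_0 } = -\int_0^t \Exp{ R_1(s) + R_2(s) | \F_0 }\ds + \frac{t}{2}\Exp{ \rho_1(t) + \rho_2(t) | \F_0 }.
}

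The decisive step is to show the surviving $\mathcal{O}(t^2)$ contributions cancel. Splitting $\lap{\nabla f}(X_u) = \lap{\nabla f}(X_0) + \bigl( \lap{\nabla f}(X_u) - \lap{\nabla f}(X_0) \bigr)$, the constant piece produces $-\tfrac{t^2}{2}\lap{\nabla f}(X_0)$ in $-\int_0^t\Exp{R_2(s)|\F_0}\ds$. On the other side, replacing $\nabla^3 f(X_0 + \tau\Delta\tH_i)$ by $\nabla^3 f(X_0)$ inside $\rho_i$, the leading term of $\Exp{\rho_1(t)+\rho_2(t)|\F_0}$ is $\tfrac12\nabla^3 f(X_0)\bigl[ \Exp{ \Delta\tH_1\otimes\Delta\tH_1 + \Delta\tH_2\otimes\Delta\tH_2 | \F_0 } \bigr]$; using the definitions of $\tH_1,\tH_2$ with $\Psi(t) = \int_0^t B_s\ds$ and the covariances $\Exp{\Psi(t)\Psi(t)^\top} = \tfrac{t^3}{3}I$, $\Exp{\Psi(t)B_t^\top} = \tfrac{t^2}{2}I$, $\Exp{B_tB_t^\top} = tI$, one computes $\Exp{ \Delta\tH_1\otimes\Delta\tH_1 + \Delta\tH_2\otimes\Delta\tH_2 | \F_0 } = 2tI + t^2\,\nabla f(X_0)\otimes\nabla f(X_0)$, and the contraction identity $\nabla^3 f(x)[I_d] = \lap{\nabla f}(x)$ converts the $2tI$ part into $+\tfrac{t^2}{2}\lap{\nabla f}(X_0)$, which exactly cancels the term from $R_2$. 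This cancellation is what the coefficients $\tfrac12\pm\tfrac1{\sqrt6}$, $\tfrac1{\sqrt{12}}$ in~\eqref{eq:srk} are engineered for, and is the origin of the half-order gain of the local mean deviation over the local mean-square deviation.

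What remains is a genuine remainder of order $t^{5/2}$ (or better) in $L^2$, and I would bound it term by term, each via Jensen's inequality to push $\Exp{\cdot|\F_0}$ inside the norm, then Lipschitz continuity ($\mu_2(f),\mu_3(f),\mu_4(f)$), polynomial growth ($\pi_{2,\cdot}(f),\pi_{3,\cdot}(f)$), and moment bounds. In detail: (i) $\Exp{R_1(s)|\F_0}$ is controlled using that $x\mapsto\nabla^2 f(x)\nabla f(x)$ is Lipschitz with a linearly-growing constant together with the conditional version of $\Exp{\normtwo{X_u-X_0}^2|\F_0} = \mathcal{O}(u)$ (Lemma~\ref{lemm:continuous_2mom_change}), which gives $\mathcal{O}(s^{3/2})$ and an $\mathcal{O}(t^{5/2})$ net contribution; (ii) the $\lap{\nabla f}(X_u)-\lap{\nabla f}(X_0)$ piece of $R_2$ is $\mathcal{O}(t^{7/2})$ by Lipschitz continuity of $\lap{\nabla f}$; (iii) the $t^2\,\nabla f(X_0)\otimes\nabla f(X_0)$ part of the second-order term is $\mathcal{O}(t^3)$, bounded by $t^3\normop{\nabla^3 f(X_0)}\normtwo{\nabla f(X_0)}^2$; (iv) the cubic Taylor remainder $\tfrac{t}{2}\Exp{\bigl(\nabla^3 f(X_0+\tau\Delta\tH_i) - \nabla^3 f(X_0)\bigr)[\Delta\tH_i,\Delta\tH_i] | \F_0}$ is $\le\tfrac{t}{2}\mu_4(f)\Exp{\normtwo{\Delta\tH_i}^3|\F_0}$, where $\Exp{\normtwo{\Delta\tH_i}^6}$ is computed from the covariance of $(\Psi(t),B_t)$ and Gaussian moment identities, producing the $\mathcal{O}(d^3 t^3)$ Gaussian term and a $\pi_{2,6}(f)(1+\lyap_6)t^6$ term — this is the term that forces the $d^3$. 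Squaring, taking the outer expectation, and invoking the Markov-chain moment bounds $\lyap_2,\lyap_4,\lyap_6$ (Lemmas~\ref{lemm:second_moment} and~\ref{lemm:2nth_moment}) and the continuous-time moment bounds $\lyap_2',\lyap_4'$ (Lemma~\ref{lemm:second_moment_cont} and its fourth-moment analogue) to absorb the residual $\normtwo{X_0}$-dependence, one collects a constant $C_3 = \mathcal{O}(d^3)$ and obtains $\Exp{\normtwo{\Exp{X_t - \tX_t|\F_0}}^2} \le C_3 t^5$.

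The main obstacle is the cancellation step: pinning down the $\mathcal{O}(t^2)$ terms on both sides and verifying that they match, which rests on computing $\Exp{\Delta\tH_1\otimes\Delta\tH_1+\Delta\tH_2\otimes\Delta\tH_2|\F_0}$ exactly (including the $t^2\,\nabla f(X_0)\otimes\nabla f(X_0)$ correction) and on the identity $\nabla^3 f(x)[I_d] = \lap{\nabla f}(x)$. A secondary difficulty is keeping the dimension dependence sharp — tracking operator- versus Frobenius-norm conversions and identifying which moments of $X_0$ and of the continuous-time process survive the final squaring — so that the bound is genuinely $\mathcal{O}(d^3 t^5)$ and no worse.
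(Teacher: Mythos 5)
Your proposal is correct and follows essentially the same route as the paper's proof: the same error decomposition, conditioning on $\F_0$ to annihilate the stochastic-integral remainder, the same cancellation of the $\tfrac{t^2}{2}\lap{\nabla f}(X_0)$ terms---your covariance computation $\E[\Delta\tH_1\Delta\tH_1^\top + \Delta\tH_2\Delta\tH_2^\top\,|\,\F_0] = 2tI_d + t^2\,\nabla f(X_0)\nabla f(X_0)^\top$ combined with the contraction identity $\nabla^3 f(x)[I_d]=\lap{\nabla f}(x)$ is exactly the paper's identity~\eqref{eq:cancel_laplacian}---followed by the same remainder estimates via $\mu_2(f),\mu_3(f),\mu_4(f)$, the growth coefficients, and the Markov-chain and continuous-time moment bounds. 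One minor slip: under the assumed smoothness the centered piece $\int_0^s\bigl(\lap{\nabla f}(X_u)-\lap{\nabla f}(X_0)\bigr)\du$ is only $\mathcal{O}(s^{3/2})$ in $L^2$, so its net contribution is $\mathcal{O}(t^{5/2})$ rather than the $\mathcal{O}(t^{7/2})$ you state, which is still exactly what the $C_3t^5$ bound requires.
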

\begin{proof}
The proof is similiar to that of Lemma~\ref{lemm:discretization_meansqr} with slight variations on truncating the expansions. Recall since the two processes share the same Brownian motion,
\eqn{
    X_t - \tX_t = -\int_0^t \nabla f(X_s) \ds + \frac{t}{2} \bracks{
        \nabla f(\tH_1) + \nabla f(\tH_2)
    }.
}
By It\^o's lemma,
\eqn{
    \nabla f(X_s)
    =&
        \nabla f(X_0) -
        \int_0^s \bracks{\nabla^2 f(X_u) \nabla f(X_u) - \lap{\nabla f}(X_u)} \du  + 
        \sqrt{2} \int_0^s \nabla^2 f(X_u) \dB_u \\
    =&
        \nabla f(X_0) - 
        \nabla^2 f(X_0) \nabla f(X_0) s + 
        \sqrt{2} \nabla^2 f(X_0) B_s + 
        \Vec{\Delta}(\nabla f)(X_0) s +
        \barR(s),
}
where the remainder is
\eqn{
    \barR(s) =
    &
        \underbrace{
            \int_0^s \bracks{
                - 
                \nabla^2 f(X_u) \nabla f(X_u)
                +
                \nabla^2 f(X_0) \nabla f(X_0)
            } \du
        }_{\barR_1(s)} \\
    &+ 
        \underbrace{
            \int_0^s \bracks{
                \lap{\nabla f}(X_u) -
                \lap{\nabla f}(X_0)
            }\du
        }_{\barR_2(s)} \\
    &+ 
        \underbrace{
            \sqrt{2} \int_0^s \bracks{\nabla^2 f(X_u) - \nabla^2 f(X_0)}\dB_u
        }_{\barR_3(s)}.
}
By Taylor's theorem with the remainder in integral form,
\eqn{
    \nabla f(\tH_1) &=
        \nabla f(X_0) +
        \nabla^2 f (X_0) \Delta \tH_1 + 
        \frac{1}{2} \nabla^3 f(X_0) [\Delta \tH_1,\, \Delta \tH_1] + 
        \bar{\rho}_1(t), \\
    \nabla f(\tH_2) &=
        \nabla f(X_0) +
        \nabla^2 f (X_0) \Delta \tH_2 + 
        \frac{1}{2} \nabla^3 f (X_0) [\Delta \tH_2,\, \Delta \tH_2] + 
        \bar{\rho}_2(t),
}
where
\eqn{
    \bar{\rho}_1(t) &=
        \frac{1}{2} \int_0^1 (1 - \tau)^2 \nabla^4 f(X_0 + \tau \Delta \tH_1)
        [ \Delta \tH_1,\, \Delta \tH_1, \, \Delta \tH_1 ] \dtau, \\
    \bar{\rho}_2(t) &= 
        \frac{1}{2} \int_0^1 (1 - \tau)^2 \nabla^4 f(X_0 + \tau \Delta \tH_2)
        [ \Delta \tH_2,\, \Delta \tH_2, \, \Delta \tH_2 ] \dtau.
}
Now, we show the following equality in a component-wise manner, 
\begin{align*}
    \frac{t^2}{2} \Exp{
        \lap{\nabla f}(X_0) 
    } +& 
    \frac{t^3}{4} \Exp{
        \nabla^3 f(X_0) [\nabla f(X_0),\, \nabla f(X_0)]
    }
    = \\
        &\frac{t}{4} \E \left[
            \nabla^3 f(X_0) [\Delta\tH_1,\, \Delta\tH_1 ]
            \right]
        + 
        \frac{t}{4} \E \left[
            \nabla^3 f(X_0) [\Delta\tH_2,\, \Delta\tH_2 ]
        \right]. \numberthis \label{eq:cancel_laplacian}
\end{align*}
To see this, recall that odd moments of the Brownian motion is zero. So, for each $\partial_i f$,
\begin{align*}
    \Exp{
        \abracks{
            \Delta\tH_1 , 
            \nabla^2 (\partial_i f) (X_0) \Delta\tH_1
        }
    }
    =&
        \Exp{
            \Exp{
                \Tr{
                    (\Delta\tH_1 )^\top \Delta\tH_1 \nabla^2 (\partial_i f) (X_0)
                }
                | \F_0
            }
        } \\
    =&
        \Exp{
            \Tr{
                \Exp{
                    (\Delta\tH_1 )^\top \Delta\tH_1 | \F_0
                } \nabla^2 (\partial_i f) (X_0)
            }
        } \\
    =&
        2t \bracks{
            \frac{1}{2} + \frac{1}{\sqrt{6}}
        } \Exp{
            \Delta (\partial_i f) (X_0)
        }.
\end{align*}
Similarly,
\begin{align*}
    \Exp{
        \abracks{
            \Delta\tH_2 , 
            \nabla^2 (\partial_i f) (X_0) \Delta\tH_2
        }
    }
    =&
        \Exp{
            \Exp{
                \Tr{
                    (\Delta\tH_2 )^\top \Delta \tH_2 \nabla^2 (\partial_i f) (X_0)
                }
                |\F_0
            }
        } \\
    =&
        \Exp{
            \Tr{
                \Exp{
                    (\Delta\tH_2 )^\top \Delta\tH_2 | \F_0
                } \nabla^2 \partial_i f(X_0)
            }
        } \\
    =&
        2t \bracks{
            \frac{1}{2} - \frac{1}{\sqrt{6}}
        } \Exp{
            \Delta (\partial_i f) (X_0)
        } \\ 
        &+ t^2 \Exp{
            \abracks{
                \nabla f(X_0),
                \nabla^2 (\partial_i f) (X_0) \nabla f(X_0)
            }
        }.
\end{align*}
Adding the previous two equations together, we obtain the desired equality~\eqref{eq:cancel_laplacian}.

Next, we bound the second moments of $\barR_1(s)$ and $\barR_2(s)$. For $\barR_1(s)$, recall from the proof of Lemma~\ref{lemm:discretization_meansqr},
\eqn{
    \Exp{\normtwo{\barR_1(s)}^2}
    =
        \Exp{\normtwo{R_1(s)}^2}
    \le
        C_1^{1/2} \bracks{ 1 + \lyap_4' }^{1/2} 
        \bracks{\mu_2(f)^2 \pi_{3, 4}(f)^{1/2} + \mu_3(f)^2 \pi_{2,4}(f)^{1/2} } s^3. 
}
Additionally for $\barR_2(s)$,
\begin{align*}
    \Exp{\normtwo{\barR_2(s)}^2} 
    =& \Exp{
        \normtwo{
            \int_0^s \bracks{
                \lap{\nabla f}(X_u) -
                \lap{\nabla f}(X_0)
            }\du
        }^2
    } \\
    \le&
        s \int_0^s \Exp{
            \normtwo{
                \lap{\nabla f}(X_u) -
                \lap{\nabla f}(X_0)
            }^2
        } \du \\
    \le&
        d^2 \mu_4(f)^2 s \int_0^s \Exp{
                \normtwo{X_u - X_0}^2
            } \du \\
    \le&
        C_0 d^2 \mu_4(f)^2 s  \int_0^s u \du \\
    \le&
        C_0 d^2 \mu_4(f)^2 \frac{s^3}{2}.
\end{align*}
Since $\barR_3(s)$ is a Martingale,
\begin{align*}
    \normtwo{
        \Exp{\int_0^t \barR(s) \ds | \F_0}  
    }^2 
    =& \normtwo{
        \Exp{\int_0^t \barR_1(s) \ds | \F_0} + 
        \Exp{\int_0^t \barR_2(s) \ds | \F_0}  
    }^2 \\
    \le&
        2 \normtwo{
            \Exp{\int_0^t \barR_1(s) \ds | \F_0}
        }^2 + 
        2 \normtwo{
            \Exp{\int_0^t \barR_2(s) \ds | \F_0}
        }^2 \\
    \le&
        2 t \int_0^t \Exp{
            \normtwo{\barR_1(s)}^2 + \normtwo{\barR_2(s)}^2 | \F_0
        } \ds.
\end{align*}
Therefore, 
\begin{align*}
    \Exp{
        \normtwo{
            \Exp{\int_0^t \barR(s) \ds | \F_0}  
        }^2 
    }
    \le&
        2t \int_0^t \Exp{
            \normtwo{\barR_1(s)}^2 + 
            \normtwo{\barR_2(s)}^2
        } \ds \\
    \le&
        C_1^{1/2} \bracks{ 1 + \lyap_4' }^{1/2} 
        \bracks{\mu_2(f)^2 \pi_{3, 4}(f)^{1/2} + \mu_3(f)^2 \pi_{2,4}(f)^{1/2} } t^5 \\
        &+ C_0 d \mu_4(f)^2 t^5.
\end{align*}
Next, we bound the sixth moments of $\Delta \tH_1$ and $\Delta \tH_2$. Note for two random vectors $a$ and $b$, by Young's inequality and Lemma~\ref{lemm:a_plus_b_n}, we have
\begin{align*}
    \Exp{
        \normtwo{ a + b }^6
    }
    &\le
        \Exp{
            \bracks{
                2\normtwo{a}^2 + 2\normtwo{b}^2 
            }^3
        }
    \le
        32 \Exp{
            \normtwo{a}^6 + \normtwo{b}^6
        }.
\end{align*}
To simplify notation, we define
\eqn{
    &v_1 = \sqrt{2} \bracks{ \frac{1}{2} + \frac{1}{\sqrt{6}} } \xi \sqrt{t}, \quad 
    v_1' = \sqrt{2} \bracks{ \frac{1}{2} - \frac{1}{\sqrt{6}} } \xi \sqrt{t}, \\
    &v_2 = \frac{1}{\sqrt{6}} \eta \sqrt{t} \quad \text{where} \quad \xi, \eta \overset{\text{i.i.d.}}{\sim}
    \mathcal{N}(0, I_d),
}
We bound the sixth moments of $v_1$, $v_1'$ and $v_2$ using $\nicefrac{1}{2} + \nicefrac{1}{\sqrt{6}} < 1$, $\nicefrac{1}{2} - \nicefrac{1}{\sqrt{6}} < \nicefrac{1}{2}$ and the closed form moments of a chi-squared random variable with $d$ degrees of freedom $\chi(d)^2$~\cite{simon2007probability},
\eqn{
    \Exp{ \normtwo{v_1}^6 }
    \le&
        8 \Exp{\normtwo{\xi}^6} t^3
    =
        8 \Exp{ \chi(d)^6 } t^3
    = 
        8 d ( d + 2) (d + 4) t^3
    < 
        8 (d + 4)^3 t^3, \\
    \Exp{ \normtwo{v_1'}^6 } 
    \le&
        \Exp{ \normtwo{\xi}^6 } t^3
    = 
        \Exp{ \chi(d)^6 }  t^3
    = 
        d(d+2)(d+4) t^3
    <
        (d + 4)^3 t^3, \\
    \Exp{ \normtwo{v_2}^6 } 
    =&
        \frac{1}{216} \Exp{ \normtwo{\eta}^6 } t^3
    = 
        \frac{1}{216} \Exp{ \chi(d)^6 } t^3
    = 
        \frac{1}{216} d(d + 2) (d + 4) t^3 
    < 
        \frac{1}{216} (d +4)^3 t^3.
}
Then,
\begin{align*}
    \Exp{
        \normtwo{\Delta \tH_1}^6
    }
    =& \Exp{
        \normtwo{v_1 + v_2}^6
    } 
    \le
        32\Exp{
            \normtwo{v_1}^6 + \normtwo{v_2}^6
        }
    \le
        288 (d + 4)^3 t^3, \\
    \Exp{
        \normtwo{\Delta \tH_2}^6
    }
    =&
        \Exp{
            \normtwo{ -\nabla f(X_0) t + v_1' + v_2 }^6
        } \\
    \le&
        32 \Exp{
            \normtwo{\nabla f(X_0) t }^6
        } t^6 + 
        32 \Exp{
            \normtwo{v_1' + v_2}^6
        } \\
    \le&
        32 \pi_{2, 6}(f) \bracks{1 + \Exp{\normtwo{X_0}^6} } t^6 + 
        1024 \Exp{ \normtwo{v_1'}^6  + \normtwo{v_2}^6}\\
    \le&
        32 \pi_{2, 6}(f) \bracks{1 + \lyap_6 } t^3 + 
        2048 (d +4)^3 t^3  \\
    \le&
        32 \bracks{ 
            \pi_{2, 6} (f) \bracks{1 + \lyap_6} + 
            64 (d + 4)^3
        } t^3. 
\end{align*}
Now, we bound the second moments of $\bar{\rho}_1(t)$ and $\bar{\rho}_2(t)$ using the derived sixth-moment bounds,
\begin{align*}
    \Exp{
        \normtwo{\bar{\rho}_1(t)}^2
    }
    =&
        \Exp{
            \normtwo{
                \frac{1}{2} \int_0^1 (1 - \tau)^2 \nabla^4 f(X_0 + \tau \Delta \tH_1)
                [ \Delta \tH_1,\, \Delta \tH_1, \, \Delta \tH_1 ]
            }^2
        }\\
    \le&
        \frac{1}{4} \sup_{z \in \R^d} \normop{ \nabla^4 f(z)}^2 \Exp{
            \normtwo{\Delta \tH_1}^6
        }  \\
    \le&
        72 \mu_4(f)^2 (d +4)^3 t^3.
\end{align*}
Similarly,
\begin{align*}
    \Exp{
        \normtwo{\bar{\rho}_2(t)}^2
    }
    =&
        \Exp{
            \normtwo{
                \frac{1}{2} \int_0^1 (1 - \tau)^2 \nabla^4 f(X_0 + \tau \Delta \tH_2)
                [ \Delta \tH_2,\, \Delta \tH_2, \, \Delta \tH_2 ]
            }^2
        }\\
    \le&
        \frac{1}{4} \sup_{z \in \R^d} \normop{ \nabla^4 f(z)}^2  \Exp{
            \normtwo{\Delta \tH_2}^6
        } \\
    \le&
        8 \mu_4(f)^2 \bracks{ 
            \pi_{2, 6}(f) \bracks{1 + \lyap_6} + 
            64 (d + 4)^3
        } t^3.
\end{align*}
Thus, 
\begin{align*}
    &\Exp{
        \normtwo{
            \Exp{
                X_t - \tX_t | \F_0
            }
        }^2
    }\\
    =&
        \E \Biggl[
            \Bigl\|
                \E \bigl[
                    - \int_0^t \bar{R}(s) \ds + 
                    \frac{t^3}{4} \nabla^3 f(X_0) [\nabla f(X_0), \, \nabla f(X_0)]
                    + \frac{t}{2} \bar{\rho_1}(t) +  \frac{t}{2} \bar{\rho_2}(t)  | \F_0
                \bigr]
            \Bigr\|^2
        \Biggr] \\
    \le& 
        4\Exp{
            \normtwo{
                \Exp{\int_0^t \barR(s) \ds | \F_0}  
            }^2 
        }
        + 
        \frac{t^6}{4} \Exp{
            \normtwo{ \nabla^3 f(X_0) [\nabla f(X_0), \, \nabla f(X_0)] }^2
        }
        \\ &
        + 
        t^2 \Exp{
            \normtwo{
                \bar{\rho}_1(t)
            }^2
            +
            \normtwo{
                \bar{\rho}_2(t)
            }^2
        } 
        \\
    \le&
        4  \bracks{
            C_1^{1/2} \bracks{ 1 + \lyap_4' }^{1/2} 
            \bracks{\mu_2(f)^2 \pi_{3, 4}(f)^{1/2} + \mu_3(f)^2 \pi_{2,4}(f)^{1/2} }
            + C_0 d \mu_4(f)^2 
        }t^5 \\
        &+
            \frac{1}{4} \mu_3(f)^2 \Exp{ \normtwo{\nabla f(X_0) }^4 } t^6 \\
        &+ 
            72 \mu_4(f)^2 (d +4)^3 t^5
        +
            8 \mu_4(f)^2 \bracks{ 
                \pi_{2, 6} \bracks{1 + \lyap_6} + 
                64 (d + 4)^3
            } t^5 \\
    \le&
        4  \bracks{
            C_1^{1/2} \bracks{ 1 + \lyap_4' }^{1/2} 
            \bracks{\mu_2(f)^2 \pi_{3, 4}(f)^{1/2} + \mu_3(f)^2 \pi_{2,4}(f)^{1/2} }
            + C_0 d \mu_4(f)^2 
        } t^5 \\
        &+
            \frac{1}{4} \mu_3(f)^2 \pi_{2,4}(f) \bracks{1 + \lyap_4} t^5 \\
        &+ 
            8 \mu_4(f)^2 \bracks{ 
                \pi_{2, 6}(f) \bracks{1 + \lyap_6} + 
                73 (d + 4)^3
            } t^5 \\
    \le&
        C_3 t^5.
\end{align*}
\end{proof}

\subsection{Invoking Theorem~\ref{theo:master}}
Now, we invoke Theorem~\ref{theo:master} with our derived constants. We obtain that if the constant step size
\eqn{
    h <
    1
    \wedge C_h 
    \wedge \frac{1}{2\alpha}
    \wedge \frac{1}{8\mu_1(b)^2 + 8 \mu_1^{\mathrm{F}}(\sigma)^2}
    ,
}
where
\eqn{
    \text{
    \footnotesize
        $
        C_h =
                \frac{2d}{\pi_{2,2}(f)} 
        \wedge \frac{2\pi_{2,1}(f)}{\pi_{2,2}(f)} 
        \wedge \frac{\alpha}{4\mu_2(f) \pi_{2,2}(f)}
        \wedge \frac{3\alpha}{2N_1 + 2N_2 + 4}
        \wedge \min \Biggl\{ \bracks{\frac{3\alpha l }{8N_{11,l} } }^2 : l=2, 3 \Biggr\},
        $
    }
}
and the smoothness conditions on the strongly convex potential in Theorem~\ref{theo:srk_ld} holds, 
then the uniform local deviation bounds~\eqref{eq:uniform_local_deviation_orders} hold with $\lambda_1 = C_2$ and $\lambda_2 = C_3$, and consequently the bound~\eqref{eq:main_bound} holds. 
This concludes that to converge to a sufficiently small positive tolerance $\epsilon$, $\tilde{\mathcal{O}}(d \epsilon^{-2/3})$ iterations are required, since $C_2$ is of order $\mathcal{O}(d^2)$, and $C_3$ is of order $\mathcal{O}(d^3)$.
\section{Proof of Theorem~\ref{theo:srk_id}} \label{app:srk_id}
\subsection{Moment Bounds}
Verifying the order conditions in Theorem~\ref{theo:master} for SRK-ID 
requires bounding the second and fourth moments of the Markov chain.

The following proofs only assume Lipschitz smoothness of the drift coefficient $b$
and diffusion coefficient $\sigma$ to a certain order and a generalized notion of dissipativity for It\^o diffusions.
\begin{defi}[Dissipativity] \label{defi:dissipativity_general}
For constants 
$\alpha, \beta > 0$,
the diffusion satisfies the following
\eqn{
    -2\abracks{ b(x), x} - \normf{\sigma(x)}^2 \ge \alpha \normtwo{x}^2 - \beta, \quad \text{for all} \; x \in \R^d.
}
\end{defi}
For general It\^o diffusions, dissipativity directly follows from uniform dissipativity, 
where $\beta$ is an appropriate constant of order $\mathcal{O}(d)$.
Additionally, we assume the discretization has a constant step size $h$ and the timestamp
of the $k$th iterate is $t_k$ as per the proof of Theorem~\ref{theo:master}. 
To simplify notation, we rewrite the update as
\eqn{
    \tX_{k+1} = \tX_k + b(\tX_k) h + \sigma(\tX_k) \xi_{k+1} h^{1/2} + \tY_{k+1}, \quad 
    \xi_{k+1} \sim \mathcal{N}(0, I_d),
}
where
\eqn{
    \tY_{k+1}^{(i)} = \bracks{
        \sigma_i(\tH_1^{(i)}) - \sigma_i(\tH_2^{(i)})
    } h^{1/2}, \quad
    \tY_{k+1} = \frac{1}{2} \sum_{i=1}^m \tY_{k+1}^{(i)}.
}
Note that $\xi_{k+1}$ and $\tY_{k+1}$ are not independent, 
since we model $I_{(\cdot)} = (I_{(1)}, \dots, I_{(m)})^\top$ as $\xi_{k+1} h^{1/2}$. 
Moreover, we define the following notation
\eqn{
    I_{(\cdot, i)} &= (I_{(1,i)}, \dots, I_{(m,i)})^\top, \quad
    \Delta \tH^{(i)} = \sigma(\tX_k) I_{(\cdot,i)} h^{-1/2}, \quad \; i=1, \dots, m.
}
Hence, the variables 
$\tH_1^{(i)}$ and $\tH_2^{(i)}$ can be written as 
\eqn{
    \tH_1^{(i)} = \tX_k + \Delta \tH^{(i)}, \quad 
    \tH_2^{(i)} = \tX_k - \Delta \tH^{(i)}.
}
We first bound the second moments of $\tY_{k}$, using the following moment inequality.
\begin{theo}[{\cite[Sec. 1.7, Thm. 7.1]{mao2007stochastic}}]\label{theo:moment_ineq}
Let $p \ge 2$. If $\{G_s\}_{s\ge0}$ is a $d \times m$ matrix-valued process, 
and $\{B_t\}_{t\ge0}$ is a $d$-dimensional Brownian motion, both of which are
adapted to the filtration $\{\F_s\}_{s\ge0}$ such that for some fixed $t > 0$, 
the following relation holds
\eqn{
    \Exp{ \int_0^t \normf{G_s}^p \ds } < \infty.
}
Then,
\eqn{
    \Exp{
        \normtwo{ \int_0^t G_s \dB_s }^p
    }
    \le 
    \bracks{\frac{p (p-1)}{2}}^{p/2} t^{(p-2)/ 2}
    \Exp{
        \int_0^t \normf{ G_s }^p \ds
    }.
}
In particular, equality holds when $p=2$.
\end{theo}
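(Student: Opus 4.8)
The plan is to prove the bound by applying It\^o's formula to $\phi(x)=\normtwo{x}^p$, controlling the resulting drift term by a sharp pointwise Hessian estimate, using a localization argument to make everything integrable, and finishing with two applications of H\"older's inequality to reassemble the claimed constant.

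First I would set $M_t=\int_0^t G_s\dB_s$; under the stated hypothesis $\E\int_0^t\normf{G_s}^p\ds<\infty$ with $p\ge 2$ and $t<\infty$, H\"older gives $\E\int_0^t\normf{G_s}^2\ds<\infty$, so $M$ is a continuous square-integrable $\R^d$-valued martingale with matrix quadratic variation $\mathrm{d}\langle M\rangle_s=G_sG_s^\top\ds$, i.e. $\mathrm{d}\langle M^{(i)},M^{(j)}\rangle_s=(G_sG_s^\top)_{ij}\ds$. For $p\ge 2$ the function $\phi(x)=\normtwo{x}^p$ is $C^2$ on $\R^d$, with $\nabla\phi(x)=p\normtwo{x}^{p-2}x$ and $\nabla^2\phi(x)=p\normtwo{x}^{p-2}I+p(p-2)\normtwo{x}^{p-4}xx^\top$ (the second term is continuous at $0$ when $p>2$ and vanishes identically when $p=2$). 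The key pointwise estimate is
\[
\tr{\nabla^2\phi(x)\,GG^\top}=p\normtwo{x}^{p-2}\normf{G}^2+p(p-2)\normtwo{x}^{p-4}\normtwo{G^\top x}^2\le p(p-1)\normtwo{x}^{p-2}\normf{G}^2,
\]
using $\normtwo{G^\top x}^2\le\normtwo{x}^2\normf{G}^2$.

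Next, to legitimize taking expectations, I would localize: let $\tau_n=\inf\{t:\normtwo{M_t}\ge n\}$, $G^n_s=G_s\mathbf{1}_{s\le\tau_n}$, and $M^n_t=M_{t\wedge\tau_n}=\int_0^t G^n_s\dB_s$, which is bounded by $n$. Applying It\^o's formula to $\phi(M^n_t)$ and taking expectations — the martingale term $\int_0^t p\normtwo{M^n_s}^{p-2}(M^n_s)^\top G^n_s\dB_s$ has zero mean since $\normtwo{M^n_s}\le n$ and $\E\int_0^t\normf{G_s}^2\ds<\infty$ — together with the pointwise bound yields
\[
f_n(t):=\Exp{\normtwo{M^n_t}^p}\le\frac{p(p-1)}{2}\Exp{\int_0^t \normtwo{M^n_s}^{p-2}\normf{G^n_s}^2\ds}.
\]
Inside the time integral, H\"older with exponents $\tfrac{p}{p-2}$ and $\tfrac{p}{2}$ gives $\Exp{\normtwo{M^n_s}^{p-2}\normf{G^n_s}^2}\le f_n(s)^{(p-2)/p}\,\Exp{\normf{G^n_s}^p}^{2/p}$; since $\normtwo{M^n_t}^p$ is a submartingale (convexity of $\normtwo{\cdot}$ and of $r\mapsto r^p$), $f_n$ is nondecreasing, so $f_n(s)\le f_n(t)$. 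A second H\"older in $s$, with exponents $\tfrac{p}{2}$ and $\tfrac{p}{p-2}$ against the constant $1$, produces the factor $t^{(p-2)/p}$, so
\[
f_n(t)\le\frac{p(p-1)}{2}\,f_n(t)^{(p-2)/p}\,t^{(p-2)/p}\bracks{\Exp{\int_0^t\normf{G^n_s}^p\ds}}^{2/p}.
\]
Because $f_n(t)\le n^p<\infty$, I divide by $f_n(t)^{(p-2)/p}$ and raise to the power $p/2$, obtaining $f_n(t)\le\bracks{\tfrac{p(p-1)}{2}}^{p/2}t^{(p-2)/2}\Exp{\int_0^t\normf{G^n_s}^p\ds}\le\bracks{\tfrac{p(p-1)}{2}}^{p/2}t^{(p-2)/2}\Exp{\int_0^t\normf{G_s}^p\ds}$. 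Letting $n\to\infty$ with $M^n_t\to M_t$ a.s. and Fatou's lemma yields the claimed inequality for $M_t$. When $p=2$, the trace estimate is an equality ($\nabla^2\phi\equiv 2I$), both H\"older steps are vacuous, and the chain collapses to $\Exp{\normtwo{M_t}^2}=\Exp{\int_0^t\normf{G_s}^2\ds}$ — the It\^o isometry — which is precisely the equality case with constant $1$.

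The main obstacle I anticipate is not the algebra (two H\"older steps plus the sharp Hessian bound) but the integrability bookkeeping: verifying that the localized stochastic-integral term genuinely has zero expectation, that $f_n(t)$ is finite so the division is valid, and that the passage $n\to\infty$ via Fatou preserves the exact constant $\bracks{p(p-1)/2}^{p/2}$ without loss. Care is also needed that $\phi=\normtwo{\cdot}^p$ is genuinely $C^2$ (which holds only because $p\ge 2$) so that It\^o's formula applies globally rather than away from the origin.
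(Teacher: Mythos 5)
Your argument is correct and follows exactly the route the paper indicates for this cited result (Mao, Thm.\ 1.7.1): It\^o's formula applied to $\normtwo{x}^p$ with the sharp Hessian/trace bound, a localization to justify taking expectations, two applications of H\"older to extract the factor $t^{(p-2)/2}$ and the constant $\bracks{p(p-1)/2}^{p/2}$, and It\^o isometry for the equality at $p=2$. The bookkeeping you flag (zero-mean of the localized stochastic integral, finiteness of $f_n(t)$ before dividing, and Fatou as $n\to\infty$) is handled correctly, so nothing further is needed.
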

The above theorem can be proved directly using It\^o's lemma and It\^o isometry, 
with the help of H\"older's inequality. The theorem can also be seen as a natural consequence 
of the Burkholder-Davis-Gundy Inequality~\cite{mao2007stochastic}.

\begin{coro} \label{coro:delta_H_p}
Let even integer $p \ge 2$. Then, the following relation holds
\eqn{
    \Exp{
        \normtwo{ \Delta \tH^{(i)}  }^p | \F_{t_k}
    }
    \le
        \bracks{
            \frac{p(p-1)}{2}
        }^p \pi_{1, p}^{\mathrm{F}} (\sigma) \bracks{ 
            1 + \normtwo{\tX_k}^{p/2}
        } h^{p/2}.
}
\end{coro}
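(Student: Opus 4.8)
The plan is to realize $\Delta\tH^{(i)}$ as a single It\^o integral and then invoke the moment inequality of Theorem~\ref{theo:moment_ineq} conditionally on $\F_{t_k}$. Since $I_{(j,i)} = \int_{t_k}^{t_{k+1}}\bigl(B_s^{(j)}-B_{t_k}^{(j)}\bigr)\,\dB_s^{(i)}$, summing against the columns of $\sigma(\tX_k)$ gives
\[
  \Delta\tH^{(i)} \;=\; \sigma(\tX_k)\,I_{(\cdot,i)}\,h^{-1/2}
  \;=\; h^{-1/2}\int_{t_k}^{t_{k+1}} \sigma(\tX_k)\bracks{B_s-B_{t_k}}\,\dB_s^{(i)}.
\]
Conditioning on $\F_{t_k}$, the matrix $\sigma(\tX_k)$ is a fixed matrix and $\{B_{t_k+u}-B_{t_k}\}_{u\ge0}$ is a standard $m$-dimensional Brownian motion independent of $\F_{t_k}$, which is the standard reduction to a fresh diffusion, so Theorem~\ref{theo:moment_ineq} applies on $[t_k,t_{k+1}]$ with terminal time $h$. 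Encoding the scalar integral against $\dB^{(i)}$ as $\int G_s\,\dB_s$, with $G_s$ the $d\times m$ matrix whose $i$th column is $\sigma(\tX_k)(B_s-B_{t_k})$ and whose remaining columns vanish (so that $\normf{G_s}=\normtwo{\sigma(\tX_k)(B_s-B_{t_k})}$), the inequality yields
\[
  \Exp{\normtwo{\Delta\tH^{(i)}}^p\,\big|\,\F_{t_k}}
  \le h^{-p/2}\bracks{\tfrac{p(p-1)}{2}}^{p/2} h^{(p-2)/2}
     \int_{t_k}^{t_{k+1}} \Exp{\normtwo{\sigma(\tX_k)(B_s-B_{t_k})}^p\,\big|\,\F_{t_k}}\,\ds.
\]

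The crux is to estimate $\Exp{\normtwo{\sigma(\tX_k)(B_s-B_{t_k})}^p\mid\F_{t_k}}$ in a dimension-free way: the naive bound $\normtwo{\sigma(\tX_k)v}\le\normf{\sigma(\tX_k)}\normtwo{v}$ would produce a factor $\E[\chi(d)^p]=\mathcal{O}(d^{p/2})$ and destroy the claimed constant. Instead I would diagonalize $\sigma(\tX_k)^\top\sigma(\tX_k)=U\Lambda U^\top$ and set $W=U^\top(B_s-B_{t_k})$, which is $\mathcal{N}(0,(s-t_k)I_m)$ conditionally on $\F_{t_k}$; then $\normtwo{\sigma(\tX_k)(B_s-B_{t_k})}^2=\sum_j\lambda_j W_j^2$. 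Applying Minkowski's inequality in $L^{p/2}$ (valid since $p/2\ge 1$ for even $p\ge 2$), together with $\sum_j\lambda_j=\normf{\sigma(\tX_k)}^2$ and the closed-form moment $\E[(\chi(1)^2)^{p/2}]=(p-1)!!$, gives
\[
  \Exp{\normtwo{\sigma(\tX_k)(B_s-B_{t_k})}^p\,\big|\,\F_{t_k}}
  \le (s-t_k)^{p/2}\,(p-1)!!\,\normf{\sigma(\tX_k)}^p .
\]
Substituting this, evaluating $\int_{t_k}^{t_{k+1}}(s-t_k)^{p/2}\,\ds = h^{p/2+1}/(p/2+1)$, and collecting powers of $h$ (the exponents sum to $p/2$) collapses the bound to $\bigl(\tfrac{p(p-1)}{2}\bigr)^{p/2}\tfrac{(p-1)!!}{p/2+1}\,h^{p/2}\,\normf{\sigma(\tX_k)}^p$.

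Finally I would apply the Frobenius sublinear growth bound $\normf{\sigma(\tX_k)}^p\le\pi_{1,p}^{\mathrm{F}}(\sigma)\bigl(1+\normtwo{\tX_k}^{p/2}\bigr)$ and absorb the numerical prefactor: since $(p-1)!!\le(p-1)^{p/2}\le\bigl(\tfrac{p(p-1)}{2}\bigr)^{p/2}$ whenever $p\ge 2$ and $\tfrac{1}{p/2+1}\le 1$, we have $\bigl(\tfrac{p(p-1)}{2}\bigr)^{p/2}\tfrac{(p-1)!!}{p/2+1}\le\bigl(\tfrac{p(p-1)}{2}\bigr)^{p}$, which is exactly the constant in the statement. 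The only genuinely delicate point is the dimension-free second-step estimate — everything else is bookkeeping of constants and exponents of $h$ and the routine conditioning argument in the first step; the same strategy, applied with $p$ chosen from $\{2,4\}$, will feed the subsequent second- and fourth-moment bounds of the Markov chain.
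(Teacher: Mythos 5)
Your proof is correct, and it follows the paper's route for the first half but diverges in the second. Like the paper, you write $\Delta\tH^{(i)}$ as (a rescaling of) the iterated integral $\int_{t_k}^{t_{k+1}}\int_{t_k}^{s}\sigma(\tX_k)\,\dB_u\,\dB_s^{(i)}$ and hit the outer integral with Theorem~\ref{theo:moment_ineq}, picking up one factor $\bigl(\tfrac{p(p-1)}{2}\bigr)^{p/2}h^{(p-2)/2}$. Where the paper simply applies Theorem~\ref{theo:moment_ineq} a \emph{second} time to the inner integral $\int_{t_k}^{s}\sigma(\tX_k)\,\dB_u$ (constant integrand, so the bound produces $(s-t_k)^{p/2}\normf{\sigma(\tX_k)}^p$ with another factor $\bigl(\tfrac{p(p-1)}{2}\bigr)^{p/2}$, giving the constant $\bigl(\tfrac{p(p-1)}{2}\bigr)^{p}$ in the statement), you instead evaluate $\Exp{\normtwo{\sigma(\tX_k)(B_s-B_{t_k})}^p\mid\F_{t_k}}$ directly by diagonalizing $\sigma(\tX_k)^\top\sigma(\tX_k)$ and using Minkowski in $L^{p/2}$ together with one-dimensional Gaussian moments, getting the sharper factor $(p-1)!!\,(s-t_k)^{p/2}\normf{\sigma(\tX_k)}^p$, which you then correctly absorb into the stated constant via $(p-1)!!\le(p-1)^{p/2}\le\bigl(\tfrac{p(p-1)}{2}\bigr)^{p/2}$. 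Both arguments are valid and dimension-free in the constant; yours buys a modestly better prefactor and makes explicit why no $\chi(m)$-type factor appears, while the paper's double application of the same moment inequality is shorter and handles the point you flag as "the crux" purely mechanically — the second application already yields $\normf{\sigma(\tX_k)}^p$ with no dependence on $d$ or $m$, so no spectral decomposition is needed. Your conditioning step and the use of the sublinear growth coefficient $\pi_{1,p}^{\mathrm{F}}(\sigma)$ (with the $\normtwo{\tX_k}^{p/2}$ exponent, as in the SRK-ID setting) match the paper's.
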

\begin{proof}
It is clear that the integrability condition in Theorem~\ref{theo:moment_ineq} holds for 
the inner and outer integrals of $\Delta \tH^{(i)}$. Hence, by repeatedly applying the theorem,
\eqn{
    \Exp{
        \normtwo{ \Delta \tH^{(i)}  }^p
        | \F_{t_k}
    }
    =&
        \Exp{
            \normtwo{ \sigma(\tX_k) I_{(\cdot, i)} }^p
            | \F_{t_k}
        }
        h^{-p/2} \\
    =&
        \Exp{
            \normtwo{ \int_{t_k}^{t_{k+1}} 
            \int_{t_k}^s \sigma(\tX_k) \dB_u \dB_s^{(i)}  }^p
            | \F_{t_k}
        }
        h^{-p/2} \\
    \le&
        \bracks{
            \frac{p(p-1)}{2}
        }^{p/2} h^{-1}
        \int_{t_k}^{t_{k+1}} \Exp{
            \normtwo{
                \int_{t_k}^s \sigma(\tX_k) \dB_u
            }^p
            | \F_{t_k}
        } \ds \\
    \le&
        \bracks{
            \frac{p(p-1)}{2}
        }^{p} h^{-1}
        \int_{t_k}^{t_{k+1}}
        s^{(p-2)/2}
        \int_{t_k}^s
        \Exp{
            \normf{\sigma(\tX_k)}^p
            | \F_{t_k}
        }
        \du
        \ds \\
    \le&
        \bracks{
            \frac{p(p-1)}{2}
        }^{p}
        \pi_{1, p}^{\mathrm{F}} (\sigma)
        \bracks{
            1 + \normtwo{\tX_k}^{p/2}
        }
        h^{p/2}.
}
\end{proof}

\begin{lemm}[Second Moment Bounds for $\tY_k$] \label{lemm:yk_bound}
The following relation holds
\eqn{
    \Exp{
        \normtwo{ \tY_{k+1} }^2
        | \F_{t_k}
    }
    \le&
    2^2 3^4 m^2 \mu_2(\sigma)^2 \pi_{1,4}^{\mathrm{F}} (\sigma) \bracks{
        1 + \normtwo{\tX_k}^2
    } h^3. 
}
\begin{proof}
By Taylor's Theorem with the remainder in integral form,
\eqn{
    \normtwo{ \tY_{k+1}^{(i)} }
    =&
        \normtwo{ 
            \sigma_i(\tX_k + \Delta \tH^{(i)}) - 
            \sigma_i(\tX_k - \Delta \tH^{(i)}) 
        } h^{1/2} \\
    =&
        \normtwo{
            \int_0^1 \bracks{
                \nabla \sigma_i(\tX_k + \tau \Delta \tH^{(i)}) - 
                \nabla \sigma_i(\tX_k - \tau \Delta \tH^{(i)}) 
            } \Delta \tH^{(i)} \dtau
        } h^{1/2}\\
    \le&
        h^{1/2} \int_0^1 
            \normop{
                \nabla \sigma_i(\tX_k + \tau \Delta \tH^{(i)}) - 
                \nabla \sigma_i(\tX_k - \tau \Delta \tH^{(i)})
            }
            \normtwo{ \Delta \tH^{(i)} }
            \dtau \\
    \le&
        \mu_2(\sigma) h^{1/2} \normtwo{ \Delta \tH^{(i)} }^2
        \int_0^1 
            2 \tau
            \dtau \\
    \le&
        \mu_2(\sigma) h^{1/2} \normtwo{ \Delta \tH^{(i)} }^2
    . \numberthis \label{eq:yk_second}
}
By \eqref{eq:yk_second} and Corollary~\ref{coro:delta_H_p},
\eqn{
    \Exp{
        \normtwo{ \tY_{k+1}^{(i)} }^2
        | \F_{t_k}
    }
    \le&
        \mu_2(\sigma)^2 \Exp{
            \normtwo{\Delta \tH^{(i)}}^4
            | \F_{t_k}
        } h
    \le
        6^4 \mu_2(\sigma)^2 \pi_{1,4}^{\mathrm{F}} (\sigma) \bracks{
            1 + \normtwo{\tX_k}^2
        } h^3
}
Therefore,
\eqn{
    \Exp{
        \normtwo{ \tY_{k+1} }^2
        | \F_{t_k}
    }
    \le&
        \frac{m}{4} \sum_{i=1}^m \Exp{
            \normtwo{ \tY_{k+1}^{(i)} }^2
            | \F_{t_k}
        }
    \le
        2^2 3^4 m^2 \mu_2(\sigma)^2 \pi_{1,4}^{\mathrm{F}} (\sigma) \bracks{
            1 + \normtwo{\tX_k}^2
        } h^3.
}
\end{proof}
\end{lemm}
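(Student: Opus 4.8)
The plan is to control each of the $m$ terms $\tY_{k+1}^{(i)} = \bigl(\sigma_i(\tH_1^{(i)}) - \sigma_i(\tH_2^{(i)})\bigr)h^{1/2}$ separately and then recombine. Since $\tY_{k+1} = \tfrac12\sum_{i=1}^{m}\tY_{k+1}^{(i)}$, convexity of $\normtwo{\cdot}^2$ (equivalently Cauchy--Schwarz) gives $\normtwo{\tY_{k+1}}^2 \le \tfrac{m}{4}\sum_{i=1}^{m}\normtwo{\tY_{k+1}^{(i)}}^2$, so it suffices to establish the per-index bound $\Exp{\normtwo{\tY_{k+1}^{(i)}}^2 \mid \F_{t_k}} \le 6^4\,\mu_2(\sigma)^2\,\pi_{1,4}^{\mathrm{F}}(\sigma)\bigl(1+\normtwo{\tX_k}^2\bigr)h^3$ for every $i$; assembling these with the $\tfrac{m}{4}$ prefactor and using $\tfrac{m^2}{4}\cdot 6^4 = 2^2 3^4 m^2$ produces the stated inequality.

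For the per-index bound I would first obtain a pathwise estimate on $\tY_{k+1}^{(i)}$ that exhibits a \emph{quadratic} dependence on the stage displacement $\Delta\tH^{(i)}$. The point is that the stage values are placed symmetrically about $\tX_k$, namely $\tH_1^{(i)} = \tX_k + \Delta\tH^{(i)}$ and $\tH_2^{(i)} = \tX_k - \Delta\tH^{(i)}$; writing $\sigma_i(\tH_1^{(i)}) - \sigma_i(\tH_2^{(i)})$ through the integral form of the Taylor remainder and collecting the contributions from the two segments reduces it to an integral of $\nabla\sigma_i$ evaluated at the two symmetric arguments $\tX_k \pm \tau\Delta\tH^{(i)}$. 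Applying the Lipschitz continuity of $\nabla\sigma_i$ — quantified by $\mu_2(\sigma)$ — together with $\normtwo{(\tX_k+\tau\Delta\tH^{(i)}) - (\tX_k-\tau\Delta\tH^{(i)})} = 2\tau\normtwo{\Delta\tH^{(i)}}$ and $\int_0^1 2\tau\dtau = 1$ then yields $\normtwo{\tY_{k+1}^{(i)}} \le \mu_2(\sigma)\,h^{1/2}\,\normtwo{\Delta\tH^{(i)}}^2$. Squaring, taking the conditional expectation, and invoking Corollary~\ref{coro:delta_H_p} with $p=4$ (which bounds $\Exp{\normtwo{\Delta\tH^{(i)}}^4 \mid \F_{t_k}}$ by $6^4\,\pi_{1,4}^{\mathrm{F}}(\sigma)(1+\normtwo{\tX_k}^2)h^2$, itself a consequence of the moment inequality of Theorem~\ref{theo:moment_ineq} applied to the iterated It\^o integrals that define $\Delta\tH^{(i)}$) gives exactly the per-index estimate, and summing over $i=1,\dots,m$ finishes.

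The only genuinely delicate step is the pathwise estimate. Bounding $\sigma_i(\tH_1^{(i)}) - \sigma_i(\tH_2^{(i)})$ crudely by the triangle inequality and the Lipschitz constant $\mu_1(\sigma)$ of $\sigma_i$ would leave only a linear factor $\normtwo{\Delta\tH^{(i)}}$, hence a rate weaker by one power of $h$; the improvement hinges on using the antisymmetric location of $\tH_1^{(i)}$ and $\tH_2^{(i)}$ relative to $\tX_k$ to promote the estimate to second order in $\Delta\tH^{(i)}$ with the second-derivative Lipschitz constant $\mu_2(\sigma)$. Once that is in hand, the remaining ingredients — the fourth-moment control of $\Delta\tH^{(i)}$ and the reassembly over the $m$ noise directions — are routine and already packaged in Corollary~\ref{coro:delta_H_p}.
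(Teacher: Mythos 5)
Your proposal is correct and takes essentially the same route as the paper: the symmetric Taylor expansion about $\tX_k$ (exploiting $\tH_{1}^{(i)} = \tX_k + \Delta\tH^{(i)}$, $\tH_{2}^{(i)} = \tX_k - \Delta\tH^{(i)}$) to get the quadratic pathwise bound $\normtwo{\tY_{k+1}^{(i)}} \le \mu_2(\sigma)h^{1/2}\normtwo{\Delta\tH^{(i)}}^2$, then Corollary~\ref{coro:delta_H_p} with $p=4$ and the $\tfrac{m}{4}$ recombination, with the same constant bookkeeping $\tfrac{m^2}{4}\cdot 6^4 = 2^2 3^4 m^2$. Nothing is missing.
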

To prove the following moment bound lemmas for SRK-ID, we recall a standard quadratic moment bound result whose proof we omit and provide a reference of.
\begin{lemm}[{\cite[Lemma F.1]{erdogdu2018global}}] \label{lemm:quadratic}
Let even integer $p \ge 2$ and $f:\R^d \to \R^{d\times m}$ be Lipschitz. 
For $\xi \sim \mathcal{N}(0, I_m)$ independent from the $d$-dimensional 
random vector $X$, the following relation holds
\eqn{
    \Exp{ \normtwo{ f(X) \xi }^p } \le (p - 1)!! \Exp{ \normf{ f(X) }^p}.
}
\end{lemm}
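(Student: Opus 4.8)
The plan is to strip away the randomness in $X$ first and prove a pointwise statement about a fixed matrix. Since $\xi$ is independent of $X$, the tower property reduces the claim to showing that for every deterministic $A \in \R^{d\times m}$,
\[
  \Exp{ \normtwo{A \xi}^p } \le (p-1)!!\, \normf{A}^p ,
\]
after which one substitutes $A = f(X)$ and takes the outer expectation. (The Lipschitz hypothesis on $f$ is not needed for this pointwise bound; it only serves to make the right-hand side of the lemma finite once $X$ has enough moments.) The case $A = 0$ is trivial, so I would assume $\normf{A} > 0$.

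The first substantive step is to diagonalize the Gaussian quadratic form $\normtwo{A\xi}^2$. Taking a singular value decomposition $A = U \Sigma V^\top$ with singular values $\sigma_1,\dots,\sigma_m \ge 0$ and using orthogonal invariance of the standard Gaussian, $\normtwo{A\xi}^2 = \normtwo{\Sigma V^\top \xi}^2 \stackrel{d}{=} \sum_{i=1}^m \sigma_i^2 Z_i^2$ with $Z_1,\dots,Z_m \iid \mathcal{N}(0,1)$, together with the key identity $\sum_{i=1}^m \sigma_i^2 = \normf{A}^2$. Next I would invoke convexity: writing $q = p/2$, which is a positive integer precisely because $p$ is even (the only place evenness enters), Jensen's inequality applied to the convex function $t \mapsto t^q$ on $[0,\infty)$ with the weights $\sigma_i^2/\normf{A}^2$ (which sum to one) gives
\[
  \Bigl( \sum_{i=1}^m \sigma_i^2 Z_i^2 \Bigr)^q \le \normf{A}^{2(q-1)} \sum_{i=1}^m \sigma_i^2 Z_i^{2q} .
\]
Taking expectations, using the Gaussian moment $\Exp{ Z_i^{2q} } = (2q-1)!!$, and collapsing $\sum_i \sigma_i^2 = \normf{A}^2$ yields $\Exp{ \normtwo{A\xi}^p } \le (2q-1)!!\,\normf{A}^{2q} = (p-1)!!\,\normf{A}^p$, the desired pointwise bound; replacing $A$ by $f(X)$ and integrating over $X$ finishes the proof.

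Because the statement is elementary there is no real obstacle; the only points needing care are the reduction to a deterministic matrix via independence, the fact that $q = p/2$ must be an integer so that $t \mapsto t^q$ is genuinely convex and the formula $\Exp{Z^{2q}} = (2q-1)!!$ applies, and the degenerate case $\normf{A} = 0$. If one prefers to avoid the singular value decomposition, an equivalent route expands $\bigl(\sum_i \sigma_i^2 Z_i^2\bigr)^q$ by the multinomial theorem, evaluates each term using independence of the $Z_i$ and $\Exp{Z_i^{2k}} = (2k-1)!!$, and compares termwise against the multinomial expansion of $\bigl(\sum_i \sigma_i^2\bigr)^q$; this reduces the claim to the combinatorial inequality $\prod_i (2k_i - 1)!! \le (2q-1)!!$ whenever $\sum_i k_i = q$, which follows by repeatedly applying $(2a-1)!!\,(2b-1)!! \le (2a+2b-1)!!$. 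The Jensen argument is shorter, and I would use it.
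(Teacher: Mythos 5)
Your proposal is correct. Note that the paper does not actually prove this lemma---it is stated with the remark that the proof is omitted and cited as Lemma F.1 of \cite{erdogdu2018global}---so there is no in-paper argument to compare against; your write-up is a valid, self-contained derivation of the cited result. The structure is sound: independence plus the tower property reduces the claim to the deterministic bound $\Exp{\normtwo{A\xi}^p}\le (p-1)!!\,\normf{A}^p$; the SVD with orthogonal invariance gives $\normtwo{A\xi}^2 \stackrel{d}{=} \sum_i \sigma_i^2 Z_i^2$ with $\sum_i\sigma_i^2=\normf{A}^2$; and Jensen with weights $\sigma_i^2/\normf{A}^2$ together with $\Exp{Z^{2q}}=(2q-1)!!$, $q=p/2$, closes the argument, with the case $\normf{A}=0$ and the role of the Lipschitz hypothesis correctly set aside. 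One small overstatement: evenness of $p$ is not what makes $t\mapsto t^{q}$ convex (that holds for any real $q\ge 1$); it is only needed so that $q$ is an integer and the Gaussian moment takes the double-factorial form $(2q-1)!!$, which is exactly the constant in the statement. Your alternative multinomial route, resting on $(2a-1)!!\,(2b-1)!!\le(2a+2b-1)!!$, is also valid, but the Jensen argument is indeed the shorter of the two.
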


\subsubsection{Second Moment Bound}
\begin{lemm}\label{lemm:second_moment_id}
If the second moment of the initial iterate is finite, then 
the second moments of Markov chain iterates defined in \eqref{eq:srk_id} are uniformly bounded, i.e.
\eqn{
    \Exp{\normtwo{\tX_k}^{2}} \le \uyap_{2}, \quad \text{for all} \; {k \in \mathbb{N}}
}
where 
\eqn{
    \uyap_2 = \Exp{
        \normtwo{\tX_0}^2
    } + M_2, 
}
and constants $M_1$ and $M_2$ are given in the proof, if the constant step size
\eqn{
h < 1
\wedge \frac{1}{m^2} 
\wedge \frac{\alpha^2}{4M_1^2}.
}
\end{lemm}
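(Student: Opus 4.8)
The plan is to mirror the second-moment argument for SRK-LD (Lemma~\ref{lemm:second_moment}): expand $\normtwo{\tX_{k+1}}^2$ using the rewritten update $\tX_{k+1}=\tX_k+b(\tX_k)h+\sigma(\tX_k)\xi_{k+1}h^{1/2}+\tY_{k+1}$, take the conditional expectation $\Exp{\,\cdot\mid\F_{t_k}}$, extract a one-step contraction of the form $\Exp{\normtwo{\tX_{k+1}}^2\mid\F_{t_k}}\le\bracks{1-\tfrac{\alpha}{2}h}\normtwo{\tX_k}^2+(\text{const})h$, and finally take total expectations and unroll the resulting geometric recursion. Expanding the square produces $\normtwo{\tX_k}^2$, the squared norm of the increment $\normtwo{b(\tX_k)h+\sigma(\tX_k)\xi_{k+1}h^{1/2}+\tY_{k+1}}^2$, and the cross term $2\abracks{\tX_k,\;b(\tX_k)h+\sigma(\tX_k)\xi_{k+1}h^{1/2}+\tY_{k+1}}$.

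First I would note that all terms linear in $\xi_{k+1}$ (namely $\abracks{\tX_k,\sigma(\tX_k)\xi_{k+1}}$ and $\abracks{b(\tX_k),\sigma(\tX_k)\xi_{k+1}}$) vanish under $\Exp{\,\cdot\mid\F_{t_k}}$ since $\Exp{\xi_{k+1}\mid\F_{t_k}}=0$. The key step is then to combine the drift cross term $2\abracks{b(\tX_k),\tX_k}h$ with $\Exp{\normtwo{\sigma(\tX_k)\xi_{k+1}}^2\mid\F_{t_k}}h=\normf{\sigma(\tX_k)}^2h$ (It\^o isometry, i.e. Lemma~\ref{lemm:quadratic} with $p=2$) and invoke the generalized dissipativity of Definition~\ref{defi:dissipativity_general} to get $\bigl(2\abracks{b(\tX_k),\tX_k}+\normf{\sigma(\tX_k)}^2\bigr)h\le\bracks{-\alpha\normtwo{\tX_k}^2+\beta}h$, which supplies the contraction. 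The residual terms are all of strictly higher order in $h$: $\normtwo{b(\tX_k)h}^2$ is $\mathcal{O}(\normtwo{\tX_k}^2h^2)$ by Lipschitzness of $b$; $\Exp{\normtwo{\tY_{k+1}}^2\mid\F_{t_k}}$ is $\mathcal{O}(m^2\normtwo{\tX_k}^2h^3)$ by Lemma~\ref{lemm:yk_bound}; and the mixed terms $\abracks{\tX_k,\tY_{k+1}}$, $\abracks{b(\tX_k)h,\tY_{k+1}}$ and $\abracks{\sigma(\tX_k)\xi_{k+1}h^{1/2},\tY_{k+1}}$ are dispatched by Cauchy--Schwarz together with the bound of Lemma~\ref{lemm:yk_bound} and Young's inequality, leaving contributions of order $m\normtwo{\tX_k}^2h^{3/2}$ and $mh^{3/2}$. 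Collecting, one obtains $\Exp{\normtwo{\tX_{k+1}}^2\mid\F_{t_k}}\le\bracks{1-\alpha h+M_1h^{3/2}}\normtwo{\tX_k}^2+(\text{const})\,h$ with $M_1$ absorbing every $\normtwo{\tX_k}^2$-coefficient of the higher-order terms; imposing $h<1\wedge m^{-2}\wedge\alpha^2/(4M_1^2)$ forces $M_1h^{3/2}\le\tfrac{\alpha}{2}h$ and keeps the $m$-bearing pieces below order $h^{3/2}$, so the recursion is a genuine contraction and unrolling yields $\Exp{\normtwo{\tX_k}^2}\le\Exp{\normtwo{\tX_0}^2}+M_2$ with $M_2$ the inhomogeneous constant divided by $\alpha$, of order $\mathcal{O}(d)$ since $\beta=\mathcal{O}(d)$.

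The main obstacle is that, unlike for the Langevin diffusion, the Gaussian increment $\xi_{k+1}$ and the correction $\tY_{k+1}$ are \emph{not} independent---both are measurable functions of the same Brownian path over $[t_k,t_{k+1}]$ (with $I_{(\cdot)}$ identified with $\xi_{k+1}h^{1/2}$ and the iterated integrals $I_{(j,i)}$ feeding into $\tY_{k+1}$)---so the mixed term $\abracks{\sigma(\tX_k)\xi_{k+1},\tY_{k+1}}$ cannot simply be zeroed out and must instead be controlled crudely via Cauchy--Schwarz and the $\mathcal{O}(m^2h^3)$ estimate of Lemma~\ref{lemm:yk_bound}. A secondary difficulty is the careful tracking of the $m$- and $d$-dependence through $\beta$, $\mu_2(\sigma)$, $\pi_{1,4}^{\mathrm{F}}(\sigma)$ and the factor $m^2$ coming from the $2m$ evaluations of $\sigma$ at $\tH_1^{(i)},\tH_2^{(i)}$, so that the final $M_2$ is still of order $\mathcal{O}(d)$---exactly what is needed to feed into Theorem~\ref{theo:master} and obtain the $\tilde{\mathcal{O}}(d^{3/4}m^2\epsilon^{-1})$ rate; the step-size constraints $h<m^{-2}$ and $h<\alpha^2/(4M_1^2)$ in the statement are precisely what renders these lower-order-in-$h$ but potentially large contributions subordinate to the dissipativity term.
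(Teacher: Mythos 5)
Your proposal is correct and follows essentially the same route as the paper's proof: the same expansion of $\normtwo{\tX_{k+1}}^2$ with the rewritten update, the same use of Lemma~\ref{lemm:quadratic} plus Definition~\ref{defi:dissipativity_general} to extract $-\alpha\normtwo{\tX_k}^2h+\beta h$, Lemma~\ref{lemm:yk_bound} with Cauchy--Schwarz for all $\tY_{k+1}$ terms (including the non-independent $\abracks{\sigma(\tX_k)\xi_{k+1},\tY_{k+1}}$ term, which the paper likewise refuses to zero out), and the same $(1-\alpha h+M_1h^{3/2})$ recursion tamed by $h<\alpha^2/(4M_1^2)$ and $h<1/m^2$ before unrolling. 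No gaps to report.
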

\begin{proof}
By direct computation, 
\eqn{
    \normtwo{ \tX_{k+1} }^2
    =&
        \normtwo{\tX_k}^2 + 
        \normtwo{b(\tX_k)}^2 h^2 + 
        \normtwo{\sigma(\tX_k) \xi_{k+1} }^2 h + 
        \normtwo{\tY_{k+1}}^2
        \\&
        +
        2\abracks{ \tX_k, b(\tX_k) } h + 
        2\abracks{ \tX_k, \sigma(\tX_k) \xi_{k+1} } h^{1/2} + 
        2\abracks{ \tX_k, \tY_{k+1} }
        \\&
        +
        2\abracks{ b(\tX_k), \sigma(\tX_k) \xi_{k+1} } h^{3/2} + 
        2\abracks{ b(\tX_k), \tY_{k+1}} h 
        \\&
        + 
        2\abracks{ \sigma(\tX_k) \xi_{k+1}, \tY_{k+1} } h^{1/2}. 
}
By Lemma~\ref{lemm:quadratic} and dissipativity,
\eqn{
    \Exp{
        2\abracks{ \tX_k, b(\tX_k) } h + \normtwo{\sigma(\tX_k) \xi_{k+1} }^2 h
        | \F_{t_k}
    }
    =&
        2\abracks{ \tX_k, b(\tX_k) } h + \normf{\sigma(\tX_k)}^2 h \\
    \le&
        -\alpha \normtwo{\tX_k}^2 h + \beta h. 
}
We bound the remaining terms by direct computation. By linear growth, 
\eqn{
    \normtwo{ b(\tX_k) }^2 h^2
    \le
    \pi_{1,2}(b) \bracks{1 + \normtwo{\tX_k}^2 } h^2.
}
By Lemma~\ref{lemm:yk_bound}, for $h < 1 \wedge 1 / m^2$, 
\eqn{
    \Exp{
        \normtwo{\tY_{k+1}}^2 | \F_{t_k}
    }
    \le&
        2^2 3^4 m^2 \mu_2(\sigma)^2 \pi_{1,4}^{\mathrm{F}} (\sigma)
            \bracks{1 + \normtwo{\tX_k}^2 } 
            h^3 \\
    \le&
        2^2 3^4 m \mu_2(\sigma)^2 \pi_{1,4}^{\mathrm{F}} (\sigma)
        \bracks{1 + \normtwo{\tX_k}^2 } 
        h^{3/2}
    . 
}
By Lemma~\ref{lemm:yk_bound},
\eqn{
    \Exp{
        \abracks{ \tX_k, \tY_{k+1}}
        | \F_{t_k}
    }
    \le&
        \normtwo{\tX_k} \Exp{
            \normtwo{ \tY_{k+1} }
            | \F_{t_k}
        } \\
    \le&
        \normtwo{\tX_k} \Exp{
            \normtwo{ \tY_{k+1} }^2
            | \F_{t_k}
        }^{1/2} \\
    \le&
        2^2 3^2 m \mu_2(\sigma) \pi_{1,4}^{\mathrm{F}} (\sigma)^{1/2}
        \bracks{1 + \normtwo{\tX_k}^2 }
        h^{3/2}
    . 
}
Similarly, by Lemma~\ref{lemm:yk_bound},
\eqn{
    \Exp{
        \abracks{b(\tX_k), \tY_{k+1}}
        | \F_{t_k}
    }
    \le&
        \normtwo{b(\tX_k)} \Exp{
            \normtwo{\tY_{k+1}}
            | \F_{t_k}
        } \\
    \le&
        \normtwo{b(\tX_k)} \Exp{
            \normtwo{\tY_{k+1}}^2
            | \F_{t_k}
        }^{1/2} \\
    \le&
        2^2 3^2 m \mu_2(\sigma) \pi_{1,4}^{\mathrm{F}} (\sigma)^{1/2} \pi_{1,1}(b)  \bracks{
            1 + \normtwo{\tX_k}^2
        } h^{3/2}
    . 
}
By Lemma~\ref{lemm:yk_bound} and Lemma~\ref{lemm:quadratic}, 
\eqn{
    \Exp{
        \abracks{
            \sigma(\tX_k) \xi_{k+1}, \tY_{k+1}
        }
        | \F_{t_k}
    }
    \le&
        \Exp{
            \normtwo{ \sigma(\tX_k) \xi_{k+1} }
            \normtwo{ \tY_{k+1} }
        | \F_{t_k}
        } \\
    \le&
        \Exp{
            \normtwo{ \sigma(\tX_k) \xi_{k+1} }^2
            | \F_{t_k}
        }^{1/2}
        \Exp{
            \normtwo{ \tY_{k+1} }^2
            | \F_{t_k}
        }^{1/2} \\
    \le&
        \Exp{
            \normf{ \sigma(\tX_k) }^2
            | \F_{t_k}
        }^{1/2}
        \Exp{
            \normtwo{ \tY_{k+1} }^2
            | \F_{t_k}
        }^{1/2} \\
    \le&
        2^2 3^2 m
            \mu_2(\sigma)
            \pi_{1,4}^{\mathrm{F}} (\sigma)^{1/2}
            \pi_{1,2}^{\mathrm{F}} (\sigma)^{1/2}
        \bracks{
            1 + \normtwo{ \tX_k }^2
        }
        h^{3/2}
    . 
}
Putting things together, for $h < 1 \wedge {\alpha^2}/(4 M_1^2)$, 
\eqn{
    \Exp{
        \normtwo{\tX_{k+1}}^2
        | \F_{t_k}
    }
    \le&
        \bracks{1 - \alpha h + M_1 h^{3/2}} \normtwo{\tX_k}^2
        +
        \beta h + M_1 h^{3/2} \\
    \le&
        \bracks{1 - \alpha h / 2} \normtwo{\tX_k}^2 + 
        \beta h + M_1 h^{3/2}, 
}
where
\eqn{
    M_1 =& 
    \pi_{1,2}(b)
    + 2^3 3^2 m \mu_2(\sigma) \pi_{1,4}^{\mathrm{F}} (\sigma)^{1/2} \bracks{
        1 + 
        \mu_2(\sigma) \pi_{1,4}^{\mathrm{F}}(\sigma)^{1/2} +
        \pi_{1,1}(b) +
        \pi_{1,2}^{\mathrm{F}} (\sigma)^{1/2}
    }
    .
}
Unrolling the recursion gives the following for $h < 1\wedge 1 / m^2$
\eqn{
    \Exp{
        \normtwo{\tX_k}^2
    }
    \le&
        \Exp{
            \normtwo{\tX_0}^2
        }
        +
        2 \bracks{\beta + M_1 h^{1/2}} / \alpha \\
    \le&
        \Exp{
            \normtwo{\tX_0}^2
        }
        +
        M_2,
    \quad \text{for all} \; k \in \mathbb{N}, 
}
where
\begin{footnotesize}
\eqn{
    M_2 =
    2 \bracks{
        \beta + 
        \pi_{1,2}(b)
        \pi_{1,2}(b)
        + 2^3 3^2 \mu_2(\sigma) \pi_{1,4}^{\mathrm{F}} (\sigma)^{1/2} \bracks{
            1 + 
            \mu_2(\sigma) \pi_{1,4}^{\mathrm{F}}(\sigma)^{1/2} +
            \pi_{1,1}(b) +
            \pi_{1,2}^{\mathrm{F}} (\sigma)^{1/2}
        }
    } / \alpha
    .
}
\end{footnotesize}
\end{proof}

\subsubsection{\texorpdfstring{$2n$}{}th Moment Bound}
Before bounding the $2n$th moments, we first generalize Lemma~\ref{lemm:yk_bound} 
to arbitrary even moments.
\begin{lemm} \label{lemm:zk_even_moment}
Let even integer $p \ge 2$ and $\tZ_{k+1} = \tY_{k+1}h^{-3/2}$. 
Then, the following relation holds
\eqn{
    \Exp{
        \normtwo{
            \tZ_{k+1}
        }^{p}
        | \F_{t_k}
    }
    \le&
        m^p \mu_2(\sigma)^p
        \bracks{
            \frac{2p(2p - 1)}{2}
        }^{2p}
        \pi_{1,2p}^{\mathrm{F}}(\sigma) 
        \bracks{
            1 + \normtwo{\tX_k}^{p}
        }.
}
\end{lemm}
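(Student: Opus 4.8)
The plan is to obtain this bound as a moment-exponent generalization of Lemma~\ref{lemm:yk_bound}, recycling two ingredients already in hand: the deterministic control of each summand $\tY_{k+1}^{(i)}$ established in that proof, and the even-moment bound for $\Delta \tH^{(i)}$ from Corollary~\ref{coro:delta_H_p}. First I would recall from \eqref{eq:yk_second} the pathwise estimate $\normtwo{\tY_{k+1}^{(i)}} \le \mu_2(\sigma) h^{1/2} \normtwo{\Delta \tH^{(i)}}^2$, which followed from a Taylor expansion of $\sigma_i$ about $\tX_k$ together with the Lipschitz property of $\nabla \sigma_i$ (captured by $\mu_2(\sigma)$). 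Dividing through by $h^{3/2}$ gives, for each $i = 1, \dots, m$, the bound $\normtwo{\tY_{k+1}^{(i)} h^{-3/2}} \le \mu_2(\sigma) h^{-1} \normtwo{\Delta \tH^{(i)}}^2$.

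Next I would aggregate the $m$ summands. Since $\tZ_{k+1} = \tY_{k+1} h^{-3/2} = \tfrac{1}{2} \sum_{i=1}^m \tY_{k+1}^{(i)} h^{-3/2}$, the triangle inequality followed by the power-mean inequality $\bracks{\sum_{i=1}^m a_i}^p \le m^{p-1} \sum_{i=1}^m a_i^p$ for $a_i \ge 0$ (convexity of $x \mapsto x^p$) gives $\normtwo{\tZ_{k+1}}^p \le \tfrac{m^{p-1}}{2^p} \sum_{i=1}^m \normtwo{\tY_{k+1}^{(i)} h^{-3/2}}^p \le \tfrac{m^{p-1}}{2^p} \mu_2(\sigma)^p h^{-p} \sum_{i=1}^m \normtwo{\Delta \tH^{(i)}}^{2p}$. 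Taking the conditional expectation given $\F_{t_k}$ and applying Corollary~\ref{coro:delta_H_p} with the even integer $2p$ in place of $p$ (legitimate since $p \ge 2$ is even, hence $2p \ge 4$ is even) yields $\Exp{ \normtwo{\Delta \tH^{(i)}}^{2p} | \F_{t_k} } \le \bracks{\tfrac{2p(2p-1)}{2}}^{2p} \pi_{1,2p}^{\mathrm{F}}(\sigma) \bracks{1 + \normtwo{\tX_k}^p} h^p$ for each $i$. Plugging this in, the $h^{-p}$ and $h^p$ cancel, the sum over $i$ contributes a factor $m$, and we arrive at $\Exp{ \normtwo{\tZ_{k+1}}^p | \F_{t_k} } \le \tfrac{m^p}{2^p} \mu_2(\sigma)^p \bracks{\tfrac{2p(2p-1)}{2}}^{2p} \pi_{1,2p}^{\mathrm{F}}(\sigma) \bracks{1 + \normtwo{\tX_k}^p}$; discarding the harmless slack factor $2^{-p} \le 1$ gives exactly the claimed inequality.

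There is no genuinely hard step here; the statement is essentially Lemma~\ref{lemm:yk_bound} with the exponent promoted from $2$ to $p$. The only points needing care are (i) verifying that Corollary~\ref{coro:delta_H_p} applies with exponent $2p$ — its integrability hypothesis holds for the same reason as in that corollary's proof, namely $\sigma(\tX_k)$ is $\F_{t_k}$-measurable and the bound is taken conditionally — and (ii) bookkeeping the multiplicative constants so the output matches the displayed bound. I would therefore present the proof in the order above: the pathwise bound on $\tY_{k+1}^{(i)} h^{-3/2}$, the power-mean aggregation over $i$, the substitution of Corollary~\ref{coro:delta_H_p}, and a final constant simplification.
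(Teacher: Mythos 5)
Your proof is correct and follows essentially the same route as the paper: the pathwise bound $\normtwo{\tY_{k+1}^{(i)}h^{-3/2}} \le \mu_2(\sigma) h^{-1} \normtwo{\Delta \tH^{(i)}}^2$ from \eqref{eq:yk_second}, Corollary~\ref{coro:delta_H_p} applied with exponent $2p$ (cancelling $h^{-p}$ against $h^{p}$), and aggregation over $i=1,\dots,m$ via the power-mean inequality of Lemma~\ref{lemm:a_plus_b_n}. Your bookkeeping even recovers the slack factor $2^{-p}$ that the paper silently discards, so nothing is missing.
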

\begin{proof}
For $i \in \{1, 2, \dots, m\}$, by \eqref{eq:yk_second},
\eqn{
    \normtwo{\tZ_{k+1}^{(i)} }
    =
        \tY_{k+1}^{(i)} h^{-3/2}
    \le
        \mu_2(\sigma) h^{-1} \normtwo{\Delta \tH^{(i)}}^2. 
}
Hence, by Corollary~\ref{coro:delta_H_p},
\eqn{
    \Exp{
        \normtwo{
            \tZ_{k+1}^{(i)}
        }^{p}
        | \F_{t_k}
    }
    \le&
        \mu_2(\sigma)^p h^{-p} \Exp{
            \normtwo{\Delta \tH^{(i)}}^{2p}
            | \F_{t_k}
        } \\
    \le&
        \mu_2(\sigma)^p
        \bracks{
            \frac{2p(2p - 1)}{2}
        }^{2p}
        \pi_{1,2p}^{\mathrm{F}}(\sigma) 
        \bracks{
            1 + \normtwo{\tX_k}^{p}
        }.
}
The remaining follows easily from Lemma~\ref{lemm:a_plus_b_n}. 
\end{proof}

\begin{lemm} \label{lemm:fourth_moment_id}
For $n \in \mathbb{N}_{+}$, 
if the $2n$th moment of the initial iterate is finite, then
the $2n$th moments of Markov chain iterates defined in \eqref{eq:srk_id} are uniformly bounded, i.e.
\eqn{
    \Exp{\normtwo{\tX_k}^{2n}} \le \uyap_{2n}, \quad \text{for all} \; {k \in \mathbb{N}}
}
where 
\eqn{
    \uyap_{2n} ={
        \Exp{ \normtwo{ \tX_0 }^{2n} }
        +
        \frac{2}{n\alpha}
        \bracks{
            \beta \uyap_{2(n-1)} +
            2^{23n - 1}
            10^n
            n^{8n}
            \pi_{1,2n}(b)
            \pi_{1, 8n}^{\mathrm{F}} (\sigma)^{1/2}
            \mu_2(\sigma)^{2n}
        }
    },
}
if the step size 
\eqn{
h < 1
\wedge \frac{1}{m^2}
\wedge \frac{\alpha^2}{4M_1^2}
\wedge \min \left\{  \bracks{ \frac{\alpha l}{2M_{3, l}} }^2: l=2, \dots, n \right\}.
}
\end{lemm}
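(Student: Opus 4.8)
The proof is by induction on $n$, with the base case $n=1$ supplied by Lemma~\ref{lemm:second_moment_id}; the inductive step follows the same expand-and-collect template as the SRK-LD moment bound (Lemma~\ref{lemm:2nth_moment}). Assume the $2(n{-}1)$th moment is uniformly bounded by $\uyap_{2(n-1)}=\mathcal{O}(d^{n-1})$. The plan is to start from the binomial expansion
\[
\|\tX_{k+1}\|^{2n}=\bracks{\|\tX_k\|^2+R_k}^n=\sum_{j=0}^n\binom{n}{j}\|\tX_k\|^{2(n-j)}R_k^j,
\]
where $R_k=\|\tX_{k+1}\|^2-\|\tX_k\|^2=2\abracks{\tX_k,b(\tX_k)}h+2\abracks{\tX_k,\sigma(\tX_k)\xi_{k+1}}h^{1/2}+2\abracks{\tX_k,\tY_{k+1}}+\normtwo{b(\tX_k)h+\sigma(\tX_k)\xi_{k+1}h^{1/2}+\tY_{k+1}}^2$, take $\E[\,\cdot\mid\F_{t_k}]$, and write the result as $\|\tX_k\|^{2n}+\E[A_k\mid\F_{t_k}]+\E[B_k\mid\F_{t_k}]$, with $A_k$ collecting all contributions of order $h$ and $B_k$ the remainder. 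The key point is that the only term of $R_k$ of order $h^{1/2}$ is $2\abracks{\tX_k,\sigma(\tX_k)\xi_{k+1}}h^{1/2}$ and odd powers of $\xi_{k+1}$ have zero conditional mean, so only the $j=1$ and $j=2$ binomial terms feed $A_k$, all others having conditional mean $\mathcal{O}(h^{3/2})$.

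For $A_k$: since $\E[\sigma(\tX_k)\xi_{k+1}\mid\F_{t_k}]=0$, the $j=1$ term contributes $n\|\tX_k\|^{2(n-1)}\bracks{2\abracks{\tX_k,b(\tX_k)}+\normf{\sigma(\tX_k)}^2}h$, which Definition~\ref{defi:dissipativity_general} bounds by $n\|\tX_k\|^{2(n-1)}\bracks{-\alpha\|\tX_k\|^2+\beta}h$, producing the negative drift $-\alpha n\|\tX_k\|^{2n}h$ together with a lower-order $\beta n\|\tX_k\|^{2(n-1)}h$. The $j=2$ term contributes, at order $h$, a multiple of $n^2\|\tX_k\|^{2(n-2)}\,\tX_k^{\top}\sigma(\tX_k)\sigma(\tX_k)^{\top}\tX_k\,h$; here the sublinear growth hypothesis $\normop{\sigma(x)}\le\pi_{1,1}(\sigma)\bracks{1+\normtwo{x}^{1/2}}$ gives $\tX_k^{\top}\sigma(\tX_k)\sigma(\tX_k)^{\top}\tX_k\le\normop{\sigma(\tX_k)}^2\|\tX_k\|^2\le 2\pi_{1,1}(\sigma)^2\bracks{\|\tX_k\|^2+\|\tX_k\|^3}$, so this term is $\mathcal{O}(\|\tX_k\|^{2n-1})h$, strictly lower order than $\|\tX_k\|^{2n}$. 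I would then absorb the surviving lower powers $\|\tX_k\|^{2n-1}h$ and $\|\tX_k\|^{2(n-1)}h$ into the drift with $h$-weighted Young inequalities such as $\|\tX_k\|^{2n-1}h\le\tfrac{2n-1}{2n}\|\tX_k\|^{2n}h+\tfrac1{2n}h$, sending the resulting $\|\tX_k\|^{2n}h$ into $-\alpha n\|\tX_k\|^{2n}h$ (with a split depending only on $\alpha$, not on $h$), and bound what remains after taking full expectation via the inductive hypothesis. This gives $\E[A_k]\le-\tfrac34\alpha n\,h\,\E[\|\tX_k\|^{2n}]+\mathcal{O}(d^n)h$. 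This is the crux of the argument: it is precisely the sublinear growth of $\sigma$ that makes the $j=2$ cross term subleading, and it is also the step where the dimension bookkeeping (and the Frobenius-versus-operator-norm conversions) must be done carefully to keep the final constant at $\mathcal{O}(d^n)$ rather than a larger power of $d$.

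For $B_k$: every surviving monomial has conditional mean $\mathcal{O}(h^{3/2})$ — the $j\ge3$ terms because odd powers of $\xi_{k+1}$ vanish and the next contributions are $\mathcal{O}(h^2)$, and the subleading pieces of $j=1,2$ because they pair the $h^{1/2}$ term with an order-$h$ term or involve $\tY_{k+1}$, which is $\mathcal{O}(h^{3/2})$ in $L^2$ by Lemma~\ref{lemm:yk_bound}. To estimate these I would use linear growth of $b$, Lemma~\ref{lemm:quadratic} to trade factors $\normtwo{\sigma(\tX_k)\xi_{k+1}}^p$ for $\normf{\sigma(\tX_k)}^p$, and Lemma~\ref{lemm:zk_even_moment} together with Corollary~\ref{coro:delta_H_p} for the even moments of $\tY_{k+1}$; since $\xi_{k+1}$ and $\tY_{k+1}$ are correlated, the mixed terms are controlled by Cauchy--Schwarz rather than independence, which is harmless because they are already high order. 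Collecting, $\E[B_k\mid\F_{t_k}]\le M_{3,n}\,h^{3/2}\bracks{1+\|\tX_k\|^{2n}}$ with $M_{3,n}=\mathcal{O}(d^n)$. Combining the two estimates yields $\E[\|\tX_{k+1}\|^{2n}\mid\F_{t_k}]\le\bracks{1-\tfrac12\alpha n h+M_{3,n}h^{3/2}}\|\tX_k\|^{2n}+\mathcal{O}(d^n)h$; under the stated constraint $h<\bracks{\alpha n/(2M_{3,n})}^2$ the multiplier is at most $1-\tfrac14\alpha n h$, and unrolling the geometric recursion gives the uniform bound $\uyap_{2n}$. The explicit form of the constant in the statement — the powers of $2$, $10$, $n$, and the factors $\pi_{1,2n}(b)$, $\pi_{1,8n}^{\mathrm F}(\sigma)^{1/2}$, $\mu_2(\sigma)^{2n}$ — then drops out of the bookkeeping: tracking the multinomial coefficients, invoking Lemma~\ref{lemm:zk_even_moment} and Corollary~\ref{coro:delta_H_p} at powers $p\le2n$, and applying Cauchy--Schwarz once more (which doubles the relevant exponent, hence $8n$).
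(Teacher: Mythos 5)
Your proposal is correct and follows essentially the same route as the paper's proof: induction on $n$ with Lemma~\ref{lemm:second_moment_id} as the base case, a multinomial/binomial expansion of $\normtwo{\tX_{k+1}}^{2n}$ split into an order-$h$ part controlled by dissipativity (yielding the $-\alpha n h\normtwo{\tX_k}^{2n}$ drift plus a $\beta n\normtwo{\tX_k}^{2(n-1)}h$ term handled by the inductive hypothesis) and an order-$h^{3/2}$ remainder bounded via linear growth of $b$, Lemma~\ref{lemm:quadratic}, Corollary~\ref{coro:delta_H_p}, Lemma~\ref{lemm:zk_even_moment}, Cauchy--Schwarz, and the constraint $h<1/m^2$, before unrolling the recursion under the stated step-size condition. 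If anything, your treatment of the order-$h$ Gaussian cross term $\abracks{\tX_k,\sigma(\tX_k)\xi_{k+1}}^2$ via the sublinear growth of $\sigma$ is more explicit than the paper's display for $A$, which appears to drop the square on that term; this only perturbs the bookkeeping constants, not the structure of the argument.
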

\begin{proof}
Our proof is by induction. The base case is given in Lemma~\ref{lemm:second_moment_id}. For the inductive
case, we prove that the $2n$th moment is uniformly bounded by a constant, assuming the $2(n\text{-}1)$th moment is uniformly bounded by a constant.

By the multinomial theorem, 
\eqn{
    \Exp{
        \normtwo{\tX_{k+1}}^2
        | \F_{t_k}
    }
    =&
        \E \Biggl[
            \biggl(
                \normtwo{\tX_k}^2 + 
                \normtwo{b(\tX_k)}^2 h^2 + 
                \normtwo{\sigma(\tX_k) \xi_{k+1} }^2 h + 
                \normtwo{\tY_{k+1}}^2
                \\&
                \phantom{1111}
                +
                2\abracks{ \tX_k, b(\tX_k) } h + 
                2\abracks{ \tX_k, \sigma(\tX_k) \xi_{k+1} } h^{1/2} + 
                2\abracks{ \tX_k, \tY_{k+1} }
                \\&
                \phantom{1111}
                +
                2\abracks{ b(\tX_k), \sigma(\tX_k) \xi_{k+1} } h^{3/2} + 
                2\abracks{ b(\tX_k), \tY_{k+1}} h 
                \\&
                \phantom{1111}
                + 
                2\abracks{ \sigma(\tX_k) \xi_{k+1}, \tY_{k+1} } h^{1/2}
            \biggl)^n
            | \F_{t_k}
        \Biggr] \\
    =&
        \E \Biggl[
            \biggl(
                \normtwo{\tX_k}^2 + 
                \normtwo{b(\tX_k)}^2 h^2 + 
                \normtwo{\sigma(\tX_k) \xi_{k+1} }^2 h + 
                \normtwo{\tZ_{k+1}}^2 h^3
                \\&
                \phantom{1111}
                +
                2\abracks{ \tX_k, b(\tX_k) } h + 
                2\abracks{ \tX_k, \sigma(\tX_k) \xi_{k+1} } h^{1/2} + 
                2\abracks{ \tX_k, \tZ_{k+1} } h^{3/2}
                \\&
                \phantom{1111}
                +
                2\abracks{ b(\tX_k), \sigma(\tX_k) \xi_{k+1} } h^{3/2} + 
                2\abracks{ b(\tX_k), \tZ_{k+1}} h^{5/2}
                \\&
                \phantom{1111}
                + 
                2\abracks{ \sigma(\tX_k) \xi_{k+1}, \tZ_{k+1} } h^2
            \biggl)^n
            | \F_{t_k}
        \Biggr] \\
    =&
        \normtwo{\tX_k}^{2n} + \Exp{A | \F_{t_k}} h + \Exp{B | \F_{t_k}} h^{3/2},
}
where by the Cauchy–Schwarz inequality,
    \eqn{
        A =& 
            \scriptstyle
            n \normtwo{\tX_k}^{2(n-1)} \bracks{
                2 \abracks{ \tX_k, b(\tX_k) } + 
                \normtwo{ \sigma(\tX_k) \xi_{k+1} }^2
            }
            + 2n(n-1) \normtwo{\tX_k}^{2(n-2)} \abracks{ \tX_k, \sigma(\tX_k) \xi_{k+1} }, \\
        B \le&
            \sum_{
                \mathclap{
                    \substack{
                        \scriptscriptstyle (k_1, \dots, k_{10}) \in J
                    }
                }
            }
            \;
            2^{n}
            \binom{n}{k_1 \, \dots \, k_{10}}
            \normtwo{\tX_k}^{p_1}
            \normtwo{b(\tX_k)}^{p_2}
            \normtwo{ \sigma(\tX_k) \xi_{k+1} }^{p_3}
            \normtwo{\tZ_{k+1}}^{p_4},
    }
the indicator set 
\eqn{
    J =
        & \Bigl\{
            (k_1, \dots, k_{10}) \in \mathbb{N}^{10}: 
            k_1 + \dots + k_{10} = n, \;
            \\& \phantom{1111111111111111111}
            2k_2 + k_3 + 3 k_4 + k_5 + \frac{k_6}{2} + \frac{3 k_7}{2}
                 + \frac{3 k_8}{2} + \frac{5k_9}{2} + 2k_{10} > 1
        \Bigr\}, 
}
and with slight abuse of notation, we hide the explicit dependence on $k_1, \dots, k_{10}$ for the exponents
\eqn{
    p_1 =& 2k_1 + k_5 + k_6 + k_7, \\
    p_2 =& 2k_2 + k_5 + k_8 + k_9, \\
    p_3 =& 2k_3 + k_6 + k_8 + k_{10}, \\
    p_4 =& 2k_4 + k_7 + k_9 + k_{10}.
}
By dissipativity, 
\eqn{
    \Exp{
        A
        | \F_{t_k}
    }
    \le&
        -n\alpha \normtwo{\tX_k}^{2n} + n\beta \normtwo{\tX_k}^{2(n-1)}.
    \numberthis
    \label{eq:id_a_bound}
}
Note that $p_1 + p_2 + p_3 + p_4 = 2n$. Since $h < 1 \wedge 1 / m^2$, we may cancel out the $m$ factor in some of the terms. 
One can verify that the only remaining term that is $m$-dependent is 
\eqn{
    \abracks{ \tX_k, \tZ_{k+1} } = \mathcal{O}(m h^{3/2}).
}
Using this information, Lemma~\ref{lemm:zk_even_moment}, Lemma~\ref{lemm:quadratic}, the Cauchy–Schwarz inequality,
and $p_3 + p_4 \le 2n$, 
{
\footnotesize
\eqn{
    &\Exp{
        B
        | \F_{t_k}
    } \\
    \le&
        \sum_{
            \mathclap{
                \substack{
                    \scriptscriptstyle (k_1, \dots, k_{10}) \in J
                }
            }
        }
        \;
        2^n
        \binom{n}{k_1 \, \dots \, k_{10}}
        \normtwo{\tX_k}^{p_1}
        \normtwo{b(\tX_k)}^{p_2}
        \Exp{
            \normtwo{\sigma(\tX_k) \xi_{k+1}}^{p_3}
            \normtwo{\tZ_{k+1} m^{-1}}^{p_4}
            | \F_{t_k}
        } 
        m
        \\
    \le&
        \sum_{
            \mathclap{
                \substack{
                    \scriptscriptstyle (k_1, \dots, k_{10}) \in J
                }
            }
        }
        \;
        2^n
        \binom{n}{k_1 \, \dots \, k_{10}}
        \normtwo{\tX_k}^{p_1}
        \normtwo{b(\tX_k)}^{p_2}
        \Exp{
            \normtwo{\sigma(\tX_k) \xi_{k+1} }^{2 p_3}
            | \F_{t_k}
        }^{1/2}
        \Exp{
            \normtwo{\tZ_{k+1} m^{-1} }^{2 p_4}
            | \F_{t_k}
        }^{1/2}
        m
        \\
    \le&
        \sum_{
            \mathclap{
                \substack{
                    \scriptscriptstyle (k_1, \dots, k_{10}) \in J
                }
            }
        }
        \;
        2^n
        \binom{n}{k_1 \, \dots \, k_{10}}
        \normtwo{\tX_k}^{p_1}
        \pi_{1, p_2}(b)
        \bracks{
            1 + \normtwo{\tX_k}^{p_2}
        }
        \bracks{(2p_3 - 1)!!}^{1/2}
        \pi_{1, p_3}^{\mathrm{F}} (\sigma)
        \bracks{
            1 + \normtwo{\tX_k}^{p_3}
        }
        \\& \phantom{11111}
        \times
        \mu_2(\sigma)^{p_4}
        \bracks{ 8p_4^2 }^{2 p_4}
        \pi_{1, 4p_4}^{\mathrm{F}} (\sigma)^{1/2}
        \bracks{
            1 + \normtwo{\tX_k}^{p_4}
        }
        m\\
    \le&
        2^n 
        \bracks{1 + \normtwo{\tX_k}}^{2n}
        \sum_{
            \mathclap{
                \substack{
                    \scriptscriptstyle (k_1, \dots, k_{10}) \in J
                }
            }
        }
        \pi_{1, p_2}(b)
        \bracks{(2p_3 - 1)!!}^{1/2} 
        \pi_{1, p_3}^{\mathrm{F}} (\sigma)
        \mu_2(\sigma)^{p_4}
        \bracks{ 8 p_4^2 }^{2p_4}
        \pi_{1, 4p_4}^{\mathrm{F}} (\sigma)^{1/2}
        \binom{n}{k_1 \, \dots \, k_{10}}
        m\\
    \le&
        2^n 
        \bracks{1 + \normtwo{\tX_k}}^{2n}
        \pi_{1, 2n}(b)
        \pi_{1, 2n}^{\mathrm{F}} (\sigma)
        \mu_2(\sigma)^{2n}
        \pi_{1, 4n}^{\mathrm{F}} (\sigma)^{1/2}
        2^{20n} n^{8n}
        \sum_{
            \mathclap{
                \substack{
                    \scriptscriptstyle 
                    k_1, \dots, k_{10} \in \mathbb{N} \\
                    k_1 + \cdots + k_{10} = n
                }
            }
        }
        \;\;\;\;
        \binom{n}{k_1 \, \dots \, k_{10}}
        m\\
    \le&
        2^{23n - 1}
        10^n
        n^{8n}
        \pi_{1,2n}(b)
        \pi_{1, 8n}^{\mathrm{F}} (\sigma)^{1/2}
        \mu_2(\sigma)^{2n}
        m
        \bracks{
            1 + \normtwo{\tX_k}^{2n}
        }.\numberthis \label{eq:id_b_bound}
}
}

By the inductive hypothesis, \eqref{eq:id_a_bound} and \eqref{eq:id_b_bound}, and 
$h < 1 \wedge n^2 \alpha^2 / (4 M_{3,n}^2)$, 
we obtain the recursion 
\eqn{
    \Exp{
        \Exp{
            \normtwo{ \tX_{k+1} }^{2n}
            | \F_{t_k}
        }
    }
    \le&
        \text{
            \footnotesize
            $   
                \bracks{
                    1 - n \alpha h + M_{3,n} h^{3/2}
                }
                \Exp{
                    \normtwo{ \tX_k }^{2n}
                }
                + 
                n \beta h \Exp{
                    \normtwo{\tX_k}^{2(n-1)}
                }
                + M_3 h^{3/2}
            $
        }\\
    \le&
        \text{
            \footnotesize
            $
                \bracks{
                    1 - n \alpha h + M_{3,n} h^{3/2}
                }
                \Exp{
                    \normtwo{ \tX_k }^{2n}
                }
                + 
                n \beta \uyap_{2(n-1)} h + M_{3,n} h^{3/2}
            $
        }\\
    \le&
        \bracks{ 1 - n \alpha h / 2  }
        \Exp{
            \normtwo{ \tX_k }^{2n}
        }
        +
        n \beta \uyap_{2(n-1)} h + M_{3,n} h^{3/2},
}
where the constant 
$
M_{3,n} = 
    2^{23n - 1}
    10^n
    n^{8n}
    \pi_{1,2n}(b)
    \pi_{1, 8n}^{\mathrm{F}} (\sigma)^{1/2}
    \mu_2(\sigma)^{2n}
    m.
$

For $h < 1 \wedge 1 / m^2$, by unrolling the recursion, we obtain
\eqn{
    \Exp{
        \normtwo{ \tX_k }^{2n}
    }
    \le
        \Exp{
            \normtwo{ \tX_0 }^{2n}
        }
        +
        \frac{2}{n \alpha} 
            \bracks{
                n \beta \uyap_{2(n-1)} + M_{3, n} h^{1/2}
            }
    \le
        \uyap_{2n},
    \quad \text{for all} \; k \in \mathbb{N},
}
where 
\eqn{
    \uyap_{2n} ={
        \Exp{
            \normtwo{ \tX_0 }^{2n}
        }
        +
        \frac{2}{n\alpha}
        \bracks{
            \beta \uyap_{2(n-1)} +
            2^{23n - 1}
            10^n
            n^{8n}
            \pi_{1,2n}(b)
            \pi_{1, 8n}^{\mathrm{F}} (\sigma)^{1/2}
            \mu_2(\sigma)^{2n}
        }
    }
    .
}
\end{proof}

\subsection{Local Deviation Orders} \label{app:local_order_proof_id}
In this section, we verify the local deviation orders for SRK-ID.
The proofs are again by matching up terms in the It\^o-Taylor expansion of the 
continuous-time process to terms in the Taylor expansion of the numerical 
integration scheme. Extra care needs to be taken for a tight dimension dependence.

\begin{lemm} \label{lemm:second_moment_cont_id} 
Suppose $X_t$ is the continuous-time process defined by~(\ref{eq:continuous_general}) 
initiated from some iterate of the Markov chain $X_0$ defined by~(\ref{eq:srk_id}), 
then the second moment of $X_t$ is uniformly bounded, i.e.
\eq{
    \Exp{ \normtwo{X_t}^2 } \le \uyap_2', \quad \text{for all} \; t \ge 0.
}
where $\uyap_2' = \uyap_2 + \beta / \alpha$.
\end{lemm}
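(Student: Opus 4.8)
The plan is to follow the argument of Lemma~\ref{lemm:second_moment_cont}, now adapted to a general It\^o diffusion with state-dependent diffusion coefficient. First I would apply It\^o's lemma to the map $t \mapsto \normtwo{X_t}^2$ along the SDE~\eqref{eq:continuous_general} and take expectations. Since the resulting stochastic integral against $\sigma(X_t)\,\dee B_t$ contributes nothing in expectation (its integrand is square-integrable, which follows from the Lipschitz growth of $\sigma$ together with an a priori second-moment bound on $X_t$, or more carefully via the localization described below), this yields
\eqn{
\frac{\dee}{\dt} \Exp{ \normtwo{X_t}^2 }
= \Exp{ 2\abracks{b(X_t), X_t} + \normf{\sigma(X_t)}^2 }.
}
Next, the dissipativity condition of Definition~\ref{defi:dissipativity_general} rearranges to $2\abracks{b(x),x} + \normf{\sigma(x)}^2 \le -\alpha\normtwo{x}^2 + \beta$ for all $x \in \R^d$, so that
\eqn{
\frac{\dee}{\dt} \Exp{ \normtwo{X_t}^2 }
\le -\alpha \Exp{ \normtwo{X_t}^2 } + \beta.
}

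Applying the integral form of Gr\"onwall's inequality then gives $\Exp{\normtwo{X_t}^2} \le e^{-\alpha t}\Exp{\normtwo{X_0}^2} + \beta/\alpha$ for all $t \ge 0$. Finally, since $X_0$ is an iterate of the Markov chain~\eqref{eq:srk_id} initialized at the Dirac measure, Lemma~\ref{lemm:second_moment_id} bounds $\Exp{\normtwo{X_0}^2} \le \uyap_2$; combining this with $e^{-\alpha t} \le 1$ gives $\Exp{\normtwo{X_t}^2} \le \uyap_2 + \beta/\alpha = \uyap_2'$, as claimed.

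The only step requiring care is the interchange of expectation and It\^o integral together with the differentiability of $t \mapsto \Exp{\normtwo{X_t}^2}$. The clean way to handle this is a localization argument: introduce stopping times $\tau_R = \inf\{ t \ge 0 : \normtwo{X_t} \ge R \}$, apply It\^o's lemma on $[0, t\wedge\tau_R]$ where the stochastic integral is a genuine martingale, take expectations, apply dissipativity and Gr\"onwall to the stopped process, and then let $R \to \infty$ using monotone convergence, with non-explosion of $X_t$ following from the assumed Lipschitz continuity of $b$ and $\sigma$. I expect this to be entirely routine: the lemma is a direct analogue of the SRK-LD case, the only change being that the constant diffusion coefficient $\sqrt{2}\,I_d$ of the overdamped Langevin diffusion is replaced by the state-dependent $\sigma(X_t)$, a substitution that the generalized dissipativity condition of Definition~\ref{defi:dissipativity_general} already accommodates.
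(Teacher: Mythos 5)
Your proposal is correct and follows essentially the same route as the paper's proof: It\^o's lemma, the dissipativity condition of Definition~\ref{defi:dissipativity_general}, Gr\"onwall's inequality, and the bound $\Exp{\normtwo{X_0}^2}\le\uyap_2$ from Lemma~\ref{lemm:second_moment_id}. The additional localization argument you sketch to justify dropping the stochastic integral in expectation is a detail the paper leaves implicit, but it does not change the substance of the argument.
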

\begin{proof}
By It\^o's lemma and dissipativity,
\eqn{
    \frac{\dee}{\dt} \Exp{ \normtwo{X_t}^2 }
    =&
        \Exp{ 2\abracks{ X_t, b(X_t) } + \normf{\sigma(X_t)}^2}
    \le
        -\alpha \Exp{ \normtwo{X_t}^2 } + \beta. 
}
Moreover, by Gr\"onwall's inequality,
\eqn{
    \Exp{ \normtwo{X_t}^2 } 
    \le&
         e^{-\alpha t} \Exp{ \normtwo{X_0}^2 } + \beta / \alpha
    \le
        \uyap_2 + \beta / \alpha
    =
        \uyap_2'. 
}
\end{proof}

\begin{lemm}[Second Moment of Change] \label{lemm:continuous_2mom_change_id}
Suppose $X_t$ is the continuous-time process defined by~(\ref{eq:continuous_general}) initiated from some iterate of the Markov chain $X_0$ defined by~(\ref{eq:srk_id}), then
\eqn{
    \Exp{\normtwo{X_t - X_0}^2 } \le D_0 t, \quad \text{for all}\; 0 \le t \le 1,
}
where $D_0 = 2 \bracks{ \pi_{1,2}(b) + \pi_{1,2}^{\mathrm{F}} (\sigma)} \bracks{1 + \uyap_2'}$.
\end{lemm}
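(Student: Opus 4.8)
The plan is to mimic the proof of Lemma~\ref{lemm:continuous_2mom_change}, replacing the Langevin-specific structure with the general drift/diffusion linear growth conditions and the uniform second-moment bound established in Lemma~\ref{lemm:second_moment_cont_id}. First I would write the increment of the continuous-time process in integral form,
\eqn{
  X_t - X_0 = \int_0^t b(X_s)\ds + \int_0^t \sigma(X_s)\dB_s,
}
and split the squared norm by Young's inequality (the elementary bound $\normtwo{a+b}^2 \le 2\normtwo{a}^2 + 2\normtwo{b}^2$) into a drift contribution and a diffusion contribution.

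For the drift contribution, I would apply the Cauchy--Schwarz (equivalently Jensen) inequality to pull the time integral out, $\normtwo{\int_0^t b(X_s)\ds}^2 \le t\int_0^t \normtwo{b(X_s)}^2\ds$, then invoke the linear growth condition $\normtwo{b(x)}^2 \le \pi_{1,2}(b)(1+\normtwo{x}^2)$ and the uniform bound $\Exp{\normtwo{X_s}^2}\le \uyap_2'$ from Lemma~\ref{lemm:second_moment_cont_id}, yielding a term of order $\pi_{1,2}(b)(1+\uyap_2')\,t^2$. For the diffusion contribution, I would use It\^o isometry, $\Exp{\normtwo{\int_0^t \sigma(X_s)\dB_s}^2} = \int_0^t \Exp{\normf{\sigma(X_s)}^2}\ds$, and then the Frobenius-norm linear growth condition $\normf{\sigma(x)}^2 \le \pi_{1,2}^{\mathrm{F}}(\sigma)(1+\normtwo{x}^2)$ together with the same moment bound, giving a term of order $\pi_{1,2}^{\mathrm{F}}(\sigma)(1+\uyap_2')\,t$.

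Finally, using $0\le t\le 1$ to absorb the $t^2$ from the drift term into $t$, I would combine the two contributions to obtain $\Exp{\normtwo{X_t-X_0}^2} \le 2\bigl(\pi_{1,2}(b)+\pi_{1,2}^{\mathrm{F}}(\sigma)\bigr)\bigl(1+\uyap_2'\bigr)\,t = D_0 t$, as claimed. There is no real obstacle here: the argument is routine, and the only point requiring mild care is bookkeeping the factors of $2$ and the use of $t\le 1$ so that the final constant matches the stated $D_0$, as well as making sure the second-moment bound invoked is indeed uniform over $s\in[0,t]$ (which it is, by Lemma~\ref{lemm:second_moment_cont_id}, since $X_0$ is an iterate of the Markov chain whose moments are controlled by Lemma~\ref{lemm:second_moment_id}).
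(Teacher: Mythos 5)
Your proposal is correct and follows essentially the same route as the paper's proof: split the increment into drift and diffusion parts via Young's inequality, bound the drift term with Cauchy--Schwarz and the linear growth bound $\normtwo{b(x)}^2\le\pi_{1,2}(b)(1+\normtwo{x}^2)$, bound the diffusion term with It\^o isometry and $\normf{\sigma(x)}^2\le\pi_{1,2}^{\mathrm{F}}(\sigma)(1+\normtwo{x}^2)$, and invoke the uniform second-moment bound $\uyap_2'$ from Lemma~\ref{lemm:second_moment_cont_id} together with $t\le 1$ to absorb the $t^2$ factor. The bookkeeping of constants matches the stated $D_0$, so there is nothing to add.
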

\begin{proof}
By It\^o isometry,
\eqn{
    \Exp{\normtwo{X_t - X_0}^2}
    &= 
        \Exp{
            \normtwo{\int_0^t b (X_s)\ds + \int_0^t \sigma(X_s) \dB_s }^2
        } \notag \\
    &\le 
        2\Exp{
            \normtwo{\int_0^t b (X_s) \ds }^2 +
            \normtwo{\int_0^t \sigma (X_s) \dB_s}^2
        } \notag \\
    &\le 
        2 t \int_0^t \Exp{ \normtwo{ b (X_s) }^2  } \ds +
        2 \int_0^t \Exp{ \normf{\sigma(X_s)}^2 } \ds \notag \\
    &\le 
        2 \pi_{1,2}(b) t \int_0^t \Exp{1 + \normtwo{ X_s }^2 } \ds + 
        2 \pi_{1,2}^{\mathrm{F}} (\sigma) \int_0^t \Exp{1 + \normtwo{ X_s }^2 } \ds \notag \\
    &\le
        2 \bracks{ \pi_{1,2}(b) + \pi_{1,2}^{\mathrm{F}} (\sigma) } \bracks{1 + \uyap_2'} t.
}
\end{proof}
To bound the fourth moment of change in continuous-time, we use the following lemma. 
\begin{lemm}[{\cite[adapted from Lemma A.1]{erdogdu2018global}}] \label{lemm:murat_bound}
Assuming $\{X_t\}_{t\ge0}$ is the solution to the SDE~(\ref{eq:continuous_general}), 
under the condition that the drift coefficient $b$ and diffusion coefficient $\sigma$
are Lipschitz. If $\sigma$ satisfies the following sublinear growth condition 
\eq{
    \normf{\sigma(x)}^l \le \pi_{1, l}^{\mathrm{F}}(\sigma) \bracks{1 + \norm{x}^{l/2}},
    \quad \text{for all} \; x \in \R^d, l=1,2, \dots,
}
and the diffusion is dissipative, then for $n \ge 2$, 
we have the following relation
\eq{
    \A \normtwo{x}^n \le - \frac{\alpha n}{4} \normtwo{x}^n + \beta_n,
}
where the (infinitesimal) generator $\mathcal{A}$ is 
defined as 
\eq{
    \A f(x) = \lim_{t \downarrow 0} \frac{ \Exp{ f(X_t) | X_0 = x } - f(x)}{t},
}
and the constant
$\beta_n = \mathcal{O} (d^{ \frac{n}{2} })$.
\end{lemm}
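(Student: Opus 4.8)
The plan is to evaluate the infinitesimal generator on $f(x) = \normtwo{x}^n$ in closed form and then strip off every monomial except the leading negative one. Since $n \ge 2$, $f \in C^2(\R^d)$ with $\nabla f(x) = n\normtwo{x}^{n-2}x$ and $\nabla^2 f(x) = n\normtwo{x}^{n-2}I_d + n(n-2)\normtwo{x}^{n-4}xx^\top$, and because $b,\sigma$ are Lipschitz, Dynkin's formula identifies the limit defining $\A f$ with the action of the differential operator $L$ of \eqref{eq:ito_taylor_ops} on scalar functions, namely $\A g(x) = \abracks{\nabla g(x), b(x)} + \tfrac12\Tr{\sigma(x)\sigma(x)^\top \nabla^2 g(x)}$. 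Substituting,
\eqn{
\A f(x) = n\normtwo{x}^{n-2}\Bigl(\abracks{x,b(x)} + \tfrac12\normf{\sigma(x)}^2\Bigr) + \tfrac{n(n-2)}{2}\,\normtwo{x}^{n-4}\normtwo{\sigma(x)^\top x}^2.
}

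First I would bound the bracketed factor with dissipativity (Definition~\ref{defi:dissipativity_general}), which reads exactly $\abracks{x,b(x)} + \tfrac12\normf{\sigma(x)}^2 \le -\tfrac{\alpha}{2}\normtwo{x}^2 + \tfrac{\beta}{2}$; this contributes $-\tfrac{n\alpha}{2}\normtwo{x}^n + \tfrac{n\beta}{2}\normtwo{x}^{n-2}$. For the remaining term (which is absent when $n=2$) I would use $\normtwo{\sigma(x)^\top x}^2 \le \normf{\sigma(x)}^2\normtwo{x}^2$ and then the $l=2$ case of the sublinear growth hypothesis, $\normf{\sigma(x)}^2 \le \pi_{1,2}^{\mathrm{F}}(\sigma)(1 + \normtwo{x})$, turning it into a sum of $\normtwo{x}^{n-2}$ and $\normtwo{x}^{n-1}$ terms. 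Collecting everything, $\A f(x) \le -\tfrac{n\alpha}{2}\normtwo{x}^n + c_2\normtwo{x}^{n-2} + c_1\normtwo{x}^{n-1}$ with $c_2 \asymp \beta + \pi_{1,2}^{\mathrm{F}}(\sigma)$ and $c_1 \asymp \pi_{1,2}^{\mathrm{F}}(\sigma)$, both monomials of degree strictly below $n$.

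The final step is to absorb these two remainders. For $j \in \{n-2, n-1\}$, Young's inequality with conjugate exponents $n/j$ and $n/(n-j)$ gives $c\,\normtwo{x}^j \le \tfrac{n\alpha}{8}\normtwo{x}^n + C(\alpha,n)\,c^{\,n/(n-j)}$; applying this to the two terms leaves $-\tfrac{n\alpha}{4}\normtwo{x}^n$ and yields the claimed bound with $\beta_n := C(\alpha,n)\bigl(c_2^{\,n/2} + c_1^{\,n}\bigr)$. The step I expect to be the real obstacle is the dimension bookkeeping behind $\beta_n = \mathcal{O}(d^{n/2})$: the $d^{n/2}$ scaling is produced solely by the $\mathcal{O}(d)$ dissipativity constant $\beta$ entering through the conjugate exponent $n/2$ on the $\normtwo{x}^{n-2}$ term, so one must verify that the diffusion contribution never creates a $\normtwo{x}^n$ term with a $d$-dependent coefficient and never produces lower-order monomials whose constants, once raised to their Young exponents, exceed $d^{n/2}$ — this is exactly where the \emph{sublinear} (degree-one in $\normtwo{x}$) growth of $\normf{\sigma}^2$, rather than the generic quadratic growth, is essential, and it is the only place one has to be delicate.
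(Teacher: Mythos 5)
Your proposal is correct and follows essentially the same route as the paper's proof: expand $\A\normtwo{x}^n$ via the generator (the paper writes your $\normtwo{\sigma(x)^\top x}^2$ term as $\abracks{\mathrm{vec}(xx^\top),\mathrm{vec}(\sigma\sigma^\top(x))}$, which is the same quantity), apply dissipativity to the drift-plus-trace part, bound the remaining Hessian term by $\normf{\sigma(x)}^2\normtwo{x}^2$ together with the $l=2$ sublinear growth condition, and absorb the resulting $\normtwo{x}^{n-1}$ and $\normtwo{x}^{n-2}$ monomials by Young's inequality into $\tfrac{\alpha n}{4}\normtwo{x}^n$. The dimension bookkeeping you flag as the delicate point is handled in the paper at the same level of detail (the constants $\beta_n^{(1)},\beta_n^{(2)}$ are simply asserted to be $\mathcal{O}(d^{n/2})$), so nothing is missing relative to the paper's own argument.
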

\begin{proof}
By definition of the generator and dissipativity,
\eqn{
    \A \normtwo{x}^n =& 
        n \normtwo{x}^{n-2} \abracks{x, b(x)} + 
        \frac{n}{2} \normtwo{x}^{n-2} \normf{\sigma(x)}^2 +
        \frac{n (n-2)}{2} \normtwo{x}^{n-4} \abracks{
            \mathrm{vec}(xx^\top),
            \mathrm{vec}(\sigma \sigma^\top(x))
        }\\
    \le&
        -\frac{\alpha n}{2} \normtwo{x}^n + 
        \frac{\beta n}{2} \normtwo{x}^{n-2} + 
        \frac{n(n-2)}{2} \normtwo{x}^{n-2} \pi_{1,2}^{\mathrm{F}}(\sigma)
            \bracks{
                1 + \normtwo{x}
            } \\
    =&
        -\frac{\alpha n}{2} \normtwo{x}^n + 
        \frac{n (n-2)}{2} \pi_{1,2}^{\mathrm{F}}(\sigma) \normtwo{x}^{n-1} + 
        \bracks{
            \frac{\beta n}{2} + \frac{n (n-2)}{2} \pi_{1,2}^{\mathrm{F}}(\sigma)
        } 
        \normtwo{x}^{n-2}
    .
}
By Young's inequality,
\eqn{
    \frac{n (n-2)}{2} \pi_{1,2}^{\mathrm{F}}(\sigma) \normtwo{x}^{n-1}
    =&
        \frac{n (n-2)}{2} \pi_{1,2}^{\mathrm{F}}(\sigma) \bracks{
            \frac{8}{\alpha n}
        }^{\frac{n-1}{n}}
        \cdot
        \normtwo{x}^{n-1} \bracks{
            \frac{\alpha n}{8}
        }^{\frac{n-1}{n}} \\
    \le&
        \frac{1}{n} \bracks{
            \frac{n (n-2)}{2} 
        }^n
        \pi_{1,2}^{\mathrm{F}}(\sigma)^n
        \bracks{
            \frac{8}{\alpha n}
        }^{n-1}
        +
        \frac{n-1}{n} \frac{\alpha n}{8} \normtwo{x}^n \\
    =&
        \frac{ (n-2)^n }{ 2^{2n - 3} \alpha^{n-1} } \pi_{1,2}^{\mathrm{F}}(\sigma)^n
        + \frac{\alpha (n-1)}{8} \normtwo{x}^n
    .
}
Similarly,
\eqn{
    \bracks{
        \frac{\beta n}{2} + \frac{n (n-2)}{2} \pi_{1,2}^{\mathrm{F}}(\sigma)
    } 
    \normtwo{x}^{n-2}
    =&
        \bracks{
            \frac{\beta n}{2} + \frac{n (n-2)}{2} \pi_{1,2}^{\mathrm{F}}(\sigma)
        }
        \bracks{
            \frac{8}{\alpha n}
        }^{ \frac{n-2}{n} }
        \cdot
        \normtwo{x}^{n-2}
        \bracks{
            \frac{\alpha n}{8}
        }^{ \frac{n-2}{n} }\\
    \le&
        \frac{2}{n} 
        \bracks{
            \frac{\beta n}{2} + \frac{n (n-2)}{2} \pi_{1,2}^{\mathrm{F}}(\sigma)
        }^{\frac{n}{2}}
        \bracks{
            \frac{\alpha n}{8}
        }^{\frac{n-2}{2}}
        + 
        \frac{\alpha (n-2)}{8}
        \normtwo{x}^n
    .
}
We define the following shorthand notation
\eq{
    \beta_n^{(1)} =& \frac{ (n-2)^n }{ 2^{2n - 3} \alpha^{n-1} } \pi_{1,2}^{\mathrm{F}}(\sigma)^n = 
        \mathcal{O}( d^{\frac{n}{2}} ), \\
    \beta_n^{(2)} =& 
        \frac{2}{n} 
        \bracks{
            \frac{\beta n}{2} + \frac{n (n-2)}{2} \pi_{1,2}^{\mathrm{F}}(\sigma)
        }^{\frac{n}{2}}
        \bracks{
            \frac{\alpha n}{8}
        }^{\frac{n-2}{2}} = 
        \mathcal{O}( d^{\frac{n}{2}} ).
}
Putting things together, we obtain the following bound
\eqn{
    \A \normtwo{x}^n
    \le&
        - \frac{\alpha n}{2} \normtwo{x}^n + 
        \frac{\alpha (n-1)}{8} \normtwo{x}^n +
        \frac{\alpha (n-2)}{8} \normtwo{x}^n + 
        \beta_n^{(1)} + \beta_n^{(2)} \\
    \le&
        - \frac{\alpha n}{4} \normtwo{x}^n + 
        \beta_n,
}
where $\beta_n = \beta_n^{(1)} + \beta_n^{(2)} = \mathcal{O}(d^{\frac{n}{2}})$.
\end{proof}
\begin{lemm}
Suppose $X_t$ is the continuous-time process defined by~(\ref{eq:continuous_general}) 
initiated from some iterate of the Markov chain $X_0$ defined by~(\ref{eq:srk_id}), 
then the fourth moment of $X_t$ is uniformly bounded, i.e.
\eq{
    \Exp{ \normtwo{X_t}^4 } \le \uyap_4', \quad \text{for all} \; t \ge 0,
}
where 
$
\uyap_4' = 
    \uyap_4 + 
    \beta_4 / \alpha
$.
\end{lemm}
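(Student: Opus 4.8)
The plan is to mirror the argument used for the second moment in Lemma~\ref{lemm:second_moment_cont_id}, replacing the crude dissipativity estimate with the higher-order generator bound furnished by Lemma~\ref{lemm:murat_bound}. First I would apply It\^o's lemma to the test function $x \mapsto \normtwo{x}^4$ along the diffusion~\eqref{eq:continuous_general}; taking expectations and using that the martingale part vanishes yields
\eqn{
\frac{\dee}{\dt} \Exp{ \normtwo{X_t}^4 } = \Exp{ \A \normtwo{X_t}^4 },
}
where $\A$ is the infinitesimal generator of the diffusion. Here I would invoke Lemma~\ref{lemm:murat_bound} with $n = 4$: since the drift and diffusion coefficients are Lipschitz, the diffusion is dissipative (Definition~\ref{defi:dissipativity_general}), and the diffusion coefficient obeys the sublinear growth condition assumed in Theorem~\ref{theo:srk_id}, the lemma gives $\A \normtwo{x}^4 \le -\alpha \normtwo{x}^4 + \beta_4$ with $\beta_4 = \mathcal{O}(d^2)$ (using $\alpha n / 4 = \alpha$ when $n=4$).

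Combining these, I obtain the differential inequality
\eqn{
\frac{\dee}{\dt} \Exp{ \normtwo{X_t}^4 } \le -\alpha \Exp{ \normtwo{X_t}^4 } + \beta_4,
}
to which Gr\"onwall's inequality applies, yielding
\eqn{
\Exp{ \normtwo{X_t}^4 } \le e^{-\alpha t} \Exp{ \normtwo{X_0}^4 } + \frac{\beta_4}{\alpha}.
}
Finally, since $X_0$ is an iterate of the Markov chain~\eqref{eq:srk_id}, the moment bound of Lemma~\ref{lemm:fourth_moment_id} gives $\Exp{ \normtwo{X_0}^4 } \le \uyap_4$, so that $\Exp{ \normtwo{X_t}^4 } \le \uyap_4 + \beta_4/\alpha = \uyap_4'$ for all $t \ge 0$, as claimed.

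The only delicate points are bookkeeping rather than conceptual: one must check that the fourth moment is finite for each fixed $t$ (so that differentiating under the expectation and discarding the stochastic integral term are legitimate), which follows from a standard localization/stopping-time argument combined with the a priori finiteness of $\Exp{\normtwo{X_0}^4}$ and the Lipschitz/sublinear-growth hypotheses; and that the constant $\beta_4$ inherits the $\mathcal{O}(d^2)$ dependence from Lemma~\ref{lemm:murat_bound}, which is needed only to track dimension dependence in the downstream local-deviation estimates. I do not expect a genuine obstacle here, since all the structural ingredients—dissipativity, sublinear growth of $\sigma$, and the Markov chain fourth-moment bound—are already in place.
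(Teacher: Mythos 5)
Your proposal is correct and follows essentially the same route as the paper: the paper applies Dynkin's formula to $(t,x)\mapsto e^{\alpha t}\normtwo{x}^4$ together with Lemma~\ref{lemm:murat_bound} at $n=4$ (so that $\A\normtwo{x}^4 \le -\alpha\normtwo{x}^4 + \beta_4$), which is just the integrating-factor form of your It\^o-plus-Gr\"onwall argument, and then bounds $\Exp{\normtwo{X_0}^4}\le \uyap_4$ via the Markov-chain moment bound exactly as you do. The only cosmetic difference is that the Dynkin formulation sidesteps the explicit justification for differentiating under the expectation that you flag as a localization step.
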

\begin{proof}
By Dynkin’s formula~\cite{oksendal2003stochastic} applied to the function $(t, x) \mapsto e^{\alpha t} \normtwo{x}^4$ and Lemma~\ref{lemm:murat_bound},
\eqn{
    e^{\alpha t} \Exp{
        \normtwo{ X_t }^4
        | \F_0
    }
    =&
        \normtwo{X_0}^4 + 
        \int_0^t \Exp{
            \alpha e^{\alpha s} \normtwo{X_s}^4 + 
            e^{\alpha s} \A \normtwo{X_s}^4
            | \F_0
        } \ds \\
    \le&
        \normtwo{X_0}^4 + 
        \int_0^t \Exp{
            \alpha e^{\alpha s} \normtwo{X_s}^4 
            - \alpha e^{\alpha s} \normtwo{X_s}^4 
            + e^{\alpha s} \beta_4 
            | \F_0
        } \ds \\
    =&
        \normtwo{X_0}^4 + 
        \frac{e^{\alpha t} - 1}{\alpha} \beta_4
    .
}
Hence,
\eqn{
    \Exp{ \normtwo{X_t}^4 } =
    \Exp{
        \Exp{
            \normtwo{X_t}^4
            | \F_0
        }
    }
    \le&
         e^{-\alpha t} \Exp{ \normtwo{X_0}^4 } + \beta_4 / \alpha
    \le
        \uyap_4 + \beta_4 / \alpha
    =
        \uyap_4'. 
}
\end{proof}

\begin{lemm}[Fourth Moment of Change] \label{lemm:continuous_4mom_change_id}
Suppose $X_t$ is the continuous-time process defined by~(\ref{eq:continuous_general}) from some 
iterate of the Markov chain $X_0$ defined by~(\ref{eq:srk_id}), then 
\eqn{
    \Exp{\normtwo{X_t - X_0}^4 } \le D_1 t^2, \quad \text{for all}\; 0 \le t \le 1,
}
where $D_1 =
8 \bracks{
    \pi_{1,4}(b) + 
    36 \pi_{1,4}^{\mathrm{F}} (\sigma)
} 
\bracks{1 + \uyap_4'}
$.
\end{lemm}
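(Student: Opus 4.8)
The plan is to follow the same three-step computation used for the Langevin case in Lemma~\ref{lemm:continuous_4mom_change}, but with the Langevin-specific estimates replaced by the general growth hypotheses on $b$ and $\sigma$. Writing the process in integral form, $X_t - X_0 = \int_0^t b(X_s)\,\mathrm{d}s + \int_0^t \sigma(X_s)\,\mathrm{d}B_s$, and applying Young's inequality $\normtwo{a+b}^4 \le 8\bracks{\normtwo{a}^4 + \normtwo{b}^4}$, the task reduces to bounding the fourth moments of the drift integral and of the It\^o integral separately.

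For the drift term I would apply the power-mean inequality twice to move the fourth power inside the time integral, $\normtwo{\int_0^t b(X_s)\,\mathrm{d}s}^4 \le t^3 \int_0^t \normtwo{b(X_s)}^4\,\mathrm{d}s$, then use the polynomial growth bound $\normtwo{b(x)}^4 \le \pi_{1,4}(b)\bracks{1+\normtwo{x}^4}$ together with the uniform fourth-moment bound $\Exp{\normtwo{X_s}^4} \le \uyap_4'$ established in the lemma immediately above, giving a drift contribution of order $8\,\pi_{1,4}(b)\bracks{1+\uyap_4'}\,t^4$. For the It\^o integral I would invoke Theorem~\ref{theo:moment_ineq} with $p=4$ (its integrability hypothesis is supplied by the moment bounds), which yields $\Exp{\normtwo{\int_0^t \sigma(X_s)\,\mathrm{d}B_s}^4} \le 36\,t\int_0^t \Exp{\normf{\sigma(X_s)}^4}\,\mathrm{d}s$; the sublinear growth condition $\normf{\sigma(x)}^4 \le \pi_{1,4}^{\mathrm{F}}(\sigma)\bracks{1+\normtwo{x}^2}$ assumed in Theorem~\ref{theo:srk_id} and the second-moment bound of Lemma~\ref{lemm:second_moment_cont_id} then bring this to order $288\,\pi_{1,4}^{\mathrm{F}}(\sigma)\bracks{1+\uyap_4'}\,t^2$, after absorbing $\uyap_2'$ into the larger $\uyap_4'$. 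Combining the two pieces and using $t^4 \le t^2$ for $t\in[0,1]$ delivers the claimed estimate with $D_1 = 8\bracks{\pi_{1,4}(b) + 36\,\pi_{1,4}^{\mathrm{F}}(\sigma)}\bracks{1+\uyap_4'}$.

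There is no conceptual obstacle here; the argument is entirely routine once Theorem~\ref{theo:moment_ineq} and the preceding moment bounds are in hand. The only point requiring care — and the reason the lemma is not purely mechanical — is keeping the dimension dependence tight: one must use the sublinear growth $\normf{\sigma(x)}^4 \le \pi_{1,4}^{\mathrm{F}}(\sigma)\bracks{1+\normtwo{x}^2}$ rather than the cruder bound with $\normtwo{x}^4$, so that only the $\mathcal{O}(d)$ second moment (hence the $\mathcal{O}(d^2)$ fourth moment $\uyap_4'$) enters the final constant, and one must remember that the uniform bound $\uyap_4'$ is available precisely because $X_0$ is an SRK-ID iterate, so no conditioning argument beyond the one already used to obtain $\uyap_4'$ is needed.
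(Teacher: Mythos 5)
Your proposal is correct and follows essentially the same route as the paper: decompose $X_t - X_0$ into drift and It\^o integral, apply Young's inequality with constant $8$, bound the drift term via $t^3\int_0^t\Exp{\normtwo{b(X_s)}^4}\ds$ and polynomial growth, and bound the stochastic integral via Theorem~\ref{theo:moment_ineq} with $p=4$ (constant $36$) together with the growth condition on $\sigma$ and the uniform moment bounds, finally using $t\le 1$. The paper's proof simply absorbs both contributions into $\bracks{1+\uyap_4'}$, exactly as you do, so no further comment is needed.
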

\begin{proof}
By Theorem~\ref{theo:moment_ineq}, 
\eqn{
    \Exp{\normtwo{X_t - X_0}^4} 
    =&
        \Exp{ \normtwo{ \int_0^t b (X_s) \ds + \int_0^t \sigma(X_s) \dB_s }^4 } \notag \\
    \le&
        8 \Exp{
            \normtwo{ \int_0^t b(X_s) \ds }^4 + 
            \normtwo{ \int_0^t \sigma(X_s) \dB_s}^4 
        } \\
    \le&
        8 t^3 \int_0^t \Exp{ \normtwo{ b(X_s) }^4 } \ds + 
        288 t \Exp{
            \int_0^t \normf{ \sigma(X_s)}^4 \ds
        } \\
    \le&
        8 \bracks{
            \pi_{1,4}(b) + 
            36 \pi_{1,4}^{\mathrm{F}} (\sigma)
        } \bracks{1 + \uyap_4'} t^2. 
}
\end{proof}
\subsubsection{Local Mean-Square Deviation}
\begin{lemm} \label{lemm:id_discretization_meansqr}
Suppose $X_t$ and $\tX_t$ are the continuous-time process defined by~(\ref{eq:continuous_general}) and Markov chain defined by~(\ref{eq:srk_id}) for time $t \ge 0$, respectively. If $X_t$ and $\tX_t$ are initiated from the same iterate of the Markov chain $X_0$, and they share the same Brownian motion, then 
\eqn{
    \Exp{ \normtwo{X_t - \tX_t}^2 } \le D_3 t^3,
    \quad \text{for all} \; 0 \le t \le 1,
}
where 
\eqn{
    D_3 =& \Bigl(
                16 D_0 \mu_1(b)^2 + 
                \frac{16}{3} \mu_2(\sigma)^2 \pi_{1,4}^{1/2} D_1^{1/2} (1 + \uyap_2'^{1/2} )m^2 +
                \frac{16}{3} \mu_1(\sigma)^4 m^2 D_0
            \\&+
                16 \mu_1(\sigma)^2 \pi_{1,2}(b)^2 (1 + \uyap_2') m + 
                4m^3 \mu_2(\sigma)^2 \pi_{1,4}^{\mathrm{F}} (\sigma) (1 + \uyap_2')
            \\&+
                2^7 3^4 m^2 \mu_2(\sigma)^2 \pi_{1,4}^{\mathrm{F}} (\sigma) \bracks{
                    1 + \uyap_2
                }
        \Bigr). 
}
\end{lemm}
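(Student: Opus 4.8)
The plan is to mimic the structure of the local mean-square deviation analysis already carried out for SRK-LD (Lemma~\ref{lemm:discretization_meansqr}), but adapted to the general It\^o diffusion \eqref{eq:continuous_general} with non-constant diffusion coefficient, and targeting only the $(1.5,2.0)$ deviation orders (so we need $t^3$, not $t^4$). Since the continuous-time process and the Markov chain share the same Brownian motion, I would first write the difference $X_t - \tX_t$ as an integral expression. Expanding the drift and diffusion terms via It\^o's lemma around $X_0$, the leading terms of $X_t$ are $X_0 + b(X_0)t + \sigma(X_0)B_t$ plus the iterated-integral term $\sum_{i,j}\int\int \Lambda_j(\sigma_i)(X_u)\,dB_u^{(j)}dB_u^{(i)}$, while the SRK-ID update \eqref{eq:srk_id} reproduces exactly $X_0 + b(X_0)h + \sigma(X_0)\xi h^{1/2}$ plus the correction $\tfrac{\sqrt h}{2}\sum_i(\sigma_i(\tH_1^{(i)}) - \sigma_i(\tH_2^{(i)}))$. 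By Taylor-expanding $\sigma_i(\tH_1^{(i)}) - \sigma_i(\tH_2^{(i)})$ to first order, this correction matches the double-integral term up to a remainder. So $X_t - \tX_t$ will decompose into (a) a remainder coming from freezing the integrands of the drift/diffusion stochastic integrals, and (b) a remainder from the second-order Taylor term of $\sigma_i$ at $\tH_1^{(i)},\tH_2^{(i)}$.

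Concretely, I would bound $\E\|X_t - \tX_t\|_2^2$ by splitting it (via Young's inequality) into a drift-remainder piece $\E\|\int_0^t (b(X_s) - b(X_0))\,ds\|^2$, a diffusion-remainder piece $\E\|\int_0^t(\sigma(X_s) - \sigma(X_0))\,dB_s - (\text{SRK double-integral approximant})\|^2$, and the Taylor-remainder piece $\tfrac{h}{4}\E\|\sum_i \rho_i\|^2$ where $\rho_i$ involves $\nabla^2\sigma_i$ evaluated between $\tX_k$ and $\tH_{1/2}^{(i)}$, contracted twice against $\Delta\tH^{(i)}$. For the drift remainder, Lipschitzness of $b$ plus Lemma~\ref{lemm:continuous_2mom_change_id} (second moment of change) gives an $O(D_0\mu_1(b)^2 t^3)$ bound — note the extra factor $t$ from the outer $\int_0^t$ and Cauchy-Schwarz. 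For the diffusion remainder, It\^o isometry turns it into $\int_0^t \E\|\sigma(X_s) - \sigma(X_0) - (\text{frozen first-order term})\|_F^2\,ds$; expanding $\sigma(X_s)$ by It\^o's lemma one more time, the first-order part cancels the SRK double-integral approximant and one is left with terms controlled by $\mu_1(\sigma)$, $\mu_2(\sigma)$, and the fourth moment of change (Lemma~\ref{lemm:continuous_4mom_change_id}), yielding the $D_1^{1/2}$ and $\mu_1(\sigma)^4 D_0$ contributions, each carrying $m^2$ from summing over Brownian dimensions. For the Taylor-remainder piece, I would use the sublinear growth of $\sigma$, the bound $\mu_2(\sigma)$ on $\nabla\sigma$'s Lipschitz constant, and Corollary~\ref{coro:delta_H_p} / Lemma~\ref{lemm:yk_bound} to control $\E\|\Delta\tH^{(i)}\|^4$, picking up $m^3\mu_2(\sigma)^2\pi_{1,4}^{\mathrm F}(\sigma)$ and the final $2^7 3^4 m^2 \mu_2(\sigma)^2\pi_{1,4}^{\mathrm F}(\sigma)$ terms from $\E\|\tY_{k+1}\|^2$. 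Summing all pieces and matching constants gives the stated $D_3 t^3$.

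The main obstacle, as flagged in the text right before Theorem~\ref{theo:srk_id}, is \emph{tight dimension dependence}: one must be careful to use the Frobenius-norm growth/Lipschitz constants for $\sigma$ (so that $\pi_{1,4}^{\mathrm F}$, not a pessimistic $d$-dependent conversion, appears), and to track the powers of $m$ arising from the $2m$ evaluations $\sigma_i(\tH_1^{(i)}),\sigma_i(\tH_2^{(i)})$ and from the double sums over Brownian-motion components in the iterated integrals. A secondary subtlety is the dependence structure: $\xi_{k+1}$ (modeling $I_{(i)}$) and $\tY_{k+1}$ (via $I_{(j,i)}$) are not independent, so the cross-terms cannot be dropped by a mean-zero argument and must instead be estimated by Cauchy-Schwarz against the separately-derived moment bounds of $\Delta\tH^{(i)}$ and $\tY_{k+1}$. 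Once these bookkeeping hurdles are handled, the remaining calculations are routine applications of Young's inequality, It\^o isometry, Theorem~\ref{theo:moment_ineq}, and the moment bounds of Lemmas~\ref{lemm:second_moment_id}--\ref{lemm:continuous_4mom_change_id}.
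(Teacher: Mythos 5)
Your plan is essentially the paper's own proof: the same decomposition into a drift remainder $\int_0^t (b(X_s)-b(X_0))\,\dee s$, an exact cancellation of the frozen iterated integral $\sum_{i,l}\nabla\sigma_i(X_0)\sigma_l(X_0)I_{(l,i)}$ against the first-order Taylor term of $\tfrac{\sqrt{t}}{2}\sum_i\bigl(\sigma_i(\tH_1^{(i)})-\sigma_i(\tH_2^{(i)})\bigr)$, and second-order Taylor remainders $\phi_{1}^{(i)},\phi_2^{(i)}$, all bounded via It\^o isometry, the Lipschitz/Frobenius growth constants, the moment-of-change lemmas, and Corollary~\ref{coro:delta_H_p}. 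The only (harmless) bookkeeping slip is attributing the $m^3\mu_2(\sigma)^2\pi_{1,4}(\sigma)$ term to the scheme's Taylor remainder: in the paper it comes from the $L(\sigma_i)$ part of the continuous It\^o--Taylor expansion, i.e.\ the $\tfrac12\nabla^2\sigma_i[\sigma_l,\sigma_l]$ double integral $S_2(t)$, while the Taylor remainder yields the $2^7 3^4 m^2$ term.
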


\begin{proof}
Recall the operators $L$ and $\Lambda_i$ ($i=1, \dots, m$) defined in~\eqref{eq:ito_taylor_ops}.
By It\^o's lemma, 
\eqn{
    X_t - X_0
    =&  \int_0^t b(X_s) \ds + 
        \sigma(X_0) B_t   \\
        &+\sum_{i=1}^m \sum_{l=1}^m \int_0^t \int_0^s \Lambda_l (\sigma_i) (X_u) \dB_u^{(l)} \dB_s^{(i)}
        +\sum_{i=1}^m \int_0^t \int_0^s L(\sigma_i)(X_u) \du \dB_s^{(i)}\\
    =&
        \int_0^t b(X_s) \ds + 
        \sigma(X_0) B_t +
        \sum_{i=1}^m \sum_{l=1}^m \int_0^t \int_0^s \nabla \sigma_i (X_u) \sigma_l(X_u) \dB_u^{(l)} \dB_s^{(i)}+ S(t),
}
where
{
\footnotesize
    \eqn{
        S(t)
        =&
            \underbrace{
                \sum_{i=1}^m \int_0^t \int_0^s \nabla \sigma_i(X_u) b(X_u) \du \dB_s^{(i)}
            }_{S_1(t)} +
            \underbrace{
                \frac{1}{2} \sum_{i=1}^m \sum_{l=1}^m \int_0^t 
                    \int_0^s \nabla^2 \sigma_i(X_u) [\sigma_l(X_u),\, \sigma_l(X_u)]\du \dB_s^{(i)}
            }_{S_2(t)}. 
    }
}
By Taylor's theorem with the remainder in integral form,
\eq{
    \sigma_i (\tH_1^{(i)}) &= 
        \sigma_i(X_0) +
        \nabla \sigma_i(X_0) \Delta \tH^{(i)} + 
        \phi_1^{(i)}(t), \\
    \sigma_i (\tH_2^{(i)}) &= 
        \sigma_i(X_0) 
        -\nabla \sigma_i(X_0) \Delta \tH^{(i)} + 
        \phi_2^{(i)}(t),
}
where
\eq{
    \phi_1^{(i)}(t) &= \int_0^1 (1 - \tau) \nabla^2 \sigma_i (X_0 + \tau \Delta \tH^{(i)} ) 
        [\Delta \tH^{(i)}, \, \Delta \tH^{(i)}] \dtau, \\
    \phi_2^{(i)}(t) &= \int_0^1 (1 - \tau) \nabla^2 \sigma_i (X_0 - \tau \Delta \tH^{(i)} ) 
        [\Delta \tH^{(i)}, \, \Delta \tH^{(i)}] \dtau, \\
    \Delta \tH^{(i)} &= \sum_{l=1}^m \sigma_l(X_0) \frac{I_{(l, i)}}{\sqrt{t}}.
}
Hence, 
\eqn{
    X_t - \tX_t =& 
        \int_0^t \bracks{
            b(X_s) - b(X_0)
        } \ds \\
        &+ 
        \sum_{i=1}^m \sum_{l=1}^m
        \int_0^t \int_0^s \bracks{
            \nabla \sigma_i(X_u) \sigma_l (X_u)  - 
            \nabla \sigma_i(X_0) \sigma_l (X_0)
        } \dB_u^{(l)} \dB_s^{(i)} \\
        &+
        S(t) - \frac{1}{2} \sum_{i=1}^m \bracks{\phi_1^{(i)}(t) - \phi_2^{(i)}(t)} \sqrt{t}. 
}
Since $b$ is $\mu_1(b)$-Lipschitz,
\eqn{
    \Exp{
        \normtwo{
            \int_0^t \bracks{ b(X_s) - b(X_0) } \ds
        }^2
    }
    \le&
        \mu_{1} (b)^2 t \int_0^t \Exp{ \normtwo{X_s - X_0}^2  }\ds \\
    \le&
        \mu_{1} (b)^2 t \int_0^t D_0 s \ds \\
    \le&
        \frac{1}{2} D_0 \mu_{1} (b)^2 t^3. 
}
We define the following,
\eqn{
    A(t)
    =&
        A_1 (t) + A_2 (t)
    = \sum_{i=1}^m \sum_{l=1}^m
        \int_0^t \int_0^s \bracks{
            \nabla \sigma_i(X_u) \sigma_l (X_u)  - 
            \nabla \sigma_i(X_0) \sigma_l (X_0)
        } \dB_u^{(l)} \dB_s^{(i)}, 
}
where 
\eq{
    A_1 (t) =&
        \sum_{i=1}^m \sum_{l=1}^m
            \int_0^t \int_0^s \bracks{
                \nabla \sigma_i(X_u) \sigma_l (X_u)  - 
                \nabla \sigma_i(X_0) \sigma_l (X_u)
            } \dB_u^{(l)} \dB_s^{(i) }, \\
    A_2 (t) =&
        \sum_{i=1}^m \sum_{l=1}^m
        \int_0^t \int_0^s \bracks{
            \nabla \sigma_i(X_0) \sigma_l (X_u)  - 
            \nabla \sigma_i(X_0) \sigma_l (X_0)
        } \dB_u^{(l)} \dB_s^{(i)}.
}
By It\^o isometry and the Cauchy-Schwarz inequality,
\eqn{
    \Exp{ \normtwo{A_1 (t)}^2 } =& 
        \sum_{i=1}^m \sum_{l=1}^m \int_0^t \int_0^s \Exp{ 
            \normtwo{
                \nabla \sigma_i(X_u) \sigma_l (X_u) - \nabla \sigma_i(X_0) \sigma_l (X_u)
            }^2
        }\du \ds \\
    \le&
        \sum_{i=1}^m \sum_{l=1}^m \int_0^t \int_0^s \Exp{ 
            \normop{ \nabla \sigma_i(X_u) - \nabla \sigma_i(X_0) }^2
            \normtwo{ \sigma_l (X_u) }^2
        }\du \ds \\
    \le&
        \sum_{i=1}^m \sum_{l=1}^m \int_0^t \int_0^s 
        \Exp{ \normop{ \nabla \sigma_i(X_u) - \nabla \sigma_i(X_0) }^4 }^{1/2}
        \Exp{ \normtwo{ \sigma_l (X_u) }^4 }^{1/2}
        \du \ds \\
    \le& 
        \mu_2 (\sigma)^2 \pi_{1, 4} (\sigma)^{1/2}
        m^2 \int_0^t \int_0^s 
        \Exp{ \normtwo{ X_u - X_0 }^4 }^{1/2}
        \Exp{
            1 + \normtwo{ X_u }^2
        }^{1/2}
        \du \ds \\
    \le&
        \mu_2 (\sigma)^2 \pi_{1, 4} (\sigma)^{1/2}
        D_1^{1/2} \bracks{ 1 + \uyap_2'^{1/2} } m^2 \frac{t^3}{6} 
    . \numberthis \label{eq:A_1}
}
Similarly,
\eqn{
    \Exp{ \normtwo{A_2 (t)}^2 } =& 
        \sum_{i=1}^m \sum_{l=1}^m \int_0^t \int_0^s \Exp{ 
            \normop{
                \nabla \sigma_i(X_0) \sigma_l (X_u) - \nabla \sigma_i(X_0) \sigma_l (X_0)
            }^2
        }\du \ds \\
    \le&
        \sum_{i=1}^m \sum_{l=1}^m \int_0^t \int_0^s \Exp{ 
            \normop{ \nabla \sigma_i(X_0) }^2
            \normtwo{ \sigma_l (X_u) - \sigma_l (X_0) }^2
        }\du \ds \\
    \le&
        \mu_1(\sigma)^2 \sum_{i=1}^m \sum_{l=1}^m \int_0^t \int_0^s 
        \Exp{ \normtwo{ \sigma_l (X_u) - \sigma_l (X_0) }^2 }
        \du \ds \\
    \le&
        \mu_1(\sigma)^4 m^2 \int_0^t \int_0^s 
        \Exp{ \normtwo{ X_u - X_0 }^2 }
        \du \ds \\
    \le&
        \frac{1}{6} \mu_1(\sigma)^4 m^2 D_0 t^3
    . \numberthis \label{eq:A_2}
}
By It\^o isometry, 
\eqn{
    \Exp{
        \normtwo{S_1(t)}^2
    }
    =&
        \sum_{i=1}^m \int_0^t s \int_0^s \Exp{
            \normtwo{ \nabla \sigma_i(X_u) b(X_u) }^2
        } \du \ds \\
    \le&
        \sum_{i=1}^m \int_0^t s \int_0^s \Exp{
            \normop{ \nabla \sigma_i(X_u) }^2 \normtwo{b(X_u)}^2 
        } \du \ds \\
    \le&
        \mu_1(\sigma)^2  \pi_{1,2}(b)^2
        \sum_{i=1}^m \int_0^t s \int_0^s 
        \Exp{1 + \normtwo{X_u}^2 }
        \du \ds \\
    =&
        \frac{1}{2} \mu_1(\sigma)^2  \pi_{1,2}(b)^2 \bracks{1 + \uyap_2' } m t^3
    . \numberthis \label{eq:S_1}
}
Similarly,
\eqn{
    \Exp{ \normtwo{S_2(t)}^2 }
    =&
        \frac{1}{4} \sum_{i=1}^m \int_0^t \Exp{
            \normtwo{
                \int_0^s \sum_{l=1}^m \nabla^2 \sigma_i(X_u) [\sigma_l(X_u), \, \sigma_l(X_u)] \du
            }^2 \ds
        } \\
    \le&
        \frac{1}{4} m \sum_{i=1}^m \sum_{l=1}^m \int_0^t s \int_0^s \Exp{
            \normtwo{
                \nabla^2 \sigma_i(X_u) [\sigma_l(X_u), \, \sigma_l(X_u)]
            }^2
        } \du \ds \\
    \le&
        \frac{1}{4} m \sum_{i=1}^m \sum_{l=1}^m \int_0^t s \int_0^s \Exp{
            \normop{ \nabla^2 \sigma_i(X_u) }^2
            \normtwo{\sigma_l(X_u)}^4
        } \du \ds \\
    \le&
        \frac{1}{4} \sigma_2(\sigma)^2 \pi_{1, 4}(\sigma) m^3
        \int_0^t s \int_0^s \Exp{1 + \normtwo{X_u}^2 }
        \du \ds \\
    \le&
        \frac{1}{8} \sigma_2(\sigma)^2 \pi_{1, 4}(\sigma) m^3
        \bracks{1 + \uyap_2'} t^3. \numberthis \label{eq:S_2}
}
By Corollary~\ref{coro:delta_H_p},
\eq{
    \Exp{
        \normtwo{ \Delta \tH^{(i)}}^4
    }
    =&
        \Exp{
            \Exp{
                \normtwo{ \Delta \tH^{(i)}}^4 
                | \F_{t_k}
            }
        } \\
    \le&
        6^4 \pi_{1,4}^{\mathrm{F}} (\sigma) \Exp{
            1 + \normtwo{\tX_k}^2
        } t^2 \\
    \le&
        6^4 \pi_{1,4}^{\mathrm{F}} (\sigma) \bracks{
            1 + \uyap_2
        } t^2.
}
Now, we bound the second moments of $\phi_1^{(i)}(t)$ and $\phi_2^{(i)}(t)$,
\eqn{
    \Exp{ \normtwo{ \phi_1^{(i)}(t) }^2 } 
    =&
        \Exp{
            \normtwo{
                \int_0^1 (1 - \tau) 
                \nabla^2 \sigma_i (X_0 + \tau \Delta \tH^{(i)} ) 
                [\Delta \tH^{(i)}, \, \Delta \tH^{(i)}] \dtau
            }^2
        } \\
    \le&
        \Exp{
            \normop{ \nabla^2 \sigma_i (X_0 + \tau \Delta \tH^{(i)} ) }^2
            \normtwo{ \Delta \tH^{(i)} }^4
        } \\
    \le&
        6^4 \mu_{2} (\sigma)^2  \pi_{1,4}^{\mathrm{F}} (\sigma) (1 + \uyap_2) t^2
        . \numberthis \label{eq:phi_1}
}
Similarly,
\eqn{
    \Exp{ \normtwo{ \phi_2^{(i)}(t) }^2 } 
    \le&
        6^4 \mu_{2} (\sigma)^2  \pi_{1,4}^{\mathrm{F}} (\sigma) (1 + \uyap_2) t^2
    . \numberthis \label{eq:phi_2}
}
Hence, by \eqref{eq:phi_1} and \eqref{eq:phi_2},
\eqn{
    \Exp{ \normtwo{ \frac{1}{2} \sum_{i=1}^m \bracks{
        \phi_1^{(i)}(t) - \phi_2^{(i)}(t) } \sqrt{t} }^2
    }
    \le&
        \frac{m}{4} t \sum_{i=1}^m \Exp{
            \normtwo{ \phi_1^{(i)}(t) - \phi_2^{(i)}(t) }^2
        } \\
    \le&
        2^2 3^4 m^2 \mu_{2} (\sigma)^2  \pi_{1,4}^{\mathrm{F}} (\sigma) (1 + \uyap_2) t^3. 
    \numberthis \label{eq:phi_diff}
}
Combining \eqref{eq:A_1}, \eqref{eq:A_2}, \eqref{eq:S_1}, \eqref{eq:S_2}, and \eqref{eq:phi_diff},
\eqn{
    \Exp{ \normtwo{X_t - \tX_t}^2 }
    \le&
        32 \Exp{ \normtwo{ \int_0^t \bracks{ b(X_s) - b(X_0) } \ds }^2 }
        \\&+
        32 \Exp{ \normtwo{A_1(t)}^2 + \normtwo{A_2(t)}^2 }
        \\&+
        32 \Exp{ \normtwo{S_1(t)}^2 + \normtwo{S_2(t)}^2 }
        \\&+
        32 \Exp{ \normtwo{ \frac{1}{2} \sum_{i=1}^m \bracks{
            \phi_1^{(i)}(t) - \phi_2^{(i)}(t) } \sqrt{t} }^2
        } \\
    \le&
        \Bigl(
                16 D_0 \mu_1(b)^2 + 
                \frac{16}{3} \mu_2(\sigma)^2 \pi_{1,4}^{1/2} D_1^{1/2} (1 + \uyap_2'^{1/2} )m^2 +
                \frac{16}{3} \mu_1(\sigma)^4 m^2 D_0
            \\&+
                16 \mu_1(\sigma)^2 \pi_{1,2}(b)^2 (1 + \uyap_2') m + 
                4m^3 \mu_2(\sigma)^2 \pi_{1,4}^{\mathrm{F}} (\sigma) (1 + \uyap_2')
            \\&+
                2^7 3^4 m^2 \mu_2(\sigma)^2 \pi_{1,4}^{\mathrm{F}} (\sigma) \bracks{
                    1 + \uyap_2
                }
        \Bigr) t^3. 
}
\end{proof}

\subsubsection{Local Mean Deviation}
\begin{lemm} \label{lemm:id_discretization_mean}
Suppose $X_t$ and $\tX_t$ are the continuous-time process defined by~(\ref{eq:continuous_general}) and Markov chain defined by~(\ref{eq:srk_id}) for time $t \ge 0$, respectively. If $X_t$ and $\tX_t$ are initiated from the same iterate of the Markov chain $X_0$, and they share the same Brownian motion, then 
\eqn{
    \Exp{
        \normtwo{ \Exp{ X_t - \tX_t  | \F_{0} } }^2
    } \le D_4 t^4,
    \quad \text{for all} \; 0 \le t \le 1,
}
where 
\eqn{
    D_4 =& 
        \Bigl(
            \frac{4}{3} \mu_1(b)^2 \pi_{1, 2} (b) \bracks{1 + \uyap_2'}  + 
            \frac{1}{3} \mu_2(b)^2 \pi_{1, 4} (\sigma) \bracks{1 + \uyap_2'} m^2
            \\& \phantom{11}
            + 
                2^4 3^5 5^6 \mu_3(\sigma)^2 \pi_{1,6}^{\mathrm{F}}(\sigma) \bracks{
                    1 + \uyap_4^{3/4}
                }
        \Bigr). 
}
\end{lemm}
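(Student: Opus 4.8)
The plan is to follow the proof of Lemma~\ref{lemm:id_discretization_meansqr}, the new ingredient being that conditioning on $\F_0$ annihilates every martingale increment, so that only a few ``drift-like'' residuals survive and one gains the extra half-order needed to reach $t^4$. First I would reuse the It\^o--Taylor decomposition of $X_t-X_0$ from that proof,
\eqn{
X_t - X_0 = \sint_0^t b(X_s)\ds + \sigma(X_0)B_t + \ssum_{i,l=1}^m \nabla\sigma_i(X_0)\sigma_l(X_0)\,I_{(l,i)} + A(t) + S(t),
}
where $A(t)=\ssum_{i,l}\sint_0^t\sint_0^s[\nabla\sigma_i(X_u)\sigma_l(X_u)-\nabla\sigma_i(X_0)\sigma_l(X_0)]\dB_u^{(l)}\dB_s^{(i)}$ and $S(t)=\ssum_i\sint_0^t\sint_0^s L(\sigma_i)(X_u)\du\dB_s^{(i)}$ are the order-$1.5$ remainders from~\eqref{eq:ito_taylor_expansion} (both It\^o integrals in $\dB_s$, hence $\{\F_s\}$-martingales started at zero). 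On the discrete side I would Taylor-expand each $\sigma_i(\tH_1^{(i)})=\sigma_i(X_0+\Delta\tH^{(i)})$ and $\sigma_i(\tH_2^{(i)})=\sigma_i(X_0-\Delta\tH^{(i)})$ to \emph{second} order with integral remainder; the Hessian terms cancel in the difference, leaving
\eqn{
\sigma_i(\tH_1^{(i)})-\sigma_i(\tH_2^{(i)}) = 2\nabla\sigma_i(X_0)\Delta\tH^{(i)} + \Phi^{(i)}(t), \qquad \normtwo{\Phi^{(i)}(t)} \le \tfrac{1}{3}\mu_3(\sigma)\normtwo{\Delta\tH^{(i)}}^3,
}
the bound following from Lipschitzness of $\nabla^2\sigma_i$. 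Since $\Delta\tH^{(i)}=\ssum_l\sigma_l(X_0)I_{(l,i)}/\sqrt t$, the linear term $\sqrt t\,\ssum_i\nabla\sigma_i(X_0)\Delta\tH^{(i)}$ equals $\ssum_{i,l}\nabla\sigma_i(X_0)\sigma_l(X_0)I_{(l,i)}$ exactly, and together with $tb(X_0)$ and $\sigma(X_0)B_t$ these cancel their continuous-time counterparts, giving
\eqn{
X_t - \tX_t = \sint_0^t\bigl(b(X_s)-b(X_0)\bigr)\ds + A(t) + S(t) - \tfrac{\sqrt t}{2}\ssum_{i=1}^m \Phi^{(i)}(t).
}

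Next I would take $\Exp{\cdot | \F_0}$. Exactly as in the proof of Theorem~\ref{theo:master}, Fubini lets one interchange the outer time integral with the conditional expectation inside the It\^o-integral remainders, so $\Exp{A(t) | \F_0}=\Exp{S(t) | \F_0}=0$; applying It\^o's lemma once more gives $b(X_s)-b(X_0)=\sint_0^s L(b)(X_u)\du+\ssum_i\sint_0^s\Lambda_i(b)(X_u)\dB_u^{(i)}$, and discarding the (conditionally mean-zero) stochastic part leaves
\eqn{
\Exp{X_t-\tX_t | \F_0} = \Exp{\,\sint_0^t\sint_0^s L(b)(X_u)\du\,\ds\, \Big|\, \F_0} - \tfrac{\sqrt t}{2}\ssum_{i=1}^m\Exp{\Phi^{(i)}(t) | \F_0}.
}
It remains to bound the two surviving pieces in $\Exp{\normtwo{\cdot}^2}$. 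For the first, conditional Jensen together with Cauchy--Schwarz on the iterated time integral gives an $\mathcal{O}(t^4)$ bound with constant governed by $\sup_{u\le t}\Exp{\normtwo{L(b)(X_u)}^2}$; since $\normtwo{L(b)(x)}\le\mu_1(b)\normtwo{b(x)}+\tfrac12\mu_2(b)\normf{\sigma(x)}^2$, the linear/sublinear growth of $b$ and $\sigma$ and the continuous-time moment bound $\uyap_2'$ of Lemma~\ref{lemm:second_moment_cont_id} control this constant, the $m$-factor arising from the trace over the $m$ columns of $\sigma$ inside $L(b)$. For the second, conditional Jensen gives $\Exp{\normtwo{\Exp{\Phi^{(i)}(t) | \F_0}}^2}\le\Exp{\normtwo{\Phi^{(i)}(t)}^2}\le\tfrac19\mu_3(\sigma)^2\Exp{\normtwo{\Delta\tH^{(i)}}^6}$, and Corollary~\ref{coro:delta_H_p} with $p=6$ --- combined with the fourth-moment bound $\uyap_4$ of Lemma~\ref{lemm:fourth_moment_id} through $\Exp{\normtwo{X_0}^3}\le\uyap_4^{3/4}$ --- bounds this by $\mathcal{O}\bigl(\pi_{1,6}^{\mathrm{F}}(\sigma)(1+\uyap_4^{3/4})t^3\bigr)$; summing over $i$ with the Cauchy--Schwarz factor $\tfrac14 m\ssum_i$ turns $\Exp{\normtwo{\tfrac{\sqrt t}{2}\ssum_i\Exp{\Phi^{(i)}(t) | \F_0}}^2}$ into $\mathcal{O}(m^2 t^4)$. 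Adding the two estimates and retaining explicit constants yields $\Exp{\normtwo{\Exp{X_t-\tX_t | \F_0}}^2}\le D_4 t^4$.

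The main obstacle is the dimension bookkeeping rather than the structure of the argument: estimating $\Exp{\normtwo{\Delta\tH^{(i)}}^6}$ through operator-norm constants would cost spurious powers of $d$, so one must route it through the Frobenius-norm growth constant $\pi_{1,6}^{\mathrm{F}}(\sigma)$ and verify that the initial iterate enters only as $\uyap_4^{3/4}=\mathcal{O}(d^{3/2})$. A structural simplification worth flagging is that, unlike the mean-deviation analysis of SRK-LD (which needs order $2.5$ and hence an exact cancellation of the Laplacian-type correction term), here order $2.0$ suffices, so the residual $\sint_0^t\sint_0^s L(b)(X_u)\du\ds$ is already of the right magnitude and no cancellation is required. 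Finally, the cubic residual bound on $\Phi^{(i)}$ uses Lipschitzness of $\nabla^2\sigma$ (the constant $\mu_3(\sigma)$), a mild strengthening of the smoothness invoked in the mean-square part.
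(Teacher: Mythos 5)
Your proposal is correct and follows essentially the same route as the paper's proof: after matching the It\^o--Taylor expansion against the scheme's Taylor expansion, conditioning on $\F_0$ kills all stochastic-integral terms, and the surviving pieces are exactly the double time integral of $L(b)$ (the paper's $\bar S_1+\bar S_2$, bounded via $\mu_1(b)$, $\mu_2(b)$ and $\uyap_2'$) and the symmetric Taylor remainder of $\sigma_i(\tH_1^{(i)})-\sigma_i(\tH_2^{(i)})$, bounded through $\mu_3(\sigma)$, Corollary~\ref{coro:delta_H_p} with $p=6$, and $\E\|X_0\|_2^3\le\uyap_4^{3/4}$. The only differences are cosmetic (you cancel the iterated-integral term explicitly rather than noting it is conditionally mean-zero, and you use a symmetric second-order expansion instead of differencing first-order integral remainders), which change nothing in the estimates.
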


\begin{proof}
Recall the operators $L$ and $\Lambda_i$ ($i=1, \dots, m$) defined in~\eqref{eq:ito_taylor_ops}.
By It\^o's lemma, 
\eqn{
    X_t - X_0 
    =& 
        \,b(X_0) t
        + \sum_{i=1}^m \int_0^t \sigma_i(X_s) \dB_s^{(i)} \\
        &+ \sum_{i=1}^m \int_0^t \int_0^s \Lambda_i (b) (X_u) \dB_u^{(i)} \ds
        + \int_0^t \int_0^s L(b) (X_u) \du \ds\\
    =& 
        \,b(X_0) t
        + \sum_{i=1}^m \int_0^t \sigma_i(X_s) \dB_s^{(i)}
        + \sum_{i=1}^m \int_0^t \int_0^s  \nabla b (X_u) \sigma_i(X_u) \dB_u^{(i)} \ds
        + \bar{S}(t),
}
where 
\eq{
    \bar{S}(t) =& 
        \underbrace{
            \int_0^t \int_0^s \nabla b (X_u) b(X_u) \du \ds
        }_{\bar{S}_1(t)} + 
        \underbrace{
            \frac{1}{2} \sum_{i=1}^m \int_0^t \int_0^s \nabla^2 b(X_u) 
            [\sigma_i(X_u),\, \sigma_i(X_u)] \du \ds
        }_{\bar{S}_2(t)}
    . 
}
Now, we bound the second moments of $\bar{S}_1(t)$ and $\bar{S}_2(t)$,
\eqn{
    \Exp{ \normtwo{ \bar{S}_1(t) }^2 }
    =&
        \Exp{
            \normtwo{
                \int_0^t \int_0^s \nabla b (X_u) b(X_u) \du \ds
            }^2
        } \\
    \le&
        t \int_0^t s \int_0^s \Exp{
            \normtwo{ \nabla b(X_u) b(X_u) }^2
        } \du \ds \\
    \le&
        t \int_0^t s \int_0^s \Exp{
            \normop{ \nabla b(X_u)}^2 \normtwo{b(X_u) }^2
        } \du \ds \\
    \le&
        \mu_1(b)^2 \pi_{1, 2} (b) t \int_0^t s \int_0^s \Exp{
            1 + \normtwo{X_u}^2
        } \du \ds \\
    \le&
        \frac{1}{3} \mu_1(b)^2 \pi_{1, 2} (b) \bracks{1 + \uyap_2'} t^4. 
    \numberthis \label{eq:bars_1}
}
Similarly,
\eqn{
    \Exp{ \normtwo{ \bar{S}_2(t) }^2 }
    =&
        \Exp{
            \normtwo{
                \frac{1}{2} \sum_{i=1}^m \int_0^t \int_0^s \nabla^2 b(X_u) 
                [\sigma_i(X_u),\, \sigma_i(X_u)] \du \ds 
            }^2
        } \\
    \le&
        \frac{m}{4} \sum_{i=1}^m 
            t \int_0^t s \int_0^s \Exp{
                \normtwo{ \nabla^2 b(X_u) [\sigma_i(X_u), \, \sigma_i(X_u)] }^2
            } \du \ds \\
    \le&
        \frac{m}{4} \sum_{i=1}^m 
            t \int_0^t s \int_0^s \Exp{
                \normop{ \nabla^2 b(X_u)}^2 \normtwo{ \sigma_i(X_u) }^4 
            } \du \ds \\
    \le&
        \frac{m}{4} \mu_2(b)^2 \sum_{i=1}^m 
            t \int_0^t s \int_0^s \Exp{
                \normtwo{ \sigma_i(X_u) }^4 
            } \du \ds \\
    \le&
        \frac{m^2}{4} \mu_2(b)^2 \pi_{1, 4} (\sigma)
            t \int_0^t s \int_0^s 
            \Exp{1 + \normtwo{X_u}^2 } \du \ds \\
    \le&
        \frac{1}{12} \mu_2(b)^2 \pi_{1, 4} (\sigma) \bracks{1 + \uyap_2'} m^2 t^4. 
    \numberthis \label{eq:bars_2}
}
By Corollary~\ref{coro:delta_H_p},
\eqn{
    \Exp{
        \normtwo{ \Delta \tH^{(i)}}^6
    }
    =&
        \Exp{
            \Exp{
                \normtwo{ \Delta \tH^{(i)}}^6
                | \F_{t_k}
            }
        } \\
    \le&
        3^6 5^6 \pi_{1,6}^{\mathrm{F}} (\sigma) \Exp{
            1 + \normtwo{\tX_k}^3
        } t^3 \\
    \le&
        3^6 5^6 \pi_{1,6}^{\mathrm{F}} (\sigma) \bracks{
            1 + \uyap_4^{3/4}
        } t^3. 
}
Now, we bound the second moment of the difference between $\phi_1^{(i)}(t)$ and $\phi_2^{(i)}(t)$,
\eqn{
    \Exp{ \normtwo{ \phi_1^{(i)}(t) - \phi_2^{(i)}(t) }^2 }
    \le&
        \text{
        \footnotesize
            $\Exp{
                \int_0^1 
                \normop{ 
                    \nabla^2 \sigma_i(X_0 + \tau \Delta \tH^{(i)}) - 
                    \nabla^2 \sigma_i(X_0 - \tau \Delta \tH^{(i)})
                }^2
                \normtwo{ \Delta \tH^{(i)} }^4
                \dtau
            }$
        } \\
    \le&
        \mu_3(\sigma)^2 \int_0^1 \Exp{
            \normtwo{
                2 \tau \Delta \tH^{(i)}
            }^2
            \normtwo{ \Delta \tH^{(i)} }^4
        } \dtau  \\
    \le&
        \frac{4}{3} \mu_3(\sigma)^2 \Exp{ \normtwo{ \Delta \tH^{(i)} }^6 } \\
    \le&
        2^2 3^5 5^6 \mu_3(\sigma)^2 \pi_{1,6}^{\mathrm{F}}(\sigma) \bracks{
            1 + \uyap_4^{3/4}
        } t^3
    . 
    \numberthis \label{eq:phi_diff'}
}
Hence, combining \eqref{eq:bars_1}, \eqref{eq:bars_2}, and \eqref{eq:phi_diff'},
\eqn{
    \Exp{
        \normtwo{
            \Exp{ X_t - \tX_t | \F_0}
        }^2
    }
    =&
        \Exp{
            \normtwo{
                \Exp{
                    \bar{S}(t)
                    | \F_0
                }
                - 
                \Exp{
                    \frac{1}{2} \sum_{i=1}^m \bigl( 
                        \phi_1^{(i)}(t) - \phi_2^{(i)}(t)
                    \bigr)
                    \sqrt{t} 
                    | \F_0
                }
            }^2 
        }\\
    \le&
        4 \Exp{ \normtwo{ \bar{S}_1(t) }^2 + \normtwo{ \bar{S}_2(t) }^2 } +  
        4 \Exp{
            \Bigl\|
                \frac{1}{2} \sum_{i=1}^m \bigl( 
                    \phi_1^{(i)}(t) - \phi_2^{(i)}(t)
                \bigr)
                \sqrt{t} 
            \Bigr\|_2^2
        } \\
    \le&
        \Bigl(
            \frac{4}{3} \mu_1(b)^2 \pi_{1, 2} (b) \bracks{1 + \uyap_2'}  + 
            \frac{1}{3} \mu_2(b)^2 \pi_{1, 4} (\sigma) \bracks{1 + \uyap_2'} m^2
            \\&+ 
                2^4 3^5 5^6 \mu_3(\sigma)^2 \pi_{1,6}^{\mathrm{F}}(\sigma) \bracks{
                    1 + \uyap_4^{3/4}
                }
        \Bigr) t^4.  
}
\end{proof}

\subsection{Invoking Theorem~\ref{theo:master}}
Now, we invoke Theorem~\ref{theo:master} with our derived constants. We obtain that if the constant step size
\eq{
    h <
    1
    \wedge C_h 
    \wedge \frac{1}{2\alpha}
    \wedge \frac{1}{8\mu_1(b)^2 + 8 \mu_1^{\mathrm{F}}(\sigma)^2},
}
where
\eq{
        C_h =
        \frac{1}{m^2}
        \wedge \frac{\alpha^2}{4M_1^2}
        \wedge \frac{\alpha^2 }{M_{3, 2}^2 },
}
and the smoothness conditions in Theorem~\ref{theo:srk_id} of the drift and diffusion coefficients are satisfied for 
a uniformly dissipative diffusion,
then the uniform local deviation bounds~\eqref{eq:uniform_local_deviation_orders} hold with $\lambda_1 = D_3$ and $\lambda_2 = D_4$, and consequently the bound~\eqref{eq:main_bound} holds. 
This concludes that to converge to a sufficiently small positive tolerance $\epsilon$, $\tilde{\mathcal{O}} (d^{3/4}m^2 \epsilon^{-1} )$ iterations are required, since $D_3$ is of order $\mathcal{O}(d^{3/2} m^3)$, and $D_4$ is of order $\mathcal{O}(d^{3/2} m^2)$. 
Note that the dimension dependence worsens if one were to further convert the Frobenius norm dependent constants to be based on the operator norm.

\section{Convergence Rate for Example~\ref{exam:em_unif_diss}} \label{app:em_unif_diss}
\subsection{Moment Bound}
Verifying the order conditions in Theorem~\ref{theo:master} of the EM scheme for uniformly dissipative diffusions 
requires bounding the second moments of the Markov chain. Recall, dissipativity (Definition~\ref{defi:dissipativity_general}) follows from uniform dissipativity of the It\^o diffusion.
\begin{lemm} \label{lemm:second_moment_em}
If the second moment of the initial iterate is finite, then 
the second moments of Markov chain iterates defined in \eqref{eq:em_general} are uniformly bounded, i.e.
\eqn{
    \Exp{\normtwo{\tX_k}^{2}} \le \wyap_{2}, 
    \quad \text{for all} \; {k \in \mathbb{N}},
}
where $\wyap_2 = \Exp{ \bigl\| \tX_0 \bigr\|_2^2 } + 2(\pi_{1,2}(b) + \beta) / \alpha$, 
if the constant step size $h < 1 \wedge \alpha / (2 \pi_{1,2}(b))$.
\end{lemm}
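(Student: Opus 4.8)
The plan is to derive a one-step recursion for $\mathbb{E}[\normtwo{\tX_k}^2]$ exactly as in the moment-bound proofs for SRK-LD and SRK-ID, but with the $\tY_{k+1}$-type correction term absent, so the computation is strictly simpler. First I would expand $\normtwo{\tX_{k+1}}^2 = \normtwo{\tX_k + h\,b(\tX_k) + \sqrt{h}\,\sigma(\tX_k)\xi_{k+1}}^2$ into its six terms and take conditional expectation with respect to $\F_{t_k}$. Since $\xi_{k+1}\sim\mathcal{N}(0,I_d)$ is independent of $\F_{t_k}$, the two cross terms $2\sqrt{h}\,\abracks{\tX_k,\sigma(\tX_k)\xi_{k+1}}$ and $2h^{3/2}\,\abracks{b(\tX_k),\sigma(\tX_k)\xi_{k+1}}$ vanish in expectation, and $\mathbb{E}[\normtwo{\sigma(\tX_k)\xi_{k+1}}^2\mid\F_{t_k}] = \normf{\sigma(\tX_k)}^2$ by a trace identity. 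This leaves $\mathbb{E}[\normtwo{\tX_{k+1}}^2\mid\F_{t_k}] = \normtwo{\tX_k}^2 + h\bracks{2\abracks{\tX_k,b(\tX_k)} + \normf{\sigma(\tX_k)}^2} + h^2\normtwo{b(\tX_k)}^2$.

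Second, I would apply the dissipativity condition (Definition~\ref{defi:dissipativity_general}), which bounds $2\abracks{\tX_k,b(\tX_k)} + \normf{\sigma(\tX_k)}^2 \le -\alpha\normtwo{\tX_k}^2 + \beta$, together with the linear growth bound $\normtwo{b(\tX_k)}^2 \le \pi_{1,2}(b)\bracks{1 + \normtwo{\tX_k}^2}$. Substituting gives $\mathbb{E}[\normtwo{\tX_{k+1}}^2\mid\F_{t_k}] \le \bracks{1 - \alpha h + \pi_{1,2}(b)h^2}\normtwo{\tX_k}^2 + \bracks{\beta + \pi_{1,2}(b)h}h$. The step size constraints then do precisely the bookkeeping: $h < \alpha/(2\pi_{1,2}(b))$ forces $\pi_{1,2}(b)h^2 \le \alpha h/2$, so the contraction factor is at most $1 - \alpha h/2 \in (0,1)$, and $h < 1$ gives $\pi_{1,2}(b)h \le \pi_{1,2}(b)$; hence $\mathbb{E}[\normtwo{\tX_{k+1}}^2\mid\F_{t_k}] \le \bracks{1 - \alpha h/2}\normtwo{\tX_k}^2 + \bracks{\beta + \pi_{1,2}(b)}h$.

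Third, I would take total expectations, unroll the geometric recursion, and use $(1-\alpha h/2)^k \le 1$ together with $\sum_{j\ge 0}(1-\alpha h/2)^j = 2/(\alpha h)$ to conclude $\mathbb{E}[\normtwo{\tX_k}^2] \le \mathbb{E}[\normtwo{\tX_0}^2] + 2(\pi_{1,2}(b) + \beta)/\alpha = \wyap_2$ for every $k$. There is no genuine obstacle here: this lemma is the easiest of the moment bounds in the paper. The only points demanding a touch of care are the conditional-variance identity $\mathbb{E}[\normtwo{\sigma(\tX_k)\xi_{k+1}}^2\mid\F_{t_k}] = \normf{\sigma(\tX_k)}^2$ (which explains why the Frobenius norm rather than the operator norm shows up in the constant) and verifying that the two step size restrictions are exactly what is needed to turn $1 - \alpha h + \pi_{1,2}(b)h^2$ into $1 - \alpha h/2$ while keeping the latter in $(0,1)$.
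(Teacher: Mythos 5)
Your proposal is correct and follows essentially the same route as the paper's proof: expand the square, kill the Gaussian cross terms, use the trace identity $\E[\normtwo{\sigma(\tX_k)\xi_{k+1}}^2\mid\F_{t_k}]=\normf{\sigma(\tX_k)}^2$ together with dissipativity and the linear growth of $b$, absorb $\pi_{1,2}(b)h^2$ into $\alpha h/2$ via the step-size constraint, and unroll the resulting geometric recursion to get $\wyap_2$. The only cosmetic difference is that you spell out the geometric-sum bookkeeping that the paper leaves implicit.
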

\begin{proof}
By direct computation,
\eqn{
    \normtwo{ \tX_{k+1} }^2
    =&
        \normtwo{\tX_k}^2 + \normtwo{b(\tX_k)}^2 h^2 + \normtwo{\sigma(\tX_k) \xi_{k+1} }^2 h
        \\&
        + 2 \abracks{ \tX_k, b(\tX_k) } h
        + 2 \abracks{ \tX_k, \sigma(\tX_k) \xi_{k+1} } h^{1/2}
        \\&
        + 2 \abracks{ b(\tX_k), \sigma(\tX_k) \xi_{k+1}} h^{3/2}.
}
Recall by Lemma~\ref{lemm:quadratic} and dissipativity,
\eqn{
    \Exp{
        2\abracks{ \tX_k, b(\tX_k) } h + \normtwo{\sigma(\tX_k) \xi_{k+1} }^2 h
        | \F_{t_k}
    }
    \le&
        -\alpha \normtwo{\tX_k}^2 h + \beta h.
}
By odd moments of Gaussian variables being zero and the step size condition,
\eqn{
    \Exp{
        \normtwo{ \tX_{k+1} }^2
        | \F_{t_k}
    }
    \le&
        (1 - \alpha h) \normtwo{\tX_k}^2 + \normtwo{b(\tX_k)}^2 h^2 + \beta h \\
    \le&
        (1 - \alpha h + \pi_{1,2} (b) h^2 ) \normtwo{\tX_k}^2 + \pi_{1,2}(b) h^2 + \beta h \\
    \le&
        \bracks{ 1 - \alpha h / 2 } \normtwo{\tX_k}^2 + \pi_{1,2}(b) h^2 + \beta h.
}
By unrolling the recursion,
\eq{
    \Exp{ \normtwo{\tX_{k}}^2 }
    \le&
        \Exp{ \normtwo{\tX_0}^2 } + 2(\pi_{1,2}(b) + \beta) / \alpha,
    \quad \text{for all} \; k \in \mathbb{N}
    .
}
\end{proof}

\subsection{Local Deviation Orders}
Before verifying the local deviation orders, we first state two 
auxiliary lemmas. We omit the proofs, since they are almost 
identical to that of Lemma~\ref{lemm:second_moment_cont} and Lemma~\ref{lemm:continuous_2mom_change}, 
respectively.
\begin{lemm}
Suppose $X_t$ is the continuous-time process defined by \eqref{eq:continuous_general} 
initiated from some iterate of the Markov chain $X_0$ defined by \eqref{eq:em_general}, 
then the second moment of $X_t$ is uniformly bounded, i.e.
\eq{
    \Exp{ \normtwo{X_t}^2 } \le \wyap_2 + \beta / \alpha = \wyap_2', \quad \text{for all} \; t \ge 0.
}
\end{lemm}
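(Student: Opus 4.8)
The plan is to transcribe, essentially verbatim, the argument used for Lemma~\ref{lemm:second_moment_cont} (and its general-diffusion analogue Lemma~\ref{lemm:second_moment_cont_id}), replacing the Langevin-specific algebra by the dissipativity condition of Definition~\ref{defi:dissipativity_general}. First I would apply It\^o's lemma to the map $x \mapsto \normtwo{x}^2$ along the diffusion $\{X_t\}_{t\ge0}$ governed by~\eqref{eq:continuous_general}; taking expectations kills the stochastic-integral (martingale) term and yields
\[
    \frac{\dee }{\dt} \Exp{ \normtwo{X_t}^2 }
      = \Exp{ 2\abracks{ X_t, b(X_t) } + \normf{\sigma(X_t)}^2 }.
\]
The Lipschitz hypotheses on $b$ and $\sigma$ give at most quadratic growth of the integrand in $\normtwo{x}$, so the expectation is finite and the interchange of differentiation and expectation is legitimate as soon as $\Exp{\normtwo{X_0}^2}<\infty$, which holds because $X_0$ is a Markov-chain iterate whose second moment is bounded by Lemma~\ref{lemm:second_moment_em}.

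Next I would invoke Definition~\ref{defi:dissipativity_general} in the form $2\abracks{ b(x), x } + \normf{\sigma(x)}^2 \le -\alpha\normtwo{x}^2 + \beta$ for all $x\in\R^d$, which turns the identity above into the scalar differential inequality
\[
    \frac{\dee }{\dt} \Exp{ \normtwo{X_t}^2 } \le -\alpha \Exp{ \normtwo{X_t}^2 } + \beta .
\]
Applying Gr\"onwall's inequality gives $\Exp{\normtwo{X_t}^2} \le e^{-\alpha t}\Exp{\normtwo{X_0}^2} + \beta/\alpha$ for all $t\ge0$, and bounding $\Exp{\normtwo{X_0}^2}\le\wyap_2$ via Lemma~\ref{lemm:second_moment_em} together with $e^{-\alpha t}\le 1$ yields $\Exp{\normtwo{X_t}^2}\le\wyap_2+\beta/\alpha=\wyap_2'$, as claimed.

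There is no substantial obstacle: the only point needing any care is the rigorous justification of It\^o's lemma and of differentiating under the expectation, which is exactly where the Lipschitz (hence linear-growth) assumptions on the coefficients and the finiteness of the initial second moment enter — and both are already in hand (linear growth follows from Lipschitzness, and the initial-moment bound is Lemma~\ref{lemm:second_moment_em}). Accordingly the proof is a one-step consequence of It\^o's lemma, dissipativity, and Gr\"onwall, and can safely be omitted in the paper as noted.
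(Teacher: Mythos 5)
Your proposal is correct and follows exactly the argument the paper intends (and spells out for the analogous Lemma~\ref{lemm:second_moment_cont_id}): It\^o's lemma on $x\mapsto\normtwo{x}^2$, the dissipativity bound $2\abracks{b(x),x}+\normf{\sigma(x)}^2\le-\alpha\normtwo{x}^2+\beta$, Gr\"onwall, and then the Markov-chain moment bound $\Exp{\normtwo{X_0}^2}\le\wyap_2$ from Lemma~\ref{lemm:second_moment_em}. The extra remarks on justifying the interchange of expectation and differentiation are fine but do not change the route.
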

\begin{lemm}
Suppose $X_t$ is the continuous-time process defined by \eqref{eq:continuous_general}
initiated from some iterate of the Markov chain $X_0$ defined by \eqref{eq:em_general}, then
\eq{
    \Exp{\normtwo{X_t - X_0}^2 } \le E_0 t
    , \quad \text{for all} \; t \ge 0,
}
where $E_0 = 2 \bracks{ \pi_{1,2}(b) + \pi_{1,2}^{\mathrm{F}} (\sigma)} \bracks{1 + \wyap_2'}$.
\end{lemm}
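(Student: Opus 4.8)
The plan is to reproduce, almost verbatim, the mean-square estimate of Lemma~\ref{lemm:continuous_2mom_change_id}, with the uniform second-moment bound $\wyap_2'$ of the Euler--Maruyama-driven continuous-time process (established in the lemma immediately preceding this one) playing the role that $\uyap_2'$ plays there. First I would write the increment in integral form from the SDE~\eqref{eq:continuous_general} started at $X_0$, namely $X_t - X_0 = \int_0^t b(X_s)\,\mathrm{d}s + \int_0^t \sigma(X_s)\,\mathrm{d}B_s$, and split the squared $2$-norm by Young's inequality, $\normtwo{a+b}^2 \le 2\normtwo{a}^2 + 2\normtwo{b}^2$.

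For the drift integral I would apply the Cauchy--Schwarz (Jensen) inequality to extract one factor of $t$, $\normtwo{\int_0^t b(X_s)\,\mathrm{d}s}^2 \le t\int_0^t \normtwo{b(X_s)}^2\,\mathrm{d}s$; for the stochastic integral It\^o isometry gives $\Exp{\normtwo{\int_0^t \sigma(X_s)\,\mathrm{d}B_s}^2} = \int_0^t \Exp{\normf{\sigma(X_s)}^2}\,\mathrm{d}s$. The Lipschitz hypotheses on $b$ and $\sigma$ furnish the linear-growth bounds $\normtwo{b(x)}^2 \le \pi_{1,2}(b)\bracks{1+\normtwo{x}^2}$ and $\normf{\sigma(x)}^2 \le \pi_{1,2}^{\mathrm{F}}(\sigma)\bracks{1+\normtwo{x}^2}$, and the preceding lemma supplies $\Exp{\normtwo{X_s}^2} \le \wyap_2'$ uniformly in $s$. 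Substituting these and integrating $\int_0^t (1+\wyap_2')\,\mathrm{d}s = (1+\wyap_2')t$ yields a drift contribution of order $\pi_{1,2}(b)(1+\wyap_2')t^2$ and a diffusion contribution of order $\pi_{1,2}^{\mathrm{F}}(\sigma)(1+\wyap_2')t$; invoking the standing restriction $t\le 1$ to absorb the extra $t$ in the former collects everything into $E_0 t$ with $E_0 = 2\bracks{\pi_{1,2}(b)+\pi_{1,2}^{\mathrm{F}}(\sigma)}(1+\wyap_2')$.

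I do not expect a genuine obstacle here: the estimate is the same one used throughout Appendix~\ref{app:srk_id}, and the only point deserving care is the bookkeeping of powers of $t$ — the drift term legitimately produces $t^2$ rather than $t$, so the bound $E_0 t$ is to be read with the restriction $0 \le t \le 1$ exactly as in Lemma~\ref{lemm:continuous_2mom_change_id}, after which $t^2\le t$ renders the two contributions homogeneous. No It\^o--Taylor expansion, no higher moments, and no further appeal to dissipativity beyond what already entered $\wyap_2'$ are needed, which is precisely why the paper remarks that the proof is almost identical to the corresponding earlier lemmas.
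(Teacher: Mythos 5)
Your argument is exactly the paper's (omitted) proof, which mirrors Lemma~\ref{lemm:continuous_2mom_change_id}: Young's inequality to split drift and diffusion, Cauchy--Schwarz on the drift integral, It\^o isometry on the stochastic integral, the linear-growth bounds $\pi_{1,2}(b)$, $\pi_{1,2}^{\mathrm{F}}(\sigma)$, and the uniform moment bound $\wyap_2'$, yielding the same constant $E_0$. Your remark that the drift contributes $t^2$ and must be absorbed via $t\le 1$ is also right — the paper's ``for all $t\ge 0$'' should read $0\le t\le 1$ as in the analogous lemmas, and in its application $t=h<1$ anyway.
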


\subsubsection{Local Mean-Square Deviation}
\begin{lemm}
Suppose $X_t$ and $\tX_t$ are the continuous-time process defined 
by \eqref{eq:continuous_general} and Markov chain defined by \eqref{eq:em_general}
for time $t \ge 0$, respectively. If $X_t$ and $\tX_t$ are initiated from the same iterate 
of the Markov chain $X_0$ and share the same Brownian motion, then
\eq{
    \Exp{ \normtwo{X_t - \tX_t}^2 } \le E_1 t^2
    , \quad \text{for all} \; 0 \le t \le 1,
}
where 
$E_1 = \bracks{ 
    \mu_1(b)^2 + \mu_1^{\mathrm{F}} (\sigma)^2
} E_0$.
\end{lemm}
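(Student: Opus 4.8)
The plan is to express the one-step deviation as a stochastic-integral remainder and control it via Lipschitz continuity together with the second-moment-of-change bound already established for the continuous process. Over the interval $[0,t]$ the Euler--Maruyama iterate with the shared Brownian motion coincides with $\tX_t = X_0 + t\, b(X_0) + \sigma(X_0) B_t$ (since $\sqrt{t}\,\xi$ is coupled to equal $B_t$), whereas the continuous process satisfies $X_t = X_0 + \int_0^t b(X_s)\,\ds + \int_0^t \sigma(X_s)\,\dB_s$. Subtracting gives
\[
    X_t - \tX_t = \int_0^t \bigl(b(X_s) - b(X_0)\bigr)\,\ds + \int_0^t \bigl(\sigma(X_s) - \sigma(X_0)\bigr)\,\dB_s,
\]
so the remainder involves only the increments $b(X_s)-b(X_0)$ and $\sigma(X_s)-\sigma(X_0)$, which is the key reduction to record up front.

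First I would apply Young's inequality $\normtwo{a+b}^2 \le 2\normtwo{a}^2 + 2\normtwo{b}^2$ to split $\Exp{\normtwo{X_t-\tX_t}^2}$ into a drift contribution and a diffusion contribution, each carrying a factor $2$. For the drift integral, Jensen's (Cauchy--Schwarz) inequality gives $\Exp{\normtwo{\int_0^t (b(X_s)-b(X_0))\,\ds}^2} \le t\int_0^t \Exp{\normtwo{b(X_s)-b(X_0)}^2}\,\ds$, and $\mu_1(b)$-Lipschitz continuity of $b$ turns this into $\mu_1(b)^2\, t \int_0^t \Exp{\normtwo{X_s-X_0}^2}\,\ds$. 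For the diffusion integral, It\^o isometry gives $\Exp{\normtwo{\int_0^t (\sigma(X_s)-\sigma(X_0))\,\dB_s}^2} = \int_0^t \Exp{\normf{\sigma(X_s)-\sigma(X_0)}^2}\,\ds \le \mu_1^{\mathrm{F}}(\sigma)^2 \int_0^t \Exp{\normtwo{X_s-X_0}^2}\,\ds$.

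Next I would invoke the preceding lemma $\Exp{\normtwo{X_s - X_0}^2} \le E_0 s$ and evaluate $\int_0^t E_0 s\,\ds = E_0 t^2/2$; for the drift term I additionally bound the extra factor $t$ by $t \le 1$. This yields $\tfrac{1}{2}\mu_1(b)^2 E_0\, t^2$ for the drift piece and $\tfrac{1}{2}\mu_1^{\mathrm{F}}(\sigma)^2 E_0\, t^2$ for the diffusion piece, and multiplying by the Young factor $2$ and summing gives exactly $\Exp{\normtwo{X_t-\tX_t}^2} \le \bracks{\mu_1(b)^2 + \mu_1^{\mathrm{F}}(\sigma)^2} E_0\, t^2 = E_1 t^2$, as claimed.

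The argument is entirely routine, so there is no real obstacle; the only point requiring care is the constant bookkeeping, namely ensuring that the $\tfrac{1}{2}$ from $\int_0^t s\,\ds$ cancels the $2$ from Young's inequality and that the surplus $t$ in the drift term is absorbed via $t \le 1$, so that the final constant matches $E_1 = (\mu_1(b)^2 + \mu_1^{\mathrm{F}}(\sigma)^2) E_0$ exactly rather than picking up spurious factors. A small but worthwhile subtlety to state explicitly is that $\tX_t$ here denotes the single-step Euler--Maruyama interpolant started from $X_0$, which under the shared Brownian motion is literally $X_0 + t\,b(X_0) + \sigma(X_0) B_t$; this is what legitimizes the clean subtraction above.
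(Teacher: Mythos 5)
Your proposal is correct and follows essentially the same route as the paper: the same decomposition of $X_t - \tX_t$ into a drift increment and a diffusion increment, Young's inequality plus Jensen for the drift and It\^o isometry for the diffusion, the Lipschitz constants $\mu_1(b)$ and $\mu_1^{\mathrm{F}}(\sigma)$, the preceding second-moment-of-change lemma, and absorbing the surplus $t$ via $t \le 1$. The constant bookkeeping also matches the paper's, so nothing further is needed.
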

\begin{proof}
By It\^o isometry and Lipschitz of the drift and diffusion coefficients,
\eqn{
    \Exp{ \normtwo{X_t - \tX_t}^2 } 
    \le&
        2 \Exp{  \normtwo{ \int_0^t \bracks{ b(X_s) - b(X_0) } \ds }^2 }
        + 
        2 \Exp{ \normtwo{ \int_0^t \bracks{ \sigma(X_s) - \sigma(X_0) } \dB_s }^2 } \\
    \le&
        2t \Exp{ \int_0^t \normtwo{ b(X_s) - b(X_0) }^2 \ds } + 
        2 \Exp{ \int_0^t \normf{\sigma(X_s) - \sigma(X_0) }^2 \ds } \\
    \le&
        2 \bracks{ 
            \mu_1(b)^2 t + \mu_1^{\mathrm{F}} (\sigma)^2
        } \int_0^t \Exp{ \normtwo{ X_s - X_0 }^2 } \ds \\
    \le&
        \bracks{ 
            \mu_1(b)^2 + \mu_1^{\mathrm{F}} (\sigma)^2
        } E_0 t^2.
}
\end{proof}

\subsubsection{Local Mean Deviation} 
\begin{lemm}
Suppose $X_t$ and $\tX_t$ are the continuous-time process defined by \eqref{eq:continuous_general} and Markov chain defined by\eqref{eq:em_general} for time $t \ge 0$, respectively. If $X_t$ and $\tX_t$ are initiated from the same iterate of the Markov chain $X_0$ and share the same Brownian motion, then
\eqn{
    \Exp{
        \normtwo{ \Exp{ X_t - \tX_t  | \F_{0} } }^2
    } \le E_2 t^3
    , \quad \text{for all} \; 0 \le t \le 1,
}
where 
$E_2 = \mu_1(b) E_0 / 2$.
\end{lemm}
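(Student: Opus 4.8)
The plan is to reuse the skeleton of the local mean-square deviation estimate but to cash in the martingale property of the It\^o integral, which gains an extra power of $t$ in the bound (equivalently, half an extra deviation order). Over a window of length $t$ starting at $X_0$ the Euler--Maruyama update reads $\tX_t = X_0 + t\,b(X_0) + \sigma(X_0) B_t$, and since the two processes share the Brownian motion, writing both in integral form gives
\[
X_t - \tX_t = \int_0^t \bigl( b(X_s) - b(X_0) \bigr) \ds + \int_0^t \bigl( \sigma(X_s) - \sigma(X_0) \bigr) \dB_s .
\]
First I would apply $\Exp{\cdot \mid \F_0}$ to this identity. The stochastic integral is a martingale with respect to $\{\F_t\}$ (its integrand is adapted, and square-integrable by Lipschitz continuity plus sublinear growth of $\sigma$ together with the uniform bound $\Exp{\normtwo{X_s}^2} \le \wyap_2'$), so it has zero conditional mean; Fubini's theorem then lets the conditional expectation pass inside the remaining time integral, yielding $\Exp{X_t - \tX_t \mid \F_0} = \int_0^t \Exp{b(X_s) - b(X_0) \mid \F_0} \ds$.

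Next I would bound the squared norm of this quantity. Applying Cauchy--Schwarz (Jensen) to the time integral extracts a factor $t$ and moves the norm inside, a further conditional Jensen inequality moves $\Exp{\cdot\mid\F_0}$ inside the norm, and $\mu_1(b)$-Lipschitz continuity of $b$ gives
\[
\normtwo{ \Exp{X_t - \tX_t \mid \F_0} }^2 \le t \int_0^t \mu_1(b)^2 \, \Exp{ \normtwo{X_s - X_0}^2 \mid \F_0 } \ds .
\]
Taking expectations and substituting the already-established second-moment-of-change bound $\Exp{\normtwo{X_s - X_0}^2} \le E_0 s$ for the Euler--Maruyama iterates yields
\[
\Exp{ \normtwo{ \Exp{X_t - \tX_t \mid \F_0} }^2 } \le \mu_1(b)^2 E_0 \, t \int_0^t s \ds = \tfrac{1}{2}\mu_1(b)^2 E_0 \, t^3 ,
\]
which is the asserted $t^3$ bound, with a constant $E_2$ of the recorded order $\mathcal{O}(\mu_1(b)^2 E_0)$.

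I do not expect a genuine obstacle: this is the most elementary of the local-deviation lemmas, and every ingredient --- the martingale property, Fubini, conditional Jensen, Lipschitz continuity, and the second-moment-of-change bound --- is standard or already in hand. The only point meriting care is verifying the integrability hypotheses needed to kill the stochastic integral under $\Exp{\cdot\mid\F_0}$ and to exchange expectation with the time integral, which are discharged exactly as in the local mean-square deviation proof using the sublinear growth of the coefficients and the uniform moment bounds. The conceptual content is simply that the diffusion term, a centered martingale increment, contributes nothing to the \emph{mean} deviation, so one gains the extra half-order in $t$; this gives $p_2 = 3/2 \ge p_1 + 1/2 = 3/2$, exactly what Theorem~\ref{theo:master} requires in Example~\ref{exam:em_unif_diss}.
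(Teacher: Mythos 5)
Your proof is correct and follows essentially the same route as the paper's: the martingale (stochastic-integral) part of the deviation vanishes under $\Exp{\cdot\mid\F_0}$, then Jensen/Cauchy--Schwarz, Lipschitz continuity of $b$, and the second-moment-of-change bound $\Exp{\normtwo{X_s-X_0}^2}\le E_0 s$ give the $t^3$ bound. The only differences are cosmetic --- the paper reaches the same reduction via the It\^o--Taylor expansion of $X_t-X_0$ rather than subtracting the EM update directly, and your constant $\tfrac{1}{2}\mu_1(b)^2E_0$ is the careful Lipschitz computation while the paper records $E_2=\mu_1(b)E_0/2$ with an unsquared $\mu_1(b)$, which is immaterial to the claimed order.
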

\begin{proof}
By It\^o's lemma,
\eqn{
    X_t - X_0 =& 
    \int_0^t b(X_s) ds + \sigma(X_0) B_t
    \\&
    + \sum_{i=1}^m \sum_{l=1}^m \int_0^t \int_0^s \Lambda_l (\sigma_i)(X_u) \dB_u^{(l)} \dB_s^{(i)}
    + \sum_{i=1}^m \int_0^t \int_0^s L(\sigma_i) (X_u) \du \dB_s^{(i)}.
}
Since the last two terms in the above inequality are Martingales, 
\eq{
    \Exp{ X_t - X_0 | \F_0 }
    = \Exp{
        \int_0^t \bracks{ b(X_s) - b(X_0) } \ds
        | \F_0
    }.
}
Hence, by Jensen's inequality,
\eqn{
    \Exp{
        \normtwo{
            \Exp{ X_t - \tX_t  | \F_{0} }
        }^2
    }
    =&
        \Exp{
            \normtwo{ 
                \Exp{
                    \int_0^t \bracks{ b(X_s) - b(X_0) } \ds
                | \F_0
                }
            }^2
        } \\
    \le&
        \Exp{
            \normtwo{
                \int_0^t \bracks{ b(X_s) - b(X_0) } \ds
            }^2
        } \\
    \le&
        \mu_1(b) t \int_0^t \Exp{
            \normtwo{ X_s - X_0 }^2
        } \ds \\
    \le&
        \mu_1(b) E_0 t^3 / 2.
}
\end{proof}

\subsection{Invoking Theorem~\ref{theo:master}}
Now, we invoke Theorem~\ref{theo:master} with our derived constants. We obtain that if the constant
step size
\eq{
    h <
    1
    \wedge \frac{\alpha}{2 \pi_{1,2}(b)}
    \wedge \frac{1}{2\alpha}
    \wedge \frac{1}{8\mu_1(b)^2 + 8 \mu_1^{\mathrm{F}}(\sigma)^2},
}
and the smoothness conditions of the drift and diffusion coefficients are satisfied for a 
uniformly dissipative diffusion,
then the uniform local deviation bounds \eqref{eq:uniform_local_deviation_orders}
hold with $\lambda_1 = E_1$ and $\lambda_2 = E_2$, and consequently the bound \eqref{eq:main_bound} holds.
This concludes that for a sufficiently small positive tolerance $\epsilon$, $\tilde{\mathcal{O}}(d \epsilon^{-2})$ iterations are required, since both $E_1$ and $E_2$ are of order $\mathcal{O}(d)$. 
If one were to convert the Frobenius norm dependent constants to be based on the operator norm, then $E_1$ is of order $\mathcal{O}( d (d+m)^2 )$, and $E_2$ is of order $\mathcal{O}( d (d + m) )$. This yields the convergence rate of $\tilde{\mathcal{O}}(d (d+m)^2 \epsilon^{-2})$.

\section{Convergence of SRK-LD Under an Unbiased Stochastic Oracle} \label{app:srk-ld-stochasticgrad}

We provide an informal analysis on the scenario where the oracle is stochastic. 
We denote the new interpolated values under the stochastic oracle as $\hat{H}_1$ and $\hat{H}_2$, and the new iterate value as $\hat{X}_k$.
We assume (i) the stochastic oracle is unbiased, i.e. $\E[\hat{\nabla} f(x)] = f(x)$ for all $x\in\R^d$,
(ii) the stochastic oracle has finite variance at the Markov chain iterates and ``interpolated'' values, i.e. 
$\E[ \| \hat{\nabla} f(Y) - \nabla f(Y) \|_2^2 ]\le \sigma^2 d$, for some finite $\sigma$, where $Y$ may be $\hat{X}_k$, $\hat{H}_1$, or $\hat{H}_2$\footnote{There is slight ambiguity in terms of which iteration's interpolated values should $\tH_1$ and $\tH_2$ correspond to. 
For notational simplicity, we have avoided using a subscript or superscript for the iteration index $k$, and almost always make $\tH_1$ and $\tH_2$ appear along with the original iterate $\tX_k$. 
}, and (iii) the randomness in the stochastic oracle is independent of that of the Brownian motion.

Fix iteration index $k \in \mathbb{N}$, let $\tilde{D}_h^{(k)}$ and $\hat{D}_h^{(k)}$ denote the local deviations under the exact and stochastic oracles, respectively. 
Then, assuming the step size is chosen sufficiently small such that the Markov chain moments are bounded, 
\eqn{
	\Exp{
		\normtwo{
			\hat{D}_h^{(k)}
		}^2
	}
	\le&
	2\Exp{
		\normtwo{
			\tilde{D}_h^{(k)}
		}^2
	} + 
	2\Exp{
		\normtwo{
			\tilde{D}_h^{(k)} - \hat{D}_h^{(k)}
		}^2
	}
	\\
	\le&
	2\Exp{
		\normtwo{
			\tilde{D}_h^{(k)}
		}^2
	}
	+
	4 \Exp{
		\normtwo{
			\hat{\nabla} f(\hat{H}_1) - \nabla f(\tH_1)
		}^2
	}
	+
	4 \Exp{
		\normtwo{
			\hat{\nabla} f(\hat{H}_2) - \nabla f(\tH_2)
		}^2
	}
	\\
	\le&
	2\Exp{
		\normtwo{
			\tilde{D}_h^{(k)}
		}^2
	}
	+
	4\sigma^2 d
	+
	4 \Exp{
		\normtwo{
			\hat{\nabla} f(\hat{H}_2) - \nabla f(\hat{H}_2) + \nabla f(\hat{H}_2) - \nabla f(\tH_2)
		}^2
	}
	\\
	\le&
	\mathcal{O}(h^4 + \sigma^2).
}
Similarly, one can derive the new local mean deviation,
\eq{
	\Exp{
		\normtwo{
			\Exp{
				\hat{D}_h^{(k)}
				| 
				\F_{t_{k-1}}
			}
		}^2
	}
	\le&
	\Exp{
		\normtwo{
			\Exp{
				\tilde{D}_h^{(k)}
				| 
				\F_{t_{k-1}}
			}
			+
			\Exp{
				\hat{D}_h^{(k)} - \tilde{D}_h^{(k)}
				| 
				\F_{t_{k-1}}
			}
		}^2
	}
	\\
	\le&
	\Exp{
		\normtwo{
			\Exp{
				\tilde{D}_h^{(k)}
				| 
				\F_{t_{k-1}}
			}
		}^2
	} + 
	\Exp{
		\normtwo{
			\hat{D}_h^{(k)} - \tilde{D}_h^{(k)}
		}^2
	}
	\\
	=&
	\mathcal{O}(h^5 + \sigma^2).
}
One can replace the corresponding terms in \eqref{eq:master_recursion} and obtain a recursion. 
Note however, to ensure unrolling the recursion gives a convergence bound, one would need that $\sigma^2 < \mathcal{O}(\alpha h)$. 

\section{Auxiliary Lemmas}
We list standard results used to develop our theorems and include their proofs for completeness.
\begin{lemm}\label{lemm:a_plus_b_n}
For $x_1, \dots, x_m \in \R$ and $m, n \in \mathbb{N}_{+}$, we have
\eqn{
    \bracks{\sum_{i=1}^m x_i}^n \le m^{n-1} \sum_{i=1}^m x_i^n.
}
\end{lemm}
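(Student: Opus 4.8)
The plan is to obtain this as a one-line consequence of a classical convexity (equivalently, H\"older) argument. Throughout I read the $x_i$ as nonnegative, which is the only regime in which the inequality is both true and actually invoked in the paper (the $x_i$ are always squared norms, chi-squared variables, or other manifestly nonnegative quantities).

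The route I would write out is via H\"older's inequality, equivalently the power-mean inequality. Writing $\sum_{i=1}^m x_i = \sum_{i=1}^m 1\cdot x_i$ and applying H\"older with conjugate exponents $\tfrac{n}{n-1}$ and $n$ gives
\[
    \sum_{i=1}^m x_i \;\le\; \Bigl(\sum_{i=1}^m 1\Bigr)^{\!(n-1)/n}\Bigl(\sum_{i=1}^m x_i^n\Bigr)^{\!1/n} \;=\; m^{(n-1)/n}\Bigl(\sum_{i=1}^m x_i^n\Bigr)^{\!1/n}.
\]
Since both sides are nonnegative, raising to the $n$-th power is monotone and yields $\bigl(\sum_{i=1}^m x_i\bigr)^n \le m^{n-1}\sum_{i=1}^m x_i^n$, which is the claim; for $n=1$ both sides coincide, so the hypothesis $n\ge 1$ is harmless. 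An equivalent and perhaps more transparent presentation applies Jensen's inequality to the convex function $\phi(t)=t^n$ on $[0,\infty)$ (convex there because $\phi''(t)=n(n-1)t^{n-2}\ge 0$) with the uniform weights $1/m$, obtaining $\bigl(\tfrac1m\sum_i x_i\bigr)^n \le \tfrac1m\sum_i x_i^n$, and multiplying through by $m^n$.

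There is essentially no obstacle here: the statement is an elementary inequality and either line above closes it. The only point worth flagging — and I would add a parenthetical remark to the proof — is the implicit nonnegativity of the $x_i$, since for odd $n$ the bound can fail on signed inputs; this restriction is consistent with every application of the lemma in the paper.
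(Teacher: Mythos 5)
Your proposal is correct and, in its second (Jensen) presentation, is exactly the paper's own argument: convexity of $t \mapsto t^n$ applied with uniform weights $1/m$, then multiplying through by $m^n$; the H\"older variant is just an equivalent phrasing. Your parenthetical about nonnegativity of the $x_i$ is a fair sharpening, since the lemma as stated over all of $\R$ can fail for odd $n$, and the paper likewise implicitly uses the inequality only on nonnegative quantities.
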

\begin{proof}
Recall the function $f(x) = x^n$ is convex for $n \in \mathbb{N}_{+}$. Hence,
\begin{align*}
    \bracks{
        \frac{\sum_{i=1}^m x_i}{m}
    }^n \le \frac{\sum_{i=1}^m x_i^n}{m}.
\end{align*}
Multiplying both sides of the inequality by $m^n$ completes the proof.
\end{proof}
\begin{lemm} \label{lemm:integrate_bs_ds}
For the $d$-dimensional Brownian motion $\{B_t\}_{t \ge 0}$,
\eqn{
Z_t = \int_0^t \int_0^s \dB_u \ds \sim \mathcal{N}\bracks{0, t^3 I_d / 3}.
}
\end{lemm}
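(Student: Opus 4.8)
The plan is to reduce $Z_t$ to a Wiener integral of a deterministic integrand and then invoke the standard fact that such integrals are Gaussian with mean zero and an explicitly computable covariance. First I would apply the stochastic Fubini theorem (the same kind of interchange used in the proof of Theorem~\ref{theo:master}) to swap the inner It\^o integral with the outer Lebesgue integral. Writing $\mathbf{1}\{u \le s\}$ for the indicator and noting that this integrand is bounded and deterministic, so that the integrability hypothesis $\int_0^t\int_0^t \mathbf{1}\{u\le s\}^2 \du\ds < \infty$ is trivially satisfied, one obtains
\[
Z_t = \int_0^t \int_0^s \dB_u \ds = \int_0^t \bracks{\int_u^t \ds} \dB_u = \int_0^t (t-u)\, \dB_u .
\]
Equivalently, one may observe directly that the inner It\^o integral of the constant $1$ is $\int_0^s \dB_u = B_s$, so that $Z_t = \int_0^t B_s \ds$, and then apply deterministic Fubini at the level of expectations.

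Next I would argue that $Z_t$ is Gaussian. The variable $\int_0^t (t-u)\,\dB_u$ is the $L^2$-limit of Riemann-type sums $\sum_j (t - u_j)\,(B_{u_{j+1}} - B_{u_j})$ over partitions of $[0,t]$; each such sum is a linear combination of independent centered Gaussian increments, hence Gaussian, and the class of Gaussian random vectors is closed under convergence in $L^2$. Componentwise, the coordinates of $Z_t$ are built only from the independent scalar Brownian motions $\{B^{(i)}\}_{i=1}^d$, so they are jointly Gaussian and mutually independent.

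It then remains to compute the first two moments. Since the integrand $(t-u)$ is deterministic, $\Exp{Z_t} = \boldzero$ by the martingale property of the It\^o integral (equivalently, because $\Exp{B_s} = 0$ and Fubini). Applying the It\^o isometry coordinatewise,
\[
\Exp{\bracks{Z_t^{(i)}}^2} = \int_0^t (t-u)^2 \du = \frac{t^3}{3}, \qquad i = 1, \dots, d,
\]
while the cross-covariances between distinct coordinates vanish by the independence noted above. Hence $Z_t \sim \mathcal{N}(\boldzero,\, t^3 I_d / 3)$, as claimed. The only point requiring a little care is the justification of the Fubini interchange and of the passage to the Gaussian limit; everything after that is a one-line Gaussian-moment computation.
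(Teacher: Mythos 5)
Your proof is correct. It takes a slightly different route from the paper's: you pass to the Wiener-integral representation $Z_t = \int_0^t (t-u)\, \dee B_u$ via the stochastic Fubini theorem and then read off Gaussianity, the zero mean, and the variance $\int_0^t (t-u)^2 \du = t^3/3$ from standard Wiener-integral facts (It\^o isometry, coordinatewise independence). The paper instead works entirely at the discrete level: it approximates $\int_0^t B_s \ds$ by Riemann sums, rewrites them by summation by parts as $\sum_k (B_{t_{k+1}} - B_{t_k})(t_k - t)$, notes each such sum is Gaussian with variance converging to $\int_0^t (s-t)^2 \ds = t^3/3$, and passes to the limit — so it never invokes stochastic Fubini or It\^o isometry explicitly. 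The two arguments rest on the same identity (your continuous-time swap is exactly the limit of the paper's Abel summation), and in fact your appeal to $L^2$-limits of Gaussian Riemann sums to justify Gaussianity is essentially the paper's whole proof; what your version buys is a cleaner, theorem-level derivation with the covariance computation packaged into the isometry, at the cost of citing the stochastic Fubini theorem, while the paper's version is more elementary and self-contained.
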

\begin{proof}
 We consider the case where $d=1$. The multi-dimensional case follows naturally, since we assume different dimensions of the Brownian motion vector are independent. Let $t_k = \delta k$, we define
\eqn{
    S_m 
    &= \sum_{k=0}^{m - 1} B_{t_k} (t_{k+1} - t_k) =
    \sum_{k=1}^{m - 1} \bracks{B_{t_{k+1}} - B_{t_k}} \bracks{t_k - t}.
}
Since $S_m$ is a sum of Gaussian random variables, it is also Gaussian. By linearity of expectation and independence of Brownian motion increments,
\eqn{
    \Exp{S_m} &= 0, \quad \\
    \Exp{S_m^2} &= \sum_{k=1}^{m-1} \bracks{t_k - t}^2 \Exp{
        \bracks{B_{t_{k+1}} - B_{t_k}}^2
    } \to \int_0^t \bracks{s - t}^2 \ds = {t^3}/{3} \quad \text{as} \quad m\to \infty.
}
Since $S_m \overset{\text{a.s.}}{\to} Z_t$ as $m \to \infty$ by the strong law of large numbers, we conclude that $Z_t \sim \mathcal{N}\bracks{0, t^3/3}$.
\end{proof}

\begin{lemm}
For $n \in \mathbb{N}$ and the $d$-dimensional Brownian motion $\{B_t\}_{t \ge 0}$,
\eqn{
    \Exp{
        \norm{B_t}_2^{2n}
    } = t^n d (d + 2) \cdots (d + 2n - 2).
}
\end{lemm}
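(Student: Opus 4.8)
The plan is to reduce the claim to the standard closed form for the moments of a chi-squared random variable. Write $\norm{B_t}_2^2 = \sum_{i=1}^d (B_t^{(i)})^2$ and recall that the coordinate processes $\{B_t^{(i)}\}_{t\ge0}$ are independent one-dimensional Brownian motions, so that $B_t^{(i)} \sim \mathcal{N}(0,t)$ independently across $i$. Consequently $\norm{B_t}_2^2$ has the same distribution as $t\,\chi(d)^2$, where $\chi(d)^2$ denotes a chi-squared random variable with $d$ degrees of freedom (the same object already used in the proofs of Lemmas~\ref{lemm:2nth_moment} and~\ref{lemm:fourth_moment_id}). Taking $n$-th powers and expectations then gives $\Exp{\norm{B_t}_2^{2n}} = t^n\,\Exp{\chi(d)^{2n}}$, and it remains only to evaluate the chi-squared moment.

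For the latter I would integrate against the $\chi_d^2$ density $y \mapsto \frac{1}{2^{d/2}\Gamma(d/2)}\,y^{d/2-1}e^{-y/2}$ on $(0,\infty)$, obtaining $\Exp{\chi(d)^{2n}} = \frac{1}{2^{d/2}\Gamma(d/2)}\int_0^\infty y^{d/2+n-1}e^{-y/2}\,\dee y = 2^n\,\frac{\Gamma(d/2+n)}{\Gamma(d/2)} = d(d+2)\cdots(d+2n-2)$, where the last equality telescopes the Gamma recursion $\Gamma(z+1)=z\Gamma(z)$ exactly $n$ times. Multiplying by $t^n$ yields the stated identity, and the degenerate case $n=0$ is the empty product $1$.

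If one prefers a self-contained argument that does not quote the chi-squared density, I would instead induct on $n$ via It\^o's lemma applied to $f(x)=\norm{x}_2^{2n}$. A direct computation gives $\Delta f(x) = 2n(d+2n-2)\norm{x}_2^{2n-2}$, hence $\tfrac{\dee}{\dt}\Exp{\norm{B_t}_2^{2n}} = \tfrac12\Exp{\Delta f(B_t)} = n(d+2n-2)\Exp{\norm{B_t}_2^{2n-2}}$; plugging in the inductive hypothesis $\Exp{\norm{B_s}_2^{2(n-1)}} = s^{n-1}\prod_{j=0}^{n-2}(d+2j)$ and integrating from $0$ to $t$ produces $\Exp{\norm{B_t}_2^{2n}} = n(d+2n-2)\cdot\frac{t^n}{n}\prod_{j=0}^{n-2}(d+2j) = t^n\prod_{j=0}^{n-1}(d+2j)$, with the base case $n=0$ trivial.

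There is no substantive obstacle here: the statement is a routine moment computation. The only points requiring a little care are (i) correctly carrying the scale factor $t$ through the distributional identity $\norm{B_t}_2^2 \overset{\mathrm{d}}{=} t\,\chi(d)^2$, and (ii) evaluating the Gamma integral and recognizing the telescoped product $2^n\Gamma(d/2+n)/\Gamma(d/2) = d(d+2)\cdots(d+2n-2)$ (equivalently, verifying the one-step recursion in the inductive version). I would present the chi-squared reduction as the main line, since the paper already invokes the closed-form chi-squared moments, and keep the It\^o induction as a fallback for completeness.
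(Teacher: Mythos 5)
Your proposal is correct and follows essentially the same route as the paper: both reduce $\norm{B_t}_2^{2n}$ to $t^n\,\Exp{\chi(d)^{2n}}$ via the identity $\norm{B_t}_2^2 \overset{\mathrm{d}}{=} t\,\chi(d)^2$ and then invoke the closed-form chi-squared moment $2^n\Gamma(n+\tfrac{d}{2})/\Gamma(\tfrac{d}{2}) = d(d+2)\cdots(d+2n-2)$, which the paper simply cites while you derive it from the Gamma integral. The auxiliary It\^o/induction argument you sketch is also valid but is not needed beyond this.
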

\begin{proof}
Note $\norm{B_t}_2^2$ may be expressed as the sum of squared Gaussian random variables, i.e. 
\eqn{
    \norm{B_t}_2^2 = t \sum_{i=1}^d \xi_i^2, \quad \text{where} \quad \xi_i \overset{\text{i.i.d.}}{\sim} \N(0, 1).
}
Observe that this is also a multiple of the chi-squared random variable with $d$ degrees of freedom $\chi(d)^2$. Its $n$th moment has the following closed form~\cite{simon2007probability},
\eqn{
    \Exp{ \chi(d)^{2n} } = 2^n \frac{\Gamma \bracks{n + \frac{d}{2}}}{ \Gamma\bracks{\frac{d}{2}} } = d(d + 2) \cdots (d + 2n - 2).
}
Thus,
\eqn{
    \Exp{\norm{B_t}_2^{2n}} = t^n \Exp{\chi(d)^{2n}} = t^n d (d + 2) \cdots (d + 2n - 2) .
}
\end{proof}

\begin{lemm} \label{lemm:laplacian_bounded}
For $f: \R^d \to \R$ which is $C^3$, suppose its Hessian is $\mu_3$-Lipschitz under the operator norm and Euclidean norm, i.e. 
\eqn{
    \norm{ \nabla^2 f(x) - \nabla^2 f(y)}_{\mathrm{op}} \le \mu_3 \norm{x - y}_2, \quad \text{for all}\; x, y\in\R^d.
}
Then, the vector Laplacian of its gradient is bounded, i.e.
\eqn{
    \norm{\Vec{\Delta} (\nabla f) (x)}_2 \le d \mu_3, \quad \text{for all}\; x\in\R^d.
}
\end{lemm}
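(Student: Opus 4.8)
The plan is to express the vector Laplacian of $\nabla f$ as a sum of $d$ coordinate-direction derivatives of the Hessian, each of which has operator norm at most $\mu_3$, and then conclude by the triangle inequality. First I would record the elementary consequence of the hypothesis: since $f$ is $C^3$, for every standard basis vector $e_j \in \R^d$ the directional derivative $D_{e_j}\nabla^2 f(x) := \lim_{t \to 0} t^{-1}\bracks{\nabla^2 f(x + t e_j) - \nabla^2 f(x)}$ exists, and because each difference quotient $t^{-1}\bracks{\nabla^2 f(x + t e_j) - \nabla^2 f(x)}$ has operator norm at most $\mu_3$ by the assumed Lipschitz bound, the limit satisfies $\normop{D_{e_j}\nabla^2 f(x)} \le \mu_3$.

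Next I would unpack the definition of $\Vec{\Delta}(\nabla f)$ componentwise. By definition $\bigl[\Vec{\Delta}(\nabla f)(x)\bigr]_k = \Delta(\partial_k f)(x) = \sum_{j=1}^d \partial_{x_j}^2 \partial_{x_k} f(x)$, and since mixed partials commute (as $f$ is $C^3$), $\partial_{x_j}^2 \partial_{x_k} f(x) = \partial_{x_j}\bigl([\nabla^2 f(x)]_{kj}\bigr) = \bigl[D_{e_j}\nabla^2 f(x)\bigr]_{kj}$, which is exactly the $k$th component of the vector $\bigl(D_{e_j}\nabla^2 f(x)\bigr) e_j$. Summing over $j$ in vector form yields the identity $\Vec{\Delta}(\nabla f)(x) = \sum_{j=1}^d \bigl(D_{e_j}\nabla^2 f(x)\bigr) e_j$.

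Finally I would apply the triangle inequality together with the operator-norm bound from the first step: $\normtwo{\Vec{\Delta}(\nabla f)(x)} \le \sum_{j=1}^d \normtwo{\bigl(D_{e_j}\nabla^2 f(x)\bigr) e_j} \le \sum_{j=1}^d \normop{D_{e_j}\nabla^2 f(x)}\,\normtwo{e_j} \le d\,\mu_3$, which is the claim. This argument has no real analytic difficulty; the only point requiring care is the index bookkeeping that identifies the mixed third partial $\partial_{x_j}^2\partial_{x_k} f$ with an entry of the coordinate derivative of the Hessian, and that tracks how contracting over the $d$ coordinate directions produces exactly the linear factor $d$ (rather than a worse power that a crude bound on the full third-derivative tensor would give).
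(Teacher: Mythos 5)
Your proof is correct, and it is essentially the standard argument: the paper itself does not spell out a proof but defers to Lemma 6 of \cite{dalalyan2019user}, whose proof likewise writes the vector Laplacian of the gradient as the sum over the $d$ coordinate directions of directional derivatives of the Hessian and bounds each by $\mu_3$ via the Lipschitz hypothesis. Your handling of the two points that need care (the operator-norm bound surviving the $t\to 0$ limit, and the index bookkeeping identifying $\partial_{x_j}^2\partial_{x_k} f$ with the $k$th entry of $(D_{e_j}\nabla^2 f(x))e_j$) is sound, so nothing further is needed.
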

\begin{proof}
See proof of Lemma 6 in~\cite{dalalyan2019user}. 
\end{proof}

\begin{lemm} \label{lemm:laplacian_lipschitz}
For $f: \R^d \to \R$ which is $C^4$, suppose its third derivative is $\mu_4$-Lipschitz under the operator norm and Euclidean norm, i.e. 
\eqn{
        \norm{ \nabla^3 f(x) - \nabla^3 f(y)}_{\mathrm{op}} \le \mu_4 \norm{x - y}_2, \quad \text{for all}\; x, y\in\R^d.
}
Then, the vector Laplacian of its gradient is $d \mu_4$-Lipschitz, i.e. 
\eqn{
    \norm{ \Vec{\Delta} (\nabla f)(x) - \Vec{\Delta} (\nabla f)(y) }_2
    \le 
    d \mu_4 \norm{x - y}_2.
}
\end{lemm}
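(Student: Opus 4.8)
The plan is to follow the argument behind Lemma~\ref{lemm:laplacian_bounded} (the corresponding boundedness statement, adapted from~\cite{dalalyan2019user}), but applied to the \emph{difference} tensor $T = \nabla^3 f(x) - \nabla^3 f(y)$, whose operator norm is controlled by the hypothesis, namely $\normop{T} \le \mu_4 \normtwo{x-y}$.

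First I would rewrite the vector Laplacian componentwise. Since $\nabla f : \R^d \to \R^d$, the $i$th entry of $\Vec{\Delta}(\nabla f)(x)$ is $\Delta(\partial_i f)(x) = \sum_{j=1}^d \partial_i\partial_j\partial_j f(x) = \sum_{j=1}^d [\nabla^3 f(x)]_{i,j,j}$, so the $i$th entry of $\Vec{\Delta}(\nabla f)(x) - \Vec{\Delta}(\nabla f)(y)$ equals $\sum_{j=1}^d T_{i,j,j}$. Hence, for an arbitrary unit vector $u \in \R^d$ and the standard basis $e_1, \dots, e_d$ of $\R^d$,
\[
  \abracks{u,\, \Vec{\Delta}(\nabla f)(x) - \Vec{\Delta}(\nabla f)(y)} = \sum_{j=1}^d T[u, e_j, e_j],
\]
where $T[u, e_j, e_j]$ denotes the value of the trilinear form associated with $T$ at the arguments $u, e_j, e_j$.

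Next I would bound each summand using the tensor operator norm. Writing $T[u]$ for the tensor--vector product, which is a $d \times d$ matrix, we have $T[u, e_j, e_j] = e_j^\top T[u]\, e_j$; therefore, by the Cauchy--Schwarz inequality together with the recursive definition $\normop{T} = \sup_{\normtwo{v} \le 1} \normop{T[v]}$,
\[
  |T[u, e_j, e_j]| \le \normtwo{e_j}\,\normtwo{T[u]\, e_j} \le \normop{T[u]} \le \normop{T} \le \mu_4 \normtwo{x - y}
\]
for every $j \in \{1, \dots, d\}$ whenever $\normtwo{u} \le 1$; note that no symmetry of $\nabla^3 f$ is needed for this step. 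Summing over $j$ and then taking the supremum over all unit vectors $u$ yields $\normtwo{\Vec{\Delta}(\nabla f)(x) - \Vec{\Delta}(\nabla f)(y)} \le d\mu_4 \normtwo{x-y}$, which is the claimed estimate.

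I do not anticipate a genuine obstacle: the only care needed is in bookkeeping which indices of the fully symmetric tensor $\nabla^3 f$ are contracted --- the two ``Laplacian'' indices $j$, whose summation produces the factor of $d$ --- and in checking that the diagonal contraction $e_j^\top T[u] e_j$ is dominated by $\normop{T}$ straight from the definition in the Notation paragraph. This parallels exactly the $d\mu_3$ bound of Lemma~\ref{lemm:laplacian_bounded}, with $\mu_3$ replaced by the Lipschitz modulus $\mu_4$ of the next derivative.
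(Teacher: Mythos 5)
Your proof is correct, but it takes a genuinely different route from the paper's. You apply the Lipschitz hypothesis directly to the difference tensor $T=\nabla^3 f(x)-\nabla^3 f(y)$ and bound the diagonal contraction $\sum_{j} T[u,e_j,e_j]$ termwise by $\normop{T}\le \mu_4\normtwo{x-y}$, the sum over $j$ producing the factor $d$; this never touches the fourth derivative and in fact only needs $f\in C^3$ with $\mu_4$-Lipschitz third derivative (the $C^4$ hypothesis is used only implicitly, through the symmetry of $\nabla^3 f$, which already holds under $C^3$). The paper instead sets $g=\Delta f$, identifies $\Vec{\Delta}(\nabla f)(x)-\Vec{\Delta}(\nabla f)(y)$ with $\nabla g(x)-\nabla g(y)$ after exchanging partial derivatives, applies Taylor's theorem with integral remainder to $\nabla g$, and then bounds $\normop{\nabla^2 g(z)}\le \sum_{i=1}^d\normop{G_i(z)}\le d\,\normop{\nabla^4 f(z)}\le d\mu_4$ by writing $\nabla^2 g$ as a sum of $d$ sub-tensors of $\nabla^4 f$ — so it routes the factor $d$ through the fourth derivative rather than through the trace of the difference of third derivatives. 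Your argument is the more elementary and slightly more general one (no Taylor expansion, weaker smoothness), whereas the paper's version directly parallels its Lemma~\ref{lemm:laplacian_bounded} and the cited argument of Dalalyan--Karagulyan; the constants obtained are identical.
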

\begin{proof}
Let $g(x) = \Delta (f)(x)$. Since $f \in C^4$, we may switch the order of partial derivatives,
\eqn{
    \norm{\Vec{\Delta} (\nabla f)(x) - \Vec{\Delta} (\nabla f)(y)}_2
    =
    \norm{
        \nabla g(x) - \nabla g(y)
    }_2.
}
By Taylor's theorem with the remainder in integral form,
\begin{align*}
    \norm{
        \nabla g(x) - \nabla g(y)
    }_2
    =& 
    \norm{
            \int_0^1 \nabla^2 g\bracks{
                y + \tau (x - y)
            } (x - y) \dtau 
    }_2 \\
    \le&
        \int_0^1 \normop{
        \nabla^2 g\bracks{
                y + \tau (x - y)
            }
        } \norm{x - y}_2 \dtau \\
    \le&
        \sup_{z \in \R^d} \normop{ \nabla^2 g(z) } \norm{x - y}_2.
\end{align*}
Note that $\nabla^2 g(x)$ can be written as a sum of $d$ matrices, each being a sub-tensor of $\nabla^4 f(x)$, due to the the trace operator, i.e. 
\eqn{
    \nabla^2 g(x) = \sum_{i=1}^d G_i(x), \quad \text{where} \quad G_i(x)_{jk} = \partial_{iijk} f(x).
}
Since the operator norm of $\nabla^4 f(x)$ upper bounds the operator norm of each of its sub-tensor, 
\eqn{
    \normop{\nabla^2 g(x)} \le \sum_{i=1}^d  \normop{G_i(x)} \le d \normop{\nabla^4 f(x)}
}
Recall the third derivative is $\mu_4$-Lipschitz, we obtain
\eqn{
    \norm{
        \nabla g(x) - \nabla g(y)
    }_2
    \le&
        d \mu_3 \norm{x - y}_2.
}
\end{proof}

\section{Estimating the Wasserstein Distance} \label{app:bias_correction}\tabularnewline
For a Borel measure $\mu$ defined on a compact and separable topological space $\mathcal{X}$, a sample-based
empirical measure $\mu_n$ may asymptotically serve as a proxy to $\mu$ in the $W_p$ sense for $p \in [1, \infty)$, i.e.
\eq{
    W_p(\mu, \hat{\mu}_n) \xrightarrow[]{\text{$\mu$-a.s.}} 0.
}
This is a consequence of the Wasserstein distance metrizing weak convergence~\cite{villani2008optimal}
and that the empirical measure converges weakly to $\mu$ almost surely~\cite{varadarajan1958convergence}.

However, in the finite-sample setting, this distance is typically non-negligible and worsens
as the dimensionality increases.
Specifically, generalizing previous results based on the $1$-Wasserstein distance~\cite{dudley1969speed,dobric1995asymptotics},
\citet{weed2017sharp} showed that for $p \in [1, \infty)$,
\eq{
    W_p(\mu, \hat{\mu}_n) \gtrsim n^{-1/t},
}
where $t$ is less than the lower Wasserstein dimension $d_*(\mu)$.
This presents a severe challenge in estimating the $2$-Wasserstein distance between probability measures using samples.

To better detect convergence,
we zero center a simple sample-based estimator by subtracting
the null responses and obtain the following new estimator:
\begin{align*}
\tilde{W}_2^2(\mu, \nu) =&
\frac{1}{2} \bracks{
    W_2^2(\hat{\mu}_n, \hat{\nu}_n) +
    W_2^2(\hat{\mu}_n', \hat{\nu}_n') -
    W_2^2(\hat{\mu}_n, \hat{\mu}_n') -
    W_2^2(\hat{\nu}_n, \hat{\nu}_n')
},
\end{align*}
where $\hat{\nu}_n$ and $\hat{\nu}_n'$ are based on two independent samples of size $n$ from $\mu$,
and similarly for $\hat{\nu}_n$ and $\hat{\nu}_n'$ from $\nu$.
This estimator is inspired by the contruction of distances in the maximum mean discrepancy family~\cite{gretton2012kernel} and the Sinkhorn divergence~\cite{peyre2019computational}.
Note that the $2$-Wasserstein distance between finite samples can be computed conveniently with existing packages~\cite{flamary2017pot} that solves a linear program.
Although the new estimator is not guaranteed to be unbiased across all settings, it is unbiased when the two distributions are the same.

Since our correction is based on a heuristic, the new estimator is still biased.
To empirically characterize the effectiveness of the correction, we compute the discrepancy between the squared $2$-Wasserstein distance for two continuous densities and the finite-sample estimate obtained from $\mathrm{i.i.d.}$ samples. When $\mu$ and $\nu$ are Gaussians with means $m_1, m_2\in\R^d$ and covariance matrices $\Sigma_1,\Sigma_2\in\R^{d\times d}$, we have the following convenient closed-form
\begin{align}
    W_2^2(\mu, \nu) =
        \normtwo{m_1 - m_2}^2 +
        \Tr{\Sigma_1+\Sigma_2 - 2(\Sigma_1^{1/2}\Sigma_2\Sigma_1^{1/2})^{1/2} }.
\end{align}

\begin{figure}[ht]
\centering
\begin{minipage}[t]{0.45\linewidth}
\centering
{\includegraphics[width=0.98\textwidth]{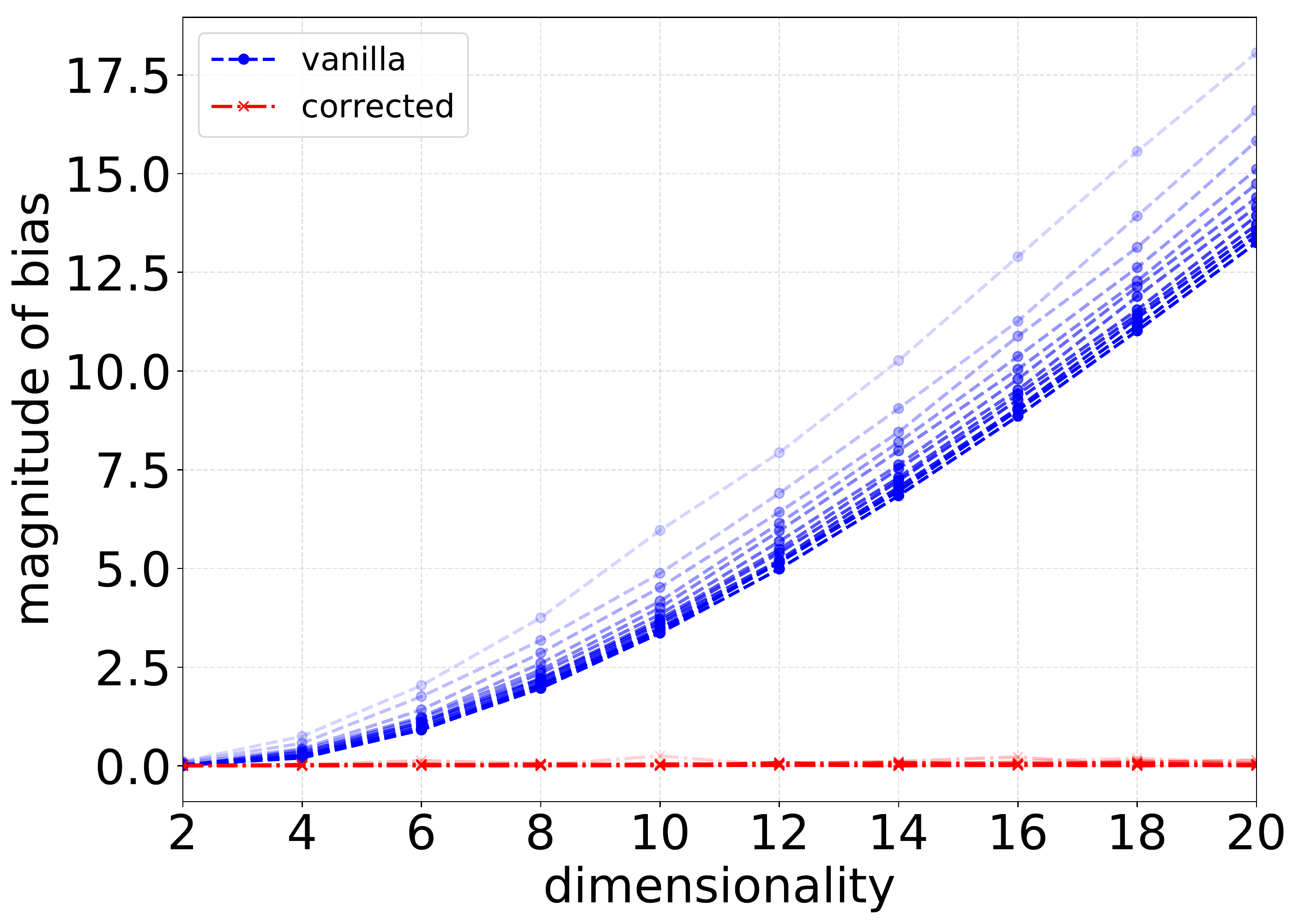}}
(a) different in mean
\end{minipage}
\begin{minipage}[t]{0.45\linewidth}
\centering
{\includegraphics[width=0.98\textwidth]{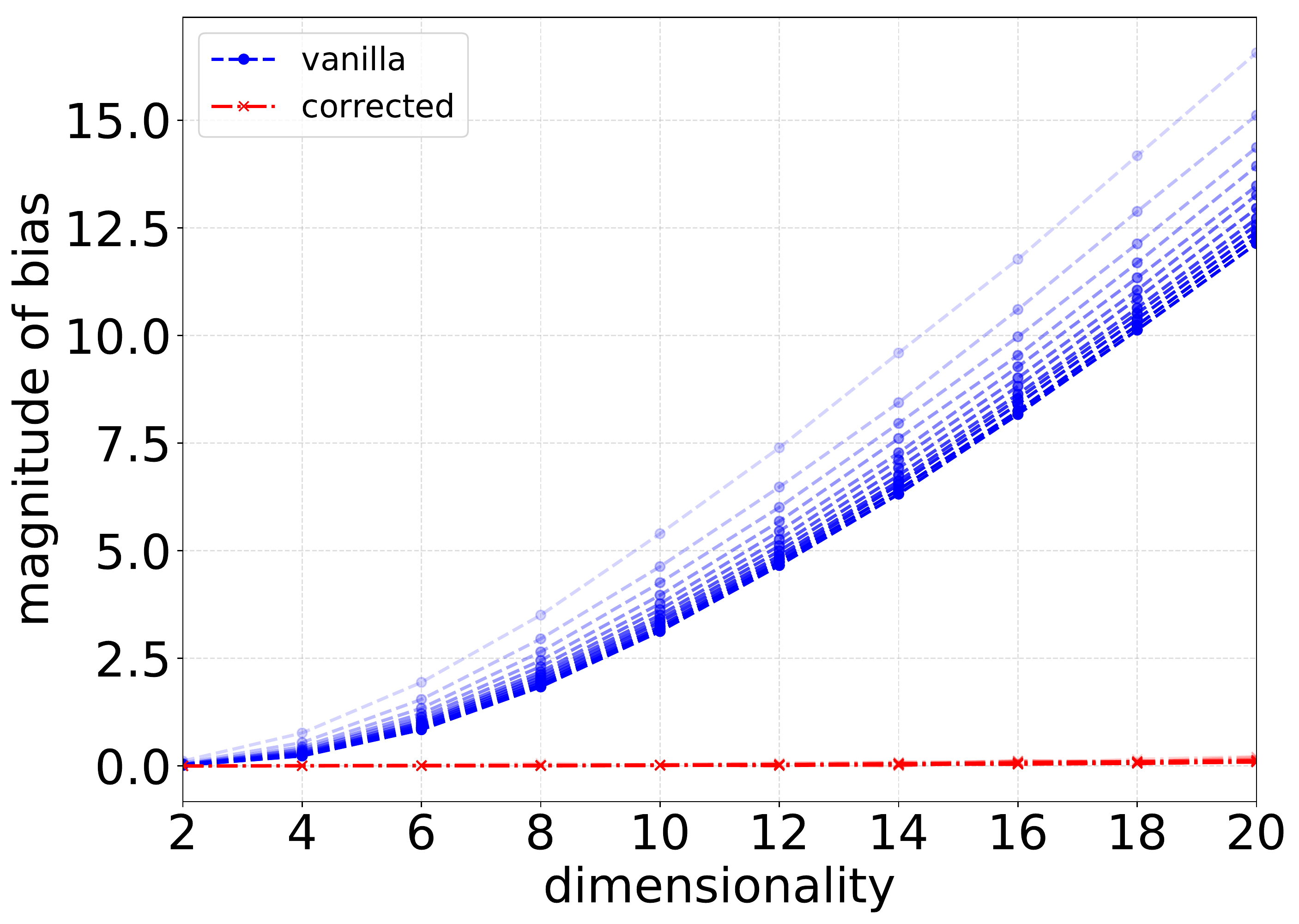}}
(b) different in covariance
\end{minipage}
 \caption{Absolute value between $W_2^2(\mu,\nu)$ and the sample averages of estimators $\hat{W}_2^2$ (vanilla) and $\tilde{W}_2^2$ (corrected) for Gaussian $\mu$ and $\nu$. Darker curves correspond to larger number of samples used to compute the empirical estimate (ranging from 100 to 1000).
 (a) $m_1=0, m_2=\mathbf{1}_d, \Sigma_1=\Sigma_2 = I_d$.
 (b) $m_1=m_2=0, \Sigma_1=I_d, \Sigma_2=I_d/2 + \mathbf{1}_d \mathbf{1}_d^\top/5$.}
\label{fig:W2}
\end{figure}

We compare the vanilla estimate $\hat{W}_2^2(\mu,\nu, n)$ and the corrected estimate $\tilde{W}_2^2(\mu,\nu, n)$ by their magnitude of deviation from the true value $W_2^2(\mu, \nu)$:
\begin{align*}
\LL|{W_2^2(\mu,\nu) - \E[\hat{W}_2^2(\mu, \nu, n)]}\RR|, \quad
\LL|W_2^2(\mu, \nu) - \E[\tilde{W}_2^2(\mu, \nu, n)]\RR|,
\end{align*}
where the expectations are approximated via averaging 100 independent draws.
Figure~\ref{fig:W2} reports the deviation across different sample sizes and dimensionalities, where $\mu$ and $\nu$ differ only in either mean or covariance. While the corrected estimator is not unbiased, it is relatively more accurate. 

\begin{figure}[ht]
\centering
\begin{minipage}[t]{0.45\linewidth}
\centering
{\includegraphics[width=0.98\textwidth]{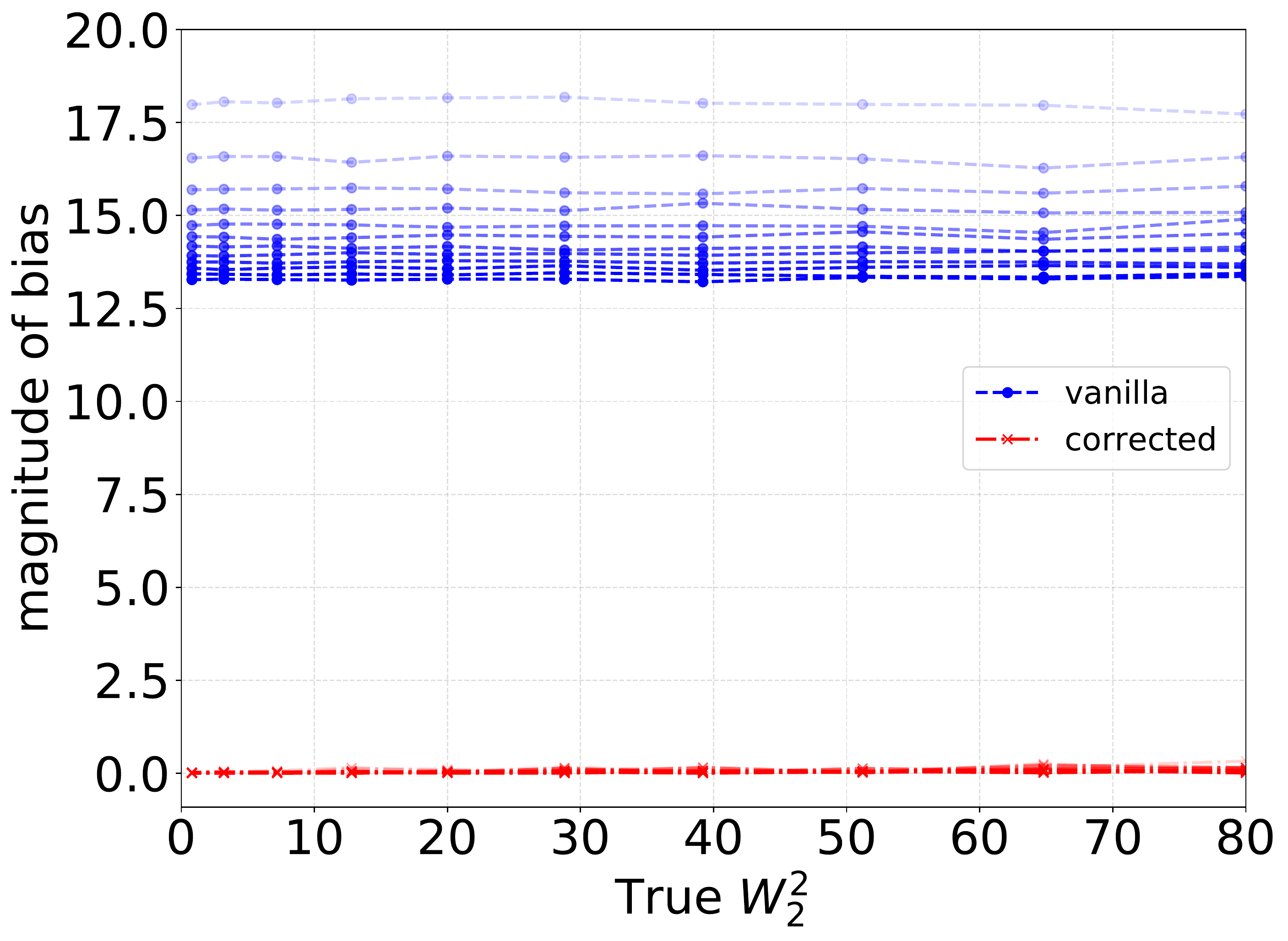}}
(a) different in mean
\end{minipage}
\begin{minipage}[t]{0.45\linewidth}
\centering
{\includegraphics[width=0.98\textwidth]{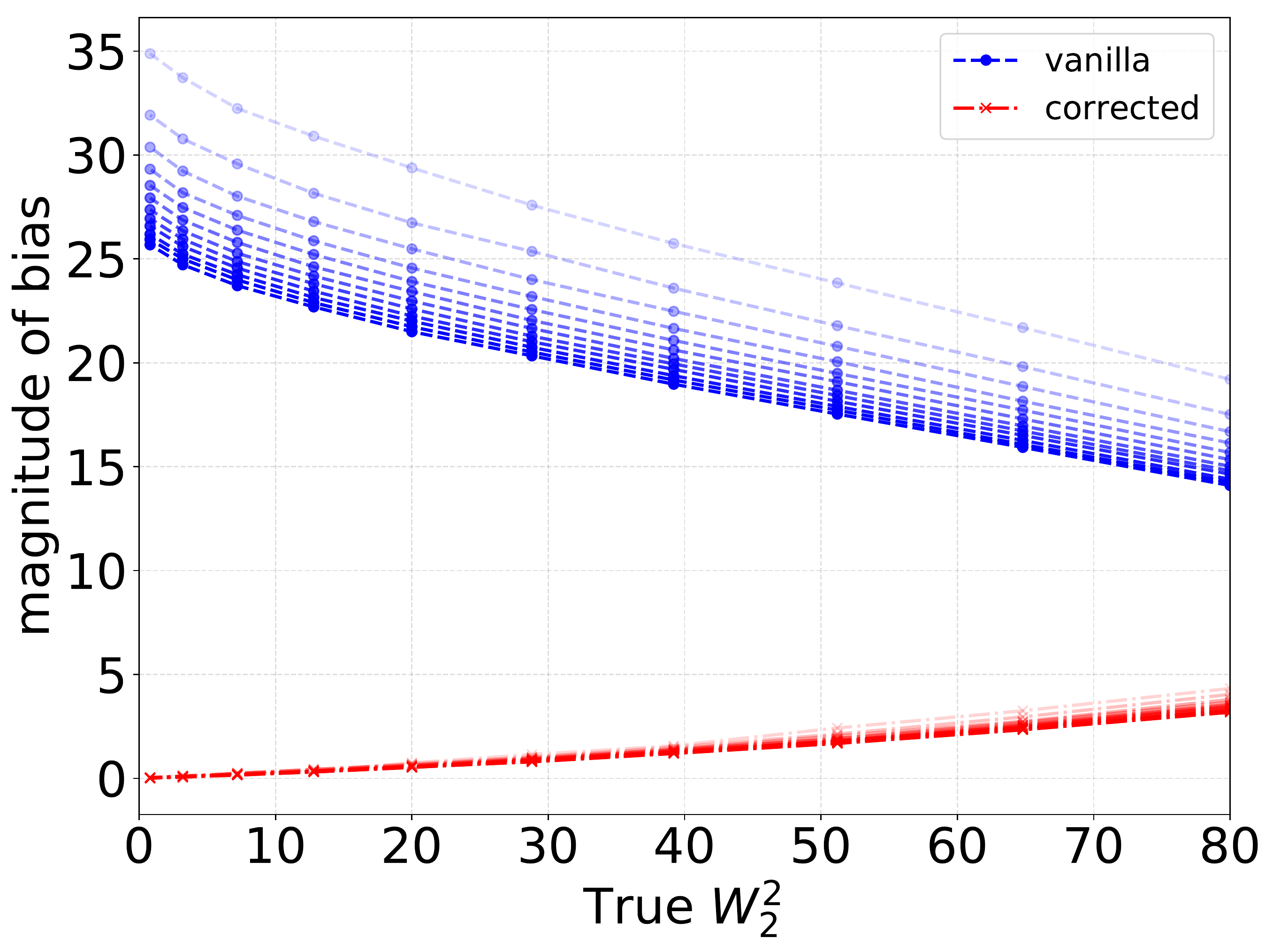}}
(b) different in covariance
\end{minipage}
 \caption{Absolute value between $W_2^2(\mu,\nu)$ and the sample averages of estimators $\hat{W}_2^2$ (vanilla) and $\tilde{W}_2^2$ (corrected) for Gaussian $\mu$ and $\nu$. Darker curves correspond to larger number of samples used to compute the empirical estimate (ranging from 100 to 1000). We fix $d=20$ and interpolate the mean and the covariance matrix, i.e. $m = \alpha m_1 + (1-\alpha) m_2, \Sigma = \alpha\Sigma_1 + (1-\alpha) \Sigma_2, \alpha\in[0,1]$.
 (a) $m_1=0, m_2=2\mathbf{1}_d, \Sigma_1=\Sigma_2 = I_d$.
 (b) $m_1=m_2=0, \Sigma_1=2I_d, \Sigma_2=I_d/2 + \mathbf{1}_d \mathbf{1}_d^\top/5$.}
\label{fig:W2-interp}
\end{figure}

In addition, Figure~\ref{fig:W2-interp} demonstrates that our bias-corrected estimator becomes more accurate as the two distributions are closer. This indicates that our proposed estimator may provide a more reliable estimate of the 2-Wasserstein distance when the sampling algorithm is close to convergence.

\section{Additional Numerical Studies} \label{app:additional_exp}
In this section, we include additional numerical studies complementing Section~\ref{sec:exp}.

\subsection{Strongly Convex Potentials}
We first include additional plots of error estimates in $W_2$ and the energy distance for sampling from a Gaussian mixture and the posterior of BLR. 
The results indicate that the reduction in asymptotic error is consistent across problems with varying dimensionalities that we consider. 
In the end, we conduct a wall time analysis and show that SRK-LD is competitive in practice.

\subsubsection{Additional Results}
Figure~\ref{fig:strongly_convex_additional_W2} shows the estimated $W_2$ error as the number of iterations increase for the 2D and 20D Gaussian mixture and BLR problems with the parameter settings described in Section~\ref{sec:exp}.
We observe consistent improvement in the asymptotic error across different settings in which we experimented.
\begin{figure}[ht]
\centering
\begin{minipage}[t]{0.325\linewidth}
\centering
{\includegraphics[width=0.98\textwidth]{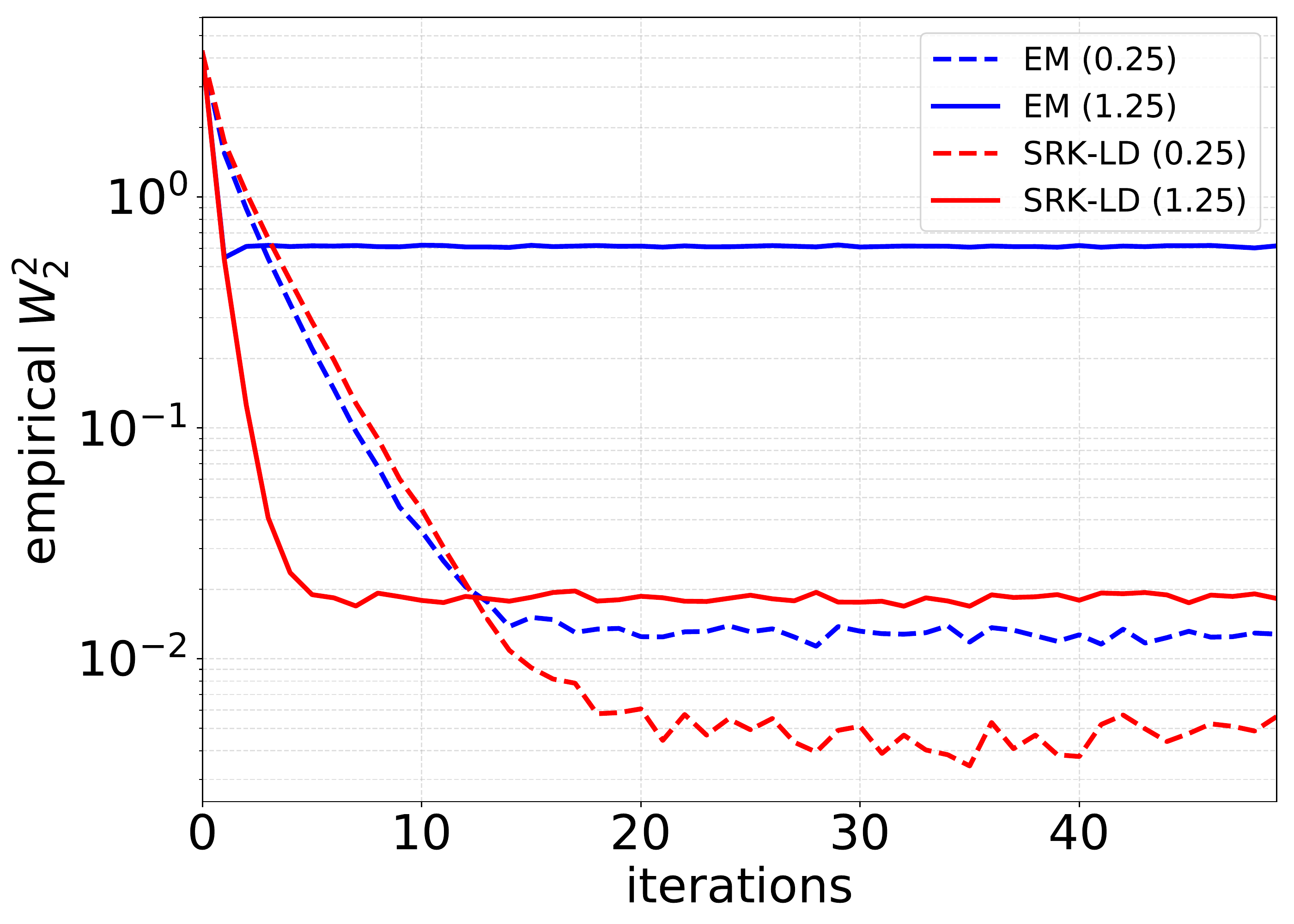}}
(a) Gaussian Mixture (2D)
\end{minipage}
\begin{minipage}[t]{0.325\linewidth}
\centering
{\includegraphics[width=0.98\textwidth]{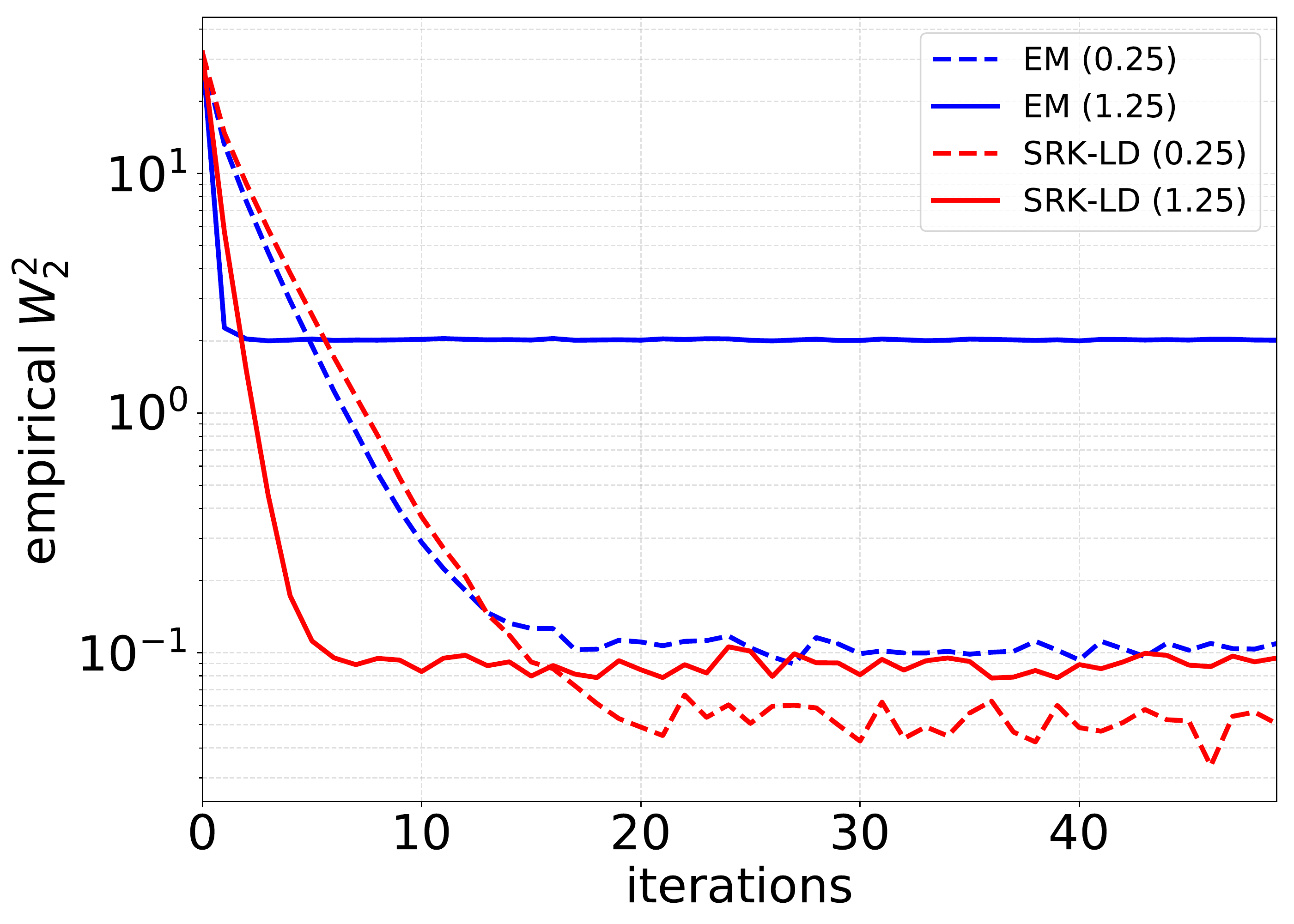}}
(b) Gaussian Mixture (20D)
\end{minipage}
\begin{minipage}[t]{0.325\linewidth}
\centering
{\includegraphics[width=0.98\textwidth]{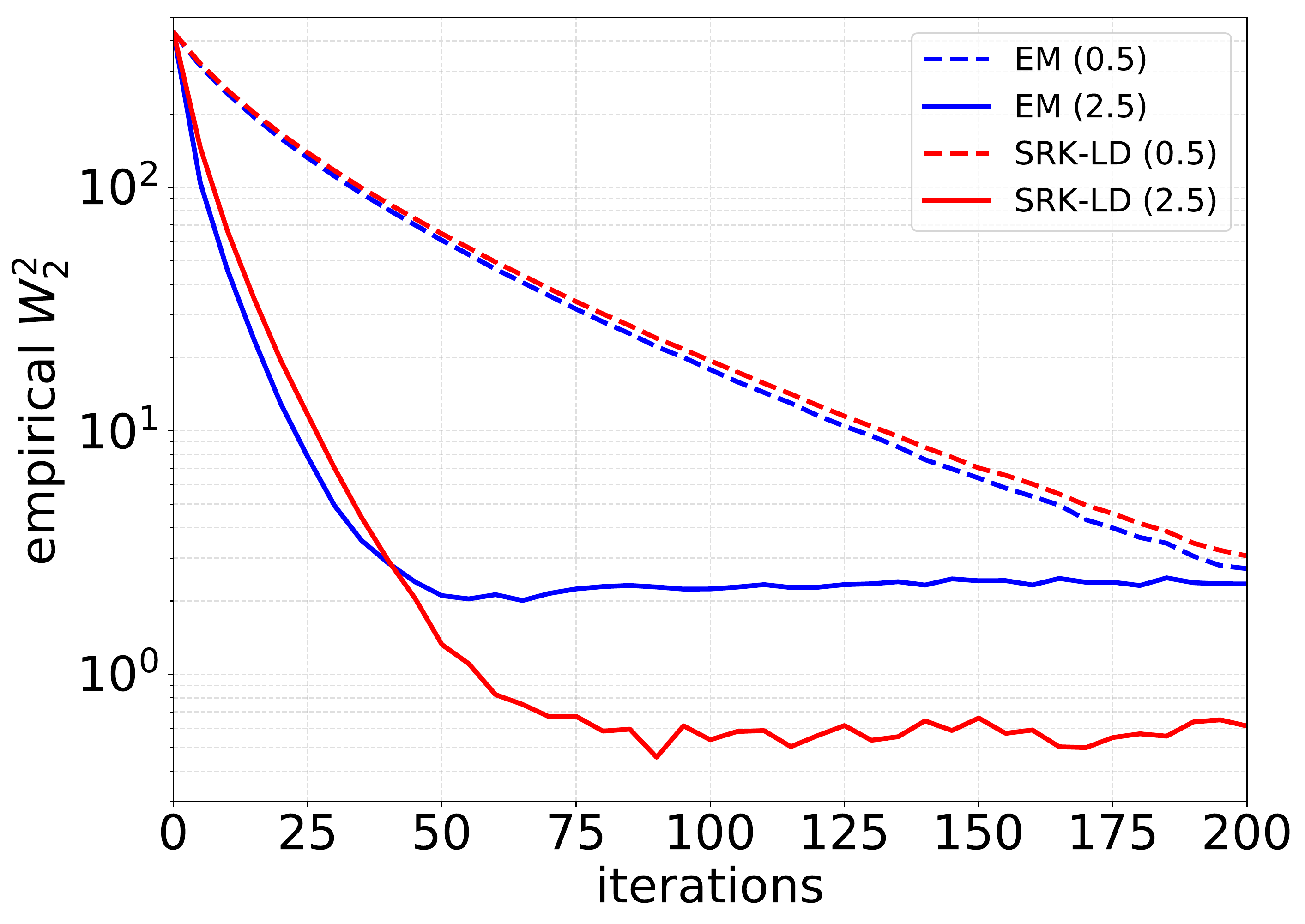}}
(c) BLR (20D)
\end{minipage}
\caption{Error in $W_2^2$ for strongly log-concave sampling. Legend denotes``scheme (step size)''. }
\label{fig:strongly_convex_additional_W2}
\end{figure}

In addition to reporting the estimated squared $W_2$ values, we also evaluate the two schemes by estimating the energy distance~\cite{szekely2003statistics,szekely2013energy} under the Euclidean norm. For probability measures $\mu$ and $\nu$ on $\R^d$ with finite first moments, this distance is defined to be the square root of  
\eqn{
    D_E(\mu, \nu)^2 =
        2 \Exp{ \normtwo{Y - Z} } - 
        \Exp{ \normtwo{Y - Y'} } -
        \Exp{ \normtwo{Z - Z'} }, \numberthis \label{eq:energy_distance}
}
where $Y, Y' \overset{\mathrm{i.i.d.}}{\sim} \mu$ and $Z, Z' \overset{\mathrm{i.i.d.}}{\sim} \nu$.
The moment condition is required to ensure that the expectations in~\eqref{eq:energy_distance} is finite. 
This holds in our settings due to derived moment bounds. 
Since exactly computing the energy distance is intractable, we estimate the quantity using the following (biased) V-statistic~\cite{sejdinovic2013equivalence}
\eq{
    \hat{D}_E(\mu, \nu)^2 = 
        \frac{2}{mn} \sum_{i=1}^m \sum_{j=1}^n \normtwo{Y_i - Z_j}
        - \frac{1}{m^2} \sum_{i=1}^m \sum_{j=1}^m \normtwo{Y_i - Y_j}
        - \frac{1}{n^2} \sum_{i=1}^n \sum_{j=1}^n \normtwo{Z_i - Z_j}
    ,
}
where $Y_i \overset{\mathrm{i.i.d.}}{\sim} \mu $ for $i=1, \dots, m$ and $Z_j \overset{\mathrm{i.i.d.}}{\sim} \nu $ for $j=1,\dots, n$.
Figure~\ref{fig:strongly_convex_additional_ed} shows the estimated energy distance as the number of iteration increases on a semi-log scale. 
We use 5k samples each for the Markov chain and the target distribution to compute the V-statistic, where the target distribution is approximated following the same procedure as described in Section~\ref{subsec:strongly_convex_exp}. 
These plots show that SRK-LD achieves lower asymptotic errors compared to the EM scheme, where the error is measured in the energy distance. 
This is consistent with the case where the error is estimated in $W_2^2$.

\begin{figure}[ht]
\centering
\begin{minipage}[t]{0.45\linewidth}
\centering
{\includegraphics[width=0.98\textwidth]{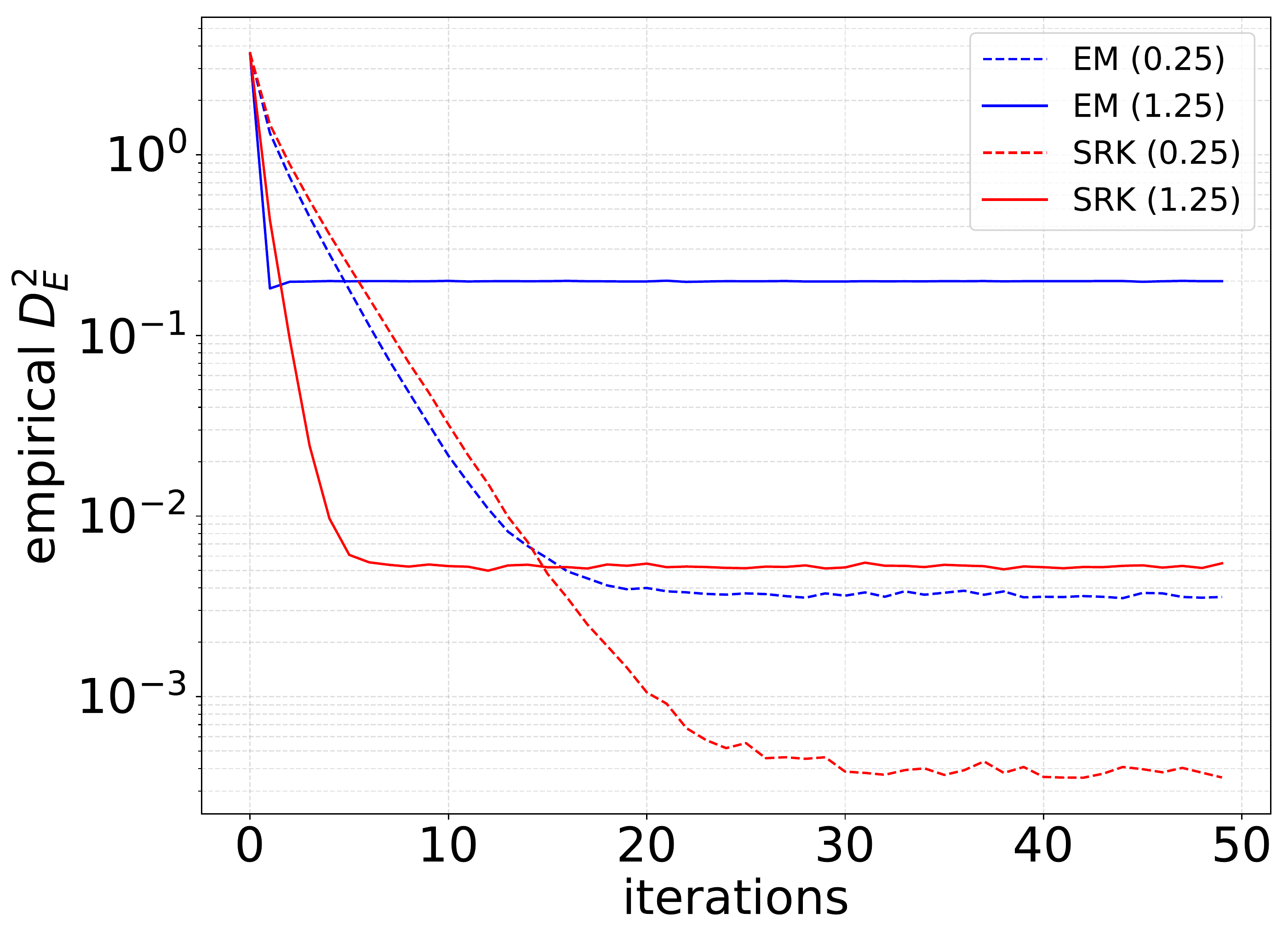}}
(a) Gaussian Mixture (2D)
\end{minipage}
\begin{minipage}[t]{0.45\linewidth}
\centering
{\includegraphics[width=0.98\textwidth]{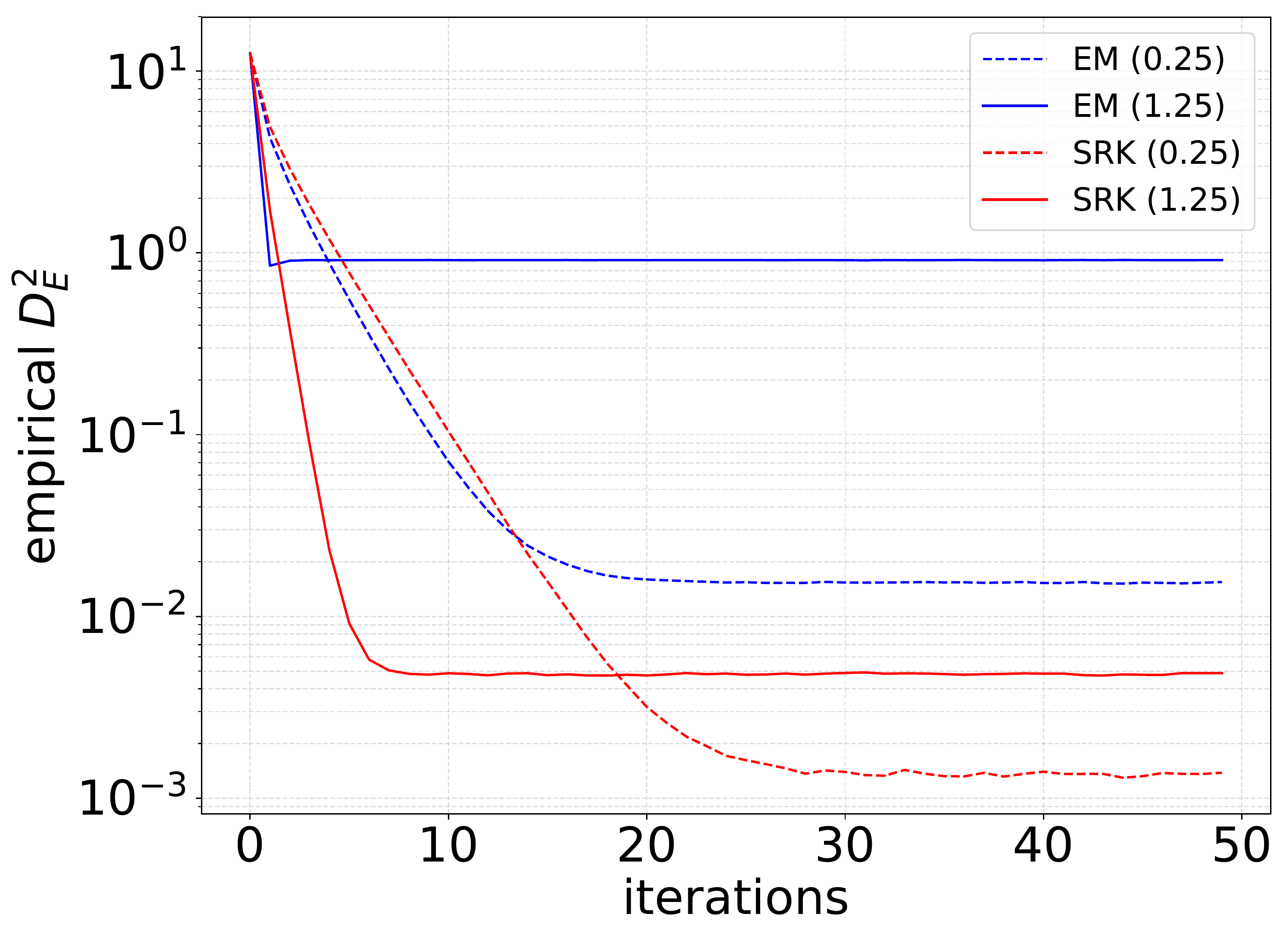}}
(b) Gaussian Mixture (20D)
\end{minipage}
\begin{minipage}[t]{0.45\linewidth}
\centering
{\includegraphics[width=0.98\textwidth]{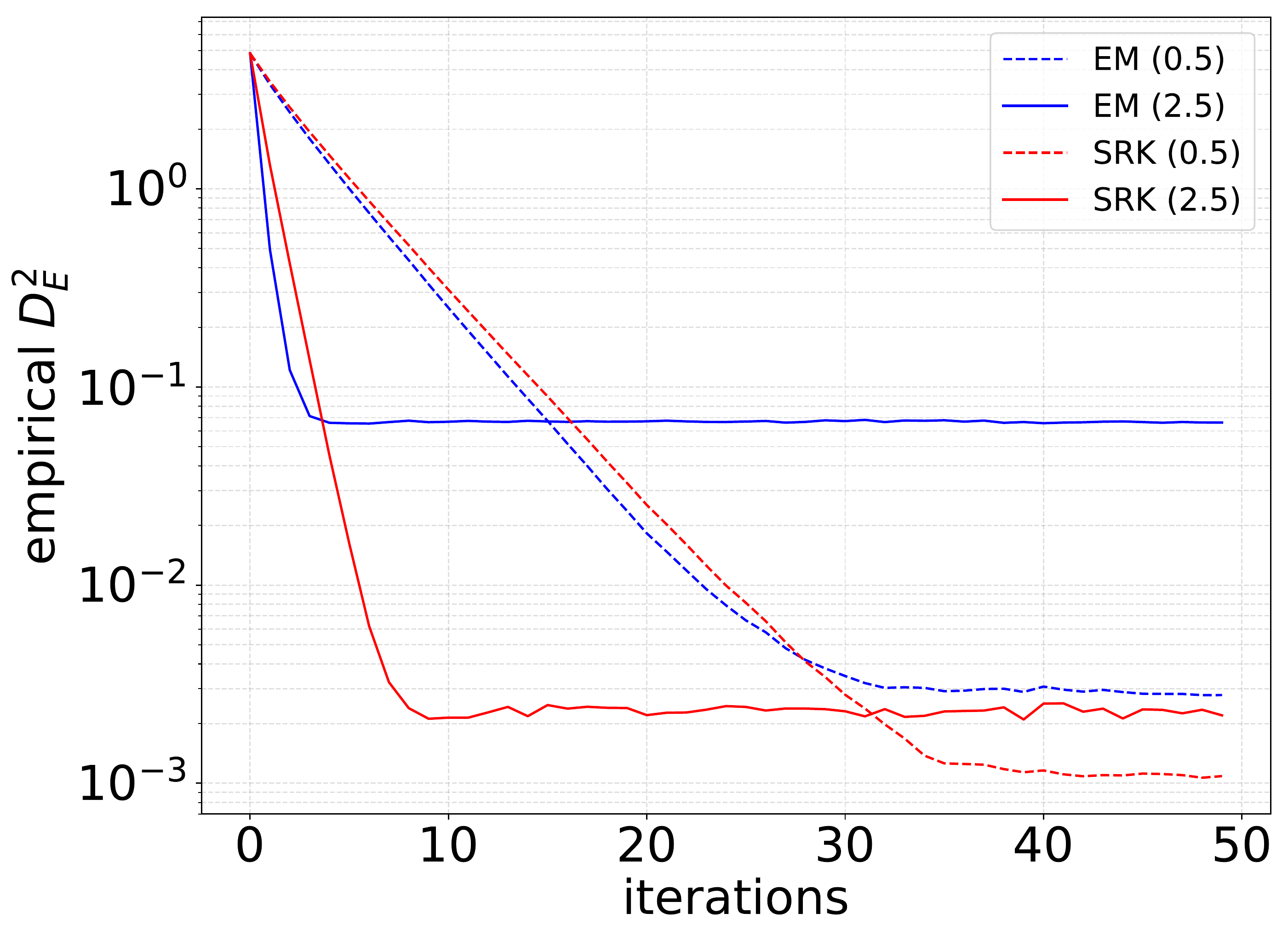}}
(c) BLR (2D)
\end{minipage}
\begin{minipage}[t]{0.45\linewidth}
\centering
{\includegraphics[width=0.98\textwidth]{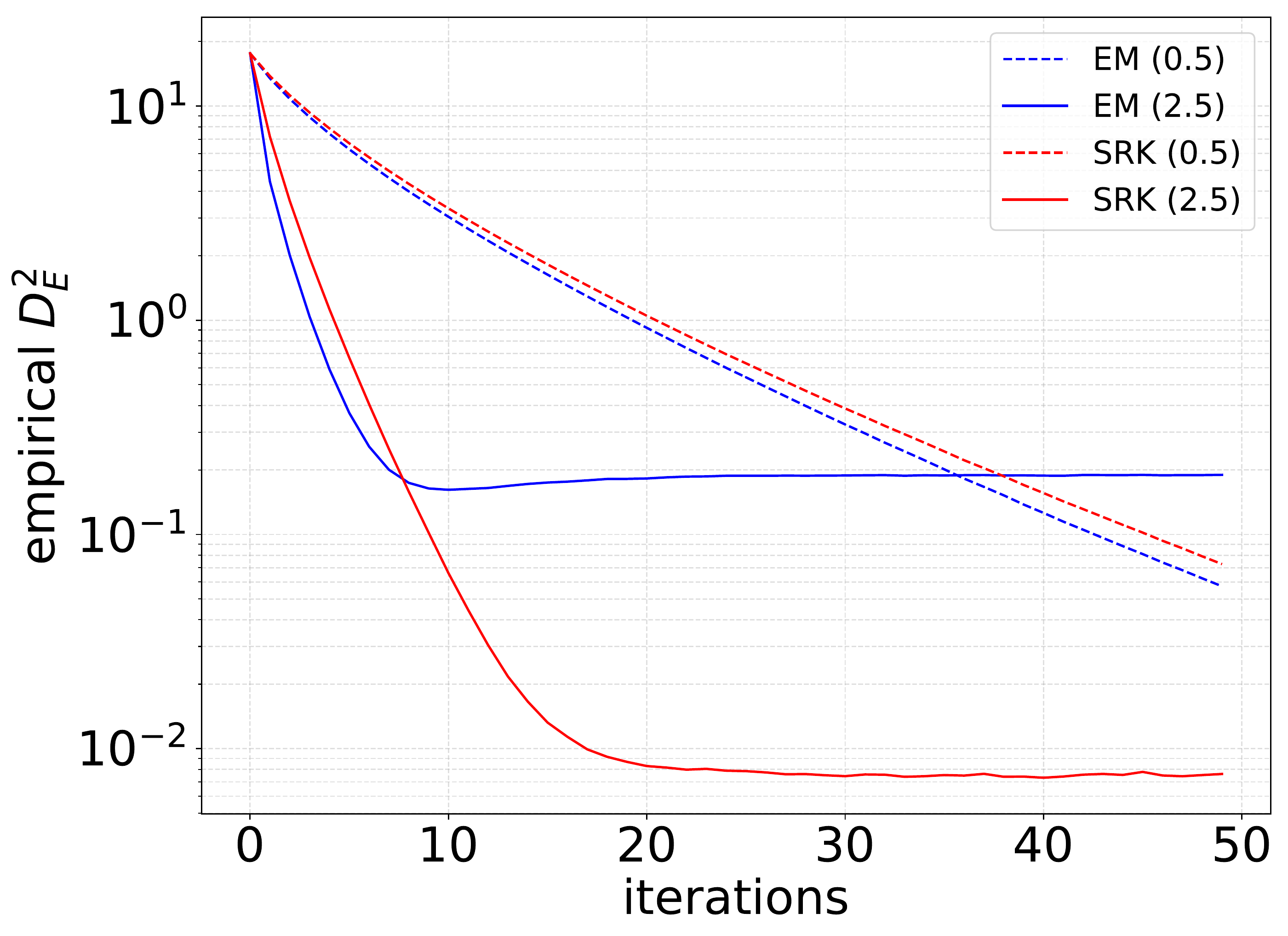}}
(d) BLR (20D)
\end{minipage}
\caption{Error in $D_E^2$ for strongly log-concave sampling. Legend denotes``scheme (step size)''.}
\label{fig:strongly_convex_additional_ed}
\end{figure}

\subsubsection{Asymptotic Error vs Dimensionality and Step Size}\begin{figure}[ht]
\centering
\begin{minipage}[t]{0.45\linewidth}
\centering
{\includegraphics[width=0.98\textwidth]{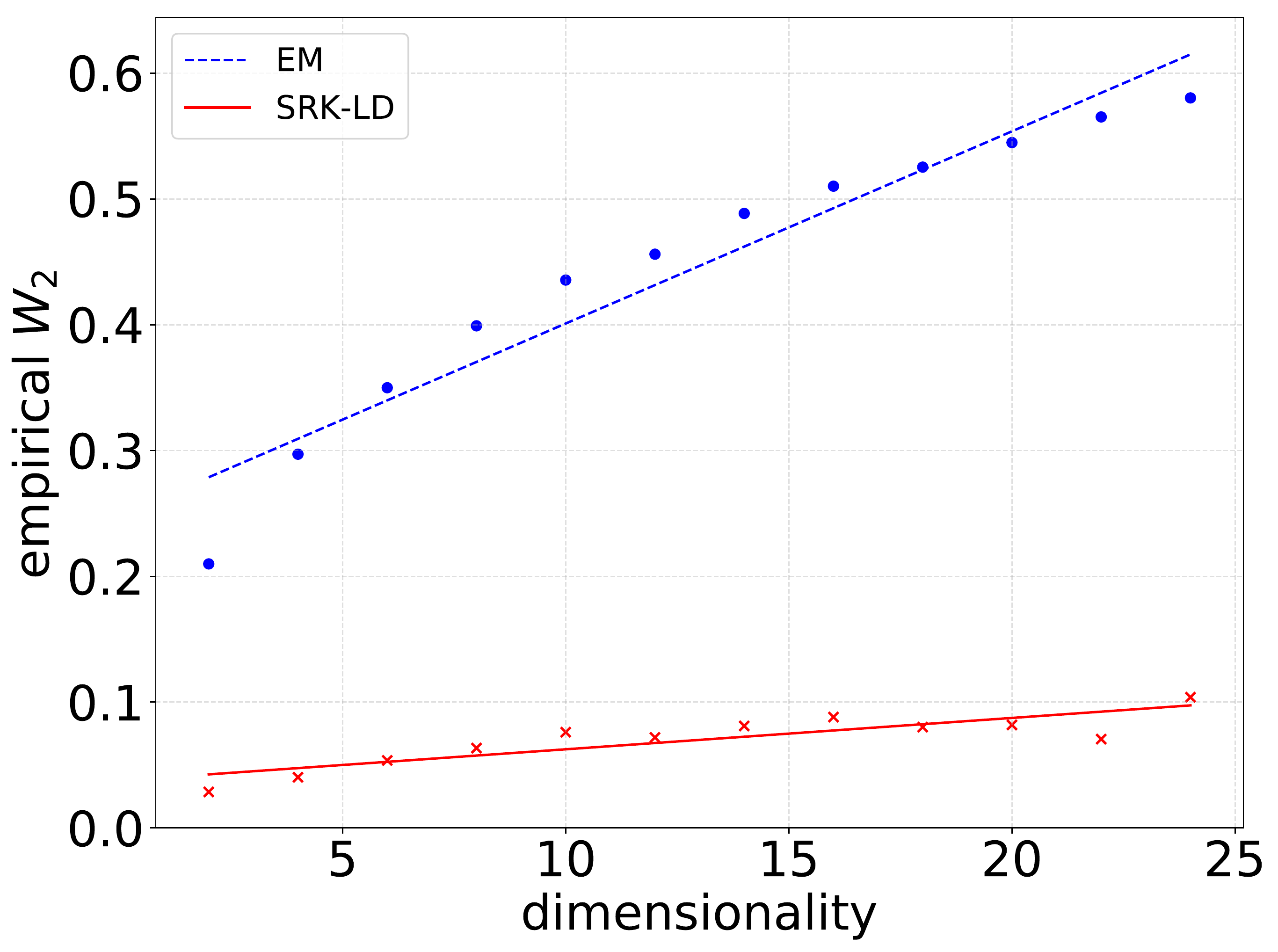}}
(a) dimensionality
\end{minipage}
\begin{minipage}[t]{0.45\linewidth}
\centering
{\includegraphics[width=0.98\textwidth]{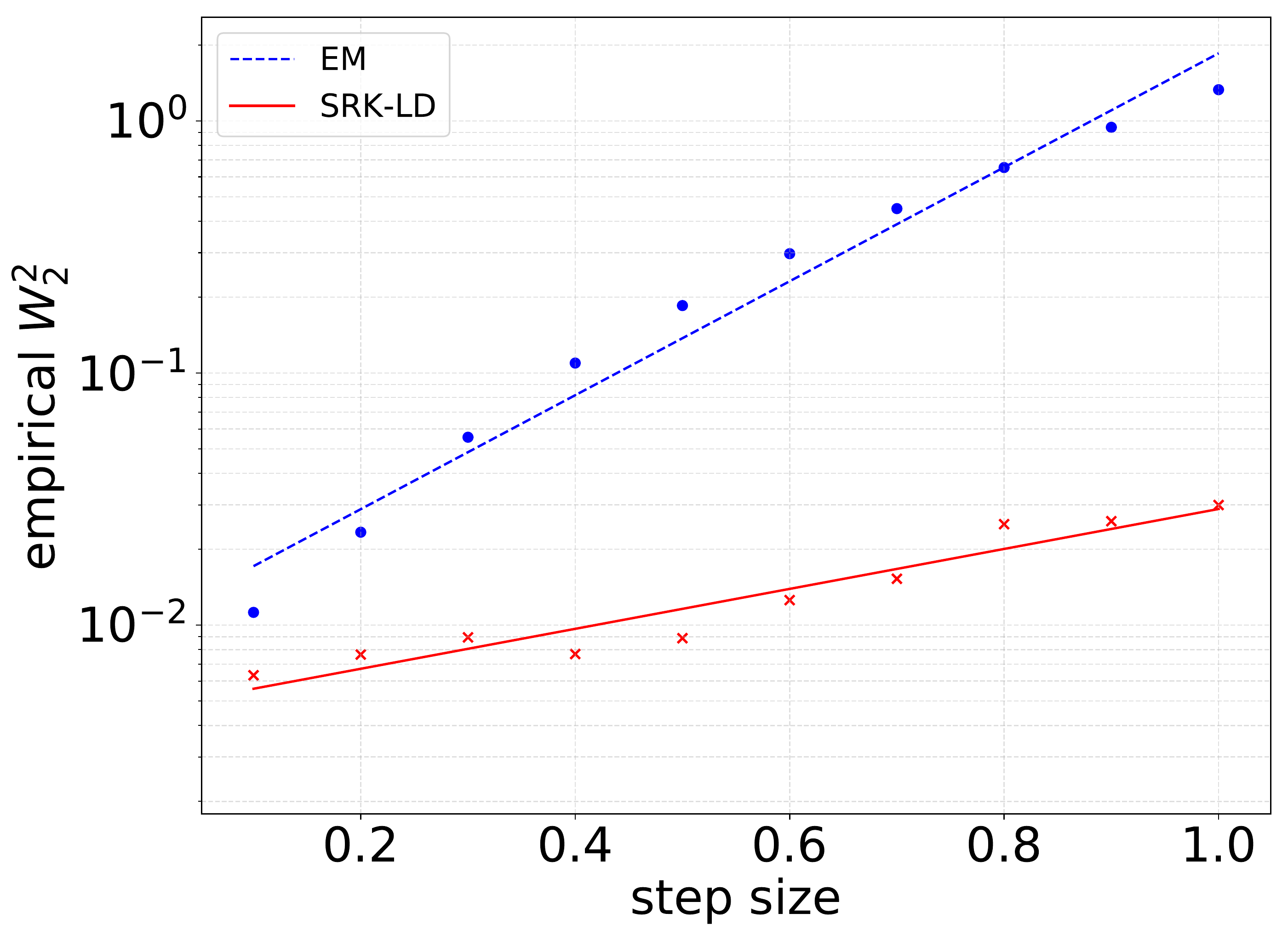}}
(b) step size
\end{minipage}
\caption{Asymptotic error vs dimensionality and step size.}
\label{fig:asymptotic_error}
\end{figure}
Figure~\ref{fig:asymptotic_error} (a) and (b) respectively show the asymptotic error against dimensionality and step size for Gaussian mixture sampling. We perform least squares regression in both plots. 
Plot (a) shows results when a step size of $0.5$ is used. 
Plot (b) is on semi-log scale, where the quantities are estimated for a 10D problem.

\subsubsection{Wall Time}
Figure~\ref{fig:wall_time} shows the wall time against the estimated $W_2^2$ of SRK-LD compared to the EM scheme 
for a 20D Gaussian mixture sampling problem.
On a 6-core CPU with 2 threads per core, we observe that SRK-LD is roughly $\times$ 2.5 times as costly as EM per iteration. 
However, since SRK-LD is more stable for large step sizes, we may choose a step size much larger for SRK-LD compared to 
EM, in which case its iterates converge to a lower error within less time. 
\begin{figure}[ht]
\centering
\begin{minipage}[t]{0.45\linewidth}
\centering
{\includegraphics[width=0.98\textwidth]{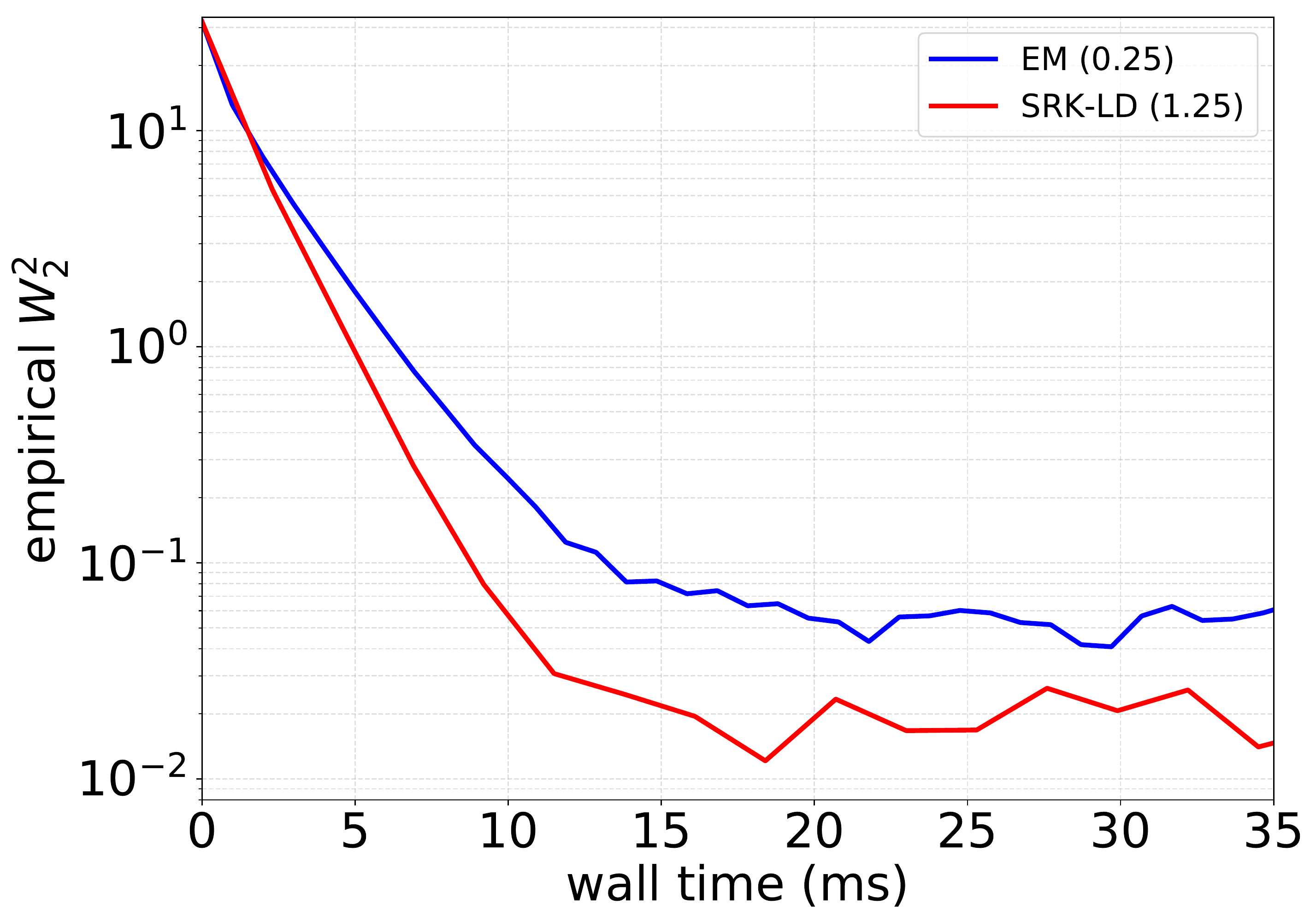}}
\end{minipage}
\caption{Wall time for sampling from a 20D Gaussian mixture.}
\label{fig:wall_time}
\end{figure}

\subsection{Non-Convex Potentials} \label{app:srk-id}
We first discuss how we approximate the iterated It\^o integrals, after which we include additional numerical studies varying the dimensionality of the sampling problem.

\subsubsection{Approximating Iterated It\^o Integrals}
Simulating both the iterated It\^o integrals $I_{(l,i)}$ and the Brownian motion increments $I_{(i)}$ exactly is difficult. We adopt the Kloeden-Platen-Wright approximation, which
has an MSE of order $h^2/n$, where $n$ is the number of terms in the truncation~\cite{kloeden1992approximation}. 
The infinite series can be written as follows:
\eq{
    I_{(l, i)} &= 
        \frac{ I_{(l)} I_{(i)} - h \delta_{li}}{2} + A_{(l, i)}, \\
    A_{(l, i)} &= \frac{h}{2\pi} \sum_{k=1}^\infty \frac{1}{k} \bracks{
        \xi_{l, k} \bracks{ \eta_{i, k} + \sqrt{2/h} \Delta B_h^{(i)} } -
        \xi_{i, k} \bracks{ \eta_{l, k} + \sqrt{2/h} \Delta B_h^{(l)} }
    },
}
where $\xi_{l, k}, \xi_{i, k}, \eta_{i, k}, \eta_{l, k} \overset{\text{i.i.d.}}{\sim} \mathcal{N}(0, 1)$.
$A_{(l,i)}$ is known as the \textit{L\'evy area} and is notoriously hard to simulate~\cite{wiktorsson2001joint}. 

For SDE simulation, in order for the scheme to obtain the same
strong convergence order under the approximation, the MSE in
the approximation of the It\^o integrals must be negligible compared to the local mean-square deviation of the numerical integration scheme. 
For our experiments, we use $n=3000$, following the rule of thumb that $n \propto h^{-1}$~\cite{kloeden1992approximation}.
Although simulating the extra terms can become costly, the computation may be vectorized, branched off from the main update, and parallelized on an additional thread, since it does not require any information of the current iterate.

\citet{wiktorsson2001joint} proposed to add a correction term to the truncated series, which results in an approximation that has an MSE of order $h^2/n^2$. In this case, $n \propto h^{-1/2}$ terms are effectively required. We note that analyzing and comparing between different L\'evy area approximations is beyond the scope of this paper.

\subsubsection{Additional Results} \label{app:srk-id_diminish}
Figure~\ref{fig:srk_id_dims} shows the MSE of simulations starting from a faithful approximation to the target. 
We adopt the same simulation settings as described in Section~\ref{subsec:exp_nonconvex}.
We observe diminishing gains as the dimensionality increases across all settings with differing
$\beta$ and $\gamma$ parameters in which we experimented. 
These empirical findings corroborate our theoretical results.
Note that the corresponding diffusion in all settings are still uniformly dissipativity, yet the potential may become convex when $\beta$ is large. 
Nevertheless, the potential is never strongly convex when $\beta$ is positive due to the linear growth term.
\begin{figure}[ht]
\centering
\begin{minipage}[t]{0.45\linewidth}
\centering
{\includegraphics[width=0.98\textwidth]{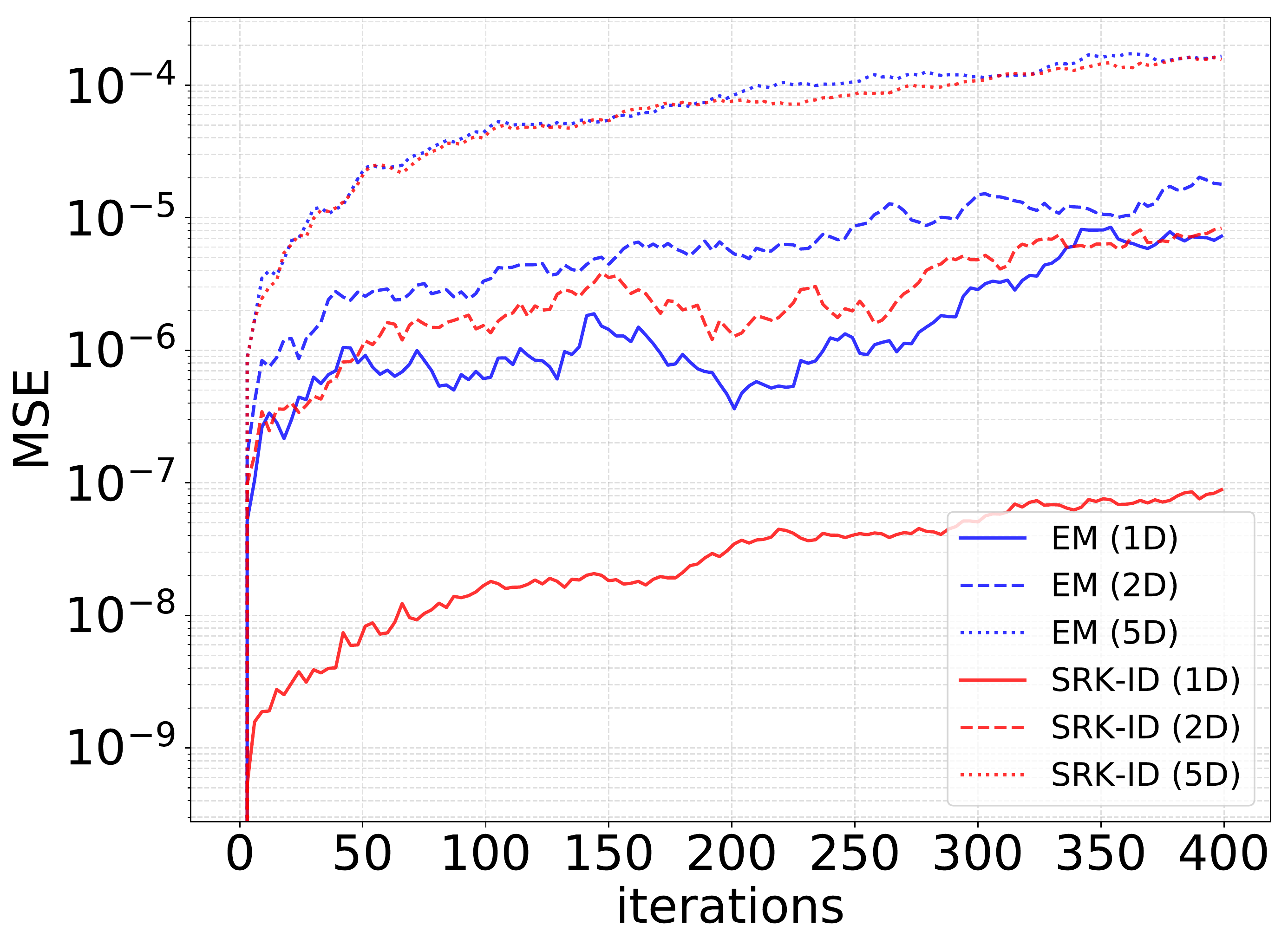}}
(a) $ \beta = 2, \gamma = 0.5$
\end{minipage}
\begin{minipage}[t]{0.45\linewidth}
\centering
{\includegraphics[width=0.98\textwidth]{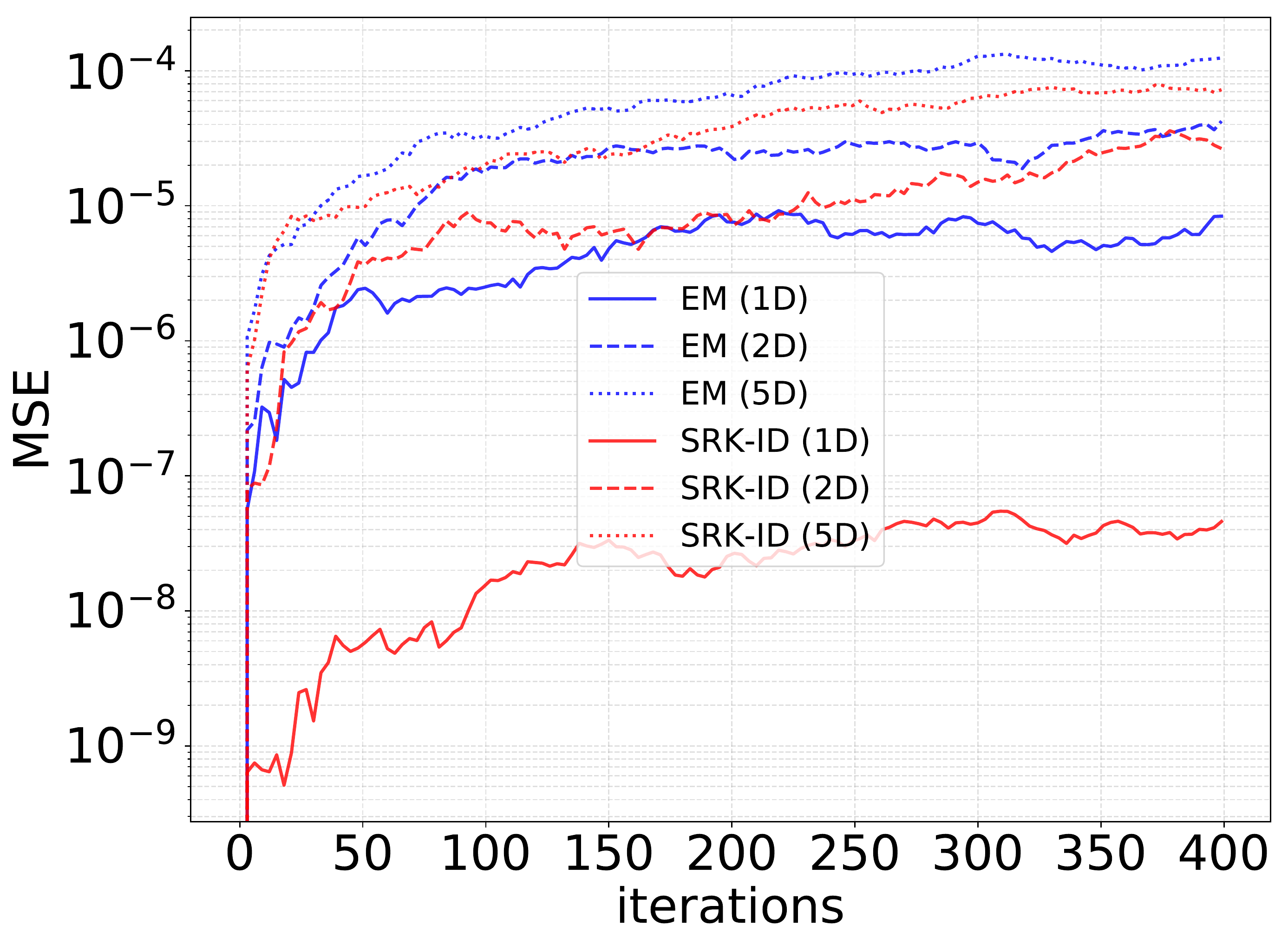}}
(b) $\beta = 3, \gamma = 0.5$
\end{minipage}
\caption{MSE for non-convex sampling.}
\label{fig:srk_id_dims}
\end{figure}

\end{document}